\newcommand\scalemath[2]{\scalebox{#1}{\mbox{\ensuremath{\displaystyle #2}}}}
\newtheorem{thm}{Theorem}
\newtheorem{lem}[thm]{Lemma}
\newtheorem{proposition}{Proposition}
\begin{document}

\runninghead{Yulin Yang et al.}

\title{
Online Self-Calibration for \\
Visual-Inertial Navigation Systems: \\
Models, Analysis and Degeneracy
}

\author{Yulin Yang\affilnum{1}, Patrick Geneva\affilnum{1}, Xingxing Zuo\affilnum{2} and Guoquan Huang\affilnum{1}}

\affiliation{\affilnum{1} Robot Perception and Navigation Group,
University of Delaware, Newark, DE 19716, USA.
\affilnum{2} Department of Informatics, Technical University of Munich, Garching b. Munich, 85748, Germany.
}

\corrauth{Yulin Yang, Department of Mechanical Engineering, University of
Delaware, 126 Spencer Lab, Newark, DE 19716, USA.
}
\email{yuyang@udel.edu}

\begin{abstract}
In this paper, we study in-depth the problem of online self-calibration for robust and accurate visual-inertial state estimation.
In particular, we first perform a complete observability analysis for visual-inertial navigation systems (VINS) with full calibration of sensing  parameters, 
including IMU and camera intrinsics and IMU-camera spatial-temporal extrinsic calibration, along with readout time of rolling shutter (RS) cameras (if used). 
We investigate different inertial model variants containing IMU intrinsic parameters that encompass most commonly used models for low-cost inertial sensors.
With these models, the state transition matrix and visual measurement Jacobians are analytically derived 
and the observability analysis of linearized VINS with full sensor calibration is performed.
The analysis results prove that, as intuitively assumed in the literature, VINS with full sensor calibration  has four unobservable directions, corresponding to the system's global yaw and translation, while all sensor calibration parameters are observable given fully-excited 6-axis motion. 
Moreover, we, for the first time, identify primitive degenerate motions for IMU and camera intrinsic calibration, which, when combined, may produce complex degenerate motions.
This result holds true for the different inertial model variants investigated in this work and has significant impacts on  practical applications of online self-calibration to many robotic platforms.
Extensive Monte-Carlo simulations and real-world experiments are performed to validate both the observability analysis and identified degenerate motions,
showing that online self-calibration improves system accuracy and robustness to calibration inaccuracies.
We compare the proposed {\em online} self-calibration on commonly-used IMUs against the state-of-art {\em offline} calibration toolbox Kalibr, 
and show that the proposed system achieves better consistency and repeatability. 
As sensor calibration plays an important role in making real VINS work well in practice, 
based on our analysis and experimental evaluations, we also provide practical guidelines for how to perform online IMU-camera sensor self-calibration.
\end{abstract} 
\keywords{Sensor calibration, visual–inertial systems, state estimation, observability analysis, degenerate motions,  Monte Carlo analysis}

\maketitle
\allowdisplaybreaks

\section{Introduction}

Due to the decreasing cost of integrated inertial/visual sensor rigs, 
visual-inertial navigation system (VINS), which fuses high-rate inertial readings from an IMU and images of the surrounding environment from a camera, has gained great popularity in 6 degree-of-freedom (DoF) motion tracking for mobile devices and autonomous robots -- such as micro aerial vehicles (MAV)~\citep{Shen2014ICRA}, self-driving cars~\citep{Heng2019ICRA}, unmanned ground vehicles (UGV)~\citep{Zhang2019arXiv,Lee2020IROS} and smart phones~\citep{Li2013IJRR,Guo2014RSS} -- during the past decades \citep{Huang2019ICRA}. 
Many efficient and robust VINS algorithms based on filtering \citep{Mourikis2007ICRA,Li2013IJRR,Wu2015RSS,Geneva2020ICRA,Eckenhoff2021TRO} or batch least squares solver \citep{Leutenegger2014IJRR,Forster2017tro,Qin2018TRO,Usenko2020RAL,Campos2021TRO} techniques have been developed in recent years to address this pose estimation problem.  

There are many factors which attribute to VINS performance, such as visual feature tracking/triangulation, velocity/biases initialization and sensor calibration. 
Among them, robust and accurate sensor calibration -- including the rigid transformation between sensors (spatial calibration), time offset between IMU-camera (temporal calibration), image line readout time for rolling shutter (RS) cameras, and IMU/camera intrinsics -- is crucial, especially when plug-and-play visual-inertial sensor rigs with widely available off-the-shelf low-cost IMUs and rolling shutter cameras are deployed.
In addition, sensor calibration itself can vary due to extended usage, sensor failure replacement and environmental effects such as varying temperature, humidity, vibrations, non-rigid mounting, and among others. 
For example, IMU biases and intrinsics suffer from the temperature and humidity changes \citep{Li2014ICRA}, 
and rigid transformation between IMU and camera can vary if the sensor is replaced or subjected to vibration. 
As such, online sensor self-calibration in VINS has attracted significant attentions and research efforts \citep{Li2013IJRR,Guo2014RSS,Yang2019RAL,Eckenhoff2021TRO,Yang2020RSS} in recent years, due to its potential to handle poor prior calibration or calibration changes, which can degrade the state estimate accuracy in the case that these calibrations are treated to be true. 

System observability analysis for VINS with online IMU-camera \citep{Mirzaei2008TRO,Kelly2011IJRR,Guo2013ICRA} or IMU/camera intrinsic \citep{Tsao2019Sensor,Yang2020RSS} calibration has also been carried out to show that these calibration parameters can be identified given  fully excited motions. 
However, complete analysis for VINS with full calibration parameters -- including IMU/camera intrinsics, IMU-camera rigid transformation, temporal time offset, and camera RS readout time -- is still absent from the existing literature.

Blindly performing online calibration is risky, as in most cases domain knowledge on specific motions and prior distribution choices are needed to ensure calibration can converge consistently \citep{Schneider2019SENSORS}.
In the meantime, existing research efforts \citep{Li2014IJRRa,Yang2019RAL,Yang2020RSS} have also identified several basic motion profiles, termed degenerate motions, that cause online sensor self-calibration failures. 
In this work, we investigate degenerate motions which impact the deployment of VINS on mobile robots, which typically have constrained motions, when jointly estimating IMU/camera intrinsics, IMU-camera spatial-temporal calibration, and RS readout time.
For example, aerial and ground vehicles can only perform a few motion profiles due to their under-actuation, and can easily ``fall'' into degenerate conditions for calibration during typical deployment.
As compared to our investigation into degenerate motions when performing full-parameter self-calibration, and their impacts especially for under-actuated autonomous robots, most approaches on VINS sensor self-calibration are limited to either handheld or trajectory segments involving rich motion information \citep{Li2014ICRA,Schneider2019Sensor}.

In this paper, we build an accurate and robust monocular VINS estimator with full self-calibration. 
We also investigate in-depth the observability analysis for visual-inertial self-calibration and perform degenerate motion analysis for {\em all} calibration parameters.
In particular, the main contributions of this work include:
\begin{itemize}
	\item An efficient filter-based visual-inertial estimator capable of performing self-calibration for all spatial-temporal extrinsic and intrinsic parameters.
	\item We perform a complete observability and degeneracy analysis for the proposed visual-inertial models and, for the first time, identify the degenerate motions that cause IMU and camera intrinsic parameters to become unobservable.
	\item Extensive simulations and real-world experiments are performed to verify the parameter convergence of the estimator with online self-calibration under fully-excited 6DoF motion and a series of identified degenerate motions of practical significance.
    \item We show that under general motion, self-calibration is necessary to achieve accurate pose estimation in a robust manner for consumer grade sensors, which continue to become more prevalent, with only minimal computational impact. 
    Additionally, we show that degenerate motions can and do have a significant negative impact on the performance of the estimator, leading to a series of guideline recommendations.
\end{itemize}

The rest of the paper is organized as follows:
After reviewing the related work in Section \ref{sec:related} and estimation preliminaries in Section \ref{sec:ekf}, 
we present the sensing models including inertial and camera models in Section \ref{sec:model}. 
In Section \ref{sec:statetrans}, we derive the lienarized system dynamics and measurement model of the VINS with full sensor calibration.
Based on that, we perform the observability analysis and  degenerate motion identification in  Sections \ref{sec:obsanalysis} and \ref{sec:degen_identif},
while the proposed estimator is presented in Section~\ref{sec:estimator}.
In Sections \ref{sec:exp_sim}, \ref{sec:exp_tum_rs}, \ref{sec:exp_virig} and \ref{sec:exp_degenerate}, we extensively validate our analysis and estimator through both simulations and real-world experiments.
Finally, we offer discussions and final remarks in Sections \ref{sec:discussion} and \ref{sec:conclusion}.

\section{Related Works} \label{sec:related}

Extensive works have studied online or offline IMU and camera calibration for VINS. 
However, the joint self-calibration of all the calibration parameters for visual and inertial sensors (including IMU/camera intrinsics, IMU-camera spatial-temporal calibration and RS readout time) was not investigated sufficiently,
and the observability analysis and degenerate motion identification for the complete VINS with calibration are still missing from the literature. 
In terms of the complete parameter calibration for VINS, the related works can be divided into the following four categories: 

\subsection{Camera Calibration}

Visual-only offline calibration of  camera intrinsic parameters is well studied~\citep{Hartley2004}. 
For example, 
\cite{Faugeras1992ECCV} demonstrated camera self-calibration without a pattern,
based on which \cite{Qian2004CVIU} and \cite{Civera2009ICRA} proposed to use sum of Gaussian filters. 
Recently, \cite{Agudo2020ICPR} extended the above works to use RGB video with objects of non-rigid shape for camera self-calibration.

Many works have also focused on RS camera calibration, especially for the image line readout time. 
For instance, 
after formulating the geometric models of RS cameras and investigating how RS affects the image generation, 
\cite{Meingast2005ArXiv}  leveraged flashing LED lights to calibrate the RS readout time.
Similarly, \cite{Oth2013CVPR} used continuous time trajectory representation to model RS effects and performed calibration with an April tag pattern~\citep{Olson2011ICRA}. 
However, these methods assume known camera intrinsics (including distortion parameters) when calibrating the RS readout time.

\cite{Nguyen2017CVIU} investigated the self-calibration of multiple omnidirectional RS cameras. 
They first initialized the rigid transformation and time offset between cameras based on the structure-from-motion (SFM) trajectories generated from each monocular camera with global shutter (GS) assumption. 
Then, all the camera related parameters (i.e., the rigid transformation, time offset, camera intrinsics, and RS readout time for each camera) are refined by a bundle adjustment (BA) with all the camera measurements. 
\cite{Kukelova2020ECCV} presented the first minimal solution to absolute pose estimation of a RS camera with unknown focal length and
unknown radial distortion, from seven point correspondences. 
With the proposed solvers they can achieve accurate solutions for camera poses, RS readout time, focal length and radial distortion. 
Note that both \cite{Kukelova2020ECCV} and \cite{Nguyen2017CVIU} are offline calibration algorithms and rely only on visual sensors.  
\cite{Karpenko2011CSTR} proposed to combine a gyroscope for RS camera image correction with natural scenes. 
However, it requires the pre-calibration of rotation between gyroscope and RS camera and the motion of the camera is restricted to rotation only. 
\cite{Tsao2019Sensor} investigated online camera intrinsic calibration (only focal length and principal points) within a VINS framework.  
In contrast to the above works only focusing on camera calibration, 
our proposed method fuses the measurements of an IMU and a monocular camera, and provides online calibration of all the camera parameters including camera intrinsics and RS readout time,
Both radial-tangential (\textit{radtan}) and equivalent-distant (\textit{equidist}) distortion~\citep{Furgale2013IROS} are supported. 
\subsection{IMU Intrinsic Calibration}

Generally, the gyroscope and acceleration biases are needed for accurate inertial modeling. 
They are both modeled as random walks and estimated as part of IMU states.  
It is a common practice to estimate biases online in VINS such as \cite{Jones2011IJRR}, \cite{Kelly2011IJRR}, \cite{Leutenegger2014IJRR} and \cite{Mourikis2007ICRA}. 

Besides these biases, the IMU intrinsic parameters -- including the scale correction and axis misalignment for gyroscope and accelerometer, the rotation from gyroscope or accelerometer frame to IMU frame, and the gravity sensitivity -- 
also need to be calibrated offline or online, especially for low-cost inertial sensors.  
\cite{Bristeau2011IFAC} calibrated the IMU's scale correction and axis misalignment for aerial vehicles  by solving a least squares problem with known special sensor motions.  
\cite{Xiao2019Sensors} improved the IMU pre-integration \citep{Forster2017tro} to incorporate the IMU intrinsic parameters in a keyframe based VINS algorithm for online self-calibration. 
\cite{Jung2020TIM} studied IMU intrinsic calibration within multi-state constrained Kalman filter (MSCKF) by using a stereo camera and an IMU sensor,
where they also examined the inertial calibration results under planar and random motions. 

Building upon our prior  work \citep{Yang2020RSS}, 
in which we have investigated online IMU intrinsic calibration with the minimal sensor configuration of a single IMU and a monocular camera and compared the performance of four different IMU intrinsic model variants in VINS,
in this work, 
we study 18 different IMU intrinsic model variants which can encompass or be equivalent to most published IMU models for inertial navigation
and perform online self-calibration. 
Comprehensive degenerate motion analysis, which can cause online self-calibration to fail, is also provided.

\subsection{Joint IMU-Camera Self Calibration}

Since VINS fuses IMU measurements and camera images, the joint calibration of IMU-camera parameters is preferred to improve system accuracy and robustness. 
Extensive works have studied joint sensor calibration in VINS.
For instance, 
\cite{Mirzaei2008TRO} proposed to use an extended Kalman filter (EKF) for the spatial calibration (i.e. the rigid transformation between the camera and IMU) of VINS and performed an observability analysis.
They showed that the rigid transformation is not fully observable under one-axis rotation. 
\cite{Zacharian2010IPIN} proposed to use the recursive Sigma-Point Kalman filter to estimate IMU intrinsics and IMU-camera spatial parameters with measurements from an IMU and a monocular camera. 
However, \cite{Mirzaei2008TRO} and \cite{Zacharian2010IPIN} did not calibrate the camera intrinsics or IMU-camera time offset and both relied on calibration chessboards. 

\cite{Furgale2013IROS} developed the well-known calibration toolbox: Kalibr, a continuous-time spline-based batch estimator, for IMU-camera extrinsics, time offset and camera intrinsics calibration.
\cite{Rehder2016ICRA} extended Kalibr to incorporate and estimate IMU intrinsics (including scaling parameters, axis misalignments, and gravity sensitivity). 
\cite{Huai2021arXiv} further extended the above work to calibrate readout time for RS cameras.  
\cite{Nikolic2016Sensors} formulated a maximum likelihood estimation problem based on discrete IMU poses to calibrate the IMU-camera spatial-temporal and IMU intrinsic parameters. 
The above mentioned works are all offline methods and need calibration targets. In addition, they do not support full-parameter joint optimization of camera intrinsics with other calibration parameters.

\cite{Schneider2019Sensor} reduced IMU-camera calibration optimization complexity by selecting the most informative trajectory segments for calibration. 
The selection is based on the information matrix of the measurements from the trajectory segments. 
Although this work does not need a calibration board, the temporal calibration between IMU and camera is not included.

Many recent VINS algorithms perform online IMU-camera joint calibration. 
\cite{Qin2018TRO} and \cite{Qin2018IROS}
is able to perform online IMU-camera extrinsic and time offset calibration with natural scene to improve the system robustness and accuracy. 
\cite{Guo2014RSS} proposed to use linear pose interpolation to model RS effects and calibrate readout time. 
\cite{Eckenhoff2019ICRAa} proposed a multi-camera aided VINS with online IMU-camera spatial-temporal and camera intrinsic calibration. 
\cite{Eckenhoff2021TRO} further proposed a generalized polynomial based pose interpolation for readout time calibration of RS cameras.  
However, the IMU intrinsics were not considered in the above systems. 
The closest work to ours is by \cite{Li2014ICRA} which included IMU-camera extrinsics, time offset, rolling-shutter readout time, camera and IMU intrinsics into the state vector within MSCKF \citep{Mourikis2007ICRA} based visual-inertial odometry and successfully calibrated all these parameters. 
They showed that these parameters can converge in simulation with fully excited motions and verified the system using a real-world experiment.
However, system observability and degenerate motion analysis are still missing, which are the focus of our work along with more extensive multi-run statistical validations. 
In addition, we also compared different IMU model variants which have appeared in literature to investigate their estimation performances within VINS.

\subsection{Observability and Degeneracy}

Observability analysis plays an important role in state estimation \citep{Huang2010IJRR,Martinelli2012TRO,Hesch2013TRO}, especially when the system incorporates calibration parameters \citep{Martinelli2011TRO,Li2014IJRRa,Yang2019RAL,Yang2020RSS}.
We wish to identify whether these calibration parameters can be calibrated with visual and inertial measurements, and also identify degenerate motions, which might cause calibration to fail. 
In addition, observability properties can be leveraged for consistent estimator design \citep{Huang2012thesis,Hesch2013TRO,Wu2017IROS}. 
\cite{Mirzaei2008TRO}  performed the observability analysis for VINS with IMU-camera spatial calibration and showed that the spatial calibration is observable given fully excited motion of the IMU. 
They also found that that one-axis rotation is degenerate for the calibration of IMU-camera translation.
\cite{Kelly2011IJRR} studied the IMU-camera self-calibration and performed nonlinear observability analysis using Lie derivative to show that the rigid transformation between IMU-camera is observable given random motions. 
\cite{Guo2013ICRA} simplified the proof and analytically showed that the spatial calibration between the IMU and RGBD camera is observable. 
\cite{Li2014IJRRa} analyzed the identifiability for IMU-camera temporal calibration given the measurements of a single IMU and a monocular camera
and identified a degenerate motion that can cause the IMU-camera time offset to become unobservable. 
\cite{Jung2020TIM} studied the observability of stereo VINS with IMU intrinsics also based on Lie derivative to build the observability matrix
and showed that the IMU intrinsics (including scale correction and axis misalignment for gyroscope and accelerometer, respectively) is observable given fully exited motions. 
\cite{Tsao2019Sensor} built the observability matrix for VINS using linearized system model and showed that the camera intrinsics (only including focal length and principal points in their work) is observable. 
However, none of the above mentioned works ever performed and verified the observability analysis with full-parameter calibration for VINS.

In our previous work \citep{Yang2019RAL}, we built the observability matrix for VINS using the linearized system with IMU-camera spatial-temporal calibration
and showed that given fully excited motions all these calibration parameters are observable. 
We have also, for first time, identified four degenerate motions that can cause these calibration to become unobservable. 
In our recent work \citep{Yang2020RSS}, we performed observability analysis for monocular VINS with IMU intrinsic calibration (including scale and axis-misalignment for gyroscope and accelerometer, the rotation from gyroscope or accelerometer to IMU frame),
and identified the degenerate motions for the IMU intrinsics.
Building upon these prior works, in this work, we perform full-parameter calibration -- including IMU intrinsics with gravity sensitivity, camera intrinsics and the IMU-camera spatial-temporal calibration with RS readout time -- for VINS with a single IMU and a monocular RS camera. 
Comprehensive observability analysis and degenerate motion identification are performed for these calibration parameters. Both simulations and real world experiments are also leveraged to verify our analysis.

\section{Estimation Preliminaries} \label{sec:ekf}

The state $\mathbf{x}$ is propagated forward from timestep $k-1$ to  $k$ using incoming system control inputs $\mathbf u_{k-1}$ based on the following generic nonlinear function:
\begin{align}
    \mathbf{x}_{k} = f(\mathbf{x}_{k-1}, \mathbf{u}_{k-1}, \mathbf{w}_{k-1})
    \label{eq:imu_dynamics}
\end{align}
where $\mathbf{w}_{k-1}\sim \mathcal{N}(\mathbf{0},\mathbf{Q}_{k-1})$ denotes the white Gaussian noise of the control input.
The state estimate at $k$ can be predicted from the state estimate at $k-1$ with the nonlinear system:
\begin{align}
    \hat{\mathbf{x}}_{k|k-1} = f(\hat{\mathbf{x}}_{k-1|k-1}, \mathbf{u}_{k-1}, \mathbf{0})
\end{align}
where the subscript $i|j$ denotes the estimate at time $i$ given the measurements up to time $j$;
$\hat{\mathbf{x}}$ denotes the estimated value for state $\mathbf{x}$ and $\tilde{\mathbf{x}} = \mathbf{x} \boxminus \hat{\mathbf{x}}$ represents the error states, where ``$\boxminus$'' can be defined within a manifold \citep{Barfoot2017Book}. 
The inverse operation of ``$\boxminus$'' can be defined as $\mathbf{x}=\hat{\mathbf{x}}\boxplus\tilde{\mathbf{x}}$, accordingly. 
The state covariance can be defined on the error states as $\tilde{\mathbf{x}}\sim \mathcal{N}(\mathbf{0},\mathbf{P})$. 

After linearizing the nonlinear function [see Eq.~\eqref{eq:imu_dynamics}] at current state estimate $\hat{\mathbf{x}}_{k-1|k-1}$, the propagated state covariance $\mathbf{P}_{k|k-1}$ for state estimate $\hat{\mathbf{x}}_{k|k-1}$ can be computed  as:
\begin{align}
    \tilde{\mathbf{x}}_{k|k-1} & \simeq \boldsymbol{\Phi}_{k-1|k-1} 
    \tilde{\mathbf{x}}_{k-1|k-1} + 
    \mathbf{G}_{k-1}\mathbf{w}_{k-1}
    \\
    \mathbf{P}_{k|k-1} & =
    \bm\Phi_{k-1} \mathbf{P}_{k-1|k-1}\bm\Phi_{k-1}^\top
    + 
    \mathbf{G}_{k-1}
    \mathbf{Q}_{k-1}
    \mathbf{G}^{\top}_{k-1}
    \label{eq:propcov}
\end{align}
where $\bm\Phi_{k-1}$ and $\mathbf{G}_{k-1}$ are the state transition matrix and noise Jacobians, respectively.
A nonlinear measurement function can be described as:
\begin{align}
    \mathbf{z}_{k} &= \mathbf{h}({\mathbf{x}}_{k}) + \mathbf{n}_{k}
    \label{eq:nonlinear-meas}
\end{align}
where $\mathbf{n}_{k} \sim\mathcal{N}(\mathbf{0},\mathbf{R}_{k})$ is white Gaussian noise.
For the EKF update, we need to first linearize the above equation at the current state estimate $\hat{\mathbf{x}}_{k|k-1}$ as:
\begin{align}
\mathbf{z}_{k}&= \mathbf{h}(\hat{\mathbf{x}}_{k|k-1}\boxplus\tilde{\mathbf{x}}_{k|k-1}) + \mathbf{n}_{k}\\
 &\simeq \mathbf{h}(\hat{\mathbf{x}}_{k|k-1})  + \mathbf{H}_k  \tilde{\mathbf{x}}_{k|k-1}  + \mathbf{n}_{k} \\
\Rightarrow ~\tilde{\mathbf{z}}_{k}
& \triangleq \mathbf{z}_k - \mathbf{h}(\hat{\mathbf{x}}_{k|k-1})
\simeq \mathbf{H}_k \tilde{\mathbf{x}}_{k|k-1} + \mathbf{n}_{k}
\label{eq:linearizedsystem}
\end{align}
where $\mathbf{H}_k$ is the measurement Jacobian and $\tilde{\mathbf{z}}_k$ is the measurement residual. 
With these, we can now perform an EKF update to refine state estimates and covariance at time step $k$ \citep{Maybeck1979}:
\begin{align}
\hat{\mathbf{x}}_{k|k} &= {\hat{\mathbf{x}}_{k|k-1}} \boxplus \mathbf{K}_{k}({\mathbf{z}}_{k} - \mathbf{h}(\hat{\mathbf{x}}_{k|k-1})) \label{eq:stateupdate}\\
\mathbf{P}_{k|k} &= \mathbf{P}_{k|k-1} - \mathbf{K}_{k} \mathbf{H}_k\mathbf{P}_{k|k-1} \\
\mathbf{K}_{k} &= \mathbf{P}_{k|k-1}\mathbf{H}_k^\top (\mathbf{H}_k\mathbf{P}_{k|k-1}\mathbf{H}_k^\top + \mathbf{R}_{k})^{-1}
\end{align}

\section{Sensing Models} \label{sec:model}

\subsection{IMU Intrinsic Model}
\label{sec:imu intrinsic model}

We define an IMU as containing two separate frames of reference (see Fig. \ref{fig:IMU_frame}):
gyroscope frame $\{w\}$, accelerometer frame $\{a\}$. 
The base ``inertial'' frame $\{I\}$ should be determined to coincide with either $\{w\}$ or $\{a\}$. 
Different from the model in \cite{Schneider2019Sensor}, we define the raw angular velocity reading ${}^w\boldsymbol{\omega}_m$ from the gyroscope and linear acceleration readings ${}^a\mathbf{a}_m$ from the accelerometer as: 
\begin{align}
{}^w\boldsymbol{\omega}_{m} & = 
    \mathbf{T}_{w} {}^w_I\mathbf{R} {}^I\boldsymbol{\omega} + 
    \mathbf{T}_{g} {}^I\mathbf{a} + \mathbf{b}_g + \mathbf{n}_g 
    \\
    {}^a\mathbf{a}_{m} & = \mathbf{T}_a {}^a_I\mathbf{R} {}^I\mathbf{a}
    + \mathbf{b}_a + \mathbf{n}_a
\end{align}
where \textcolor{black}{$\mathbf{T}_{w}$ and $\mathbf{T}_a$} are invertible $3\times3$ matrices which represent the scale imperfection and axis misalignment for $\{w\}$ and $\{a\}$, respectively.
${}^w_I\mathbf{R}$ and ${}^a_I\mathbf{R}$ denote the rotation from the gyroscope frame and acceleration frame to base ``inertial'' frame $\{I\}$, respectively. 
Note that, if we choose $\{I\}$ coincides with $\{w\}$, then ${}^w_I\mathbf{R} = \mathbf{I}_3$. Otherwise, ${}^a_I\mathbf{R} = \mathbf{I}_3$.
$\mathbf{b}_{g}$ and $\mathbf{b}_a$ are the gyroscope and accelerometer biases, which are modeled as random walks;
$\mathbf{n}_g$ and $\mathbf{n}_a$ are the zero-mean Gaussian noises contaminating the measurements. 
$\mathbf{T}_g$ denotes the gravity sensitivity, which represents the effects of gravity to the gyroscope readings.
Similar to the works by \cite{Li2014ICRA} and \cite{Schneider2019Sensor}, we do not take into account the translation between the gyroscope and accelerometer, since it is negligible for most IMUs.
We can write the true (or corrected) angular velocity ${}^I\boldsymbol{\omega}$ and linear acceleration ${}^I\mathbf{a}$ as:
\begin{align}
    \label{eq:IMU reading model}
	{}^I\boldsymbol{\omega} 
	& = {}^I_{w}\mathbf{R} \mathbf{D}_w 
    \left(
    {}^w\boldsymbol{\omega}_m - \mathbf{T}_g {}^I\mathbf{a} - \mathbf{b}_g - \mathbf{n}_g
    \right)
	\\
	\label{eq:IMU reading model1}
	{}^I\mathbf{a} 
	& = {}^I_a\mathbf{R} \mathbf{D}_a 
    \left(
    {}^a\mathbf{a}_m - \mathbf{b}_a - \mathbf{n}_a    \right)
\end{align}
where \textcolor{black}{$\mathbf{D}_w = \mathbf{T}^{-1}_w$ and $\mathbf{D}_a = \mathbf{T}^{-1}_a$}. 
In practice we calibrate $\mathbf{D}_a$, $\mathbf{D}_w$, ${}^I_a\mathbf{R}$ (or ${}^I_w\mathbf{R}$) and $\mathbf{T}_g$ to prevent the need to have the unnecessary matrix inversions in the above measurement equations.
We only calibrate either ${}^I_w\mathbf{R}$ or ${}^I_a\mathbf{R}$ in Eq.~\eqref{eq:IMU reading model} and Eq.~\eqref{eq:IMU reading model1} since the base ``inertial'' frame  coincides with one of sensor frames.
If both ${}^I_w\mathbf{R}$ and ${}^I_a\mathbf{R}$ were calibrated, it would make the rotation between the IMU and camera unobservable due to over parameterization which will be validated in Section \ref{sec:overparameterization}.

\begin{figure}
\centering
\includegraphics[width=0.75\linewidth]{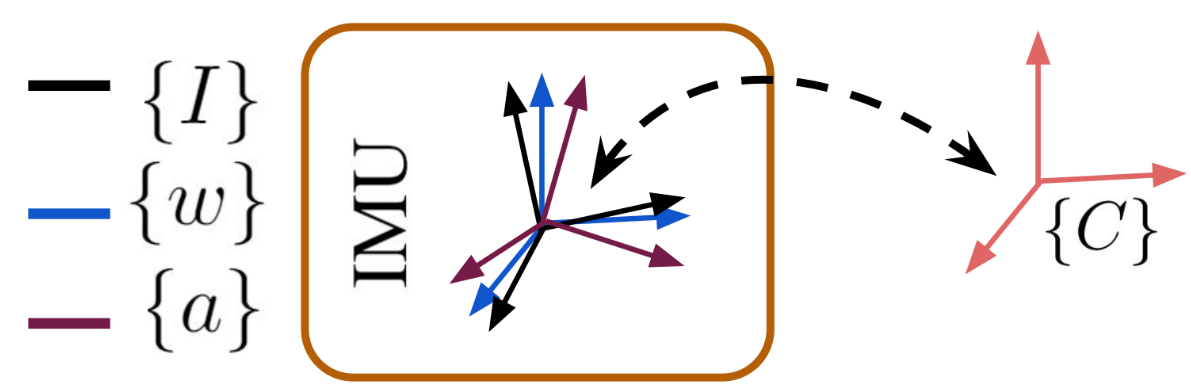}
\caption{
An IMU sensor composed of accelerometer and gyroscope.
The base ``inertial'' frame can be determined to coincide with either accelerometer frame $\{a\}$ or gyroscope frame $\{w\}$. 
There is a rigid 6D transformation between camera frame $\{C\}$ and inertial frame $\{I\}$. 
}
\label{fig:IMU_frame}
\end{figure}

\begin{table}[]
\caption{
IMU model variants and their estimated parameters.
}
\begin{tabular}{ccccccc}
\toprule
\textbf{Model}     & \textbf{Dim.} & $\mathbf{D}_w$     & $\mathbf{D}_a$     & ${}^I_w\mathbf{R}$ & ${}^I_a\mathbf{R}$ & $\mathbf{T}_g$    \\ \midrule
\textit{imu0}  & 0    & -                  & -                  & -                  & -                  & -                 \\ \midrule
\textit{imu1}  & 15   & $\mathbf{D}_{w6}$  & $\mathbf{D}_{a6}$  & ${}^I_w\mathbf{R}$ & -                  & -                 \\
\textit{imu2}  & 15   & $\mathbf{D}_{w6}$  & $\mathbf{D}_{a6}$  & -                  & ${}^I_a\mathbf{R}$ & -                 \\
\textit{imu3}  & 15   & $\mathbf{D}_{w9}$  & $\mathbf{D}_{a6}$  & -                  & -                  & -                 \\
\textit{imu4}  & 15   & $\mathbf{D}_{w6}$  & $\mathbf{D}_{a9}$  & -                  & -                  & -                 \\ \midrule
\textit{imu5}  & 18   & $\mathbf{D}_{w6}$  & $\mathbf{D}_{a6}$  & ${}^I_w\mathbf{R}$ & ${}^I_a\mathbf{R}$ & -                 \\
\textit{imu6}         & 24   & $\mathbf{D}'_{w6}$ & $\mathbf{D}'_{a6}$ & ${}^I_w\mathbf{R}$ &      -            & $\mathbf{T}_{g9}$ \\ \midrule
\textit{imu11} & 21   & $\mathbf{D}_{w6}$  & $\mathbf{D}_{a6}$  & ${}^I_w\mathbf{R}$ & -                  & $\mathbf{T}_{g6}$ \\
\textit{imu12} & 21   & $\mathbf{D}_{w6}$  & $\mathbf{D}_{a6}$  & -                  & ${}^I_a\mathbf{R}$ & $\mathbf{T}_{g6}$ \\
\textit{imu13} & 21   & $\mathbf{D}_{w9}$  & $\mathbf{D}_{a6}$  & -                  & -                  & $\mathbf{T}_{g6}$ \\
\textit{imu14} & 21   & $\mathbf{D}_{w6}$  & $\mathbf{D}_{a9}$  & -                  & -                  & $\mathbf{T}_{g6}$ \\ \midrule
\textit{imu21} & 24   & $\mathbf{D}_{w6}$  & $\mathbf{D}_{a6}$  & ${}^I_w\mathbf{R}$ & -                  & $\mathbf{T}_{g9}$ \\
\textit{imu22} & 24   & $\mathbf{D}_{w6}$  & $\mathbf{D}_{a6}$  & -                  & ${}^I_a\mathbf{R}$ & $\mathbf{T}_{g9}$ \\
\textit{imu23} & 24   & $\mathbf{D}_{w9}$  & $\mathbf{D}_{a6}$  & -                  & -                  & $\mathbf{T}_{g9}$ \\
\textit{imu24} & 24   & $\mathbf{D}_{w6}$  & $\mathbf{D}_{a9}$  & -                  & -                  & $\mathbf{T}_{g9}$ \\ \midrule
\textit{imu31} & 9    & -                  & $\mathbf{D}_{a9}$  & -                  & -                  & -                 \\
\textit{imu32} & 9    & $\mathbf{D}_{w9}$  & -                  & -                  & -                  & -                 \\
\textit{imu33} & 6    & -                  & -                  & -                  & -                  & $\mathbf{T}_{g6}$ \\
\textit{imu34} & 9    & -                  & -                  & -                  & -                  & $\mathbf{T}_{g9}$ \\
\bottomrule
\end{tabular}
\centering
\label{tab:imu model}
\end{table}

\subsubsection{IMU intrinsic model variants.}\label{sec:imu intrinsic model four}

Given the above general model [see Eq.~\eqref{eq:IMU reading model} and Eq.~\eqref{eq:IMU reading model1}], 
different choices of these intrinsic parameters can be made \citep{Li2014ICRA,Nikolic2016Sensors,Rehder2017Sensor,Schneider2019Sensor,Xiao2019Sensors,Jung2020TIM}.
In the following, we present a range of commonly-used IMU intrinsic model variants, 
and  will later compare against each other within an online filter-based VINS.
Specifically, each model is defined as follows:
\begin{itemize}
	\item \textit{imu1}: includes the rotation ${}^I_w\mathbf{R}$, $6$ parameters for $\mathbf{D}_{w}$ (and thus denoted by $\mathbf{D}_{w6}$) and 6 parameters for $\mathbf{D}_{a}$ (and thus denoted by $\mathbf{D}_{a6}$),
	as they assume the upper-triangular structure:
	\begin{align}
		\mathbf{D}_{*6}  = 
		\scalemath{0.9}{
		\begin{bmatrix}
		d_{*1} & d_{*2} & d_{*4} \\
		0 & d_{*3} & d_{*5} \\
		0 & 0 & d_{*6}
		\end{bmatrix}
		}
	    \label{eq:d6_general}
	\end{align}
	\item \textit{imu2}:
	includes the rotation ${}^I_a\mathbf{R}$ instead, $\mathbf{D}_{a6}$ and $\mathbf{D}_{w6}$,
	which is the model used by \cite{Schneider2019Sensor}.
	\item \textit{imu3}: combines \textit{imu1}'s $\mathbf{D}_{w6}$ and ${}^I_w\mathbf{R}$ into a general $3\times3$ matrix containing 9 parameters in total.
	Thus, in this variant we estimate the upper-triangle $\mathbf{D}_{a6}$ and a full matrix $\mathbf{D}_{w9}$ as: 
	\begin{align}
	\mathbf{D}_{*9}  = 
	\scalemath{0.9}{
	\begin{bmatrix}
	d_{*1} & d_{*4} & d_{*7} \\
	d_{*2} & d_{*5} & d_{*8} \\
	d_{*3} & d_{*6} & d_{*9}
	\end{bmatrix}
	\label{eq:d9_general}
	}
	\end{align}
	\item \textit{imu4}:
	is an extension of \textit{imu2} with a combination of the $\mathbf{D}_{a6}$ and ${}^I_a\mathbf{R}$.
	Similarly, in this variant we estimate the upper-triangle $\mathbf{D}_{w6}$ and a full matrix $\mathbf{D}_{a9}$.
	\item \textit{imu1A} ($A=1,\cdots,4$): combines \textit{imuA} with a 6-parameter gravity sensitivity $\mathbf{T}_{g6}$ as:
	\begin{align}
		\mathbf{T}_{g6}  = 
		\scalemath{0.9}{
		\begin{bmatrix}
		t_{g1} & t_{g2} & t_{g4} \\
		0 & t_{g3} & t_{g5} \\
		0 & 0 & t_{g6}
		\end{bmatrix}
		}
	    \label{eq:t6_general}
	\end{align}
	\item \textit{imu2A} ($A=1,\cdots,4$): combines \textit{imuA} a the 9-parameter gravity sensitivity $\mathbf{T}_{g9}$ as:
	\begin{align}
		\mathbf{T}_{g9}  = 
		\scalemath{0.9}{
		\begin{bmatrix}
		t_{g1} & t_{g4} & t_{g7} \\
		t_{g2} & t_{g5} & t_{g8} \\
		t_{g3} & t_{g6} & t_{g9}
		\end{bmatrix}
		}
	    \label{eq:t9_general}
	\end{align}
	\item \textit{imu5}: contains  $\mathbf{D}_{w6}$, $\mathbf{D}_{a6}$,  ${}^I_w\mathbf{R}$ and ${}^I_a\mathbf{R}$. 
	This is a redundant over-parameterized model which will be used to verify that ${}^I_w\mathbf{R}$ and ${}^I_a\mathbf{R}$ should not be calibrated simultaneously. 
	\item \textit{imu6}: contains $\mathbf{D}'_{w6}$, $\mathbf{D}'_{a6}$, ${}^I_w\mathbf{R}$ and $\mathbf{T}_{g9}$. 
	This is equivalent to the \textit{scale-misalignment} IMU intrinsic model \citep{Rehder2016ICRA} used in the calibration toolbox \citep{Furgale2013IROS}. 
	$\mathbf{D}'_{*6}$ assumes the lower triangular structure:
	\begin{align}
	    \mathbf{D}'_{*6} & =
	    \begin{bmatrix}
	    d_{*1} & 0 & 0 \\
	    d_{*2} & d_{*4} & 0 \\
	    d_{*3} & d_{*5} & d_{*6}
	    \end{bmatrix}
	\end{align}
	\item \textit{imu3A} ($A=1,\cdots,4$): models a subset of the parameters of the general model while assuming the others known; 
	that is, only calibrates  $\mathbf{D}_{a9}$ in {\em imu31}, $\mathbf{D}_{w9}$ in {\em imu32}, $\mathbf{T}_{g6}$ in {\em imu33}, and $\mathbf{T}_{g9}$ in {\em imu34}.
\end{itemize}
These different models are summarized in Table \ref{tab:imu model}.
Note that for presentation clarity, \textit{imu22} is used in the ensuing  system derivations and  analysis.

\subsection{Camera Model}

\begin{figure}
\centering
\includegraphics[width=0.75\columnwidth]{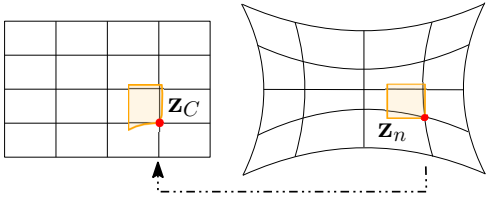}
\caption{Distorting from normalized to a raw image pixel.}
\label{fig:undistort}
\end{figure}

Consider a 3D point feature, ${}^G\mathbf{p}_f$, that is captured by a camera with visual measurement function written as: 
\begin{align}
    \label{eq:camera_hc}
    \mathbf{z}_C & = 
    \begin{bmatrix}
    u \\ v
    \end{bmatrix} 
    + \mathbf{n}_C
\end{align}
where $\mathbf{n}_C$ denotes the measurement noise; 
$u$ and $v$ are the distorted image pixel coordinates:
\begin{align}
    \label{eq:camera_hd}
    \begin{bmatrix}
    u \\ v
    \end{bmatrix} 
    &= 
    \mathbf{h}_d
    \left(
    \mathbf{z}_n
    , \mathbf{x}_{Cin}
    \right)
\end{align}
where $\mathbf{z}_n = [ u_n ~ v_n ]^{\top}$ represents the normalized image pixel and $\mathbf{h}_d(\cdot)$ maps the normalized image pixel onto the image plane based on the camera intrinsic parameters $\mathbf{x}_{Cin}$ and camera model.
Specifically, a pinhole model with radial-tangential (\textit{radtan}) or equivalent-distant (\textit{equidist}) distortion can be used, and the \textit{radtan} model is used in the following derivations and analysis [see \cite{Furgale2013IROS,OPENCV_library}].
Fig.~\ref{fig:undistort} visualizes this image distortion operation.
$\mathbf{x}_{Cin}$ and $\mathbf{h}_d(\cdot)$ are given by:
\begin{align}
    \label{eq:camera intrinsics}
    \mathbf{x}_{Cin}  &= 
    \begin{bmatrix}
    f_u & f_v & c_u & c_v & k_1 & k_2 & p_1 & p_2
    \end{bmatrix}^{\top}
    \\
    \begin{bmatrix}
    u \\ v
    \end{bmatrix} &  =
    \begin{bmatrix}
    f_u & 0 \\
    0 & f_v
    \end{bmatrix}
    \begin{bmatrix}
    u_d \\ v_d
    \end{bmatrix}
    +
    \begin{bmatrix}
    c_u \\ c_v
    \end{bmatrix}
    \\
    \begin{bmatrix}
    u_d \\ v_d
    \end{bmatrix} 
    & = 
    \begin{bmatrix}
    du_n+2p_1u_nv_n + p_2(r^2+2u^2_n)\\
    dv_n+p_1(r^2+2v^2_n) + 2p_2u_nv_n
    \end{bmatrix}
\end{align}
where $r^2 = u^2_n + v^2_n$; $d = 1 + k_1 r^2 + k_2 r^4$; $f_u$ and $f_v$ are the camera focal length; 
$c_u$ and $c_v$ denotes the image principal point; 
$k_1$ and $k_2$ represent the radial distortion coefficients while $p_1$ and $p_2$ are tangential distortion coefficients. 

Normalized image pixel $u$ and $v$ can be acquired by projecting 3D feature ${}^{C}\mathbf{p}_f = [{}^{C}x_f ~ {}^{C}y_f ~ {}^{C}z_f ]^{\top}$ in camera frame into 2D plane as:
\begin{align}
    \mathbf{z}_n
    \label{eq:camera_hp}
    & =
    \mathbf{h}_p
    \left({}^{C}\mathbf{p}_{f}
    \right) 
    \triangleq
    \frac{1}{{}^{C}z_f}
    \begin{bmatrix}
    {}^{C}x_f \\ {}^{C}y_f
    \end{bmatrix}
    \\
    \label{eq:camera_ht}
    {}^{C}\mathbf{p}_{f} 
    & =
    \mathbf{h}_t(
    {}^I_G\mathbf{R}, {}^G\mathbf{p}_I, {}^C_I\mathbf{R}, {}^C\mathbf{p}_I,{}^G\mathbf{p}_f
    )
    \\
    &
    \triangleq 
    {}^C_I\mathbf{R} {}^{I}_G\mathbf{R}
    \left(
    {}^G\mathbf{p}_f - {}^G\mathbf{p}_{I}
    \right)
    + {}^C\mathbf{p}_I
    \notag
\end{align}
where $\{{}^C_I\mathbf{R}, {}^C\mathbf{p}_I\}$ represents the rigid transformation between the IMU and camera frames.

\subsubsection{Temporal calibration.}

Two common variants of camera sensing modes are global shutter (GS) and rolling shutter (RS).
GS cameras expose all pixels at a single time instance, while, typically lower-cost, RS cameras expose each row sequentially.
As shown by \cite{Guo2014RSS}, it may lead to large estimation errors if this RS effect is not taken into account when using RS cameras. 
Additionally, the camera and IMU measurement timestamps can be incorrect due to processing or communication delays, or different clock references.
To address this, we model both the time offset and camera readout time to ensure all measurements are processed in a common clock frame of reference and at the correct corresponding poses.
Specially,  $t_d$ denotes the time offset between IMU and camera timeline and $t_r$ denotes the RS readout time for the whole image. 
If $t$ denotes the time when the pixel is captured, the RS measurement function for a pixel captured in the $m$-th row (out of total $M$ rows) is given by: 
\begin{align}
    \label{eq:camera_ht(t)}
    {}^{C}\mathbf{p}_{f} 
    & =
    \mathbf{h}_t(
    {}^{I(t)}_G\mathbf{R}, {}^G\mathbf{p}_{I(t)},{}^C_I\mathbf{R}, {}^C\mathbf{p}_I, {}^G\mathbf{p}_f
    )
    \\
    & \triangleq
    {}^C_I\mathbf{R} {}^{I(t)}_G\mathbf{R}
    \left(
    {}^G\mathbf{p}_f - {}^G\mathbf{p}_{I(t)}
    \right)
    + {}^C\mathbf{p}_I
    \notag
    \\
    \label{eq:td}
    t_I &= t_C + t_d
    \\
    t & = t_I + \frac{m}{M} t_r  \label{eq:rolling_shutter_time}
\end{align}
where $t_I$ is the IMU state time corresponding to the captured image time $t_C$ when the first row of the image is collected.
If the readout time $t_r = 0$, then the camera is actually a GS camera and all rows are a function of the same pose.
As usual,
$\{{}^G_{I(t)}\mathbf{R}, {}^G\mathbf{p}_{I(t)}\}$ is the IMU global pose corresponding to the camera measurement time $t$.

\section{System Models} \label{sec:statetrans}

The state vector $\mathbf{x}$ of the visual-inertial system under consideration includes the inertial navigation state $\mathbf{x}_{I}$, IMU intrinsic parameter $\mathbf{x}_{in}$, IMU-camera spatial-temporal extrinsic calibration $\mathbf{x}_{IC}$, camera intrinsic calibration $\mathbf{x}_{Cin}$ and feature positions $\mathbf{x}_f$, which is given by:
\begin{align}
    \label{eq:x}
    \mathbf{x} & = 
    \begin{bmatrix}
    \mathbf{x}^{\top}_{I} & 
    \mathbf{x}^{\top}_{IC} & 
    \mathbf{x}^{\top}_{Cin} &
    \mathbf{x}^{\top}_{f}
    \end{bmatrix}^{\top}
    \\ 
    \mathbf{x}_I & = 
    \begin{bmatrix}
    \mathbf{x}^{\top}_n & | & \mathbf{x}^{\top}_b & | & \mathbf{x}^{\top}_{in} 
    \end{bmatrix}^{\top}
    \\
    &
    =
    \begin{bmatrix}
    {}^I_G\bar{q}^{\top} & 
    {}^G\mathbf{p}^{\top}_{I} & 
    {}^G\mathbf{v}^{\top}_{I} & 
    | &
    \mathbf{b}^{\top}_{g} & 
    \mathbf{b}^{\top}_{a} & 
    | & \mathbf{x}^{\top}_{in} 
    \end{bmatrix}^{\top}
    \notag
    \\
    \mathbf{x}_{in} & = 
	\begin{bmatrix}
	\mathbf{x}^{\top}_{Dw} & \mathbf{x}^{\top}_{Da} & {}^I_a\bar{q}^{\top} & \mathbf{x}^{\top}_{Tg}
	\end{bmatrix}^{\top}
	\\
	\mathbf{x}_{IC} & = 
	\begin{bmatrix}
	{}^C_I\bar{q}^{\top} & {}^C\mathbf{p}^{\top}_I & t_d & t_r
	\end{bmatrix}^{\top}
\end{align}
where ${}^I_G\bar{q}$ denotes quaternion with JPL convention~\citep{Trawny2005_Q_TR} and corresponds to the rotation matrix ${}^I_G\mathbf{R}$, which represents the rotation from $\{G\}$ to $\{I\}$.
${}^G\mathbf{p}_{I}$ and ${}^G\mathbf{v}_I$ denote the IMU position and velocity in $\{G\}$.
$\mathbf{x}_n$ denotes the IMU navigation states containing the ${}^I_G\bar{q}$, ${}^G\mathbf{p}_I$ and ${}^G\mathbf{v}_I$. 
$\mathbf{x}_b$ denotes the IMU bias states containing $\mathbf{b}_g$ and $\mathbf{b}_a$. 
${}^C_I\bar{q}$ and ${}^C\mathbf{p}_I$ denotes the rigid transformation between $\{C\}$ and $\{I\}$. 
$t_d$ and $t_r$ represent the IMU-camera time offset and camera readout time. 
IMU intrinsics, $\mathbf{x}_{in}$, contains $\mathbf{x}_{Dw}$, $\mathbf{x}_{Da}$, $\mathbf{x}_{Tg}$ and ${}^I_a\bar{q}$, where $\mathbf{x}_{Dw}$, $\mathbf{x}_{Da}$ and $\mathbf{x}_{Tg}$ are non-zero elements stored column-wise in $\mathbf{D}_w$, $\mathbf{D}_a$ and $\mathbf{T}_g$. 
Specifically, they are defined as:  
\begin{align}
    \mathbf{x}_{D*} & =
    \begin{bmatrix}
    d_{*1} ~~ d_{*2} ~~  d_{*3} ~~  d_{*4} ~~  d_{*5} ~~  d_{*6}
    \end{bmatrix}^{\top}
    \\
    \mathbf{x}_{Tg} & = 
    \begin{bmatrix}
    t_{g1} ~~ t_{g2} ~~ t_{g3} ~~ t_{g4} ~~ t_{g5} ~~ t_{g6} ~~ t_{g7} ~~ t_{g8} ~~ t_{g9}
    \end{bmatrix}^{\top}
\end{align}
It is important to note we use the quaternion left multiplicative error defined by $\bar q \approx [ \frac{1}{2}\delta \bm \theta^{\top} ~ 1 ]^{\top} \otimes \hat{\bar q}$, where $\otimes$ denotes quaternion multiplication and error state is equivalent to the $SO(3)$ error (i.e. ${}^I_G\mathbf R \approx ( \mathbf I_3 - \lfloor \delta \bm \theta \rfloor ) {}^I_G\hat{\mathbf R}$) \citep{Trawny2005_Q_TR}.
The dynamics of the inertial navigation state $\mathbf{x}_I$ is given by~\citep{Chatfield1997}: 
\begin{align}
    \label{eq:imu dynamics}
	{}^I_G\dot{\bar{q}} & = 
	\frac{1}{2}\boldsymbol{\Omega}({}^I\boldsymbol{\omega}){}^I_G\bar{q}  
	~,~~ 
	{}^G\dot{\mathbf{p}}_{I}  = {}^G\mathbf{v}_{I} 
	\\
	{}^G\dot{\mathbf{v}}_I & = {}^I_G\mathbf{R}^\top{}^I\mathbf{a} - {}^G\mathbf{g}
	~,~~  
	\dot{\mathbf{b}}_g  = \mathbf{n}_{wg}
	~,~~ 
	\dot{\mathbf{b}}_a  = \mathbf{n}_{wa}
	\notag
\end{align}
where  $\boldsymbol{\Omega}(\bm{\omega})= \begin{bmatrix} -\lfloor{\bm\omega}  \rfloor & \bm\omega \\ -\bm\omega^T & 0 \end{bmatrix}$,
$\mathbf{n}_{wg}$ and $\mathbf{n}_{wa}$ are zero-mean white Gaussian noises driving $\mathbf{b}_g$ and $\mathbf{b}_a$, respectively,
and the known global gravity assumes ${}^{G}\mathbf{g} = \left[0 ~ 0 ~ 9.81\right]^{\top}$,
while the rest of the states have zero dynamics.

\subsection{Analytic Inertial Integration}

In the following, we present the analytic IMU integration and corresponding error state transition matrix,
which was originally presented in our prior work \citep{Yang2020ICRA} and later extended  to include IMU intrinsics \citep{Yang2020RSS}.
Specifically, we compute the integration of IMU dynamics based on Eq.~\eqref{eq:imu dynamics} from time step $t_k$ to $t_{k+1}$:
\begin{align} 
    {}^{I_{k+1}}_G{\mathbf{R}} & = 
    \Delta \mathbf{R}^{\top}_k {}^{I_k}_G{\mathbf{R}} \label{eq:aci-1}
    \\
    {}^G{\mathbf{p}}_{I_{k+1}} & =  
    {}^{G}{\mathbf{p}}_{I_k} + {}^G{\mathbf{v}}_{I_k}\delta t_k
    + 
    {}^{I_k}_G{\mathbf{R}}^\top
    \Delta \mathbf{p}_k
    - \frac{1}{2}{}^G\mathbf{g}\delta t^2_k
    \label{eq:aci-2}
    \\
    {}^G\hat{\mathbf{v}}_{I_{k+1}} & 
    = {}^{G}\hat{\mathbf{v}}_{I_k} 
    + {}^{I_k}_G{\mathbf{R}}^\top
    \Delta \mathbf{v}_k - {}^G\mathbf{g}\delta t_k
    \label{eq:aci-3}
    \\
    \mathbf{b}_{g_{k+1}} & = \mathbf{b}_{g_{k}} + 
    \scalemath{0.85}{
    \int^{t_{k+1}}_{t_k} \mathbf{n}_{wg} d \tau 
    }
    \label{eq:aci-4}
    \\
    \mathbf{b}_{a_{k+1}} & = \mathbf{b}_{a_{k}} + 
    \scalemath{0.85}{
    \int^{t_{k+1}}_{t_k} \mathbf{n}_{wa} d \tau 
    } \label{eq:aci-5}
\end{align}
where $\delta t_k = t_{k+1} - t_{k}$, and  the three IMU integration quantities are given by:
\begin{align}
    \Delta \mathbf{R}_k & \triangleq
    {}^{I_k}_{I_{k+1}}\mathbf{R}
    =  \exp
    \scalemath{0.9}{
    \left(\int^{t_{k+1}}_{t_{k}} {}^{I_{\tau}}\boldsymbol{\omega} d \tau\right) 
    }\label{eq:integration_components_1}
    \\
    \Delta \mathbf{p}_{k} & \triangleq 
    \scalemath{0.9}{\int^{t_{k+1}}_{t_{k}} \int^{s}_{t_{k}} {}^{I_k}_{I_\tau}\mathbf{R} {}^{I_{\tau}} \mathbf{a}  d \tau d s}
    \label{eq:integration_components_2}
    \\
    \Delta \mathbf{v}_{k} & \triangleq  
    \scalemath{0.9}{
    \int^{t_{k+1}}_{t_{k}}
    {}^{I_k}_{I_\tau}\mathbf{R} {}^{I_{\tau}} \mathbf{a}  d \tau
    }\label{eq:integration_components_3}
\end{align}
where $\textrm{exp}(\cdot)$ is the ${SO}(3)$ matrix exponential \citep{chirikjian2011stochastic}.
The current best estimate of the true angular velocity, ${}^{I_k}{\boldsymbol{\omega}}$,  and linear acceleration, ${}^{I_k}{\mathbf{a}}$, within this time interval $[t_k,t_{k+1}]$ is the expectation of Eq. \eqref{eq:IMU reading model} and Eq. \eqref{eq:IMU reading model1}.
Assuming constant ${}^{I_k}\hat{\boldsymbol{\omega}}$ and ${}^{I_k}\hat{\mathbf{a}}$ within the time interval, we  approximate $\Delta \hat{\mathbf{R}}_k$, $\Delta \hat{\mathbf{p}}_k$ and $\Delta \hat{\mathbf{v}}_k$ as: 
\begin{align}
    \Delta \hat{\mathbf{R}}_k & \simeq \exp \left(
     {}^{I_k}\hat{\boldsymbol{\omega}}\delta t_k
     \right)
     \\
     \Delta \hat{\mathbf{p}}_k & \simeq
     \left(
     \int^{t_{k+1}}_{t_k} \int^{s}_{t_k} 
     {}^{I_k}_{I_{\tau}}\hat{\mathbf{R}}
     d \tau ds
    \right)
     {}^{I_k}\hat{\mathbf{a}}
     \triangleq \boldsymbol{\Xi}_2 {}^{I_k} \hat{\mathbf{a}}
     \\
    \boldsymbol{\Xi}_2  &\triangleq 
    \int^{t_{k+1}}_{t_k} \int^{s}_{t_k}
    \exp \left(
     {}^{I_k}\hat{\boldsymbol{\omega}}\delta\tau
     \right)
    d\tau ds
    \\ 
     \Delta \hat{\mathbf{v}}_k & \simeq
     \left(
     \int^{t_{k+1}}_{t_k}
     {}^{I_k}_{I_{\tau}}\hat{\mathbf{R}}d\tau
    \right)
     {}^{I_k}\hat{\mathbf{a}}
     \triangleq  \boldsymbol{\Xi}_1 {}^{I_k}\hat{\mathbf{a}}
     \\
     \boldsymbol{\Xi}_1 & \triangleq 
    \int^{t_{k+1}}_{t_k} 
    \exp \left(
     {}^{I_k}\hat{\boldsymbol{\omega}}\delta\tau
     \right)
    d{\tau}
\end{align}
where $\delta\tau = t_{\tau} - t_{k}$,  $\boldsymbol{\Xi}_1$ and $\boldsymbol{\Xi}_2$ are defined as integration components which can be evaluated either analytically \citep{Yang2020RSS} or numerically using the Runge–Kutta fourth-order (RK4) method. 
 ${}^{I_k}\hat{\boldsymbol{\omega}}$ and ${}^{I_k}\hat{\mathbf{a}}$ are computed as (note that we drop the timestamp $k$ for simplicity): 
\begin{align}
    {}^I\hat{\boldsymbol{\omega}} & = {}^I_w\hat{\mathbf{R}} \hat{\mathbf{D}}_{w} {}^w\hat{\boldsymbol{\omega}} \\
    {}^w\hat{\boldsymbol{\omega}} & =   {}^w\boldsymbol{\omega}_m - \hat{\mathbf{T}}_g {}^I\hat{\mathbf{a}}-\hat{\mathbf{b}}_g
    =:
    \begin{bmatrix}
    {}^w\hat{w}_1 & {}^w\hat{w}_2 & {}^w\hat{w}_3
    \end{bmatrix}^{\top} \\
    {}^I\hat{\mathbf{a}} &= {}^I_a\hat{\mathbf{R}} \hat{\mathbf{D}}_{a} {}^a\hat{\mathbf{a}} \\
    {}^a\hat{\mathbf{a}} & = {}^a\mathbf{a}_m - \hat{\mathbf{b}}_a
    =:
    \begin{bmatrix}
    {}^a\hat{a}_1 & {}^a\hat{a}_2  & {}^a\hat{a}_3
    \end{bmatrix}^{\top}
\end{align}
where  ${}^I_w\hat{\mathbf{R}}=\mathbf I_3$  for \textit{imu22}. 
As ${}^{I_k}\hat{\boldsymbol{\omega}}$ and ${}^{I_k}\hat{\mathbf{a}}$ are assumed to be constant, 
 the state estimate at $t_{k+1}$ is propagated as follows [see Eq. \eqref{eq:aci-1}-\eqref{eq:aci-5}]: 
\begin{align}
    \label{eq:prop_R}
    {}^{I_{k+1}}_G\hat{\mathbf{R}} & 
     \simeq  
     \Delta \mathbf{R}^{\top}_k
     {}^{I_k}_G\hat{\mathbf{R}}
    \\
    \label{eq:prop_p}
    {}^G\hat{\mathbf{p}}_{I_{k+1}} & \simeq  
    {}^{G}\hat{\mathbf{p}}_{I_k} + {}^G\hat{\mathbf{v}}_{I_k}\delta t_k
    + 
    {}^{I_k}_G\hat{\mathbf{R}}^\top
    \Delta \hat{\mathbf{p}}_k
    - \frac{1}{2}{}^G\mathbf{g}\delta t^2_k
    \\
    \label{eq:prop_v}
    {}^G\hat{\mathbf{v}}_{I_{k+1}} & \simeq  {}^{G}\hat{\mathbf{v}}_{I_k} 
    + {}^{I_k}_G\hat{\mathbf{R}}^\top
    \Delta \hat{\mathbf{v}}_k - {}^G\mathbf{g}\delta t_k
    \\
    \label{eq:prop_bg}
    \hat{\mathbf{b}}_{g_{k+1}} & = \hat{\mathbf{b}}_{g_k}
    \\
    \label{eq:prop_ba}
    \hat{\mathbf{b}}_{a_{k+1}} & = \hat{\mathbf{b}}_{a_k}
\end{align}

\subsection{Linearized System Model}

We first linearize the three IMU pre-integration components [see Eq. \eqref{eq:integration_components_1}-\eqref{eq:integration_components_3}]:
\begin{align}
    \Delta \mathbf{R}_k & =
    \Delta \hat{\mathbf{R}}_k \Delta \tilde{\mathbf{R}}_k
    \triangleq
    \Delta \hat{\mathbf{R}}_k
    \exp \left(
    \mathbf{J}_r (\Delta \hat{\boldsymbol{\theta}}_k)
    {}^{I_k}\tilde{\boldsymbol{\omega}}\delta t_k
    \right)
    \\
    \Delta \mathbf{p}_k & =
    \Delta \hat{\mathbf{p}}_k + \Delta \tilde{\mathbf{p}}_k
    \triangleq 
    \Delta \hat{\mathbf{p}}_k -\boldsymbol{\Xi}_4 {}^{I_k}\tilde{\boldsymbol{\omega}} + 
    \boldsymbol{\Xi}_2 {}^{I_k}\tilde{\mathbf{a}} 
    \\
    \Delta \mathbf{v}_k & =
    \Delta \hat{\mathbf{v}}_k + \Delta \tilde{\mathbf{v}}_k
    \triangleq
    \Delta \hat{\mathbf{v}}_k -\boldsymbol{\Xi}_3 {}^{I_k}\tilde{\boldsymbol{\omega}} + 
    \boldsymbol{\Xi}_1 {}^{I_k}\tilde{\mathbf{a}} 
\end{align}
where $\mathbf{J}_r(\Delta \hat{\boldsymbol{\theta}}_k) \triangleq \mathbf{J}_r \left( {}^{I_k}\hat{\boldsymbol{\omega}}\delta t_k \right)$ denotes the right Jacobian of ${SO}(3)$ \citep{chirikjian2011stochastic}.
The derivation and the definitions of
${}^{I_k}\tilde{\boldsymbol{\omega}}$ and ${}^{I_k}\tilde{\mathbf{a}}$ can be found in Appendix \ref{adp:imu jacobians}.
The integrated components $\boldsymbol{\Xi}_3$ and $\boldsymbol{\Xi}_4$ are defined as:
\begin{align}
    \boldsymbol{\Xi}_3 & \triangleq 
    \int^{t_{k+1}}_{t_k} {}^{I_k}_{I_{\tau}}\mathbf{R}
    \lfloor {}^{I_{\tau}}\mathbf{a} \rfloor
    \mathbf{J}_r \left( {}^{I_k}{\boldsymbol{\omega}} \delta \tau \right) \delta \tau
    d{\tau}
    \\
    \boldsymbol{\Xi}_4 & \triangleq 
    \int^{t_{k+1}}_{t_k} \int^{s}_{t_k} {}^{I_k}_{I_{\tau}}\mathbf{R}
    \lfloor {}^{I_{\tau}}\mathbf{a} \rfloor
    \mathbf{J}_r \left( {}^{I_k}{\boldsymbol{\omega}} \delta \tau \right) \delta \tau
    d{\tau}ds
\end{align}
With the IMU preintegration, the linearized inertial navigation system of error state is given by:
\begin{align}
    \delta \boldsymbol{\theta}_{k+1} & \simeq 
    \Delta \hat{\mathbf{R}}^{\top}_k \delta \boldsymbol{\theta}_k 
    + \mathbf{J}_r \left( \Delta \hat{\boldsymbol{\theta}}_k \right) \delta t_k {}^{I_k}\tilde{\boldsymbol{\omega}}
    \notag
    \\
    {}^G\tilde{\mathbf{p}}_{I_{k+1}} & \simeq 
    {}^G\tilde{\mathbf{p}}_{I_{k}}+{}^G\tilde{\mathbf{v}}_k\delta t_k 
    - {}^{I_{k}}_G\hat{\mathbf{R}}^\top
    \lfloor \Delta \hat{\mathbf{p}}_k \rfloor 
    \delta \boldsymbol{\theta}_k 
    + {}^{I_{k}}_G\hat{\mathbf{R}}^\top \Delta \tilde{\mathbf{p}}_k
    \notag
    \\
    {}^G\tilde{\mathbf{v}}_{I_{k+1}} & \simeq 
    {}^G\tilde{\mathbf{v}}_{I_{k}} 
    - {}^{I_{k}}_G\hat{\mathbf{R}}^\top
    \lfloor \Delta \hat{\mathbf{v}}_k \rfloor 
    \delta \boldsymbol{\theta}_k 
    + {}^{I_{k}}_G\hat{\mathbf{R}}^\top\Delta \tilde{\mathbf{v}}_k
    \notag
\end{align}
As such, the full linearized error-state system for \textit{imu22} is:
\begin{align}
    \label{eq:state transition}
    \tilde{\mathbf{x}}_{I_{k+1}} & \simeq \boldsymbol{\Phi}_{I{(k+1,k)}}\tilde{\mathbf{x}}_{I_k} + \mathbf{G}_{Ik}\mathbf{n}_{dk}
    \\
    \boldsymbol{\Phi}_{I(k+1,k)} & = 
    \begin{bmatrix}
    \boldsymbol{\Phi}_{nn}  & 
    \boldsymbol{\Phi}_{wa} \mathbf{H}_b & 
    \boldsymbol{\Phi}_{wa} \mathbf{H}_{in} \\
    \mathbf{0}_{6\times 9} & 
    \mathbf{I}_6 & 
    \mathbf{0}_{6\times 24} \\
    \mathbf{0}_{24\times 9} & \mathbf{0}_{24\times 6} & \mathbf{I}_{24}
    \end{bmatrix}
    \\
    \mathbf{G}_{Ik} & = 
    \begin{bmatrix}
    \boldsymbol{\Phi}_{wa} \mathbf{H}_n & \mathbf{0}_{9\times 6} \\
    \mathbf{0}_{6} & \mathbf{I}_6 \delta t_k \\
    \mathbf{0}_{24\times 6} & \mathbf{0}_{24\times 6}
    \end{bmatrix}
\end{align}
where $\boldsymbol{\Phi}_{I(k+1,k)}$ and $\mathbf{G}_{Ik}$ are the state transition matrix and noise Jacobians for the inertial state $\mathbf{x}_I$ dynamics,
$\mathbf{H}_{b}$, $\mathbf{H}_{in}$ and $\mathbf{H}_n$ are Jacobians related to bias, IMU intrinsics and noises, which can be found in Appendix \ref{adp:imu jacobians},
$\mathbf{n}_{dk} = [\mathbf{n}^{\top}_{dg} ~ \mathbf{n}^{\top}_{da} ~ \mathbf{n}^{\top}_{dwg} ~ \mathbf{n}^{\top}_{dwa}]^\top$ is the discrete-time IMU noises,
while $\boldsymbol{\Phi}_{nn}$ and $\boldsymbol{\Phi}_{wa}$ can be computed as:
\begin{align}
    \boldsymbol{\Phi}_{nn} & =
    \begin{bmatrix}
    \Delta \hat{\mathbf{R}}^{\top}_k & \mathbf{0}_3 & \mathbf{0}_3 \\
    -{}^{I_k}_G\hat{\mathbf{R}}^\top\lfloor \Delta \hat{\mathbf{p}}_k \rfloor & \mathbf{I}_3 & \mathbf{I}_3 \delta t_k \\
    -{}^{I_k}_G\hat{\mathbf{R}}^\top\lfloor \Delta \hat{\mathbf{v}}_k \rfloor & \mathbf{0}_3 & \mathbf{I}_3 
    \end{bmatrix}
    \\
    \boldsymbol{\Phi}_{wa} & = 
    \begin{bmatrix}
    \mathbf{J}_r(\delta \boldsymbol{\theta}_k) \delta t_k & \mathbf{0}_3 \\
    -{}^{I_k}_G\hat{\mathbf{R}}^\top\boldsymbol{\Xi}_4 & {}^{I_k}_G\hat{\mathbf{R}}^\top\boldsymbol{\Xi}_2 \\
    -{}^{I_k}_G\hat{\mathbf{R}}^\top\boldsymbol{\Xi}_3 & {}^{I_k}_G\hat{\mathbf{R}}^\top\boldsymbol{\Xi}_1 
    \end{bmatrix}
\end{align}
Without loss of generality, we consider a single 3D feature ${}^G\mathbf{p}_f$ in the state vector $\mathbf{x}_f$.
Since there is zero dynamics for $\mathbf{x}_{IC}$, $\mathbf{x}_{Cin}$ and $\mathbf{x}_f$, 
we can write the state transition matrix for the whole state vector $\mathbf{x}$ as [see Eq.~\eqref{eq:x}]: 
\begin{align}
    \boldsymbol{\Phi}_{k+1,k} & = 
    \begin{bmatrix}
    \boldsymbol{\Phi}_{I(k+1,k)} & \mathbf{0} & \mathbf{0} & \mathbf{0} \\
    \mathbf{0} & \boldsymbol{\Phi}_{IC} & \mathbf{0} & \mathbf{0}        \\
    \mathbf{0} & \mathbf{0}  & \boldsymbol{\Phi}_{Cin} & \mathbf{0} \\
    \mathbf{0} & \mathbf{0}  & \mathbf{0}  & \boldsymbol{\Phi}_{f}
    \end{bmatrix}
\end{align}
where $\boldsymbol{\Phi}_{IC}=\mathbf{I}_{8}$, $\boldsymbol{\Phi}_{Cin}=\mathbf{I}_{8}$, and $\boldsymbol{\Phi}_f = \mathbf{I}_3$.

\subsection{Linearized Measurement Model}\label{sec:update}

We first build the overall camera measurements function $\mathbf{h}_C(\cdot)$ by incorporating the distortion function $\mathbf{h}_d(\cdot)$ [see Eq. \eqref{eq:camera_hd}], the projection function $\mathbf{h}_p(\cdot)$ [see Eq. \eqref{eq:camera_hp}] and the transformation function $\mathbf{h}_t(\cdot)$ [see Eq. \eqref{eq:camera_ht(t)}]:
\begin{align}
\label{eq:visual_full_meas}
\mathbf{z}_{C}
&= \mathbf{h}_C(\mathbf x) + \mathbf n_{C} \\
&=\mathbf{h}_d(\mathbf{z}_{n}, \mathbf{x}_{Cin}) + \mathbf{n}_{C} \\
&= \mathbf{h}_d(\mathbf{h}_p({}^{C_k}\mathbf{p}_f), \mathbf{x}_{Cin}) + \mathbf{n}_{C} \\
&= \mathbf{h}_d(\mathbf{h}_p(\mathbf{h}_t({}^{C(t)}_{G}\mathbf{R},{}^{G}\mathbf{p}_{C(t)},{}^{G}\mathbf{p}_f)), \mathbf{x}_{Cin}) + \mathbf{n}_{C} \label{eq:featmeasurement}
\end{align}
We need to linearize the overall visual model for the update, which is given by:
\begin{align}
    \label{eq:visual_full_meas_linearized}
    \tilde{\mathbf{z}}_C & \simeq
    \mathbf{H}_C \tilde{\mathbf x} + \mathbf{n}_C
\end{align}
where $\tilde{\mathbf{z}}_C \triangleq \mathbf{z}_C - \mathbf{h}_C(\hat{\mathbf{x}})$ and $\mathbf{H}_C \triangleq \frac{\partial \tilde{\mathbf{z}}_C}{\partial \tilde{\mathbf{x}}}$. 
Using the chainrule we get the following Jacobian matrix:
\begin{align}
    \label{eq:Hc}
    \mathbf{H}_C & 
    =
    \begin{bmatrix}
    \frac{\partial \tilde{\mathbf{z}}_C}{\partial \tilde{\mathbf{x}}_I} & 
    \frac{\partial \tilde{\mathbf{z}}_C}{\partial \tilde{\mathbf{x}}_{IC}} & 
    \frac{\partial \tilde{\mathbf{z}}_C}{\partial \tilde{\mathbf{x}}_{Cin}} & 
    \frac{\partial \tilde{\mathbf{z}}_C}{\partial \tilde{\mathbf{x}}_f}
    \end{bmatrix}
    \\
    & = 
    \begin{bmatrix}
    \mathbf{H}_{\mathbf{p}_f}
    \frac{\partial {}^{C}\tilde{\mathbf{p}}_f}{\partial \tilde{\mathbf{x}}_I}& 
    \mathbf{H}_{\mathbf{p}_f}
    \frac{\partial {}^{C}\tilde{\mathbf{p}}_f}{\partial \tilde{\mathbf{x}}_{IC}} & 
    \frac{\partial \tilde{\mathbf{z}}_C}{\partial \tilde{\mathbf{x}}_{Cin}} & 
    \mathbf{H}_{\mathbf{p}_f}
    \frac{\partial {}^{C}\tilde{\mathbf{p}}_f}{\partial \tilde{\mathbf{x}}_f}
    \end{bmatrix}
    \notag
\end{align}
where $\mathbf{H}_{\mathbf{p}_f}=
\frac{\partial \tilde{\mathbf{z}}_C}{\partial \tilde{\mathbf{z}}_n} 
\frac{\partial \tilde{\mathbf{z}}_n}{\partial {}^{C}\tilde{\mathbf{p}}_f}$.
All the pertinent matrices $\frac{\partial {}^{C}\tilde{\mathbf{p}}_f}{\partial \tilde{\mathbf{x}}_I}$, $\frac{\partial {}^{C}\tilde{\mathbf{p}}_f}{\partial \tilde{\mathbf{x}}_{IC}}$, $\frac{\partial {}^{C}\tilde{\mathbf{p}}_f}{\partial \tilde{\mathbf{x}}_f}$ and $\mathbf{H}_{\mathbf{p}_f}$ can be computed as shown in Appendix \ref{sec:apx_cam_jacob}.

\section{Observability Analysis} \label{sec:obsanalysis}

Observability analysis plays an important role in determining whether or not the states are estimable for given measurements
and can also be leveraged to identify degenerate motions that can negatively affect estimation performance \citep{Huang2012thesis,Martinelli2014FTR}.
While the observability analysis of VINS has been well studied \citep{Hesch2014TRO}, 
the observability properties and degenerate motions of VINS with full self-calibration (in particular, IMU and camera intrinsic calibration)  have not been sufficiently investigated.
To this end,
following \cite{Hesch2014TRO}, we construct the observability matrix as follows:
\begin{align}
    \mathcal{O} & = 
    \begin{bmatrix}
    \mathcal{O}_{1} \\
    \mathcal{O}_{2} \\
    \vdots \\
    \mathcal{O}_{k}
    \end{bmatrix}
    =
    \begin{bmatrix}
    \mathbf{H}_{C1}\boldsymbol{\Phi}_{1,1} \\
    \mathbf{H}_{C2}\boldsymbol{\Phi}_{2,1} \\
    \vdots \\
    \mathbf{H}_{Ck}\boldsymbol{\Phi}_{k,1}
    \end{bmatrix}
\end{align}
We write the $k$-th row of $\mathcal{O}$ as:
\begin{align}
    \label{eq:M}
    \mathcal{O}_{k} & = 
    \begin{bmatrix}
    \mathbf{M}_n & \mathbf{M}_{b} & \mathbf{M}_{in} & \mathbf{M}_{IC} & \mathbf{M}_{Cin} & \mathbf{M}_f
    \end{bmatrix}
\end{align}
where $\mathbf{M}_n$, $\mathbf{M}_b$, $\mathbf{M}_{in}$, $\mathbf{M}_{IC}$, $\mathbf{M}_{Cin}$ and $\mathbf{M}_f$ represent the matrix block relating to the state [see Eq. \eqref{eq:x}] with detailed derivations in Appendix \ref{apx:obs_matrix_M}.
We now look to find the unobservable subspace $\mathbf{N}$ such that $\mathcal{O}\mathbf{N}=\mathbf{0}$.
The following can be found: 
\begin{lem}
\label{lem:obs}
Given fully excited motions, monocular VINS system with online calibration of IMU intrinsics $\mathbf{x}_{in}$, camera intrinsics $\mathbf{x}_{Cin}$ and IMU-camera spatial-temporal parameters $\mathbf{x}_{IC}$ (including RS readout time) has 4 unobservable directions, which relate to the global yaw and global translation. 
\begin{align}
    \label{eq:N}
    \mathbf{N} = 
    \begin{bmatrix}
    {}^{I_1}_G\hat{\mathbf{R}} {}^G\mathbf{g} & \mathbf{0}_3 \\
    -\lfloor {}^G\hat{\mathbf{p}}_{I_1} \rfloor {}^G\mathbf{g} & \mathbf{I}_3 \\
    -\lfloor {}^G\hat{\mathbf{v}}_{I_1} \rfloor {}^G\mathbf{g} & \mathbf{0}_3 \\
    \mathbf{0}_{46\times 1} & \mathbf{0}_{46\times 3}  \\
    -\lfloor {}^G\hat{\mathbf{p}}_{f} \rfloor {}^G\mathbf{g} & \mathbf{I}_3 
    \end{bmatrix}
\end{align}
\end{lem}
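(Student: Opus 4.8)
The plan is to establish the lemma in two stages: first to verify that the claimed $\mathbf N$ is contained in the unobservable subspace, i.e. $\mathcal{O}\mathbf N=\mathbf 0$, and then to show that $\mathbf N$ spans the \emph{entire} unobservable subspace, which under fully excited motion is equivalent to proving $\operatorname{rank}\mathcal{O}=\dim\mathbf x-4$.

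For the containment, I would use the block decomposition $\mathcal{O}_k=[\mathbf M_n~\mathbf M_b~\mathbf M_{in}~\mathbf M_{IC}~\mathbf M_{Cin}~\mathbf M_f]$ from Appendix~\ref{apx:obs_matrix_M} together with the structural fact that each camera measurement depends on the inertial and feature states only through the relative geometry ${}^G\mathbf p_f-{}^G\mathbf p_{I(t)}$ and the attitude ${}^{I(t)}_G\mathbf R$. Write $\mathbf N=[\mathbf N_1~\mathbf N_2]$ with $\mathbf N_2$ the global-translation block. Then $\mathcal{O}_k\mathbf N_2$ collapses to the sum of the position-Jacobian sub-block inside $\mathbf M_n$ and the feature-Jacobian block $\mathbf M_f$; by construction of $\mathbf h_t(\cdot)$ these are equal and opposite, so the product vanishes. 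For the global-yaw column $\mathbf N_1$, I would use that a rigid rotation of all global quantities about ${}^G\mathbf g$ fixes ${}^G\mathbf g$ and hence the gravity-dependent propagation terms, while the chained transition matrix $\boldsymbol\Phi_{k,1}$ transports that rotation consistently through $\delta\boldsymbol\theta$, ${}^G\tilde{\mathbf p}_I$, ${}^G\tilde{\mathbf v}_I$ and ${}^G\tilde{\mathbf p}_f$; the calibration blocks $\mathbf M_b,\mathbf M_{in},\mathbf M_{IC},\mathbf M_{Cin}$ multiply the zero sub-vector of $\mathbf N_1$ and drop out. This reduces to the identity $\lfloor\mathbf a\rfloor\mathbf b=-\lfloor\mathbf b\rfloor\mathbf a$ applied to $\big({}^G\mathbf p_f-{}^G\mathbf p_{I_k}\big)$ against ${}^G\mathbf g$, exactly as in the classical VINS analysis of \cite{Hesch2014TRO}, so none of the newly calibrated states introduces an additional unobservable direction here.

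For the reverse inclusion I would take an arbitrary $\mathbf n=[\mathbf n_\theta^\top~\mathbf n_p^\top~\mathbf n_v^\top~\mathbf n_{bg}^\top~\mathbf n_{ba}^\top~\mathbf n_{in}^\top~\mathbf n_{IC}^\top~\mathbf n_{Cin}^\top~\mathbf n_f^\top]^\top$ with $\mathcal{O}\mathbf n=\mathbf 0$ and peel it apart using the richness of the trajectory. Fully excited 6-DoF motion makes the instantaneous $\widehat{\boldsymbol\omega}$, $\widehat{\mathbf a}$ and the feature bearings span their respective spaces over time, which renders the Jacobian columns associated with the biases, with $\mathbf x_{Dw},\mathbf x_{Da},{}^I_a\bar q,\mathbf x_{Tg}$, and with $\mathbf x_{IC}$ and $\mathbf x_{Cin}$ mutually linearly independent and independent of the navigation block. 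Carrying out the elimination in the order biases $\to$ accelerometer intrinsics $\to$ gyroscope intrinsics and ${}^I_a\bar q$ $\to$ gravity sensitivity $\mathbf T_g$ $\to$ spatial-temporal extrinsics $\{{}^C_I\bar q,{}^C\mathbf p_I,t_d,t_r\}$ $\to$ camera intrinsics $\mathbf x_{Cin}$, each step invoking a distinct component of the excitation (translational acceleration in several directions, angular rate in several directions, simultaneous non-aligned rotation and acceleration for $\mathbf T_g$, and non-constant bearing/optical-flow for $t_r$ and the distortion coefficients), one forces $\mathbf n_{bg}=\mathbf n_{ba}=\mathbf n_{in}=\mathbf n_{IC}=\mathbf n_{Cin}=\mathbf 0$. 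What remains are the equations on $(\mathbf n_\theta,\mathbf n_p,\mathbf n_v,\mathbf n_f)$, which are precisely those of the classical VINS observability problem whose null space is the $4$-dimensional span of $\mathbf N$ already exhibited.

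The main obstacle I anticipate is the IMU-intrinsic elimination step, and in particular disentangling the gravity sensitivity $\mathbf T_g$ from the gyroscope scale/misalignment $\mathbf D_w$ and from the accelerometer chain, since $\mathbf T_g\,{}^I\mathbf a$ enters the corrected angular velocity and therefore couples to $\mathbf D_a$, ${}^I_a\bar q$ and the acceleration states. Showing that this coupling does not produce an extra null direction is exactly what forces the ``fully excited'' hypothesis and is what underlies the over-parameterization caveat (only one of ${}^I_w\mathbf R,{}^I_a\mathbf R$ may be calibrated) discussed in Section~\ref{sec:overparameterization}. I would handle it by isolating the $\mathbf M_{in}$ contribution at measurement times chosen so that the other Jacobian blocks are already annihilated, thereby reducing the question to a full-rank condition on a Kronecker-type matrix built from $\{\widehat{\boldsymbol\omega}_k\}$ and $\{\widehat{\mathbf a}_k\}$, which holds generically under the motion assumption.
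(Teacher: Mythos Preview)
Your proposal is sound and aligns with the paper on the first half: both verify $\mathcal{O}\mathbf N=\mathbf 0$ by direct block computation, using that the calibration sub-blocks of $\mathbf N$ are zero and that the navigation/feature contributions cancel via the skew-symmetric identity, exactly as in the classical analysis of \cite{Hesch2013TRO}.

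For the reverse inclusion your route differs in framing from the paper's. You argue \emph{null-space-wise}: take an arbitrary $\mathbf n$ with $\mathcal{O}\mathbf n=\mathbf 0$ and sequentially eliminate the bias, IMU-intrinsic, extrinsic, and camera-intrinsic components by exploiting distinct excitation channels, reducing to the standard VINS null space. The paper instead argues \emph{column-rank-wise}: it permutes the observability matrix to $[\mathcal{O}_I\,|\,\mathcal{O}_{in}\,|\,\mathcal{O}_{IC}\,|\,\mathcal{O}_{Cin}]$, observes that each calibration block depends on a different time-varying quantity (${}^w\boldsymbol\omega(t)$, ${}^a\mathbf a(t)$, ${}^I\mathbf a(t)$ for $\mathcal{O}_{in}$; IMU pose and kinematics plus the row index $m/M$ for $\mathcal{O}_{IC}$; the normalized image coordinates for $\mathcal{O}_{Cin}$), asserts that these blocks are therefore jointly full column rank under general motion, and concludes that the overall rank deficiency equals that of $\mathcal{O}_I$, which is $4$. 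Your elimination argument is more explicit about \emph{which} excitation is needed at each step and correctly flags the $\mathbf T_g$/$\mathbf D_w$/$\mathbf D_a$ coupling as the delicate point; the paper's argument is terser and leans more on the ``affected by different parameters, hence independent'' heuristic, ultimately deferring the tightness claim to the simulation in Fig.~\ref{fig:sim_full}. Both are informal at the same place---the generic full-rank assertion under fully excited motion---so neither buys additional rigor, but your ordering makes the dependence on the motion hypothesis more transparent.
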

\begin{proof}
See Appendix \ref{sec:proof_of_lemma1}. 
\end{proof}
We  notice that the terms $\mathbf{M}_{in}$ and $\mathbf{M}_{IC}$ (shown in Appendix \ref{apx:obs_matrix_M}) contain ${}^{w}\hat{\boldsymbol{\omega}}$, ${}^a\hat{\mathbf{a}}$, ${}^I\hat{\boldsymbol{\omega}}$ and ${}^G\hat{\mathbf{v}}_I$,  corresponding to the sensor platform motion. 
This implies that, $\mathbf{M}_{in}$ and $\mathbf{M}_{IC}$, corresponding to IMU intrinsics $\mathbf{x}_{in}$ and IMU-camera spatial-temporal parameters $\mathbf{x}_{IC}$ (including RS effects), are motion-dependent and time-varying.
Specifically, we have  the following properties for the IMU intrinsics and camera spatial-temporal parameters:
\begin{proposition}
For monocular VINS, the IMU intrinsic calibration and IMU-camera spatial-temporal calibration (including RS readout time) are sensitive to sensor motions. Given fully excited motions, $\mathbf{x}_{in}$ and $\mathbf{x}_{IC}$ are observable. 
\end{proposition}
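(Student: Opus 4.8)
The plan is to obtain this proposition as a short corollary of Lemma~\ref{lem:obs} together with the block structure of the observability matrix. Recall that a sub-block of the state is \emph{observable} exactly when the columns of $\mathcal{O}$ associated with that block are linearly independent, equivalently when the coordinate subspace spanned by those state directions meets the nullspace of $\mathcal{O}$ only at the origin. By Lemma~\ref{lem:obs}, under fully excited motion this nullspace equals the four-dimensional span of the columns of $\mathbf{N}$ in Eq.~\eqref{eq:N}, and the $46$ rows of $\mathbf{N}$ that correspond to $\{\mathbf{x}_b,\mathbf{x}_{in},\mathbf{x}_{IC},\mathbf{x}_{Cin}\}$ are identically zero.

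First I would establish the observability claim. Suppose a vector $\boldsymbol{\xi}$ supported only on the $\mathbf{x}_{in}$ coordinates lies in the nullspace of $\mathcal{O}$; then by Lemma~\ref{lem:obs} it can be written $\boldsymbol{\xi}=\mathbf{N}\mathbf{c}$ with $\mathbf{c}=[\,c_1~\mathbf{c}_v^\top\,]^\top\in\mathbb{R}^4$. Since $\boldsymbol{\xi}$ has no component in the rotation-error block, the corresponding block of $\mathbf{N}\mathbf{c}$ must vanish, i.e.\ ${}^{I_1}_G\hat{\mathbf{R}}\,{}^G\mathbf{g}\,c_1=\mathbf{0}$; because ${}^G\mathbf{g}\neq\mathbf{0}$ and ${}^{I_1}_G\hat{\mathbf{R}}$ is a rotation, $c_1=0$. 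The ${}^G\mathbf{p}_I$ block of $\mathbf{N}\mathbf{c}$ then reduces to $\mathbf{c}_v$, which must also vanish, so $\mathbf{c}=\mathbf{0}$ and hence $\boldsymbol{\xi}=\mathbf{0}$. Thus the only unobservable direction lying in the $\mathbf{x}_{in}$ subspace is the trivial one, so $\mathbf{x}_{in}$ is observable. Repeating the identical chain of equalities for a vector supported only on the $\mathbf{x}_{IC}$ coordinates shows that the IMU--camera rotation, translation, time offset $t_d$, and readout time $t_r$ are observable as well; the same reasoning also recovers the (known) observability of $\mathbf{x}_b$ and $\mathbf{x}_{Cin}$.

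For the first sentence of the proposition I would point to the closed forms of the observability blocks $\mathbf{M}_{in}$ and $\mathbf{M}_{IC}$ derived in Appendix~\ref{apx:obs_matrix_M}: in contrast to the navigation block $\mathbf{M}_n$, these are explicit functions of ${}^w\hat{\boldsymbol{\omega}}$, ${}^a\hat{\mathbf{a}}$, ${}^I\hat{\boldsymbol{\omega}}$ and ${}^G\hat{\mathbf{v}}_I$, i.e.\ of the instantaneous angular rate, specific force, and velocity of the platform. Hence the column rank of $[\,\mathbf{M}_{in}~\mathbf{M}_{IC}\,]$, and therefore which linear combinations of $\mathbf{x}_{in}$ and $\mathbf{x}_{IC}$ the visual measurements actually excite, is motion-dependent; motions that force these quantities to vanish or to remain confined to a low-dimensional set shrink the observable span, which is exactly the mechanism analysed in Section~\ref{sec:degen_identif}.

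The proposition itself is therefore a one-line consequence once Lemma~\ref{lem:obs} is available, so the real obstacle lies in the supporting fact that, under fully excited motion, the span of $\mathbf{N}$ is the \emph{entire} nullspace of $\mathcal{O}$, i.e.\ that no extra unobservable direction hides inside the $46$-dimensional calibration-and-bias block. Proving that requires (i) the analytic state-transition matrix $\boldsymbol{\Phi}_{k,1}$ assembled in Section~\ref{sec:statetrans}, (ii) the chain-rule visual Jacobian $\mathbf{H}_C$ of Section~\ref{sec:update}, and (iii) a rank argument on the stacked rows $\mathcal{O}_k$ showing that, given two camera times with sufficiently rich $({}^I\hat{\boldsymbol{\omega}},{}^I\hat{\mathbf{a}},{}^G\hat{\mathbf{v}}_I)$, the only common right-nullvectors are the four columns of $\mathbf{N}$. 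I would carry out that rank computation in the appendix and keep the proof of the proposition proper to the corollary-style deduction above.
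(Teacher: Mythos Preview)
Your proposal is correct and follows essentially the same approach as the paper: the motion-sensitivity claim is read off from the explicit dependence of $\mathbf{M}_{in}$ and $\mathbf{M}_{IC}$ on ${}^{w}\hat{\boldsymbol{\omega}}$, ${}^a\hat{\mathbf{a}}$, ${}^I\hat{\boldsymbol{\omega}}$, ${}^G\hat{\mathbf{v}}_I$, and the observability claim is obtained as a corollary of Lemma~\ref{lem:obs} by noting that the calibration rows of $\mathbf{N}$ in Eq.~\eqref{eq:N} are identically zero. Your deduction via the rotation and position blocks of $\mathbf{N}\mathbf{c}$ is slightly more elaborate than needed (since the $\mathbf{x}_{in}$ and $\mathbf{x}_{IC}$ rows of $\mathbf{N}$ are already zero, any $\boldsymbol{\xi}=\mathbf{N}\mathbf{c}$ supported only on those coordinates is immediately trivial), but it is correct and in the same spirit; the paper itself states the proposition without a formal proof environment, relying on the preceding observation and Lemma~\ref{lem:obs} together with the simulations in Fig.~\ref{fig:sim_full}.
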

Given this observation and numerical simulations of VINS based on a monocular camera and IMU, shown in Fig.~\ref{fig:sim_full}, we can confirm that all these calibration parameters are observable and can be estimated given fully-excited motions.
While we omit the derivations and simulation results here, the other IMU intrinsic model variants besides the \textit{imu22} presented are also fully observable in the case of fully-excited motions.

Similarly, the camera intrinsics, $\mathbf{M}_{Cin}$, are mainly affected by the environmental structure (the $u$ and $v$ measurements of the 3D point features) with no motion terms are involved.
Hence, we have: 
\begin{proposition}
For monocular VINS, the camera intrinsic calibration $\mathbf{x}_{Cin}$ is affected by the structure of the environment features and less sensitive to sensor motions. 
\end{proposition}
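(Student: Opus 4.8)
The plan is to make the qualitative claim precise by characterizing the camera-intrinsics block $\mathbf{M}_{Cin}$ of the observability matrix in closed form and showing that it is a function of the normalized feature projections alone. The starting observation is structural: since $\boldsymbol{\Phi}_{Cin}=\mathbf{I}_8$ and the intrinsics $\mathbf{x}_{Cin}$ enter the measurement \eqref{eq:visual_full_meas} only through the distortion map $\mathbf{h}_d(\cdot,\mathbf{x}_{Cin})$, the $k$-th block row satisfies $\mathbf{M}_{Cin}=\left.\frac{\partial \tilde{\mathbf z}_C}{\partial \tilde{\mathbf x}_{Cin}}\right|_k$ directly, with no propagation through the inertial dynamics. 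This is already in contrast with $\mathbf{M}_{in}$ and $\mathbf{M}_{IC}$, which are formed by composing the visual Jacobian with $\boldsymbol{\Phi}_{I(k,1)}$ and therefore inherit the time-varying, motion-dependent quantities (${}^w\hat{\boldsymbol{\omega}}$, ${}^a\hat{\mathbf a}$, ${}^I\hat{\boldsymbol{\omega}}$, ${}^G\hat{\mathbf v}_I$) of the IMU state-transition matrix.

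\textbf{Key step.} Next I would differentiate the \textit{radtan} model in Eqs.~\eqref{eq:camera intrinsics}--\eqref{eq:camera_hp} column by column with respect to $\{f_u,f_v,c_u,c_v,k_1,k_2,p_1,p_2\}$, using the chain rule through $r^2=u_n^2+v_n^2$, $d=1+k_1r^2+k_2r^4$, and $[u_d~v_d]^\top$. The resulting $2\times 8$ Jacobian has columns $[u_d~0]^\top$, $[0~v_d]^\top$, $[1~0]^\top$, $[0~1]^\top$ for the focal-length and principal-point parameters and columns proportional to $f_u u_n r^2$, $f_u u_n r^4$, $2 f_u u_n v_n$, $f_u(r^2+2u_n^2)$ (with the analogous $f_v$-scaled entries in the second row) for the distortion parameters. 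In every case this is a function of $(u_n,v_n)$ only, and $(u_n,v_n)$ is in turn fixed by the bearing ${}^C\mathbf{p}_f/{}^C z_f$, i.e.\ by where the observed structure projects onto the normalized image plane. Hence $\mathbf{M}_{Cin}$ depends solely on the environment structure as imaged by the camera and contains none of the platform's angular velocity, linear acceleration, or velocity.

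\textbf{Conclusion and main obstacle.} It then follows that the rank of the camera-intrinsics columns of $\mathcal{O}$ is governed by the spread of the normalized projections $\{(u_n,v_n)\}$ over the tracked features rather than by the richness of the sensor's 6-DoF excitation: the informative configurations (features distributed away from the principal point, not all collinear through it, more than one distinct bearing) and the degenerate ones (all features near the principal point, which annihilates the $k_1,k_2,p_1,p_2$ columns; a single tracked feature, which is rank-deficient) are \emph{structure} conditions, and a pure translation or pure rotation has no first-order effect on these Jacobian blocks. The main obstacle is that the proposition is inherently qualitative --- ``less sensitive to sensor motions'' has no single numerical meaning --- so the rigorous content reduces to (i) the exact motion-independence of the Jacobian block computed above and (ii) the observation that the observability failure modes for $\mathbf{x}_{Cin}$ are structural, not motional. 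I would close with the one caveat that feature association and triangulation across frames still implicitly require some parallax, so the statement should be read as concerning the \emph{direct} informational content of the visual measurements for $\mathbf{x}_{Cin}$ rather than a claim of total independence from motion.
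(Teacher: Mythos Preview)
Your proposal is correct and follows essentially the same approach as the paper: the paper's argument reduces to the observation that $\mathbf{M}_{Cin}=\mathbf{H}_{Cin}$ (since $\boldsymbol{\Phi}_{Cin}=\mathbf{I}_8$), and that this Jacobian depends only on the normalized image coordinates $(u_n,v_n)$ with no motion terms, in contrast to $\mathbf{M}_{in}$ and $\mathbf{M}_{IC}$. Your write-up is in fact more explicit and complete than the paper's own treatment, which states the proposition directly from the form of $\mathbf{M}_{Cin}$ derived in the appendix and then appeals to simulation for verification.
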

The camera intrinsic parameters are observable for most motion cases, even for under-actuated motions (i.e. planar motion), which can also be verified by our simulation results shown in Fig. \ref{fig:sim_full}-\ref{fig:sim_planar}.
While we omit the results here, the \textit{equidist} camera distortion model also satisfies the above proposition.

\section{Degenerate Motion Identification} \label{sec:degen_identif}

While the observability properties found in the preceding section hold with \textit{general} motions, this may not always be the case in reality and thus the identification of \textit{degenerate} motion profiles that cause extra unobservable directions in the state space, becomes important.
Based on the observability analysis, we can further identify 
the degenerate motions corresponding to the state of the system, such as the inertial state and features, along with the calibration parameters introduced during online self-calibration.
As the degenerate motion analysis of VINS has been studied in the prior work \citep{Martinelli2012TRO,Hesch2013TRO,Wu2017ICRA,Yang2019TRO}, 
we here focus only on motions that cause the calibration parameters to become unobservable.

\subsection{Inertial IMU Intrinsic Parameters}

As mentioned in Section~\ref{sec:obsanalysis},  
$\mathbf{M}_{in}$ is heavily motion affected, and thus, the IMU intrinsics are extremely susceptible to be unobservable under certain motions.
Because the bias terms and IMU intrinsics are tightly-coupled, by carefully inspecting the observability matrix $\mathcal{O}$, 
we find a selection of basic motion types which can cause the IMU intrinsics to become unobservable for \textit{imu22}.
Note that similar results are applicable to other IMU model variants.

\subsubsection{Degenerate motions for $\mathbf{D}_w$.}

As the gyroscope related IMU intrinsics $\mathbf{D}_w$ are coupled with gyroscope bias $\mathbf{b}_g$ and the angular velocity readings ${}^w\boldsymbol{\omega}$ from the IMU,
we have the following results:
\begin{lem} \label{lem:dw}
If any component of ${}^w\boldsymbol{\omega}$ (including ${}^w{\omega}_1$, ${}^w{\omega}_2$, ${}^w{\omega}_3$) is constant,  
then $\mathbf{D}_w$ will become unobservable. 
\end{lem}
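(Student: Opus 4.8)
The plan is to exhibit, under the stated degeneracy, an explicit nonzero vector in the right null space of $\mathcal{O}$ that is supported only on the gyroscope-bias block $\mathbf{b}_g$ and the $\mathbf{x}_{Dw}$ sub-block of $\mathbf{x}_{in}$, and therefore linearly independent of the four baseline directions in $\mathbf{N}$ from Lemma~\ref{lem:obs} (which vanish on the bias and intrinsic blocks). First I would isolate where $\mathbf{D}_w$ and $\mathbf{b}_g$ enter the system at all: by Eq.~\eqref{eq:IMU reading model} with ${}^I_w\mathbf{R}=\mathbf{I}_3$ for \textit{imu22}, the corrected rate is ${}^I\boldsymbol{\omega}=\mathbf{D}_w\,{}^w\boldsymbol{\omega}$ with ${}^w\boldsymbol{\omega}={}^w\boldsymbol{\omega}_m-\mathbf{T}_g\,{}^I\mathbf{a}-\mathbf{b}_g$, while ${}^I\mathbf{a}$ in Eq.~\eqref{eq:IMU reading model1} is independent of $\mathbf{D}_w$ and $\mathbf{b}_g$. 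Hence every visual measurement depends on $(\mathbf{D}_w,\mathbf{b}_g)$ only through the integrated trajectory, and the trajectory depends on them only through the time signal ${}^I\boldsymbol{\omega}(t)$. So it suffices to produce a first-order perturbation $(\delta\mathbf{D}_w,\delta\mathbf{b}_g)$, constant over the analysis window, that leaves ${}^I\boldsymbol{\omega}(t)$ unchanged for all $t$; such a perturbation annihilates $\mathbf{H}_{Ck}\boldsymbol{\Phi}_{k,1}$ for every $k$.

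Linearizing ${}^I\boldsymbol{\omega}=\mathbf{D}_w\,{}^w\boldsymbol{\omega}$ and using that $\mathbf{T}_g$ and ${}^I\mathbf{a}$ are not perturbed gives $\delta{}^I\boldsymbol{\omega}(t)=\delta\mathbf{D}_w\,{}^w\boldsymbol{\omega}(t)-\mathbf{D}_w\,\delta\mathbf{b}_g$, so the requirement becomes $\delta\mathbf{D}_w\,{}^w\boldsymbol{\omega}(t)=\mathbf{D}_w\,\delta\mathbf{b}_g$ for all $t$. If the $j$-th component ${}^w\omega_j$ is constant, I would take $\delta\mathbf{D}_w$ to be any matrix compatible with the upper-triangular sparsity of $\mathbf{D}_{w6}$ whose only nonzero entries lie in column $j$ (i.e. a perturbation of $d_{w1}$ for $j=1$; of $d_{w2},d_{w3}$ for $j=2$; of $d_{w4},d_{w5},d_{w6}$ for $j=3$). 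Then $\delta\mathbf{D}_w\,{}^w\boldsymbol{\omega}(t)={}^w\omega_j\,(\delta\mathbf{D}_w\mathbf{e}_j)$ is a constant vector, where $\mathbf{e}_j$ is the $j$-th standard basis vector, and since $\mathbf{D}_w$ is invertible I may set $\delta\mathbf{b}_g:={}^w\omega_j\,\mathbf{D}_w^{-1}\delta\mathbf{D}_w\mathbf{e}_j$, which is well defined and constant. This makes $\delta{}^I\boldsymbol{\omega}(t)\equiv\mathbf{0}$, so the corresponding column parameters of $\mathbf{D}_w$ are unobservable, i.e. $\mathbf{D}_w$ is not fully observable; the size of the unobservable family ($1$, $2$, or $3$) grows with the number of constant components, matching intuition.

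To cast this in the observability-matrix language of the paper, I would assemble $\mathbf{N}'=[\,\mathbf{0}\ \ \delta\mathbf{x}_b^\top\ \ \delta\mathbf{x}_{in}^\top\ \ \mathbf{0}\ \ \mathbf{0}\ \ \mathbf{0}\,]^\top$, nonzero only in the $\mathbf{b}_g$ entries of $\mathbf{x}_b$ and the $\mathbf{x}_{Dw}$ entries of $\mathbf{x}_{in}$ (values dictated by $\delta\mathbf{b}_g$ and $\delta\mathbf{D}_w$), and check $\mathcal{O}_k\mathbf{N}'=\mathbf{M}_b\,\delta\mathbf{x}_b+\mathbf{M}_{in}\,\delta\mathbf{x}_{in}=\mathbf{0}$ for all $k$ using the closed forms of $\mathbf{M}_b$, $\mathbf{M}_{in}$ in Appendix~\ref{apx:obs_matrix_M}, which enter only through $\mathbf{H}_b$, $\mathbf{H}_{in}$ and the $\boldsymbol{\Phi}_{wa}$ block of $\boldsymbol{\Phi}_{I(k+1,k)}$ (Appendix~\ref{adp:imu jacobians}), i.e. through $\tilde{\boldsymbol{\omega}}$ and $\tilde{\mathbf{a}}$; the bias block of $\boldsymbol{\Phi}_{I(k+1,k)}$ being $\mathbf{I}_6$ means the constant $\delta\mathbf{b}_g$ is consistent with the random-walk dynamics and propagates unchanged. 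The main obstacle is exactly this last bookkeeping: confirming that after multiplying by $\boldsymbol{\Phi}_{k,1}$ and accumulating over all time steps, the contributions of $\delta\mathbf{x}_b$ and $\delta\mathbf{x}_{in}$ cancel block-by-block in $\mathcal{O}_k$ — including the navigation blocks $\mathbf{M}_n$, since such a perturbation a priori perturbs the propagated pose — which requires tracking $\mathbf{H}_b$ and $\mathbf{H}_{in}$ carefully and using that they act on ${}^w\boldsymbol{\omega}$ precisely in the combination the degeneracy renders constant. I would also note the argument carries over verbatim to the other IMU variants (with $\mathbf{D}_w$ replaced by $\mathbf{D}_{w9}$, absorbing ${}^I_w\mathbf{R}$, where applicable), only the dimension of the unobservable family changing.
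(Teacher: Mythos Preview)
Your proposal is correct and essentially identical to the paper's proof: you construct the same null vectors supported only on $\mathbf{b}_g$ and $\mathbf{x}_{Dw}$ (your $\delta\mathbf{b}_g={}^w\omega_j\,\mathbf{D}_w^{-1}\delta\mathbf{D}_w\mathbf{e}_j$ with ${}^I_w\mathbf{R}=\mathbf{I}_3$ is exactly the paper's $\hat{\mathbf{D}}_w^{-1}{}^I_w\hat{\mathbf{R}}^\top\mathbf{e}_i\,{}^w\omega_j$ in $\mathbf{N}_{w1},\mathbf{N}_{w2},\mathbf{N}_{w3}$), and the paper verifies $\mathcal{O}_k\mathbf{N}'=0$ by the direct cancellation $\boldsymbol{\Gamma}_4(\cdots)+\boldsymbol{\Gamma}_6(\cdots)=0$. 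Your stated ``main obstacle'' about $\mathbf{M}_n$ is not actually an obstacle: since $\mathbf{N}'$ is zero on the $\mathbf{x}_n$ block, $\mathbf{M}_n$ never appears in $\mathcal{O}_k\mathbf{N}'$; the effect of bias/intrinsic perturbations on the propagated navigation state is already absorbed into $\mathbf{M}_b$ and $\mathbf{M}_{in}$ via the $\boldsymbol{\Gamma}_4,\boldsymbol{\Gamma}_6$ blocks of $\boldsymbol{\Phi}_{k,1}$, so only $\mathbf{M}_b\,\delta\mathbf{x}_b+\mathbf{M}_{in}\,\delta\mathbf{x}_{in}=0$ must be checked, exactly as you wrote.
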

\begin{proof}
If ${}^ww_1$ is constant, $d_{w1}$ will be unobservable with unobservable directions as:
\begin{align}
    \scalemath{0.86}{\mathbf{N}_{w1}} & = 
    \scalemath{0.86}{
    \begin{bmatrix}
    \mathbf{0}_{1\times9} & 
    (\hat{\mathbf{D}}^{-1}_w  {}^I_w\hat{\mathbf{R}}^{\top}  \mathbf{e}_1)^{\top}{}^ww_1& 
    \mathbf{0}_{1\times3} & 
    1 & \mathbf{0}_{1\times 42} 
    \end{bmatrix}^{\top}
    }
\end{align}
If ${}^ww_2$ is constant, $d_{w2}$ and $d_{w3}$ will be unobservable with unobservable directions as:
\begin{align}
    \scalemath{0.86}{\mathbf{N}_{w2}} & = 
    \scalemath{0.86}{
    \begin{bmatrix}
    \mathbf{0}_{1\times9} & 
    (\hat{\mathbf{D}}^{-1}_w {}^I_w\hat{\mathbf{R}}^{\top}   \mathbf{e}_1)^{\top}{}^ww_2 & 
    \mathbf{0}_{1\times4} & 
    1 & \mathbf{0}_{1\times 41} 
    \\
    \mathbf{0}_{1\times9} & 
    (\hat{\mathbf{D}}^{-1}_w {}^I_w\hat{\mathbf{R}}^{\top}   \mathbf{e}_2)^{\top}{}^ww_2 & 
    \mathbf{0}_{1\times5} & 
    1 & \mathbf{0}_{1\times 40} 
    \end{bmatrix}^{\top}
    }
\end{align}
If ${}^ww_3$ is constant, $d_{w4}$, $d_{w5}$ and $d_{w6}$ are unobservable with unobservable directions as:
\begin{align}
    \scalemath{0.86}{\mathbf{N}_{w3}} & = 
    \scalemath{0.86}{
    \begin{bmatrix}
    \mathbf{0}_{1\times9} & 
    (\hat{\mathbf{D}}^{-1}_w {}^I_w\hat{\mathbf{R}}^{\top}  \mathbf{e}_1)^{\top}{}^ww_3 & 
    \mathbf{0}_{1\times6} & 
    1 & \mathbf{0}_{1\times 39} 
    \\
    \mathbf{0}_{1\times9} & 
    (\hat{\mathbf{D}}^{-1}_w {}^I_w\hat{\mathbf{R}}^{\top}  \mathbf{e}_2)^{\top}{}^ww_3 & 
    \mathbf{0}_{1\times7} & 
    1 & \mathbf{0}_{1\times 38} 
    \\
    \mathbf{0}_{1\times9} & 
    (\hat{\mathbf{D}}^{-1}_w {}^I_w\hat{\mathbf{R}}^{\top}  \mathbf{e}_3)^{\top}{}^ww_3 & 
    \mathbf{0}_{1\times8} & 
    1 & \mathbf{0}_{1\times 37} 
    \end{bmatrix}^{\top}
    }
\end{align}
See Appendix \ref{sec:proof} for the verification of these unobservable directions. 
\end{proof}
\subsubsection{Degenerate motions for $\mathbf{D}_a$.}
Similarly, as ${}^a\mathbf{a}$ can affect the observability property for the accelerometer related IMU intrinsics $\mathbf{D}_a$, we have: 
\begin{lem} \label{lem:da}
If any component of ${}^a\mathbf{a}$ (including ${}^aa_1$, ${}^aa_2$ and ${}^aa_3$) is constant, then $\mathbf{D}_a$ will become unobservable. 
\end{lem}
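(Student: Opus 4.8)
\noindent\emph{Proof plan.}
I would mirror the proof of Lemma~\ref{lem:dw}, now exploiting that $\mathbf{D}_a$ enters the system \emph{only} through the corrected linear acceleration ${}^I\hat{\mathbf{a}}={}^I_a\hat{\mathbf{R}}\,\hat{\mathbf{D}}_a\,{}^a\hat{\mathbf{a}}$, with ${}^a\hat{\mathbf{a}}={}^a\mathbf{a}_m-\hat{\mathbf{b}}_a$ [see Eq.~\eqref{eq:IMU reading model1}], and that the accelerometer bias $\mathbf{b}_a$ is subtracted \emph{before} $\mathbf{D}_a$ acts. Because ${}^I\mathbf{a}$ is the only channel through which $\mathbf{D}_a$ reaches the inertial integration---it enters the gyroscope reading solely via the $\mathbf{T}_g$ term, and the visual measurements solely through the resulting trajectory---it suffices to exhibit a nonzero perturbation of the pair $(\mathbf{D}_a,\mathbf{b}_a)$ that leaves ${}^I\hat{\mathbf{a}}$ (and therefore ${}^I\hat{\boldsymbol{\omega}}$, the integration quantities $\Delta\hat{\mathbf{R}}_k,\Delta\hat{\mathbf{p}}_k,\Delta\hat{\mathbf{v}}_k$, and every block of $\boldsymbol{\Phi}_{k+1,k}$ and $\mathbf{H}_{Ck}$) unchanged over the observation window; such a direction then lies in the null space of $\mathcal{O}$.

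Next I would invoke the upper-triangular structure of $\mathbf{D}_{a6}$ [Eq.~\eqref{eq:d6_general}]: its $i$-th column multiplies only the $i$-th component ${}^aa_i$ of ${}^a\hat{\mathbf{a}}$. Hence, whenever ${}^aa_i$ is constant over the window, perturbing an entry $d_{a\ell}$ that lies in that column changes $\hat{\mathbf{D}}_a\,{}^a\hat{\mathbf{a}}$ by the \emph{constant} vector ${}^aa_i\,\delta d_{a\ell}\,\mathbf{e}_r$, where $r$ is the row index of that entry; this is cancelled exactly by the bias increment $\delta\mathbf{b}_a=\hat{\mathbf{D}}_a^{-1}\mathbf{e}_r\,{}^aa_i\,\delta d_{a\ell}$. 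Collecting these, I obtain unobservable directions structurally identical to $\mathbf{N}_{w1},\mathbf{N}_{w2},\mathbf{N}_{w3}$: one direction $\mathbf{N}_{a1}$ coupling $d_{a1}$ with $\mathbf{b}_a$ when ${}^aa_1$ is constant, two directions $\mathbf{N}_{a2}$ coupling $d_{a2},d_{a3}$ with $\mathbf{b}_a$ when ${}^aa_2$ is constant, and three directions $\mathbf{N}_{a3}$ coupling $d_{a4},d_{a5},d_{a6}$ with $\mathbf{b}_a$ when ${}^aa_3$ is constant; for instance,
\begin{align}
\mathbf{N}_{a1}=\begin{bmatrix}\mathbf{0}_{1\times 12} & (\hat{\mathbf{D}}_a^{-1}\mathbf{e}_1)^{\top}\,{}^aa_1 & \mathbf{0}_{1\times 6} & 1 & \mathbf{0}_{1\times 36}\end{bmatrix}^{\top},
\end{align}
with the nonzero sub-blocks placed on $\mathbf{b}_a$ and on the $d_{a1}$ entry of $\mathbf{x}_{Da}$, in the state ordering of Eq.~\eqref{eq:x}.

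The remaining step is to verify $\mathcal{O}\,\mathbf{N}_{a\cdot}=\mathbf{0}$ block by block against the rows $\mathcal{O}_k=[\,\mathbf{M}_n~\mathbf{M}_b~\mathbf{M}_{in}~\mathbf{M}_{IC}~\mathbf{M}_{Cin}~\mathbf{M}_f\,]$ of Appendix~\ref{apx:obs_matrix_M}; by the invariance argument above this collapses to showing that the $\mathbf{M}_b$ and $\mathbf{M}_{in}$ contributions cancel, while the other blocks multiply zeros. I would then argue that the same column-wise reasoning transfers to the variants with a full $\mathbf{D}_{a9}$ [Eq.~\eqref{eq:d9_general}] or a lower-triangular $\mathbf{D}'_{a6}$ (for which a constant ${}^aa_i$ now makes \emph{all} entries of the $i$-th column unobservable), and to the variants differing only in which inertial rotation is calibrated. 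The step that will demand the most care is this last block-wise cancellation: it is conceptually routine but, since $\mathbf{M}_{in}$ is strongly motion-dependent and tightly coupled to $\mathbf{M}_b$, one must track the state-index offsets and the signs of the $\mathbf{M}_{in}$ entries against the Jacobian definitions of Appendix~\ref{adp:imu jacobians} with care---exactly as in the verification of Lemma~\ref{lem:dw}, which I would reuse almost verbatim with the gyroscope channel replaced by the accelerometer channel. The detailed check is deferred to Appendix~\ref{sec:proof}.
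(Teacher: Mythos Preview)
Your argument is correct for the lemma as literally stated---each of your directions does lie in the null space of $\mathcal{O}$ and shows that some entry of $\mathbf{D}_a$ is unobservable---but it is strictly weaker than the paper's proof, and the difference is not cosmetic.

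You mirror Lemma~\ref{lem:dw} too closely. In the \textit{imu22} model the gyroscope and accelerometer channels are \emph{not} symmetric: ${}^I_w\mathbf{R}=\mathbf{I}_3$ is fixed, but ${}^I_a\mathbf{R}$ is in the state and is calibrated. Because ${}^I\hat{\mathbf{a}}={}^I_a\hat{\mathbf{R}}\,\hat{\mathbf{D}}_a\,{}^a\hat{\mathbf{a}}$, a constant ${}^aa_1$ or ${}^aa_2$ allows not only the $\mathbf{D}_a$--$\mathbf{b}_a$ cancellation you exhibit, but also a coupled perturbation of ${}^I_a\mathbf{R}$, several $\mathbf{D}_a$ entries, and $\mathbf{b}_a$ that leaves ${}^I\hat{\mathbf{a}}$ invariant. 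The paper therefore produces \emph{three} unobservable directions in each of the three cases (so that pitch and yaw of ${}^I_a\mathbf{R}$ are unobservable when ${}^aa_1$ is constant, and roll of ${}^I_a\mathbf{R}$ when ${}^aa_2$ is constant), whereas your construction yields only $1,2,3$ directions respectively. Your single $\mathbf{N}_{a1}$ column is exactly the first column of the paper's three-column $\mathbf{N}_{a1}$; the other two columns mix $\delta\boldsymbol{\theta}_{Ia}$ with off-diagonal $d_{a\ell}$ entries and $\mathbf{b}_a$ in a nontrivial way, and their verification (Appendix~\ref{sec:proof}) uses $\boldsymbol{\Gamma}_8$ (the $\lfloor{}^{I_k}\hat{\mathbf{a}}\rfloor$ block), not just $\boldsymbol{\Gamma}_5$ and $\boldsymbol{\Gamma}_7$. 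So the promised ``reuse almost verbatim'' of the Lemma~\ref{lem:dw} check does not go through for the extra directions; the accelerometer case genuinely requires the additional $\mathbf{M}_{in}$ sub-block for $\delta\boldsymbol{\theta}_{Ia}$.

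What this buys the paper is the content of Table~\ref{table:degenerate summary} and the simulation validation in Fig.~\ref{fig:sim_ax}: the claim that ${}^I_a\mathbf{R}$ itself degenerates under constant-${}^aa_i$ motion. Your proof establishes the headline of Lemma~\ref{lem:da} but would leave those downstream statements unproved.
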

\begin{proof}
If ${}^aa_1$ is constant, $d_{a1}$, pitch and yaw of ${}^I_a\mathbf{R}$ are unobservable with unobservable directions as:
\begin{align}
    &\scalemath{0.86}{\mathbf{N}_{a1}}
    = 
   \scalemath{0.86}{
    \begin{bmatrix}
    \mathbf{0}_{12\times 1}  & \mathbf{0}_{12\times1} & \mathbf{0}_{12\times 1} 
    \\
    \hat{\mathbf{D}}^{-1}_a \mathbf{e}_1 {}^aa_1 &
    \hat{\mathbf{D}}^{-1}_a \mathbf{e}_2 \hat{d}_{a1}{}^aa_1 & 
    \hat{\mathbf{D}}^{-1}_a \mathbf{e}_3 \hat{d}_{a1}\hat{d}_{a3}{}^aa_1 
    \\
    \mathbf{0}_{6\times 1} & \mathbf{0}_{6\times1}& \mathbf{0}_{6\times 1} \\
    1 &  0  & 0 \\
    0 & \hat{d}_{a3} &  0 \\
    0 & -\hat{d}_{a2} & 0 \\
    0 & \hat{d}_{a5} & \hat{d}_{a6}\hat{d}_{a3}  \\
    0 & -\hat{d}_{a4} & -\hat{d}_{a2}\hat{d}_{a6}  \\
    0 & 0 & \hat{d}_{a2}\hat{d}_{a5} - \hat{d}_{a4}\hat{d}_{a3} \\
    \mathbf{0}_{3\times1} & -{}^I_a\hat{\mathbf{R}}\mathbf{e}_3 & {}^I_a\hat{\mathbf{R}}(\mathbf{e}_1\hat{d}_{a2}+\mathbf{e}_2\hat{d}_{a3})  \\
    \mathbf{0}_{28\times 1} & \mathbf{0}_{28\times1} & \mathbf{0}_{28\times 1}
    \end{bmatrix}
    }
\end{align}
If ${}^aa_2$ is constant, $d_{a2}$, $d_{a3}$ and roll of ${}^I_a\mathbf{R}$ are unobservable with unobservable directions as:
\begin{align}
    & \scalemath{0.86}{\mathbf{N}_{a2}}
     = 
     \scalemath{0.86}{
    \begin{bmatrix}
    \mathbf{0}_{12\times 1}  & \mathbf{0}_{12\times1} & \mathbf{0}_{12\times 1} 
    \\
    \hat{\mathbf{D}}^{-1}_a \mathbf{e}_1 {}^aa_2 &
    \hat{\mathbf{D}}^{-1}_a \mathbf{e}_2 {}^aa_2 & 
    \hat{\mathbf{D}}^{-1}_a \mathbf{e}_3 \hat{d}_{a3}{}^aa_2 
    \\
    \mathbf{0}_{6\times 1} & \mathbf{0}_{6\times1}& \mathbf{0}_{6\times 1} \\
    0 &  0  & 0 \\
    1 & 0 &  0 \\
    0 & 1 & 0 \\
    0 & 0 & 0  \\
    0 & 0 & \hat{d}_{a6}  \\
    0 & 0 & -\hat{d}_{a5} \\
    \mathbf{0}_{3\times1} & \mathbf{0}_{3\times1} & -{}^I_a\hat{\mathbf{R}}\mathbf{e_1}  \\
    \mathbf{0}_{28\times 1} & \mathbf{0}_{28\times1} & \mathbf{0}_{28\times 1}
    \end{bmatrix}
    }
\end{align}
If ${}^aa_3$ is constant, $d_{a4}$, $d_{a5}$ and $d_{a6}$ are unobservable with unobservable directions as:
\begin{align}
    \scalemath{0.86}{\mathbf{N}_{a3}} & = 
    \scalemath{0.86}{
    \begin{bmatrix}
    \mathbf{0}_{1\times12} & 
    (\hat{\mathbf{D}}^{-1}_a\mathbf{e}_1)^{\top}{}^aa_3 & 
    \mathbf{0}_{1\times9} & 
    1 & \mathbf{0}_{1\times 33} 
    \\
    \mathbf{0}_{1\times12} & 
    (\hat{\mathbf{D}}^{-1}_a\mathbf{e}_2)^{\top}{}^aa_3 & 
    \mathbf{0}_{1\times10} & 
    1 & \mathbf{0}_{1\times 32} 
    \\
    \mathbf{0}_{1\times12} & 
    (\hat{\mathbf{D}}^{-1}_a\mathbf{e}_3)^{\top}{}^aa_3 & 
    \mathbf{0}_{1\times11} & 
    1 & \mathbf{0}_{1\times 31} 
    \end{bmatrix}^{\top}
    }
\end{align}
See Appendix \ref{sec:proof} for the verification of these unobservable directions. 
\end{proof}
\subsubsection{Degenerate motions for $\mathbf{T}_g$.}
As ${}^I\mathbf{a}$ (the acceleration in IMU frame) can affect the observability property for the gravity sensitivity $\mathbf{T}_g$, by close inspection of special configurations for ${}^I\mathbf{a}$, we have: 
\begin{lem} \label{lem:tg}
If any component of ${}^I\mathbf{a}$ (including ${}^Ia_1$, ${}^Ia_2$ and ${}^Ia_3$) is constant, then $\mathbf{T}_g$ will become unobservable. 
\end{lem}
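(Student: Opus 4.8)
The plan is to follow the same two-step recipe used for Lemmas~\ref{lem:dw} and~\ref{lem:da}: first write down, for each constant component of ${}^I\mathbf{a}$, a family of vectors that should lie in the right null space of $\mathcal{O}$, and then verify $\mathcal{O}\mathbf{N}=\mathbf{0}$ by substituting into the blocks $\mathbf{M}_b$ and $\mathbf{M}_{in}$ of Appendix~\ref{apx:obs_matrix_M} (the detailed verification being deferred to Appendix~\ref{sec:proof}, as for the earlier lemmas). The key structural observation is that $\mathbf{T}_g$ enters the whole system only through the corrected angular velocity, in exactly the same additive slot as the gyroscope bias: ${}^w\hat{\boldsymbol{\omega}}={}^w\boldsymbol{\omega}_m-\hat{\mathbf{T}}_g\,{}^I\hat{\mathbf{a}}-\hat{\mathbf{b}}_g$ and ${}^I\hat{\boldsymbol{\omega}}={}^I_w\hat{\mathbf{R}}\hat{\mathbf{D}}_w\,{}^w\hat{\boldsymbol{\omega}}$ (with ${}^I_w\hat{\mathbf{R}}=\mathbf{I}_3$ for \textit{imu22}). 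Writing $\hat{\mathbf{T}}_g\,{}^I\hat{\mathbf{a}}=\sum_{j=1}^{3}{}^Ia_j\,\hat{\mathbf{t}}_{g,j}$ with $\hat{\mathbf{t}}_{g,j}$ the $j$-th column of $\hat{\mathbf{T}}_g$, a perturbation of the $(i,j)$ entry of $\mathbf{T}_g$ changes ${}^w\hat{\boldsymbol{\omega}}$ by $-{}^Ia_j\,\mathbf{e}_i$, which is exactly compensated by shifting $\mathbf{b}_g$ by $-{}^Ia_j\,\mathbf{e}_i$ --- and this compensating shift is a genuine, time-invariant direction precisely when ${}^Ia_j$ is constant over the observation window. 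Since ${}^I\hat{\mathbf{a}}$ does not depend on $\mathbf{T}_g$ or $\mathbf{b}_g$, such a pair leaves ${}^I\hat{\boldsymbol{\omega}}$ (and hence the propagated trajectory and all visual measurements) unchanged.

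Concretely, I would claim: if ${}^Ia_j$ is constant then the three parameters in the $j$-th column of $\mathbf{T}_g$ become unobservable --- $t_{g1},t_{g2},t_{g3}$ for $j=1$, $t_{g4},t_{g5},t_{g6}$ for $j=2$, and $t_{g7},t_{g8},t_{g9}$ for $j=3$ --- with the three associated unobservable directions ($i=1,2,3$) of the form
\begin{align}
\scalemath{0.86}{\mathbf{N}_{gj}^{(i)}} &=
\scalemath{0.86}{
\begin{bmatrix}
\mathbf{0}_{1\times 9} & -{}^Ia_j\,\mathbf{e}_i^{\top} & \mathbf{0}_{1\times 18} & \bm{\varepsilon}_{3(j-1)+i}^{\top} & \mathbf{0}_{1\times 19}
\end{bmatrix}^{\top}}
\end{align}
where $\mathbf{e}_i\in\mathbb{R}^3$ lives in the $\mathbf{b}_g$ slot, the $\mathbf{0}_{1\times 18}$ block spans $\mathbf{b}_a,\mathbf{x}_{Dw},\mathbf{x}_{Da},{}^I_a\delta\boldsymbol{\theta}$, $\bm{\varepsilon}_\ell\in\mathbb{R}^9$ is the $\ell$-th canonical basis vector in the $\mathbf{x}_{Tg}$ slot, and the $\mathbf{0}_{1\times 19}$ block spans $\mathbf{x}_{IC},\mathbf{x}_{Cin},\mathbf{x}_f$. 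This mirrors the structure of $\mathbf{N}_{a3}$, except that here no $\hat{\mathbf{D}}_w$ or $\hat{\mathbf{D}}_a$ factor appears, because $\mathbf{T}_g$ occupies a purely additive position rather than a multiplicative one.

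The verification is then direct and of the same type carried out for Lemmas~\ref{lem:dw}--\ref{lem:da}. On $\mathbf{N}_{gj}^{(i)}$ only the $\mathbf{M}_b$ and $\mathbf{M}_{in}$ blocks of each row $\mathcal{O}_k$ are active, since the navigation, IMU-camera, camera-intrinsic and feature components of the vector vanish; moreover, along $\mathbf{N}_{gj}^{(i)}$ one has ${}^I\tilde{\boldsymbol{\omega}}={}^I_w\hat{\mathbf{R}}\hat{\mathbf{D}}_w\big(-\tilde{\mathbf{T}}_g\,{}^I\hat{\mathbf{a}}-\tilde{\mathbf{b}}_g\big)={}^I_w\hat{\mathbf{R}}\hat{\mathbf{D}}_w\big(-{}^Ia_j\mathbf{e}_i+{}^Ia_j\mathbf{e}_i\big)=\mathbf{0}$, so the propagated inertial error $\boldsymbol{\Phi}_{k,1}\mathbf{N}_{gj}^{(i)}$ is zero in all navigation entries for every $k$ and consequently $\mathbf{H}_{Ck}\boldsymbol{\Phi}_{k,1}\mathbf{N}_{gj}^{(i)}=\mathbf{0}$. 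Equivalently, at the level of the observability-matrix blocks, the $\mathbf{T}_g$-columns of $\mathbf{M}_{in}$ equal $-{}^Ia_j$ times the corresponding gyroscope-bias columns of $\mathbf{M}_b$ (both route into the measurement through the single channel ${}^I\tilde{\boldsymbol{\omega}}$), so the two contributions cancel term by term; this is the computation I would present in Appendix~\ref{sec:proof}.

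I expect the main obstacle to be bookkeeping rather than conceptual: pinning down the exact $\mathbf{T}_g$ sub-block of $\mathbf{M}_{in}$ from Appendix~\ref{apx:obs_matrix_M}, confirming it carries no residual coupling to the $\mathbf{D}_w$, ${}^I_w\mathbf{R}$, or (through ${}^I\mathbf{a}$) the $\mathbf{b}_a$/$\mathbf{D}_a$/${}^I_a\mathbf{R}$ columns, and --- most importantly --- checking that constancy of the \emph{single} scalar ${}^Ia_j$ (not of all of ${}^I\mathbf{a}$) is exactly what is needed so that the compensating bias direction commutes with the state-transition matrix $\boldsymbol{\Phi}_{k,1}$, i.e.\ so that the cancellation holds at every timestep and not merely instantaneously. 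A secondary check is that the rolling-shutter integration over $[t_I,t_I+\tfrac{m}{M}t_r]$ in the measurement model does not reintroduce a surviving dependence on $\mathbf{T}_g$ or $\mathbf{b}_g$ along these directions; it does not, since that integration again enters only through ${}^I\boldsymbol{\omega}$.
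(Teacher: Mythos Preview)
Your proposal is correct and follows essentially the same approach as the paper: you identify the same family of null-space directions (your $\mathbf{N}_{gj}^{(i)}$ is, up to an overall sign, exactly the paper's $\mathbf{N}_{g1},\mathbf{N}_{g2},\mathbf{N}_{g3}$), and you verify $\mathcal{O}_k\mathbf{N}=\mathbf{0}$ by the same cancellation between the $\mathbf{b}_g$-block $\boldsymbol{\Gamma}_4$ and the $\mathbf{T}_g$-block $\boldsymbol{\Gamma}_9$ of $\mathbf{M}_b,\mathbf{M}_{in}$, both of which share the common prefactor ${}^I_w\hat{\mathbf{R}}\hat{\mathbf{D}}_w$ so that no $\hat{\mathbf{D}}_w^{-1}$ correction is needed. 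Your additional physical remark that along these directions ${}^I\tilde{\boldsymbol{\omega}}\equiv\mathbf{0}$ (hence the propagated navigation error vanishes for every $k$) is a clean way to phrase why constancy of the single scalar ${}^Ia_j$ suffices, and is exactly the content of the paper's Appendix~E verification.
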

\begin{proof}
If ${}^Ia_1$ is constant, $t_{g1}$, $t_{g2}$ and $t_{g3}$ are unobservable with unobservable directions as:
\begin{align}
    &\scalemath{0.86}{\mathbf{N}_{g1}}
    = 
   \scalemath{0.86}{
    \begin{bmatrix}
    \mathbf{0}_{3\times9} & \mathbf{I}_3{}^I{a}_1 & \mathbf{0}_{3\times 18} & -\mathbf{I}_{3} & \mathbf{0}_{3\times 25}
    \end{bmatrix}^{\top}
    }
    \notag
\end{align}
If ${}^Ia_2$ is constant, $t_{g4}$, $t_{g5}$ and $t_{g6}$ are unobservable with unobservable directions as:
\begin{align}
    &\scalemath{0.86}{\mathbf{N}_{g2}}
    = 
   \scalemath{0.86}{
    \begin{bmatrix}
    \mathbf{0}_{3\times9} & \mathbf{I}_3{}^I{a}_2 & \mathbf{0}_{3\times 21} & -\mathbf{I}_{3} & \mathbf{0}_{3\times 22}
    \end{bmatrix}^{\top}
    }
    \notag
\end{align}
If ${}^Ia_3$ is constant, $t_{g7}$, $t_{g8}$ and $t_{g9}$ are unobservable with unobservable directions as:
\begin{align}
    &\scalemath{0.86}{\mathbf{N}_{g3}}
    = 
   \scalemath{0.86}{
    \begin{bmatrix}
    \mathbf{0}_{3\times9} & \mathbf{I}_3{}^I{a}_3 & \mathbf{0}_{3\times 24} & -\mathbf{I}_{3} & \mathbf{0}_{3\times 19}
    \end{bmatrix}^{\top}
    }
    \notag
\end{align}
See Appendix \ref{sec:proof} for the verification of these unobservable directions. 
\end{proof}

\subsubsection{Discussion on IMU intrinsic degeneracy.}

\begin{table}
\caption{
Summary of  basic degenerate motions for online IMU intrinsics calibration (\textit{imu22}).
}
\begin{tabular}{@{}ccl@{}}
\toprule
\textbf{Motion Types}   & \textbf{Dim.} & \multicolumn{1}{c}{\textbf{Unobservable Parameters}} \\ \midrule
constant ${}^w\omega_1$ & {1}                    & $d_{w1}$                                            \\
constant ${}^w\omega_2$ & {2}                    & $d_{w2}, d_{w3}$                                   \\
constant ${}^w\omega_3$ & {3}                    & $d_{w4}, d_{w5}, d_{w6}$                          \\ \midrule
constant ${}^aa_1$      & {3}                    & $d_{a1}$, pitch and yaw of ${}^I_a\mathbf{R}$       \\
constant ${}^aa_2$      & {3}                    & $d_{a2}, d_{a3}$, roll of ${}^I_a\mathbf{R}$       \\
constant ${}^aa_3$      & {3}                    & $d_{a4}, d_{a5}, d_{a6}$                \\ \midrule
constant ${}^Ia_1$      & {3}                    &  $t_{g1}, t_{g2}, t_{g3}$        \\
constant ${}^Ia_2$      & {3}                    & $t_{g4}, t_{g5}, t_{g6}$        \\
constant ${}^Ia_3$      & {3}                    & $t_{g7}, t_{g8}, t_{g9}$                
\\ \bottomrule
\end{tabular}
\centering
\label{table:degenerate summary}
\end{table}

It is evident from the above analysis that the IMU intrinsic calibration is sensitive to sensor motion and thus all 6 axes need to be excited to ensure all of them can be calibrated.
These findings are summarized in Table \ref{table:degenerate summary}.
It should be noted that 
any combination of these primitive motions is still degenerate and causes all related parameters to become unobservable 
(e.g., planar motion with constant acceleration).
It is also important to mention that it is common that ${}^I_a\mathbf{R} \simeq \mathbf{I}_3$ and $\mathbf{D}_a \simeq \mathbf{I}_3$ for most IMUs, and thus, ${}^a\hat{\mathbf{a}} \simeq {}^I\mathbf{a}$.
As such, the degenerate motions for $\mathbf{D}_a$ will also lead to the unobservability of $\mathbf{T}_g$, and vice-versa. 
Again, this degenerate motion analysis can  be extended to other model variants, which is omitted here for brevity.

\subsection{IMU-Camera Spatial-Temporal Parameters}
\begin{table}
\renewcommand{\arraystretch}{1.2}
	\caption{Summary of basic degenerate motions for online IMU-camera spatial-temporal calibration.}
	\begin{adjustbox}{width=\columnwidth,center}
	\begin{tabular}{ccc}
		\hline		
		\textbf{Motion Types} & \textbf{Unobservable Parameters} & \textbf{Observable} \\\hline
		pure translation & $^C\mathbf{p}_I$ & $^C_I\mathbf{R}$, $t_d$, $t_r$ \\ \hline
		one-axis rotation & $^C\mathbf{p}_I$ along rotation axis & $^C_I\mathbf{R}$, $t_d$, $t_r$ \\ \hline
		constant $^I\boldsymbol{\omega}$  & $t_d$ and & \multirow{2}{*}{$^C_I\mathbf{R}$, $t_r$}  \\
		constant $^I\mathbf{v}$ & $^C\mathbf{p}_I$ along rotation axis & \\ \hline	
		constant $^I\boldsymbol{\omega}$ & $t_d$ and & \multirow{2}{*}{$^C_I\mathbf{R}$, $t_r$} \\
		constant $^G\mathbf{a}$ & $^C\mathbf{p}_I$ along rotation axis & \\ \hline
	\end{tabular}
	\end{adjustbox}
	\centering
	\label{tab:cam degenerate}
\end{table}

Leveraging our previous work \citep{Yang2019RAL} where we have studied four commonly-seen degenerate motions of VINS with only IMU-camera spatial-temporal calibration,  we here show these degenerate motions hold true for VINS with full-parameter calibration: 
\begin{lem} \label{lem:imu-cam}
The IMU-camera spatial-temporal calibration will become unobservable, if the sensor platform undergoes the following degenerate motions: 
\begin{itemize}
    \item Pure translation
    \item One-axis rotation
    \item Constant local angular and linear velocity
    \item Constant local angular velocity and global linear acceleration
\end{itemize}
\end{lem}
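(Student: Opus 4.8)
The proof follows the linearized-observability template already used for Lemmas~\ref{lem:dw}--\ref{lem:tg}: for each of the four motion profiles we must exhibit one or more vectors that produce extra unobservable directions beyond the baseline $4$-dimensional subspace $\mathbf{N}$ of Lemma~\ref{lem:obs}, and verify that each such vector $\mathbf{N}_\star$ satisfies $\mathcal{O}_k\mathbf{N}_\star=\mathbf{0}$ for every block row $\mathcal{O}_k=[\,\mathbf{M}_n\ \mathbf{M}_b\ \mathbf{M}_{in}\ \mathbf{M}_{IC}\ \mathbf{M}_{Cin}\ \mathbf{M}_f\,]$ from Appendix~\ref{apx:obs_matrix_M}. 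Since the claim is that the degeneracies found in \citep{Yang2019RAL} for VINS with \emph{only} spatial-temporal extrinsics survive the addition of IMU intrinsics, camera intrinsics, and rolling-shutter readout time, the key structural point is that each $\mathbf{N}_\star$ can be taken to vanish on the $\mathbf{x}_{in}$ and $\mathbf{x}_{Cin}$ blocks (and on the rotation-error and $t_r$ entries of $\mathbf{x}_{IC}$, consistent with Table~\ref{tab:cam degenerate} where $^C_I\mathbf{R}$ and $t_r$ remain observable), with support only on $\mathbf{x}_n$, $\mathbf{x}_b$, the ${}^C\mathbf{p}_I$ / $t_d$ entries of $\mathbf{x}_{IC}$, and $\mathbf{x}_f$. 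Then $\mathbf{M}_{in}\mathbf{N}_{\star,in}=\mathbf{0}$ and $\mathbf{M}_{Cin}\mathbf{N}_{\star,Cin}=\mathbf{0}$ hold trivially, and $\mathcal{O}_k\mathbf{N}_\star=\mathbf{0}$ reduces to a check on $\{\mathbf{M}_n,\mathbf{M}_b,\mathbf{M}_{IC},\mathbf{M}_f\}$ essentially identical to the one in \citep{Yang2019RAL}.

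Concretely I would argue motion by motion, substituting the relevant constraint into the propagated Jacobian $\mathbf{H}_{Ck}\boldsymbol{\Phi}_{k,1}$. (i) \textbf{Pure translation}: ${}^I\boldsymbol{\omega}\equiv\mathbf{0}$ forces $\Delta\hat{\mathbf{R}}_k=\mathbf{I}_3$ and ${}^{I_k}_G\mathbf{R}$ constant, so the ${}^C\mathbf{p}_I$-column of $\mathbf{H}_{Ck}$ is a fixed rotation; the null vector places ${}^C_I\hat{\mathbf{R}}$ (times that fixed rotation) in the ${}^C\mathbf{p}_I$ slot and a compensating term in ${}^G\mathbf{p}_I$ and ${}^G\mathbf{p}_f$, giving a $3$-dimensional addition. (ii) \textbf{One-axis rotation} about body axis $\mathbf{k}$: since ${}^{I_k}_G\mathbf{R}\,\mathbf{k}=\mathbf{k}$ for all $k$, the component of ${}^C\mathbf{p}_I$ along ${}^C_I\hat{\mathbf{R}}\mathbf{k}$ drops out of every measurement, yielding one extra direction with ${}^C_I\hat{\mathbf{R}}\mathbf{k}$ in the ${}^C\mathbf{p}_I$ slot. (iii)--(iv) \textbf{Constant ${}^I\boldsymbol{\omega}$ together with constant ${}^I\mathbf{v}$}, resp.\ \textbf{constant ${}^G\mathbf{a}$}: here $t_d$ and the along-axis part of ${}^C\mathbf{p}_I$ become jointly unobservable; I build $\mathbf{N}_\star$ with a unit entry in the $t_d$ slot plus navigation/bias entries that cancel the $t_d$-column of $\mathbf{H}_{Ck}$, using that the terms ${}^I\boldsymbol{\omega}$, ${}^G\mathbf{v}_I$ (and ${}^I\mathbf{a}$) appearing in $\mathbf{M}_{IC}$ become time-constant precisely under these motions, so $\mathbf{N}_\star$ itself is time-independent and hence annihilates all rows simultaneously.

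I expect the main obstacle to be cases~(iii)--(iv), and specifically confirming that introducing rolling-shutter calibration does not break the degeneracy: the camera-frame feature position now depends on the pose at the row-dependent time $t=t_I+\tfrac{m}{M}t_r$ [Eqs.~\eqref{eq:camera_ht(t)}--\eqref{eq:rolling_shutter_time}], so the $t_d$- and $t_r$-columns of $\mathbf{H}_{Ck}$ both act through $\{{}^{I(t)}_G\mathbf{R},{}^G\mathbf{p}_{I(t)}\}$. I would show that under constant ${}^I\boldsymbol{\omega}$ the $t_r$-column is a scalar multiple of the $t_d$-column -- both correspond to an infinitesimal shift along the same instantaneous screw motion -- so the $t_r$ entry of $\mathbf{N}_\star$ can be kept zero while the $t_d$/${}^C\mathbf{p}_I$ cancellation still closes. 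Once this reduction is established, the remaining substitutions are routine and would be deferred to an appendix; the only genuinely new verifications are the trivial ones $\mathbf{M}_{in}\mathbf{N}_{\star,in}=\mathbf{0}$ and $\mathbf{M}_{Cin}\mathbf{N}_{\star,Cin}=\mathbf{0}$, together with a dimension count showing these vectors are independent of the $4$ directions in Lemma~\ref{lem:obs}.
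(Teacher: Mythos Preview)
Your overall template---exhibit an explicit null vector $\mathbf{N}_\star$ for each motion, put zeros in the $\mathbf{x}_{in}$, $\mathbf{x}_{Cin}$, and $t_r$ slots so those columns drop out trivially, then verify $\mathcal{O}_k\mathbf{N}_\star=\mathbf{0}$ on the remaining blocks---is exactly the paper's approach, and your descriptions of cases~(i) and~(ii) are essentially right.

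The gap is in cases~(iii) and~(iv). You assert that the null vectors can be taken to vanish on the ${}^C_I\mathbf{R}$ error slot because Table~\ref{tab:cam degenerate} lists ${}^C_I\mathbf{R}$ as ``observable.'' That inference is wrong: a parameter can remain observable while the unobservable \emph{direction} still has a nonzero component along it (the direction is a coupled combination, not a coordinate axis). Concretely, the $t_d$-column of $\mathcal{O}_k$ is $\boldsymbol{\Gamma}_{12}=\lfloor{}^G\hat{\mathbf{p}}_f-{}^G\hat{\mathbf{p}}_{I_k}\rfloor\,{}^G_{I_k}\hat{\mathbf{R}}\,{}^{I_k}\hat{\boldsymbol{\omega}}-{}^G\hat{\mathbf{v}}_{I_k}$, and the only column that matches the skew term $\lfloor{}^G\hat{\mathbf{p}}_f-{}^G\hat{\mathbf{p}}_{I_k}\rfloor\,{}^G_{I_k}\hat{\mathbf{R}}(\cdot)$ is $\boldsymbol{\Gamma}_{10}$, the ${}^C_I\mathbf{R}$ column. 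The orientation column $\boldsymbol{\Gamma}_1$ has a \emph{different} skew argument (${}^G\hat{\mathbf{p}}_f-{}^G\hat{\mathbf{p}}_{I_1}-{}^G\hat{\mathbf{v}}_{I_1}\delta t_k+\tfrac12{}^G\mathbf{g}\delta t_k^2$), which does not equal ${}^G\hat{\mathbf{p}}_f-{}^G\hat{\mathbf{p}}_{I_k}$ under these motions, so ``navigation/bias entries'' alone cannot absorb it. The paper's null vectors therefore place ${}^C_I\hat{\mathbf{R}}\,{}^I\hat{\boldsymbol{\omega}}$ in the ${}^C_I\mathbf{R}$ slot (plus $-{}^C_I\hat{\mathbf{R}}\,{}^I\hat{\mathbf{v}}$ in ${}^C\mathbf{p}_I$ for case~(iii), or a ${}^G\hat{\mathbf{a}}$ entry in the velocity slot and a $-{}^G\hat{\mathbf{v}}_{I_1}$ entry in the feature slot for case~(iv)); without that ${}^C_I\mathbf{R}$ component the cancellation cannot close.

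Your rolling-shutter worry is also misdirected. Since $\mathbf{N}_\star$ has a zero in the $t_r$ slot, the column $\boldsymbol{\Gamma}_{13}=\tfrac{m}{M}\boldsymbol{\Gamma}_{12}$ never enters the product $\mathcal{O}_k\mathbf{N}_\star$, so no proportionality argument between the $t_d$- and $t_r$-columns is needed (and indeed the scalar $\tfrac{m}{M}$ varies across rows, so they are not proportional as columns of $\mathcal{O}$ anyway).
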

\begin{proof}
If the system undergoes pure translation (no rotation), the translation part $^C\mathbf{p}_I$ of the spatial calibration will be unobservable, 
residing along the following unobservable directions:
\begin{align} \label{eq:Npt}
\mathbf{N}_{pt} & =
\begin{bmatrix}
\mathbf{0}_{3\times 45}  &
{\mathbf{I}_3} &
\mathbf{0}_{3\times 10} & 
-({^G_{I_1}\hat{\mathbf{R}}}{^I_C\hat{\mathbf{R}}})^{\top}
\end{bmatrix}^{\top}
\end{align}

If the system undergoes random (general) translation but with only one-axis rotation, the translation calibration $^C\mathbf{p}_I$ along the rotation axis will be unobservable, with the following unobservable direction:
\begin{align} \label{eq:Noa}
\mathbf{N}_{oa} & = 
\begin{bmatrix}
\mathbf{0}_{1\times 45}  &\!\!
({^C_I\hat{\mathbf{R}}}{^I\hat{\mathbf{k}}})^{\top} &\!\!
0_{1\times 10} &\!\!
-({^G_{I_1}\hat{\mathbf{R}}}{^I\hat{\mathbf{k}}})^{\top}\!\!
\end{bmatrix}^{\top}
\end{align}
where ${}^I\mathbf{k}$ is the constant rotation axis in the IMU frame $\{I\}$.

If the VINS undergoes constant local angular velocity $^I\boldsymbol{\omega}$ and linear velocity  $^I\mathbf{v}$, 
the time offset $t_d$ will be unobservable with the following unobservable direction:
\begin{align} \label{eq:Nt1}
\mathbf{N}_{t1} \!=\!
\scalemath{0.95}{
\begin{bmatrix}
\mathbf{0}_{1\times 42} &\!\!
({^C_I\hat{\mathbf{R}}}{^I\hat{\boldsymbol{\omega}}})^{\top}&\!\!
-({^C_I\hat{\mathbf{R}}}{^I\hat{\mathbf{v}}})^{\top} &\!\!
-1 &\!\! 
\mathbf{0}_{1\times 12} \!\! 
\end{bmatrix}^{\top}
}
\end{align}

If the VINS undergoes constant local angular velocity $^I\boldsymbol{\omega}$ and global acceleration  $^G\mathbf{a}$, 
the time offset $t_d$ will be unobservable with the following unobservable direction:
\begin{align} \label{eq:Nt2}
&\mathbf{N}_{t2}  = 
\\
&
\scalemath{0.9}{
\begin{bmatrix}
\mathbf{0}_{1\times 6} &\!\! 
{^G\hat{\mathbf{a}}} &\!\! 
\mathbf{0}_{1\times 30} &\!\! 
({^C_I\hat{\mathbf{R}}}{^I\hat{\boldsymbol{\omega}}})^{\top} &
\mathbf{0}_{1\times 3} &
-1 & \!\! 
\mathbf{0}_{1\times 9}
& \!\! 
-(^G\hat{\mathbf{v}}_{I_1})^{\top}
\end{bmatrix}^{\top}
}
\notag
\end{align}

\end{proof}

Table \ref{tab:cam degenerate} summarizes these degenerate motions for completeness. 
It is important to note that unlike  $t_d$ (whose Jacobian is mainly affected by the sensor motion), 
the Jacobian for RS readout time, $t_r$, is also affected by the feature observations due to the term $\frac{m}{M}$ [see Eq. \eqref{eq:job tr}], and is observable, as hundreds of features can be observed from different image rows during exploration.

\subsection{Camera Intrinsic Parameters}

As mentioned before, the camera intrinsics are mainly affected by the observed features. 
By investigating special feature configurations, we find the following degenerate case for camera calibration when using a \textit{radtan} distortion model: 
\begin{lem} \label{lem:cam-intrinsics}
The camera intrinsics will become unobservable if the following conditions are satisfied: 
\begin{itemize}
    \item The features keep the same depth relative to the camera (e.g., ${}^Cz_f$ is constant in value).
    \item The camera moves with one-axis rotation and the  rotation axis is defined as ${}^C\mathbf{k}=\mathbf{e}_3$. 
\end{itemize}
\end{lem}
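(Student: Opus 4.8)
I will exhibit a nonzero vector $\mathbf{N}_{Cin}$ with $\mathcal{O}\,\mathbf{N}_{Cin}=\mathbf{0}$ whose support includes the camera-intrinsic block $\mathbf{x}_{Cin}$ and which is not contained in the span of the four directions $\mathbf{N}$ of Lemma~\ref{lem:obs}; its existence proves that (part of) $\mathbf{x}_{Cin}$ becomes unobservable under the stated motion. The first ingredient is a measurement-level ambiguity of the \textit{radtan} model: writing $u=f_u\,u_d(\mathbf{z}_n,\mathbf{x}_{Cin})+c_u$ and likewise for $v$, one checks by direct substitution that the one-parameter transformation
\begin{align*}
&(f_u,f_v)\mapsto(1+\epsilon)(f_u,f_v),\qquad (c_u,c_v)\ \text{unchanged}, \\
&k_1\mapsto(1+\epsilon)^2 k_1,\qquad k_2\mapsto(1+\epsilon)^4 k_2, \\
&(p_1,p_2)\mapsto(1+\epsilon)(p_1,p_2),\qquad \mathbf{z}_n\mapsto(1-\epsilon)\,\mathbf{z}_n
\end{align*}
leaves every pixel coordinate invariant to first order in $\epsilon$: the reciprocal scalings of $f_*$ and $u_d$ cancel, and because $r^2=u_n^2+v_n^2$ rescales by $(1-\epsilon)^2$ the exponents on $k_1,k_2,p_1,p_2$ are exactly those that keep $d=1+k_1r^2+k_2r^4$ and the tangential terms consistent. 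This is the classical focal-length$\leftrightarrow$scale ambiguity made precise for the distortion coefficients; it is the only place the polynomial form of \textit{radtan} is used, and it is why $c_u,c_v$ stay observable while $f_u,f_v,k_1,k_2,p_1,p_2$ do not.

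Next, I would obtain the required rescaling $\delta\mathbf{z}_n=-\epsilon\,\mathbf{z}_n$ from a perturbation of the geometry. Since $\mathbf{z}_n=\mathbf{h}_p({}^{C}\mathbf{p}_f)$, one can produce $\delta\mathbf{z}_n=-\epsilon\mathbf{z}_n$ by scaling only the depth, $\delta\,{}^{C}\mathbf{p}_f=\epsilon\,{}^{C}z_f\,\mathbf{e}_3$; and here the two hypotheses enter. Because ${}^{C}z_f$ is constant in value, say ${}^{C}z_f\equiv z_0$, this required perturbation is the \emph{same constant vector} $\epsilon z_0\mathbf{e}_3$ at every image. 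Because the only rotation is about the optical axis ${}^{C}\mathbf{k}=\mathbf{e}_3$, the optical-axis direction is fixed in $\{G\}$, $\mathbf{n}:={}^{C_0}_G\mathbf{R}^{\top}\mathbf{e}_3$, and ${}^{C_k}_G\mathbf{R}\,\mathbf{n}=\mathbf{e}_3$ for every $k$; hence $\delta\,{}^{C}\mathbf{p}_f=\epsilon z_0\mathbf{e}_3$ is induced at every image, with \emph{no} perturbation of the inertial trajectory, simply by displacing the feature along the optical axis and compensating with the IMU--camera translation along the camera $z$-axis: $\delta\,{}^{G}\mathbf{p}_f=c\,\mathbf{n}$ and $\delta\,{}^{C}\mathbf{p}_I=(\epsilon z_0-c)\,\mathbf{e}_3$, with $c$ free. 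Assembling these with the intrinsic entries of Step~1 and zeros in all other blocks ($\mathbf{x}_n$, biases, IMU intrinsics, ${}^{C}_I\mathbf{R}$, $t_d$, $t_r$, $c_u$, $c_v$) gives the candidate $\mathbf{N}_{Cin}$.

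Finally, I would verify $\mathbf{H}_{Ck}\boldsymbol{\Phi}_{k,1}\mathbf{N}_{Cin}=\mathbf{0}$ for all $k$ by substituting $\mathbf{N}_{Cin}$ into the blocks $\mathbf{M}_{IC}$, $\mathbf{M}_{Cin}$, $\mathbf{M}_f$ of Appendix~\ref{apx:obs_matrix_M}: the $\mathbf{x}_f$ and ${}^{C}\mathbf{p}_I$ entries produce $\mathbf{H}_{\mathbf{p}_f}\,\delta\,{}^{C}\mathbf{p}_f=\mathbf{H}_{\mathbf{p}_f}\,\epsilon z_0\mathbf{e}_3$, which equals $-(\partial\tilde{\mathbf{z}}_C/\partial\tilde{\mathbf{x}}_{Cin})\,\delta\mathbf{x}_{Cin}$ by the first-order identity of Step~1, so the two contributions cancel. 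The main obstacle is precisely this last cancellation: one must expand $\partial\tilde{\mathbf{z}}_C/\partial\tilde{\mathbf{x}}_{Cin}$ for the \textit{radtan} model and $\mathbf{H}_{\mathbf{p}_f}=\frac{\partial \tilde{\mathbf{z}}_C}{\partial \tilde{\mathbf{z}}_n}\frac{\partial \tilde{\mathbf{z}}_n}{\partial {}^{C}\tilde{\mathbf{p}}_f}$ in full (including the cross terms in $u_d,v_d$) and match them term by term, i.e.\ promote the algebraic first-order invariance of Step~1 to a Jacobian identity. A secondary check is that neither hypothesis can be dropped --- without constant depth the required $\delta\,{}^{C}\mathbf{p}_f$ varies from image to image and cannot be matched by a single feature perturbation, while without the rotation axis being ${}^{C}\mathbf{k}=\mathbf{e}_3$ the displacement $c\,\mathbf{n}$ maps to a time-varying vector in the camera frame --- so that the degenerate set is exactly as stated, and that $\mathbf{N}_{Cin}$ is independent of $\mathbf{N}$ since its $\mathbf{x}_{Cin}$ part is nonzero. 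The same construction fails for the \textit{equidist} model, whose distortion is not polynomial and admits no analogous cancellation, consistent with the lemma being stated for \textit{radtan}.
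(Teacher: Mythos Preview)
Your approach is correct and essentially the same as the paper's: exhibit the null direction $\mathbf{N}_{Cin}=[\mathbf{0}_{1\times47},\,f_u,\,f_v,\,0,\,0,\,2k_1,\,4k_2,\,p_1,\,p_2,\,{}^G\mathbf{k}^\top]^\top$ (with ${}^G\mathbf{k}={}^G_{I_1}\hat{\mathbf{R}}\,{}^I_C\hat{\mathbf{R}}\,\mathbf{e}_3\,{}^Cz_f$) and verify $\mathcal{O}_k\mathbf{N}_{Cin}=\mathbf{0}$ by expanding $\mathbf{M}_{Cin}$ and $\mathbf{M}_f$ term by term --- your scaling-symmetry argument is precisely what produces the coefficients $(f_u,f_v,2k_1,4k_2,p_1,p_2)$, and the paper does the same Jacobian cancellation you outline in your final step. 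Your free parameter $c$ simply mixes in the already-known one-axis direction $\mathbf{N}_{oa}$ of Lemma~\ref{lem:imu-cam}; the paper takes $c=\epsilon\,{}^Cz_f$ so that $\delta\,{}^C\mathbf{p}_I=\mathbf{0}$ and the new unobservable direction lives purely in $(\mathbf{x}_{Cin},\mathbf{x}_f)$.
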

\begin{proof}
The camera focal length $f_u$, $f_v$, the camera distortion model $k_1$, $k_2$, $p_1$ and $p_2$ will become unobservable with unobservable direction:
\begin{align}
    \mathbf{N}_{Cin} & = 
    \begin{bmatrix}
    \mathbf{0}_{1\times 47} ~~
    f_u ~~ f_v ~~ \mathbf{0}_{1\times 2} ~~ 2k_1 ~~ 4k_2 ~~ p_1 ~~ p_2  ~~ {}^G\mathbf{k}^{\top}
    \end{bmatrix}^{\top}
    \notag
\end{align}
with ${}^G\mathbf{k}= {}^G_{I_0}\hat{\mathbf{R}} {}^I_C\hat{\mathbf{R}} {}^C\mathbf{k} {}^C{z}_f$ . 

 See Appendix \ref{sec:proof} for the verification of these unobservable directions. 
\end{proof}

As an example, if a ground vehicle is performing planar motion with a upward facing camera only observing features from the ceilings, 
the above two conditions will hold and thus the camera intrinsics with \textit{radtan} distortion model will be unobservable. 
Nevertheless, since it is common to observe hundreds of features, it might be rarely the case that every feature maintains the same relative depth, ${}^C{z}_f$, to the camera,
and thus, this degeneracy may not happen in practice if features are tracked uniformly throughout images. 
It is interesting to note that this degenerate case does not work for camera models with \textit{equidist} distortion.

\section{State Estimator} \label{sec:estimator}

Leveraging our MSCKF-based VINS~\citep{Geneva2020ICRA}, the proposed estimator extends the state vector $\mathbf{x}_k$ at time step $k$ to include the current IMU state $\mathbf{x}_{I_k}$, a sliding window of cloned IMU poses $\mathbf{x}_c$, the calibration parameters ($\mathbf{x}_{IC}$ and $\mathbf{x}_{Cin}$) and feature state $\mathbf{x}_f$. 
\begin{align}
    \label{eq:xk}
    \mathbf{x}_k & = 
    \begin{bmatrix}
    \mathbf{x}^{\top}_{I_k} & 
    \mathbf{x}^{\top}_{c}  &
    \mathbf{x}^{\top}_{IC} & 
    \mathbf{x}^{\top}_{Cin} &
    \mathbf{x}^{\top}_{f}
    \end{bmatrix}^{\top}
    \\
    \mathbf{x}_c & = 
    \begin{bmatrix}
    {}^{I_{ck-1}}_G\bar{q}^{\top} ~~ {}^G\mathbf{p}^{\top}_{I_{ck-1}} ~~ \ldots ~~ 
    {}^{I_{ck-n}}_G\bar{q}^{\top} ~~ {}^G\mathbf{p}^{\top}_{I_{ck-n}} 
    \end{bmatrix}^{\top}
\end{align}
where $\mathbf{x}_{I}$, $\mathbf{x}_{IC}$, $\mathbf{x}_{Cin}$ and $\mathbf{x}_f$ are the same as Eq. \eqref{eq:x}, 
$\mathbf{x}_c$ denotes the sliding window containing $n$ cloned IMU poses with index from $ck-n$ to $ck-1$. 
Note that the IMU intrinsics $\mathbf{x}_{in}$ are contained in the current IMU state $\mathbf{x}_{I_k}$. 

As $\mathbf{x}_c$, $\mathbf{x}_{IC}$, $\mathbf{x}_{Cin}$ and $\mathbf{x}_f$ have zero dynamics, 
we only propagate the estimate and covariance of the next IMU state based on Eq. \eqref{eq:prop_R}-\eqref{eq:prop_ba} and Eq. \eqref{eq:state transition}, which all incorporate the IMU intrinsics $\mathbf{x}_{in}$. 

As in \citep{Li2013IJRR}, we handle the IMU-camera time offset $t_d$ when we clone the ``true'' IMU pose corresponding to image measurements. 
For example, if we clone the current IMU pose $\{{}^{I_k}_G\bar{q}, {}^G\mathbf{p}_{I_k}\}$ into the sliding window as $\{{}^{I_{ck}}_G\bar{q}, {}^G\mathbf{p}_{I_{ck}}\}$ using: 
\begin{align}
    {}^G_{I_{ck}}\mathbf{R} & \simeq {}^G_{I_{k}}\mathbf{R}\exp({}^{I_k}\hat{\boldsymbol{\omega}} \tilde{t}_d)
    \\
    {}^G\mathbf{p}_{I_{ck}} & \simeq {}^G\mathbf{p}_{I_k} + {}^G\mathbf{v}_{I_k} \tilde{t}_d
\end{align}
with the linearized clone Jacobians as:
\begin{align}
    \begin{bmatrix}
    \delta \boldsymbol{\theta}_{I_{ck}} \\
    {}^G\tilde{\mathbf{p}}_{I_{ck}}
    \end{bmatrix} & \simeq
    \begin{bmatrix}
    \mathbf{I}_3 & \mathbf{0}_3  & {}^{I_k}\hat{\boldsymbol{\omega}} \\
    \mathbf{0}_3 & \mathbf{I}_3 & {}^G\hat{\mathbf{v}}_{I_k}
    \end{bmatrix}
    \begin{bmatrix}
    \delta \boldsymbol{\theta}_{I_k} \\
    {}^G\tilde{\mathbf{p}}_{I_{k}} \\
    \tilde{t}_d
    \end{bmatrix}
\end{align}
Both $\mathbf{x}_{in}$ and $t_d$ will be updated through correlations when visual feature measurements are present. 

Features are processed in two different ways: short features update the state through the MSCKF nullspace operation \citep{Mourikis2007ICRA, Yang2017IROS}, and long-tracked features are initialized into the state vector and refined over time for improved accuracy \citep{Li2013RSS}.
We utilize first-estimates Jacobians (FEJ) \citep{Huang2010IJRR,Li2013IJRR} to preserve the system unobservable subspace and improve the estimator consistency. 
We directly model the camera intrinsic and IMU-camera spatial calibration through the visual measurement functions [see Eq. \eqref{eq:visual_full_meas}] and update them in the filter with Jacobians in Eq. \eqref{eq:Hc}.

For the RS cameras, the feature measurements from different image rows are captured at different timestamps. 
This indicates that we cannot directly find a cloned pose in the sliding window for $\{{}^G_{I(t)}\mathbf{R}, {}^G\mathbf{p}_{I(t)}\}$ shown in Eq. \eqref{eq:camera_ht(t)}.
Therefore, for the readout time calibration, we model the feature measurement affected by RS effects through pose interpolation \citep{Guo2014RSS,Eckenhoff2021TRO}. 
For example, if the feature measurement is in the $m$-th row with total $M$ rows in an image, we can find two bounding clones $ci-1$ and $ci$ based on the measurement time $t$.  
Hence, the corresponding time $t$ is between two clones  within the sliding window, that is: $t_{ck-n}\leq t_{ci-1} \leq t \leq t_{ci} \leq t_{ck}$. 
We can then find the \textit{virtual} IMU pose $\{{}^G_{I(t)}\mathbf{R}, {}^G\mathbf{p}_{I(t)}\}$ between clones $ci-1$ and $ci$ with: 
\begin{align}
    \label{eq:lam}
    \lambda & = (t_I + \frac{m}{M}t_r - t_{ci-1})/ (t_{ci} - t_{ci-1}) \\
    \label{eq:inter_rot}
    {}^G_{I(t)}\mathbf{R} & = {}^G_{I_{ci-1}} \mathbf{R} 
    \exp \left(
    \lambda \log
    \left(
    {}^G_{I_{ci-1}} \mathbf{R}^{\top}{}^G_{I_{ci}} \mathbf{R}
    \right)
    \right)
    \\
    \label{eq:inter_p}
    {}^G\mathbf{p}_{I(t)} & = (1-\lambda){}^G\mathbf{p}_{I_{ci-1}} + \lambda {}^G\mathbf{p}_{I_{ci}}
\end{align}

To summarize, feature measurements which occur at different rows of the image can be related to the state vector defined in Eq. \eqref{eq:xk} through the above linear pose interpolation.
This measurement function can then be linearized for use in the EKF update (see Appendix \ref{apd:inter jacob}).
Note that a higher-order polynomial pose interpolation as used by \cite{Eckenhoff2021TRO} can be utilized if necessary.
In the above derivations, although we assume the image timestamp refers to the timestamp of the first image row, the interpolation scalar in Eq. \eqref{eq:lam} can be easily modified to account for situations when the image timestamp of the RS camera refers to the middle or last image row.

\begin{figure*}
\centering
\begin{subfigure}{.48\textwidth}
\includegraphics[trim=0mm 0 0mm 0,clip,width=\linewidth]{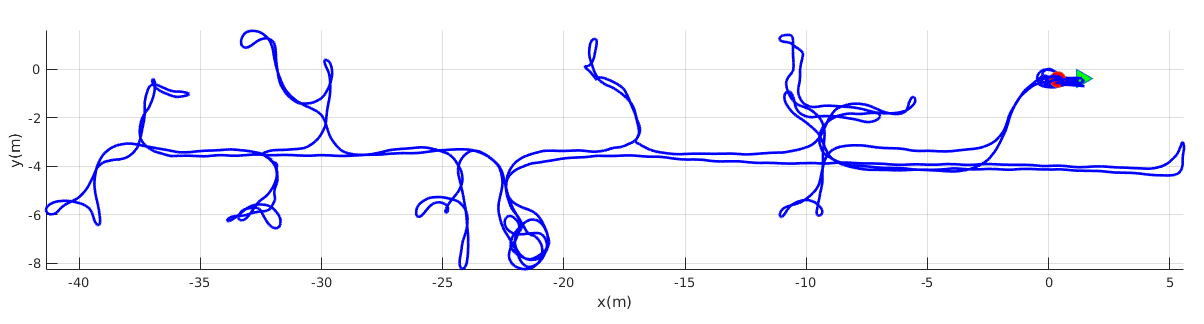}
\end{subfigure}
\begin{subfigure}{.17\textwidth}
\includegraphics[trim=0 0 0 0,clip,width=\linewidth]{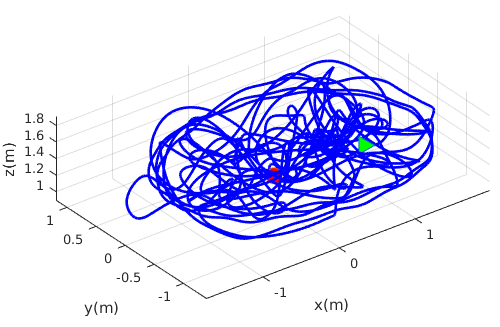}
\end{subfigure}
\begin{subfigure}{.16\textwidth}
\includegraphics[trim=0 0 0 0,clip,width=\linewidth]{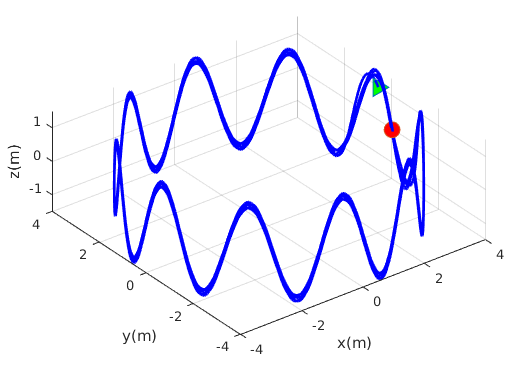}
\end{subfigure}
\begin{subfigure}{.16\textwidth}
\includegraphics[trim=0 0 0 0,clip,width=\linewidth]{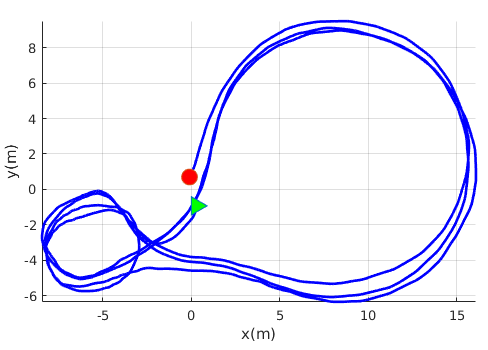}
\end{subfigure}
\caption{
Simulated trajectories for Monte-Carlo simulations.
Left: \textit{tum\_corridor} with fully excited 3D motion;
Middle left: \textit{tum\_room} with 1 axis rotation and 3D translation;
Middle right: \textit{sine\_3d} with constant acceleration along $x$ direction;
Right: \textit{udel\_gore} planar motion with constant $z$ and only yaw rotation.  
The green triangle and red circle denote the beginning and ending of these trajectories, respectively. 
}
\label{fig:sim_traj}
\end{figure*}
\begin{table}
\centering
\caption{
Simulation parameters and prior standard deviations that perturbations of measurements and initial states were drawn from.
}
\label{tab:sim_params}
\begin{adjustbox}{width=\columnwidth,center}
\begin{tabular}{ccccc} \toprule
\textbf{Parameter} & \textbf{Value} & \textbf{Parameter} & \textbf{Value} \\ \midrule
IMU Scale & 0.003 & IMU Skew & 0.003 \\
Rot. atoI (rad) & 0.003 & Rot. wtoI (rad) & 0.003 \\
Gyro. White Noise & 1.6968e-04 & Gyro. Rand. Walk & 1.9393e-05 \\
Accel. White Noise & 2.0000e-3 & Accel. Rand. Walk & 3.0000e-3 \\
Focal Len. (px/m) & 0.50 & Cam. Center (px) & 0.60 \\
d1 and d2 & 0.008 & d3 and d4  & 0.002 \\
Rot. CtoI (rad) & 0.004 & Pos. IinC (m) & 0.010 \\
Readout Time (ms) & 0.5 & Timeoff (s) & 0.005 \\
Cam Freq. (hz) & 20 & IMU Freq. (hz) & 400 \\
Avg. Feats & 100 & Num. SLAM & 50 \\
Num. Clones & 20 & Feat. Rep. & GLOBAL \\\bottomrule
\end{tabular}
\end{adjustbox}
\end{table}

\begin{figure*}
\centering
\begin{subfigure}{.245\textwidth}
\includegraphics[trim=0 9mm 0 0,clip,width=\linewidth]{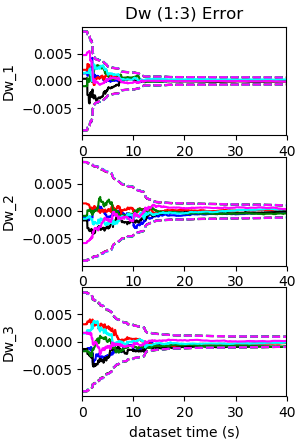}
\end{subfigure}
\begin{subfigure}{.245\textwidth}
\includegraphics[trim=0 9mm 0 0,clip,width=\linewidth]{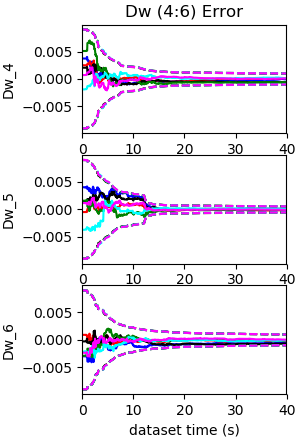}
\end{subfigure}
\begin{subfigure}{.245\textwidth}
\includegraphics[trim=0 9mm 0 0,clip,width=\linewidth]{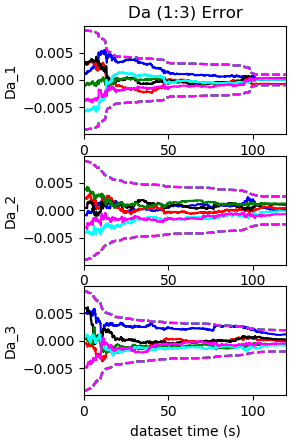}
\end{subfigure}
\begin{subfigure}{.245\textwidth}
\includegraphics[trim=0 9mm 0 0,clip,width=\linewidth]{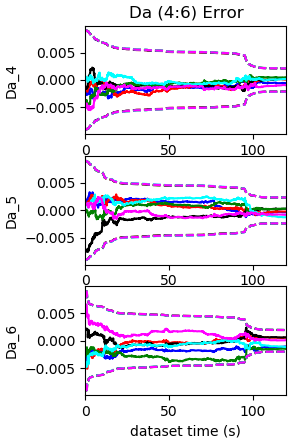}
\end{subfigure}
\begin{subfigure}{.245\textwidth}
\includegraphics[trim=0 9mm 0 0,clip,width=\linewidth]{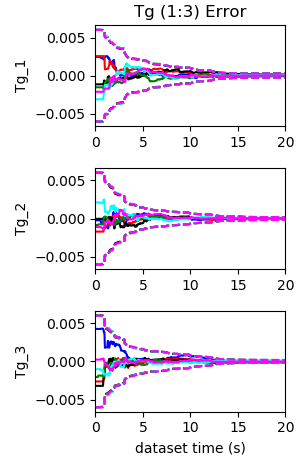}
\end{subfigure}
\begin{subfigure}{.245\textwidth}
\includegraphics[trim=0 9mm 0 0,clip,width=\linewidth]{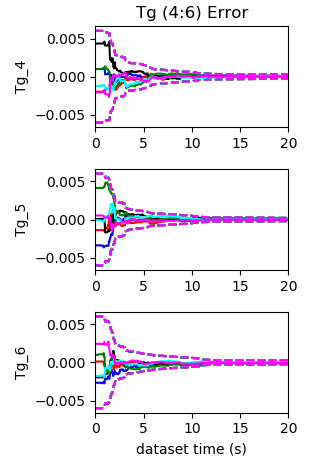}
\end{subfigure}
\begin{subfigure}{.245\textwidth}
\includegraphics[trim=0 9mm 0 0,clip,width=\linewidth]{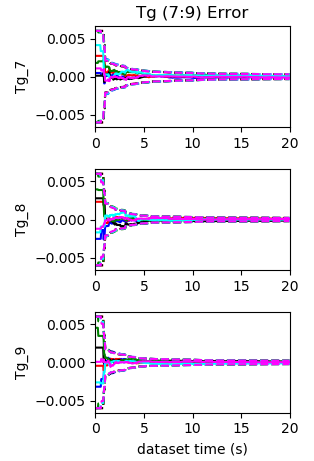}
\end{subfigure}
\begin{subfigure}{.245\textwidth}
\includegraphics[trim=0 9mm 0 0,clip,width=\linewidth]{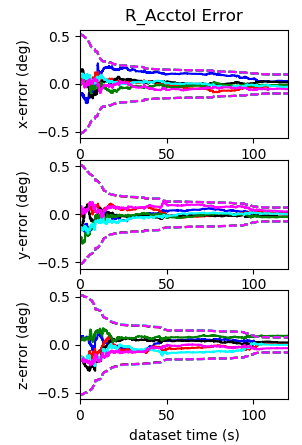}
\end{subfigure}
\begin{subfigure}{.245\textwidth}
\includegraphics[trim=0 0 0 0,clip,width=\linewidth]{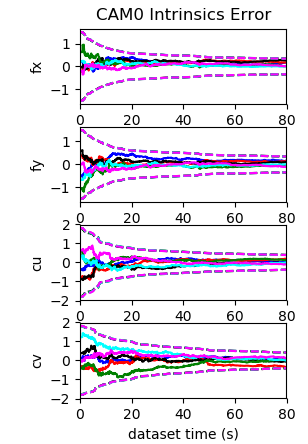}
\end{subfigure}
\begin{subfigure}{.245\textwidth}
\includegraphics[trim=0 0 0 0,clip,width=\linewidth]{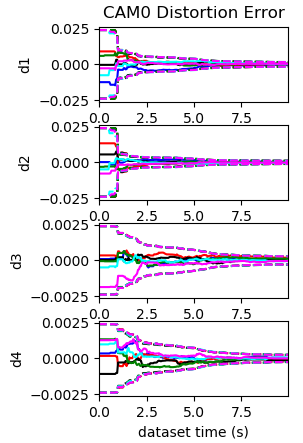}
\end{subfigure}
\begin{subfigure}{.245\textwidth}
\includegraphics[trim=0 0 0 0,clip,width=\linewidth]{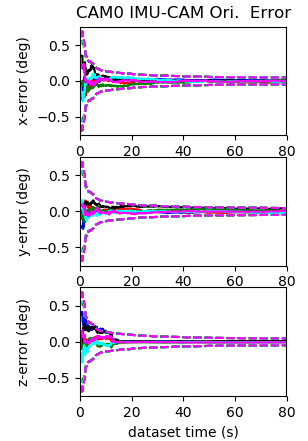}
\end{subfigure}
\begin{subfigure}{.245\textwidth}
\includegraphics[trim=0 0 0 0,clip,width=\linewidth]{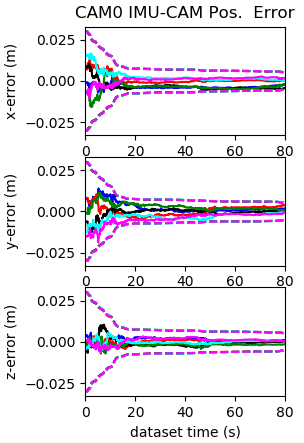}
\end{subfigure}
\caption{
Calibration results for proposed system evaluated on \textit{tum\_corridor} with fully excited motion (using \textit{imu22} and \textit{radtan}). 
$3\sigma$ bounds (dotted lines) and estimation errors (solid lines) for six different runs (different colors) with different realization of the measurement noise and initial perturbations.
All the calibration parameters converge nicely.
}
\label{fig:sim_full}
\end{figure*}

\begin{figure*}
\centering
\includegraphics[trim=0 0 0 0,clip,width=\textwidth]{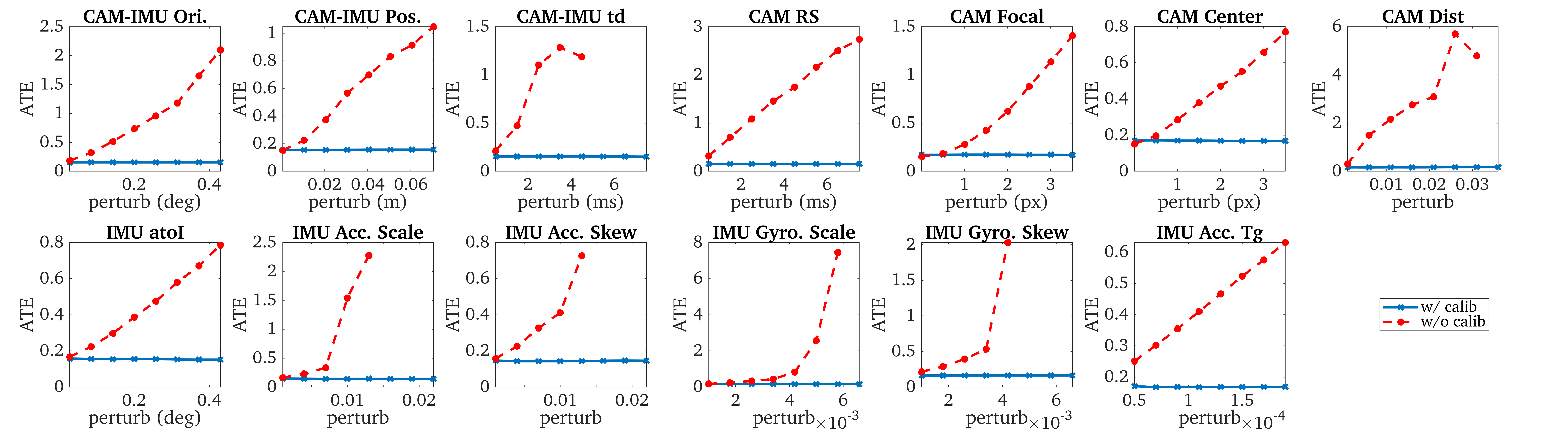}
\caption{
Absolute trajectory errors (ATE) using \textit{imu22} and \textit{radtan} on the \textit{tum\_corridor} with full 3D motion given different levels of perturbation.
Each parameter calibrated was only perturbed while other parameters were initialized to their true values and not estimated.
ATE above eight meters were not reported in the figures and can be considered diverged.
}
\label{fig:sim_perturb_errors}
\end{figure*}

\begin{table*}
\centering
\caption{
Average absolute trajectory error (ATE) and normalized estimation errror squared (NEES) over 20 runs of the proposed system evaluated on \textit{tum\_corridor} with true or perturbed calibration parameters, with and without online calibration.
\textit{radtan} camera distortion model and different IMU intrinsic models are used. 
The notation ``true'' means the groundtruth calibration, while ``perturbed'' means the perturbed calibration states.
Failures are denoted with ``-''. 
}
\label{tab:sim_calibcompare}
\begin{adjustbox}{width=\textwidth,center}
\begin{tabular}{ccccc|ccccc} \toprule
\textbf{IMU Model} & \textbf{ATE (deg)} & \textbf{ATE (m)} & \textbf{Ori. NEES} & \textbf{Pos. NEES} & \textbf{IMU Model} & \textbf{ATE (deg)} & \textbf{ATE (m)} & \textbf{Ori. NEES} & \textbf{Pos. NEES} \\ \midrule
true w/ calib \textit{imu1}   & 0.462     & 0.164   & 1.910     & 1.423     & perturbed w/ calib \textit{imu1}   & 0.454     & 0.163   & 2.173     & 1.473     \\
true w/ calib \textit{imu2}   & 0.460     & 0.164   & 2.103     & 1.422     & perturbed w/ calib \textit{imu2}   & 0.446     & 0.162   & 2.150     & 1.465     \\
true w/ calib \textit{imu3}   & 0.461     & 0.163   & 1.883     & 1.422     & perturbed w/ calib \textit{imu3}   & 0.459     & 0.163   & 2.125     & 1.454     \\
true w/ calib \textit{imu4}   & 0.458     & 0.163   & 2.102     & 1.424     & perturbed w/ calib \textit{imu4}   & 0.450     & 0.162   & 2.150     & 1.456     \\ \midrule
true w/ calib \textit{imu11}  & 0.544     & 0.177   & 1.947     & 1.472     & perturbed w/ calib \textit{imu11}  & 0.550     & 0.178   & 2.243     & 1.498     \\
true w/ calib \textit{imu12}  & 0.540     & 0.177   & 2.123     & 1.476     & perturbed w/ calib \textit{imu12}  & 0.544     & 0.178   & 2.175     & 1.493     \\
true w/ calib \textit{imu13}  & 0.544     & 0.176   & 1.914     & 1.472     & perturbed w/ calib \textit{imu13}  & 0.546     & 0.177   & 2.180     & 1.482     \\
true w/ calib \textit{imu14}  & 0.544     & 0.179   & 2.124     & 1.483     & perturbed w/ calib \textit{imu14}  & 0.538     & 0.177   & 2.169     & 1.504     \\ \midrule
true w/ calib \textit{imu21}  & 0.572     & 0.183   & 1.990     & 1.514     & perturbed w/ calib \textit{imu21}  & 0.576     & 0.182   & 2.250     & 1.508     \\
true w/ calib \textit{imu22}  & 0.567     & 0.184   & 2.145     & 1.513     & perturbed w/ calib \textit{imu22}   & 0.590     & 0.187   & 2.194     & 1.561     \\
true w/ calib \textit{imu23}  & 0.571     & 0.183   & 1.962     & 1.514     & perturbed w/ calib \textit{imu23}  & 0.593     & 0.185   & 2.200     & 1.550     \\
true w/ calib \textit{imu24}  & 0.566     & 0.183   & 2.141     & 1.512     & perturbed w/ calib \textit{imu24}  & 0.585     & 0.186   & 2.189     & 1.552     \\  \midrule
true w/ calib \textit{imu31}  & 0.447     & 0.161   & 2.076     & 1.378     & perturbed w/ calib \textit{imu31}  & 0.451     & 0.162   & 2.110     & 1.428     \\
true w/ calib \textit{imu32}  & 0.444     & 0.161   & 1.879     & 1.396     & perturbed w/ calib \textit{imu32}  & 0.447     & 0.162   & 1.968     & 1.430     \\
true w/ calib \textit{imu33}  & 0.529     & 0.175   & 2.096     & 1.378     & perturbed w/ calib \textit{imu33}  & 0.527     & 0.177   & 2.101     & 1.411     \\
true w/ calib \textit{imu34}  & 0.548     & 0.180   & 2.103     & 1.430     & perturbed w/ calib \textit{imu34}  & 0.549     & 0.179   & 2.113     & 1.422     \\ \midrule
true w/ calib \textit{imu6} & 0.572     & 0.183   & 1.734     & 1.517     & perturbed w/ calib \textit{imu6} & 0.567     & 0.179   & 1.910     & 1.491     \\ \midrule
true w/o calib                & 0.433     & 0.159   & 2.069     & 1.332     & perturbed w/o calib                     & -     & -   & -     & -                     \\ \bottomrule
\end{tabular}
\end{adjustbox}
\end{table*}

\begin{figure}
\centering
\begin{subfigure}{.24\textwidth}
\includegraphics[trim=0 0mm 0mm 0,clip,width=\linewidth]{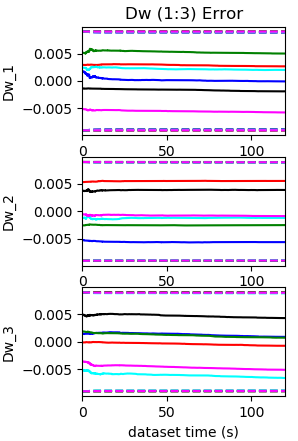}
\end{subfigure}
\begin{subfigure}{.24\textwidth}
\includegraphics[trim=0 0 0mm 0,clip,width=\linewidth]{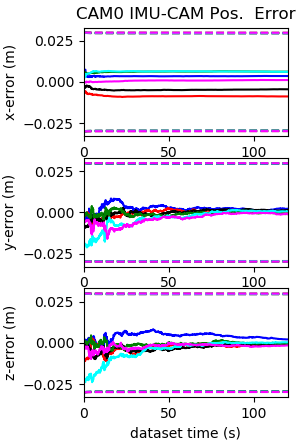}
\end{subfigure}
\caption{
Calibration results for the proposed system evaluated on \textit{tum\_room} with one-axis rotation using \textit{imu22} and \textit{radtan}.
Note that the estimation errors and $3\sigma$ bounds for $d_{w1}$, $d_{w2}$, $d_{w3}$ and the IMU-camera position calibration along the rotation axis can not converge.
}
\label{fig:sim_1axis}
\end{figure}
\begin{figure*}
\centering
\begin{subfigure}{.28\textwidth}
\includegraphics[trim=0 0mm 0mm 0,clip,width=\linewidth]{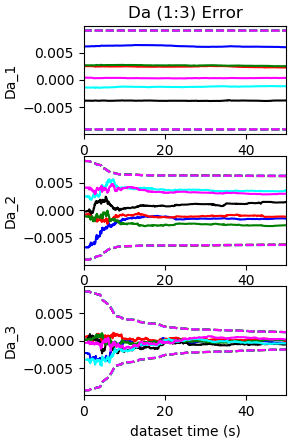}
\end{subfigure}
\begin{subfigure}{.28\textwidth}
\includegraphics[trim=0 0mm 0mm 0,clip,width=\linewidth]{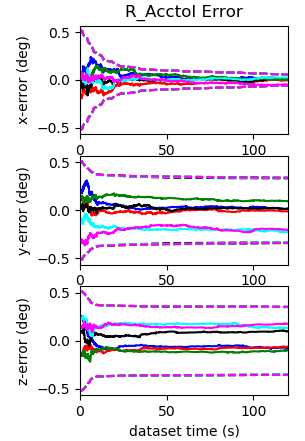}
\end{subfigure}
\begin{subfigure}{.28\textwidth}
\includegraphics[trim=0 0mm 0mm 0,clip,width=\linewidth]{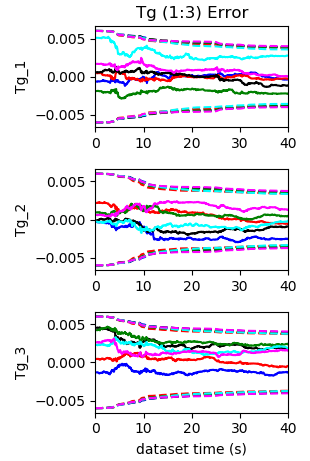}
\end{subfigure}
\caption{
Calibration results for the proposed system evaluated the \textit{sine\_3d} with constant acceleration along x direction using \textit{imu22} and \textit{radtan}. 
The estimation errors and 3$\sigma$ bounds for $d_{a1}$, pitch and yaw of ${}^I_a\mathbf{R}$ cannot converge. 
Note that $t_{g1}$, $t_{g2}$ and $t_{g3}$ are also unobservable. 
}
\label{fig:sim_ax}
\end{figure*}
\begin{figure*}
\centering
\begin{subfigure}{.28\textwidth}
\includegraphics[trim=0 9mm 0mm 0,clip,width=\linewidth]{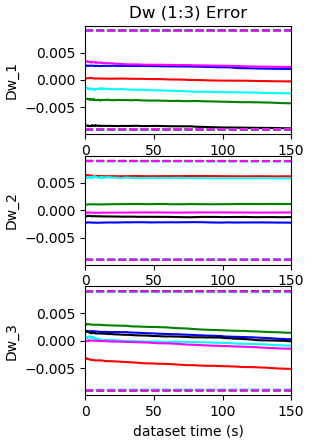}
\end{subfigure}
\begin{subfigure}{.28\textwidth}
\includegraphics[trim=0 9mm 0mm 0,clip,width=\linewidth]{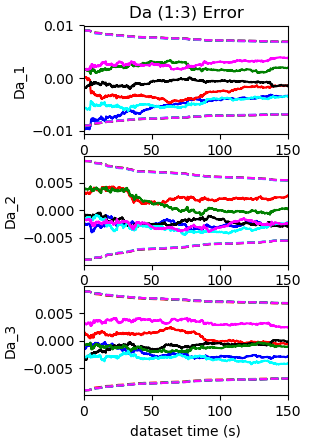}
\end{subfigure}
\begin{subfigure}{.28\textwidth}
\includegraphics[trim=0 9mm 0mm 0,clip,width=\linewidth]{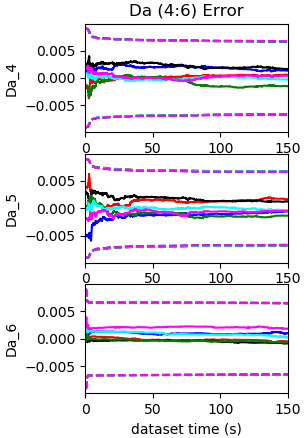}
\end{subfigure}
\begin{subfigure}{.28\textwidth}
\includegraphics[trim=0 0mm 0mm 0,clip,width=\linewidth]{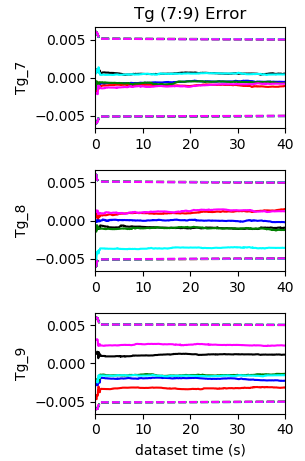}
\end{subfigure}
\begin{subfigure}{.28\textwidth}
\includegraphics[trim=0 0mm 0mm 0,clip,width=\linewidth]{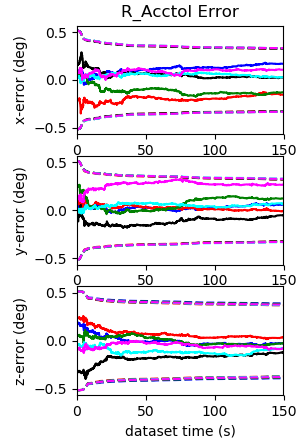}
\end{subfigure}
\begin{subfigure}{.28\textwidth}
\includegraphics[trim=0 0 0mm 0,clip,width=\linewidth]{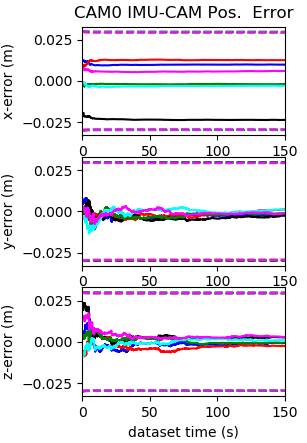}
\end{subfigure}
\caption{
Calibration results of the proposed system evaluated on \textit{udel\_gore} with planar motion using \textit{imu22} and \textit{radtan}. 
With planar motion, the estimation errors and 3 $\sigma$ bounds of $d_{w1}$, $d_{w2}$, $d_{w3}$, $t_{g7}$, $t_{g8}$, $t_{g9}$ and the IMU-camera position cannot converge. 
Due to lack of motion excitation, the parameters of $\mathbf{D}_a$ and ${}^I_a\mathbf{R}$ converge much slower than the other motion cases. 
}
\label{fig:sim_planar}
\vspace*{-6pt}
\end{figure*}
\begin{figure*}
\centering
\begin{subfigure}{.245\textwidth}
\includegraphics[trim=0 0 0 0,clip,width=\linewidth]{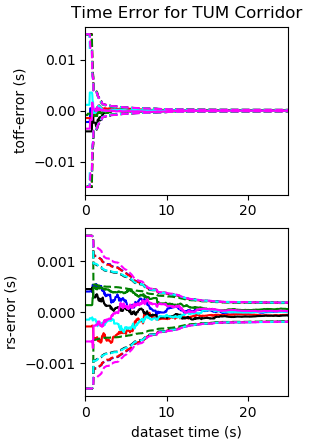}
\end{subfigure}
\begin{subfigure}{.245\textwidth}
\includegraphics[trim=0 0 0 0,clip,width=\linewidth]{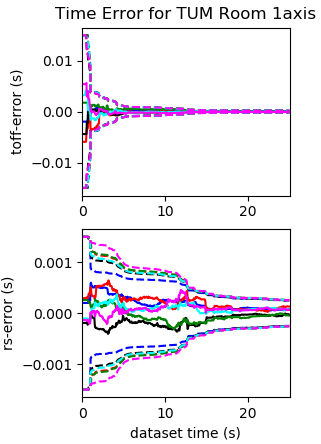}
\end{subfigure}
\begin{subfigure}{.242\textwidth}
\includegraphics[trim=0 0 0 0,clip,width=\linewidth]{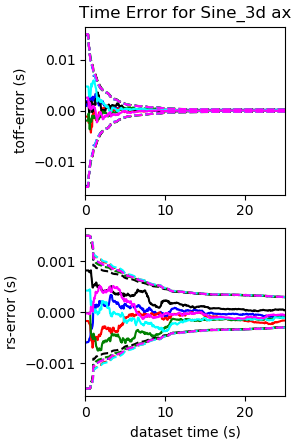}
\end{subfigure}
\begin{subfigure}{.248\textwidth}
\includegraphics[trim=0 0 0 0,clip,width=\linewidth]{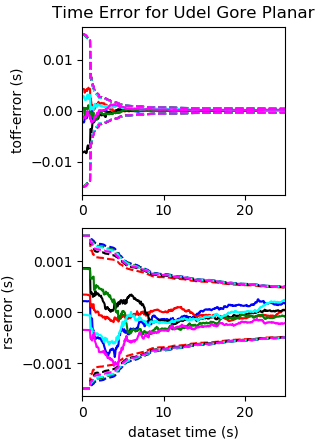}
\end{subfigure}
\caption{
Camera temporal and read out time calibration results of the proposed system  (using \textit{imu22} and \textit{radtan}) for different trajectories. 
The temporal parameters can finally converge in all the 4 motion types.
Note that the readout time converges slower in the planar motion case probably due to the lack of motion in the beginning.
}
\label{fig:sim_time}
\end{figure*}

\begin{figure}
\centering
\begin{subfigure}{.24\textwidth}
\includegraphics[trim=0 0 0 0,clip,width=\linewidth]{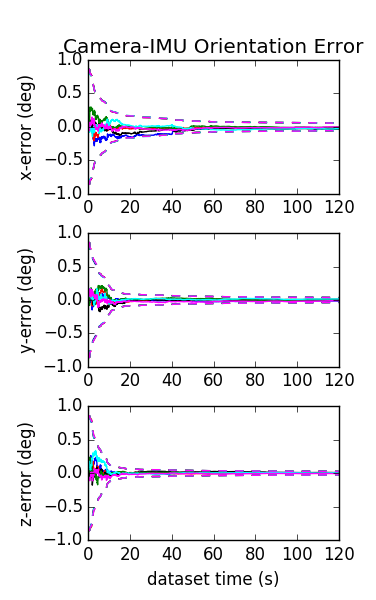}
\end{subfigure}
\begin{subfigure}{.24\textwidth}
\includegraphics[trim=0 0 0 0,clip,width=\linewidth]{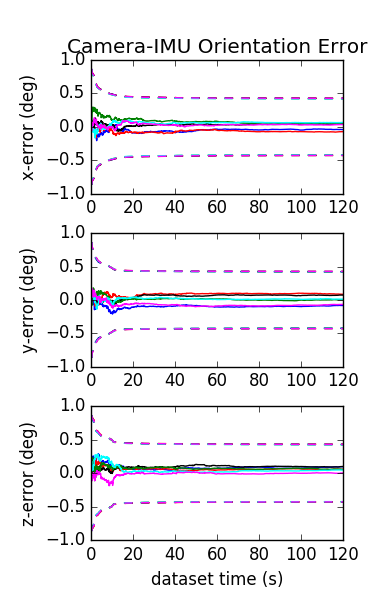}
\end{subfigure}
\caption{
Camera to IMU orientation errors when using IMU \textit{imu2} (left) and the over paramterized \textit{imu5} (right).
Note that only the IMU intrinsics and relative pose between IMU and camera were online calibrated.
}
\label{fig:sim_sine3d_overparam}
\end{figure}

\section{Simulation Validations} \label{sec:exp_sim}

The proposed estimator is implemented within the OpenVINS framework \citep{Geneva2020ICRA},
which contains a real-time modular sliding window EKF-based filter and simulator.
The project can already handle IMU-camera spatial-temporal and camera intrinsic calibrations, excluding RS readout time.
In this work we extend the estimator to address the calibration of all the parameters presented.
The new VINS estimator maintains the original faster than real-time performance in both simulated and real-world datasets.

The core of the simulator has a continuous-time ${SE}(3)$ B-spline trajectory representation which allows for the calculation of pose, velocity, and accelerations at any given timestamp along the trajectory.
The true angular velocity and linear accelerations can be directly found and corrupted using the random walk biases and white noises.
The basic configurations for our simulator are listed in Table \ref{tab:sim_params}.
All simulation convergence figures show the $3\sigma$ bounds (dotted lines) and estimation errors (solid lines) for six different runs (different colors) with different realization of the measurement noises and initial calibration state perturbations.

To simulate RS visual bearing measurements, we follow the logic presented by \cite{Li2013IJRR} and \cite{Eckenhoff2021TRO}.  
Specifically, static environmental features are first generated along the length of the trajectory at random depths and bearings.
Then, for a given imaging time of features in view, we project each into the current image frame using the true camera intrinsic and distortion model to find the corresponding observation row.
Given this projected row and image time, we can find the pose at which this row should have been (i.e., the pose at which that RS row should have been exposed).
We can then re-project this feature into the new pose and iterate until the projected row does not change (which typically requires 2-3 iterations).
We now have a feature measurement which occurred at the correct pose for its given RS row.
This measurement is then corrupted with white noise.
The imaging timestamp corresponding to the starting row is then shifted by the true IMU-camera time offset $t_d$ to simulate cross-sensor delay.

\subsection{Simulation with Fully-Excited Motion} \label{sec:sim fully excited}

We first perform a general trajectory simulation, for which we perform full calibration of the IMU-camera extrinsics, time offset, RS readout time, camera intrinsics with \textit{radtan} model and IMU intrinsics with \textit{imu22}. 
The trajectory, shown in the left of Figure \ref{fig:sim_traj}, is designed based on \textit{tum\_corridor} sequence of TUM visual-inertial dataset with full excitation of all 6 axes and provides a realistic 3D hand-held motion \citep{schubert2018tum}.
From the results shown in Figure \ref{fig:sim_full} and \ref{fig:sim_time}, the estimation errors and $3\sigma$ bounds for all the calibration parameters (including \textit{imu22} and \textit{radtan}) can converge quite nicely, verifying that the analysis for general motions holds true.
We plot results from six different realizations of the initial calibration guesses based on the specified priors, and it is clear that the estimates for all these calibration parameters are able to converge from different initial guesses to near the ground truth.
Each parameter is able to ``gain'' information since their $3\sigma$ bounds shrink.
These results verify our Lemma \ref{lem:obs} that all these online calibration parameters are observable given a fully-excited motion.

\subsection{Sensitivities to Perturbations} \label{sec:sim_perturb}

The next natural question is how robust the system is to the initial perturbations and whether the use of online sensor calibration enables improvements in robustness and accuracy.
Shown in Figure \ref{fig:sim_perturb_errors}, for each of the different calibration parameters we perturb it with different levels of noise on the \textit{tum\_corridor} trajectory (note that we also change the initial prior provided to the filter as the initial prior changes).
We can see that the proposed estimator is relatively invariant to the initial inaccuracies of the parameters and is, in general, able to output a near constant trajectory error.
A filter, which does not perform this online estimation, has its trajectory estimation error quickly increase to non-usable levels.
It is interesting to see that even small levels of perturbations can cause huge trajectory errors which further verifies the motivations to perform online calibration.

\subsection{Comparison of Inertial Model Variants}

We next compare the different proposed inertial model variants.
We estimate all calibration parameters and perturb them based on Table \ref{tab:sim_params}.
Shown in Table \ref{tab:sim_calibcompare}, it is clear that the choice between the variants has little impact on estimation accuracy which indicates they provide almost the same amount of correction to the inertial readings.

The accuracy of the standard VIO system which does not calibrate any parameters online and uses the groundtruth calibration values, denoted \textit{true w/o calib}, has the best accuracy due to the use of the true parameters.
If we do perturb the initial calibration and do not estimate it, denoted \textit{perturbed w/o calib}, the system quickly becomes unstable and diverges unless smaller levels of perturbations are used.
Note that the results presented in Section \ref{sec:sim_perturb} are for each parameter perturbed individually, while here all calibration parameters are perturbed at once thus resulting in divergence.
With full online calibration, the system can still output stable and consistent trajectories with only a small loss in estimation accuracy given perturbed calibration values.

\subsection{Degenerate Motion Verification}

We now verify the identified degenerate motions and present results for three special motions.
In all simulations, we perform full-parameter calibration to fully test our system and present the complete results in Appendix \ref{sec:more_sim_results}. 
The trajectories shown in Figure \ref{fig:sim_traj} are created as follows:
\begin{itemize}
    \item One-axis rotation with a modified \textit{tum\_room} trajectory, see middle left, which has its roll and pitch orientation changes removed to create a yaw and 3D translation only dataset.
    \item Constant local $a_x$ with modified \textit{sine\_3d}, see middle right, for which we have a constant pitch and make the current yaw angle tangent to the trajectory in the x-y plane (gives constant local acceleration along local x-axis).
    \item Planar motion with modified \textit{udel\_gore}, see right, which has its roll and pitch orientation removed and all poses are projected to x-y plane (planar motion in the global x-y plane). 
\end{itemize}

\subsubsection{One-axis rotation motion.}
Shown in Figure \ref{fig:sim_1axis}, the first 3 parameters ($d_{w1}$, $d_{w2}$ and $d_{w3}$) for $\mathbf{D}_w$ do not converge at all (the 3$\sigma$ bounds are almost straight lines), which matches our analysis, see Table \ref{table:degenerate summary}. These parameters should be unobservable in the case of one-axis rotation with ${}^ww_x$ (roll) and ${}^ww_y$ (pitch) are constant.
Additionally, the translation between IMU and camera does not converge either. 
The x-error of the IMU-camera translation even diverges reinforcing the undesirability of degenerate motions and verifies the analysis presented in Table \ref{tab:cam degenerate}.

\subsubsection{Constant local acceleration motion.}
The results shown in Figure \ref{fig:sim_ax}, where we have enforced that the local acceleration along the x-axis, $a_x$, is constant. The $d_{a1}$, and pitch and yaw of ${}^I_a\mathbf{R}$ does not converge, thus validating our analysis shown in Table \ref{table:degenerate summary}.
Note that in the simulation, we have set ${}^I_a\mathbf{R} \simeq \mathbf{I}_3$ and $\mathbf{D}_a \simeq \mathbf{I}_3$.
Hence, ${}^a\hat{\mathbf{a}} \simeq {}^I\mathbf{a}$ and ${}^Ia_x$ is also near constant. 
Therefore, three terms of gravity sensitivity ($t_{g1}$, $t_{g2}$ and $t_{g2}$) are also unobservable and converge much slower than other terms.

\subsubsection{Planar motion.}
Shown in Figure \ref{fig:sim_planar}, with one-axis rotation (yaw axis) for planar motion the $d_{w1}$, $d_{w2}$ and $d_{w3}$ for $\mathbf{D}_w$ and the IMU-camera translation are unobservable and does not converge.
Since the ${}^I\mathbf{a}_z$ is constant, the last three terms of gravity sensitivity ($t_{g7}$, $t_{g8}$ and $t_{g9}$) become unobservable and cannot converge.
Both these results verify our analysis shown in Tables \ref{table:degenerate summary} and \ref{tab:cam degenerate}.
Additionally, this trajectory is quite smooth with small excitation of linear acceleration, hence, the terms of $\mathbf{D}_a$ and ${}^I_a\mathbf{R}$ in general converge much slower than the fully excited motion case.

\subsection{Simulated Over Parametrization} \label{sec:overparameterization}

We now look to investigate the impact of poor choice of calibration parameters which \textit{over parameterizes} the IMU intrinsics.
The \textit{imu5} model, see Table \ref{tab:imu model}, is an over parametrization since we calibrate both 9 parameters for gyroscope and accelerometer, which causes the IMU-camera orientation to be affected since the intermediate inertial frame $\{I\}$ is not constrained.
If we change the relative rotation from $\{I\}$ to $\{C\}$, then this perturbed rotation can be absorbed into the $\{a\}$ to $\{I\}$ and $\{w\}$ to $\{I\}$ terms. Thus, it means we have an extra 3 degrees of freedom (DoF) for rotation not constrained by our measurements.
We compare this \textit{imu5} model to its close equivalent \textit{imu2} model in Figure \ref{fig:sim_sine3d_overparam}.
We can see that even though the trajectory fully excites the sensor platform, the convergence of ${}^C_I\mathbf{R}$ becomes much worse if we calibrate IMU-camera extrinsics and all 18 parameters for the IMU model \textit{imu5} even when the \textit{same} priors and measurements are used.
This further motivates the use of the minimal calibration parameters to ensure fast and robust convergence of all state parameters.

\section{Real-World TUM RS VIO Datasets}
\label{sec:exp_tum_rs}

\begin{figure}
\centering
\includegraphics[width=0.95\linewidth]{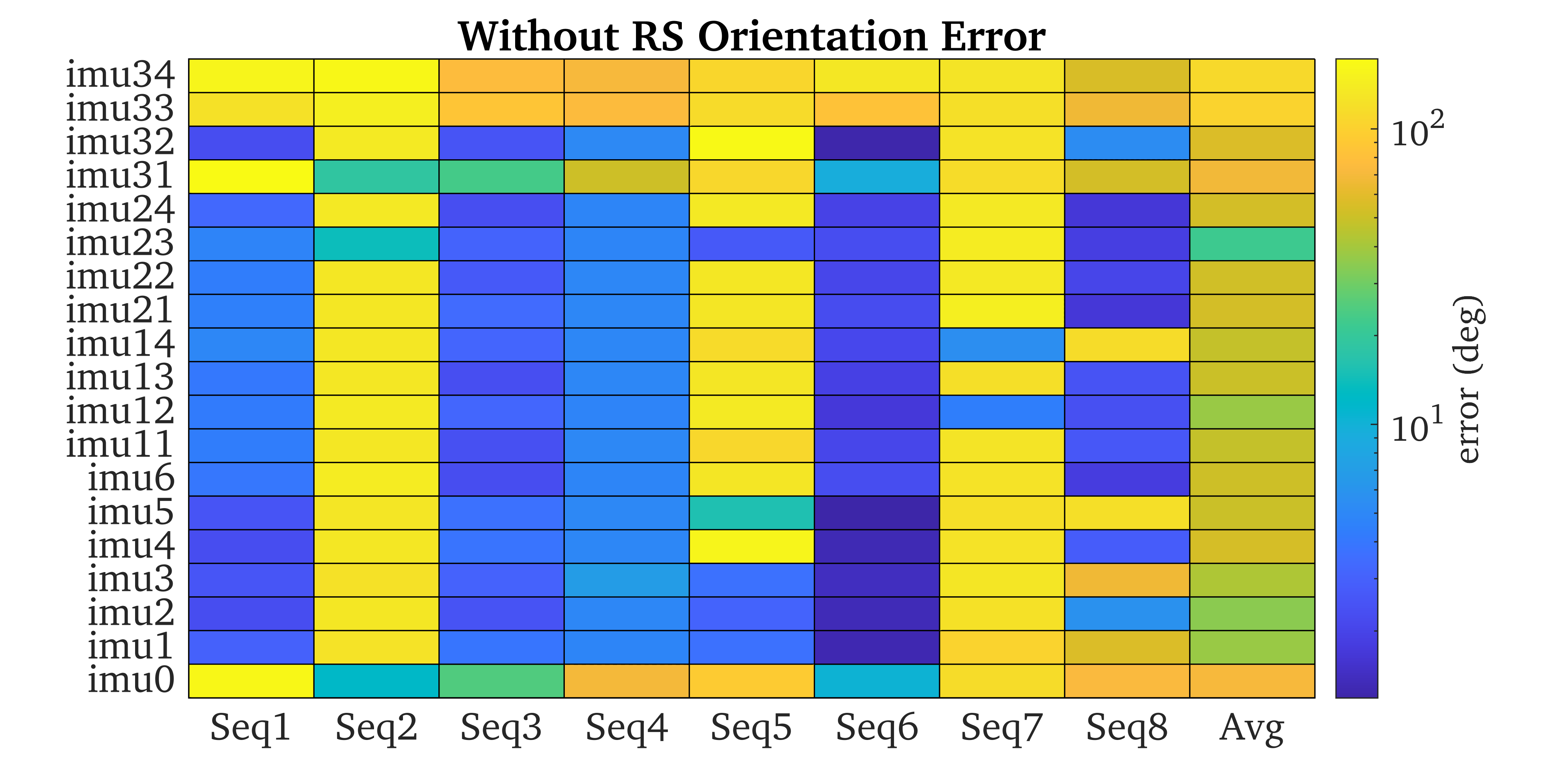}
\includegraphics[width=0.95\linewidth]{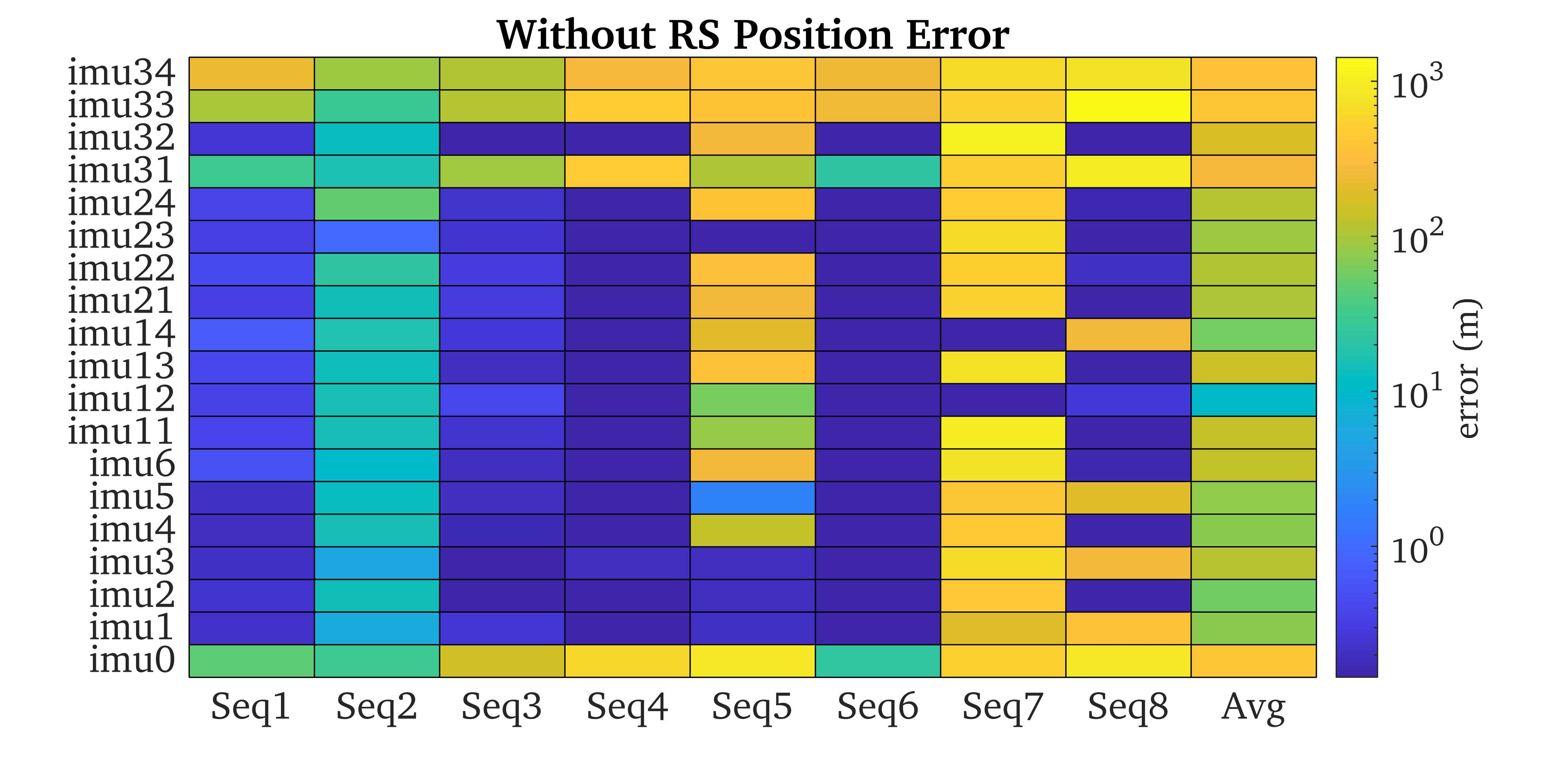}
\caption{
Results on TUM Rolling Shutter VIO Dataset \textit{without} rolling shutter readout time calibration, with different IMU intrinsic models.
The averaged absolute trajectory errors (ATE) of 5 runs in degree (top) and meters (bottom) are provided.
Note that the camera intrinsics, and IMU-camera spatial-temporal calibration. 
}
\label{fig:surf_no_rs}
\end{figure}
\begin{figure}
\centering
\includegraphics[width=0.95\linewidth]{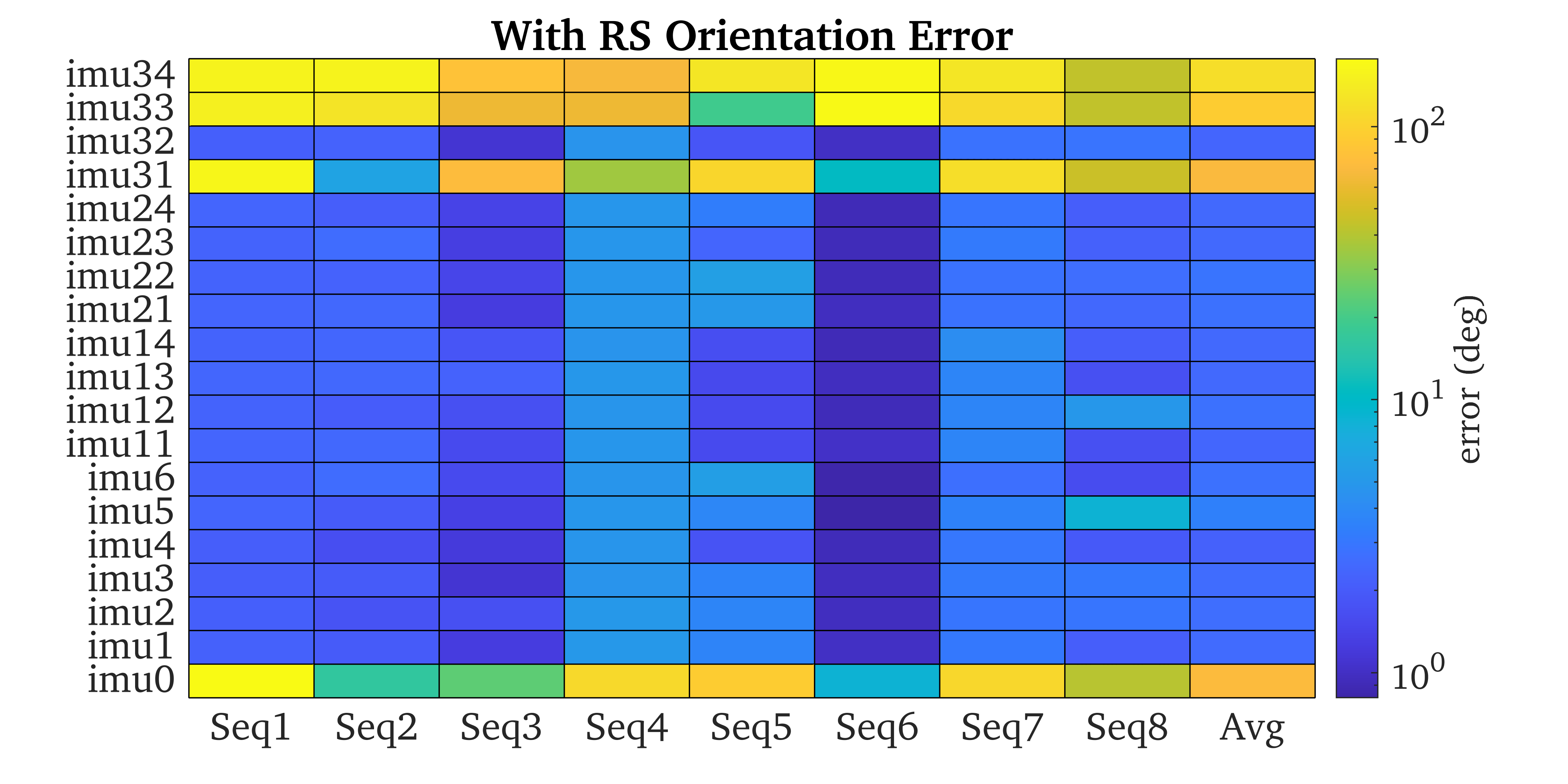}
\includegraphics[width=0.95\linewidth]{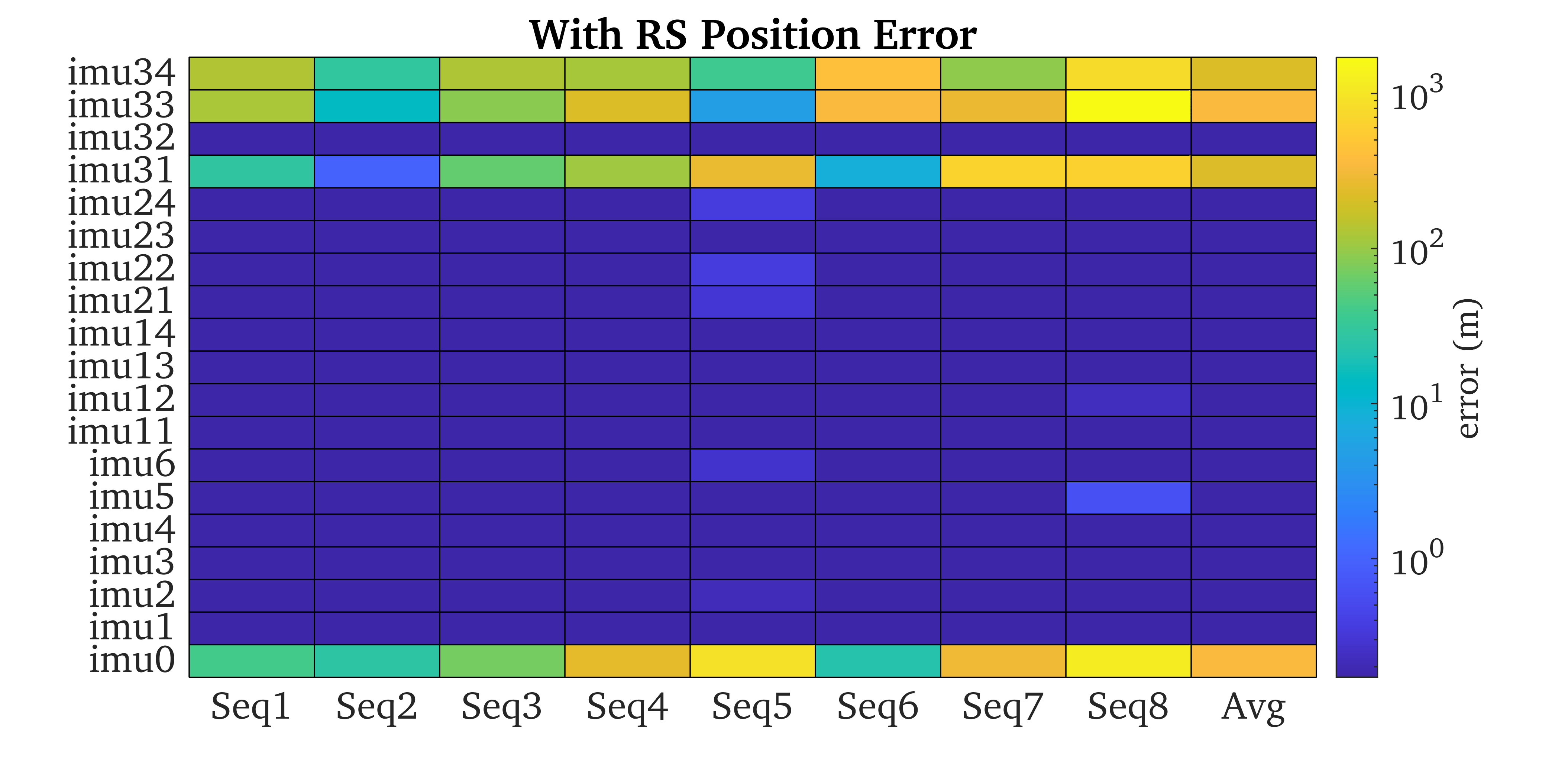}
\caption{
Results on TUM Rolling Shutter VIO Dataset \textit{with} rolling shutter readout time calibration, with different IMU intrinsic models.
The averaged absolute trajectory errors (ATE) of 5 runs in degree (top) and meters (bottom) are provided.
Note that the camera intrinsics, and IMU-camera spatial-temporal calibration. 
}
\label{fig:surf_rs}
\end{figure}

\begin{figure*}
\centering
\begin{subfigure}{.45\textwidth}\centering
\includegraphics[trim=0 9mm 0 0,clip,height=1.3in]{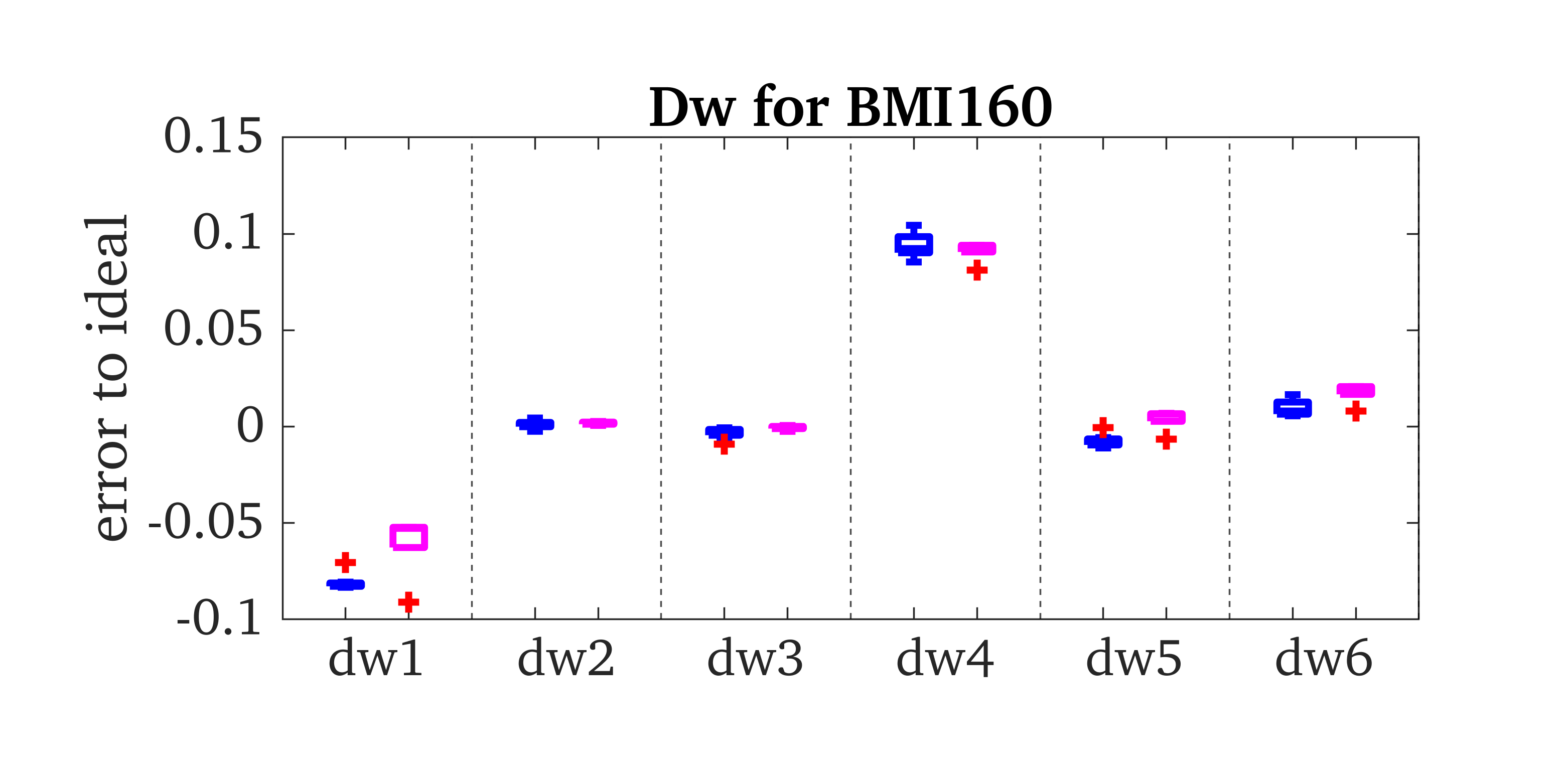}
\end{subfigure}
\begin{subfigure}{.45\textwidth}\centering
\includegraphics[trim=0 9mm 0 0,clip,height=1.3in]{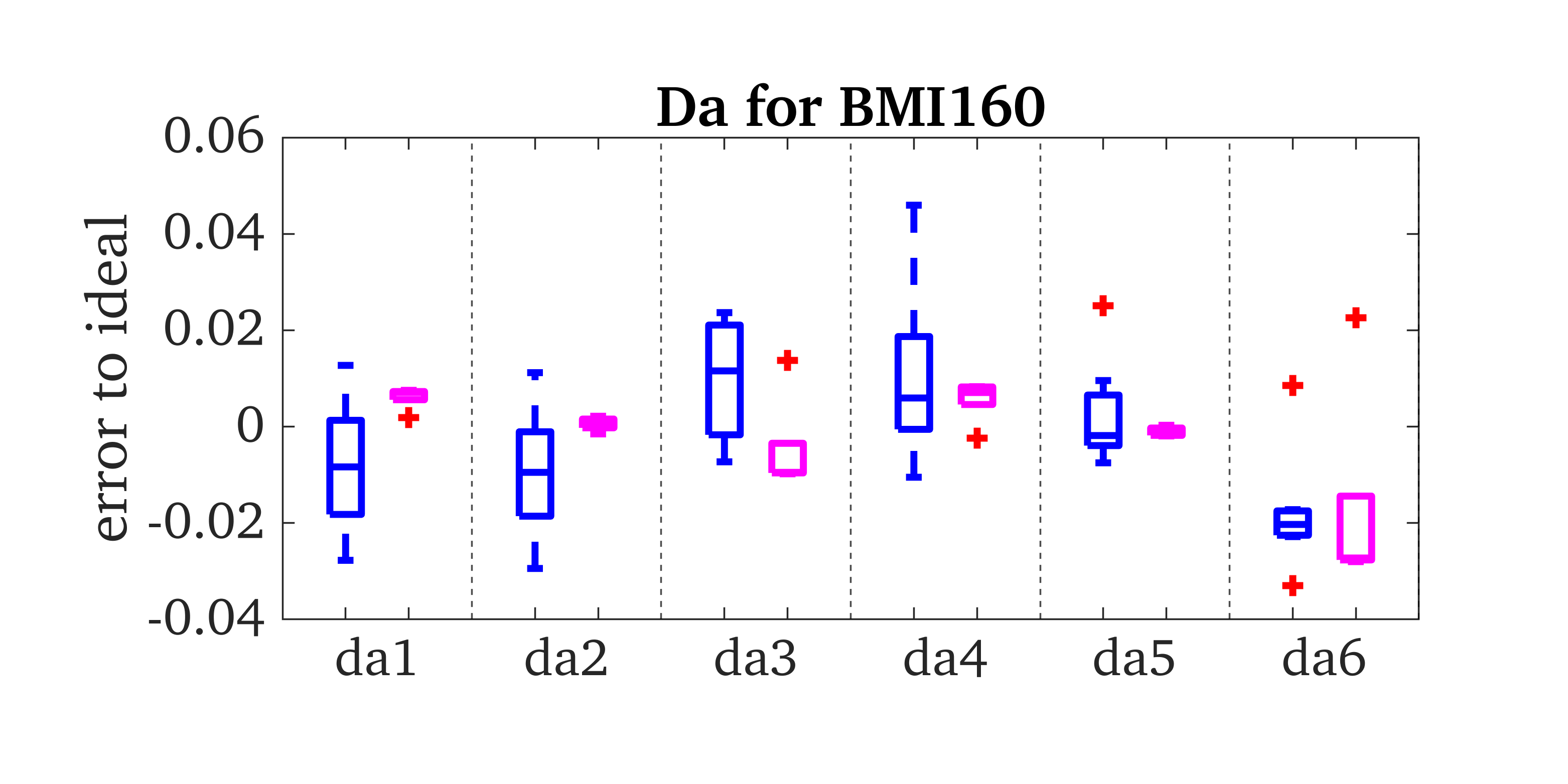}
\end{subfigure}
\begin{subfigure}{.45\textwidth}\centering
\includegraphics[trim=10mm 5mm 0 0,clip,height=1.3in]{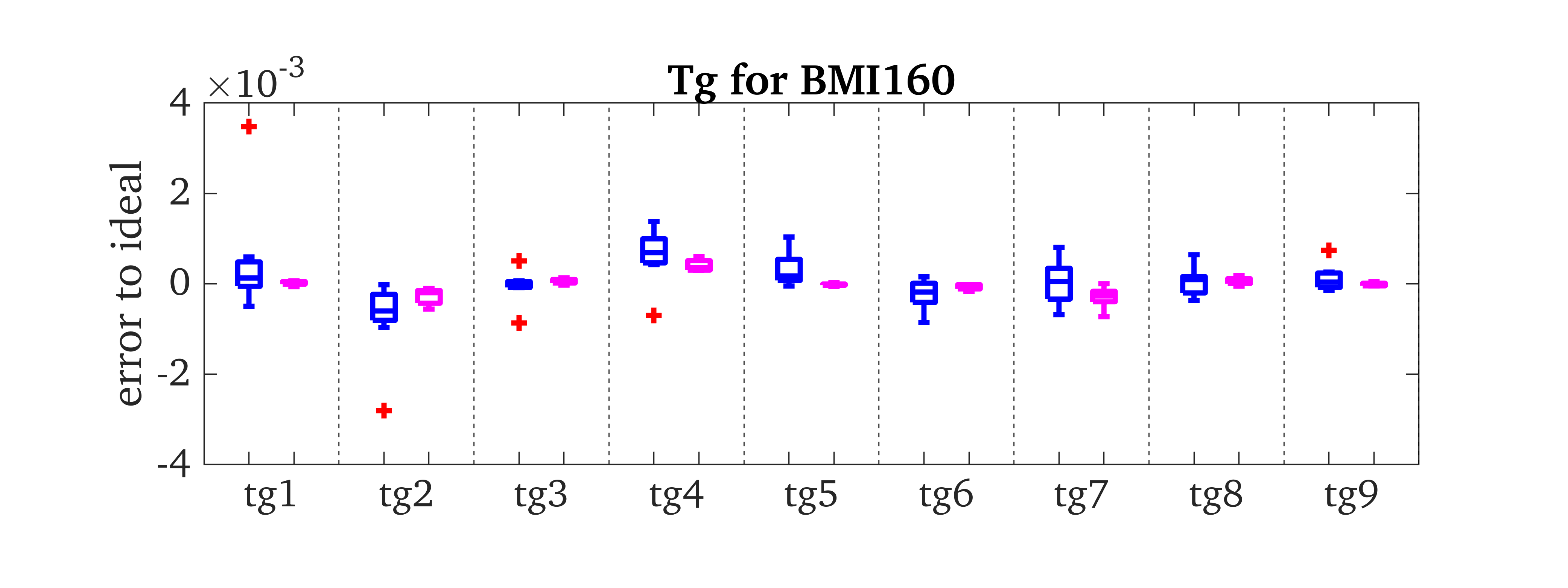}
\end{subfigure}
\begin{subfigure}{.45\textwidth}\centering
\includegraphics[trim=0 5mm 0 0,clip,height=1.3in]{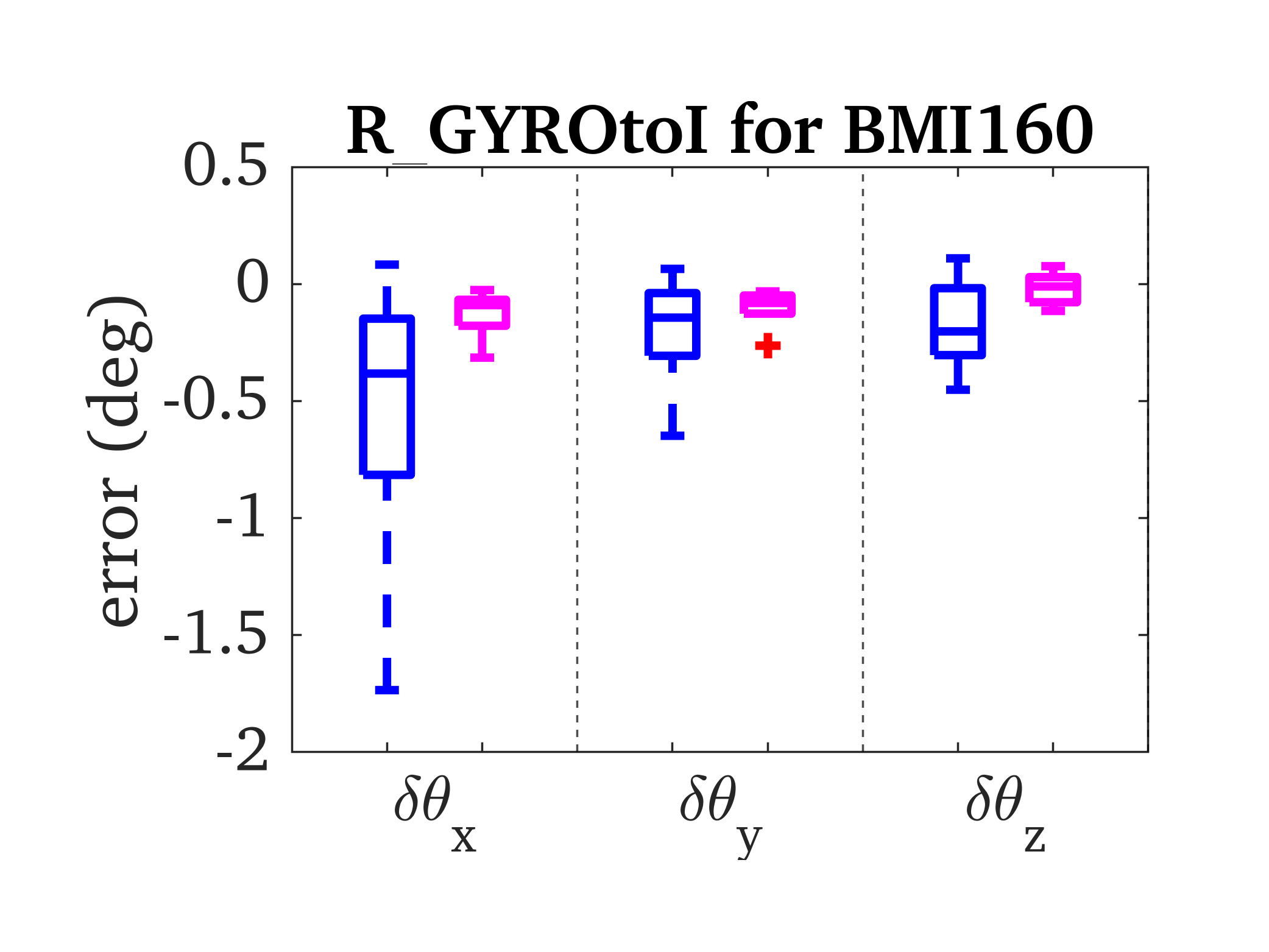}
\end{subfigure}
\caption{
IMU intrinsic evaluation of Bosch BMI160 IMU used in TUM Rolling Shutter VIO datasets using the proposed method with \textit{imu6} and Kalibr relative to the ``ideal'' sensor intrinsics.
The boxplots show the final converged value of both methods.
Kalibr (magenta, right in each group) was run with two global shutter cameras and a Bosch BMI160 IMU available over 5 calibration datasets using April tag board, while the proposed system (blue, left) was run with only one rolling shutter camera and the same IMU on the 8 data sequences without any tags. 
Note that \textit{imu6} is equivalent to the \textit{scale-misalignment} IMU model of Kalibr.
}
\label{fig:exp_kalibr_bmi160}
\end{figure*}

\begin{table}
\renewcommand{\arraystretch}{1.2}
\caption{Averaged absolute trajectory errors (ATE) of 5 runs over all 8 sequences of the TUM Rolling Shutter VIO Dataset with rolling shutter, camera intrinsics, and IMU-CAM spatial-temporal calibration. 
}
\label{tb:ate_tum_rs_1through6}
\centering
\begin{tabular}{ccc} \toprule
\textbf{IMU Model} & \textbf{ATE (deg)} & \textbf{ATE (m)} \\\midrule
\textit{imu0} & 72.994 & 363.610 \\ 
\textit{imu1} & 2.574 & 0.092 \\
\textit{imu2} & 2.679 & 0.094 \\
\textit{imu3} & 2.590 & 0.093 \\
\textit{imu4} & 2.205 & 0.076 \\
\textit{imu5} & 3.418 & 0.149 \\\midrule
\textit{imu32} & 2.368 & 0.079 \\
\bottomrule
\end{tabular}
\end{table}

We first evaluate our proposed algorithms on the TUM RS VIO Dataset which contains a time-synchronized stereo pair of two uEye UI-3241LE-M-GL cameras (left: global-shutter and right: rolling-shutter) and a Bosch BMI160 IMU \citep{Schubert2019IROS}. 
When collecting data, the cameras were operated at 20Hz while the IMU operated at 200Hz and an OptiTrack system captured the ground-truth motion.
The dataset is provided in both ``raw'' and ``calibrated'' formats. The ``calibrate'' dataset has had IMU intrinsics calibrated from Kalibr \citep{Furgale2013IROS} pre-applied to the ``raw'' dataset along with some re-sampling.
We evaluate our proposed system by using the right (RS) camera directly with the raw datasets, which have very noisy measurements with varying sensing rates.
Hence, the raw datasets are more challenging compared to the calibrated datasets. 
We re-calibrated the camera intrinsics and IMU-camera spacial-temporal parameters using the raw calibration datasets as the provided calibration parameters were only for the calibrated datasets.
We directly use Eq. \eqref{eq:rolling_shutter_time} since the dataset timestamps correspond to the first row of the image.
Note that we set the initial values for $\mathbf{D}_a$, $\mathbf{D}_w$, ${}^I_a\mathbf{R}$ and ${}^I_w\mathbf{R}$ as identity and $\mathbf{T}_g$ as zeros, while the initial readout time for the whole RS image is set to 20ms as prior calibration.
All IMU intrinsic models listed in Table \ref{tab:imu model} were run with and without RS calibration. 
The results are presented in the following sections.

\subsection{RS Self-Calibration}

The results are shown in Figure \ref{fig:surf_no_rs} and \ref{fig:surf_rs}, with and without RS readout calibration, respectively.
It is clear that the systems without RS readout time calibration and without IMU intrinsic calibration (\textit{imu0} and \textit{imu31}-\textit{imu34}) are unstable and diverges to large orientation and positional errors. 
With IMU intrinsic calibration, the system still fails for certain datasets (\textit{imu1}-\textit{imu24}) and thus online readout time calibration will greatly improve the system robustness for RS cameras. 
The finally estimated RS readout time for each image is around 30ms, which means given the image resolution of $1280\times 1024$, the row readout time should be around 29us.

\subsection{IMU Intrinsic Self-Calibration}

We focus on the results in Figure \ref{fig:surf_rs} which has RS enabled.
It is clear from the performance of \textit{imu0} that without IMU intrinsic calibration the BMI160 IMU will cause large trajectory errors, with the models which do perform intrinsic calibration being an order of magnitude more accurate.
Table \ref{tb:ate_tum_rs_1through6} shows the average error over all sequences for the first 6 IMU models.
It can be seen that the \textit{imu5} model which over parameterizes the intrinsics has worst accuracy in both orientation and position trajectory estimates, while the accuracy of the other \textit{imu1} - \textit{imu4} models is comparable to each other (similar accuracy level).
We further do an ablation study with models \textit{imu31} - \textit{imu34} to find the individual impact of each of the IMU intrinsic parameters.
We can see that the \textit{imu32} model which estimates $\mathbf{D}_{w9}$ has large accuracy gains over the other four.
This indicates that the readings from gyroscope of BMI160 are very noisy. 
The calibration of $\mathbf{D}_{w9}$ dominates the performance of this VINS system, and just the calibration of it can achieve similar results as full IMU model calibration (see bottom of Table \ref{tb:ate_tum_rs_1through6}).
Through all these we show that online IMU intrinsic calibration can enhance both the system robustness and accuracy.

\subsection{Comparison to Kalibr Calibration}
We run Kalibr's offline calibration with \textit{scale-misalignment} \footnote{https://github.com/ethz-asl/kalibr/wiki/Multi-IMU-and-IMU-intrinsic-calibration} IMU model on 5 calibration datasets provided for the Bosch BMI160 IMU and treat these results as reference values to compare to the proposed \textit{online} calibration results.
The Kalibr calibration datasets were collected with the stereo camera pair both operating with a global shutter mode along with an April tag board \citep{Furgale2013IROS}. For the results evaluation, we directly report $\mathbf{T}'_w = (\mathbf{D}'_{w})^{-1}$, $\mathbf{T}'_a = (\mathbf{D}'_{a})^{-1}$ and ${}^w_I\mathbf{R} = {}^I_w\mathbf{R}^{\top}$.
By contrast, the proposed system is run on one RS camera with only temporal environmental feature tracks on the 8 data sequences using \textit{imu6}, which is equivalent to the \textit{scale-misalignment} IMU model of Kalibr.

As shown in the boxplots in Figure \ref{fig:exp_kalibr_bmi160}, even though we run on more challenging datasets in real-time, our proposed system can still achieve reasonable calibration results for $\mathbf{D}'_w$, $\mathbf{D}'_a$, $\mathbf{T}_g$ and ${}^I_w\mathbf{R}$, which are close to the values from baseline Kalibr. 
Additionally we can see that the values of gravity sensitivity $\mathbf{T}_g$ of the BMI160 IMU are generally one or two orders smaller than the other IMU intrinsics.
This matches the results presented in Figure \ref{fig:surf_rs}, for which the estimation errors of \textit{imu1} - \textit{imu4} (without gravity sensitivity) are similarly to those of \textit{imu11} - \textit{imu14} (with gravity sensitivity of 6 parameters) and \textit{imu21} - \textit{imu24} (with gravity sensitivity of 9 parameters).
This means that the proposed system performance is less sensitive to gravity sensitivity, no matter 0, 6 or 9 parameters are used.

We can also see that the terms in scale-misalignment for gyroscope $\mathbf{D}'_w$ for the BMI160 IMU are much larger than $\mathbf{D}'_{a}$ and $\mathbf{T}_g$.
This indicates that the readings from gyroscope of BMI160 are very noisy and the calibration of $\mathbf{D}'_{w}$ dominates the performance of this VINS system. 
This fact is again confirmed by model \textit{imu32}, which only calibrates the $\mathbf{D}'_w$ and can achieve similar results as full IMU model calibration (see bottom of Table \ref{tb:ate_tum_rs_1through6}).

\section{More Validation and Investigation of Camera and IMU Impacts}
\label{sec:exp_virig}

\begin{figure}
\centering
\includegraphics[width=0.65\linewidth]{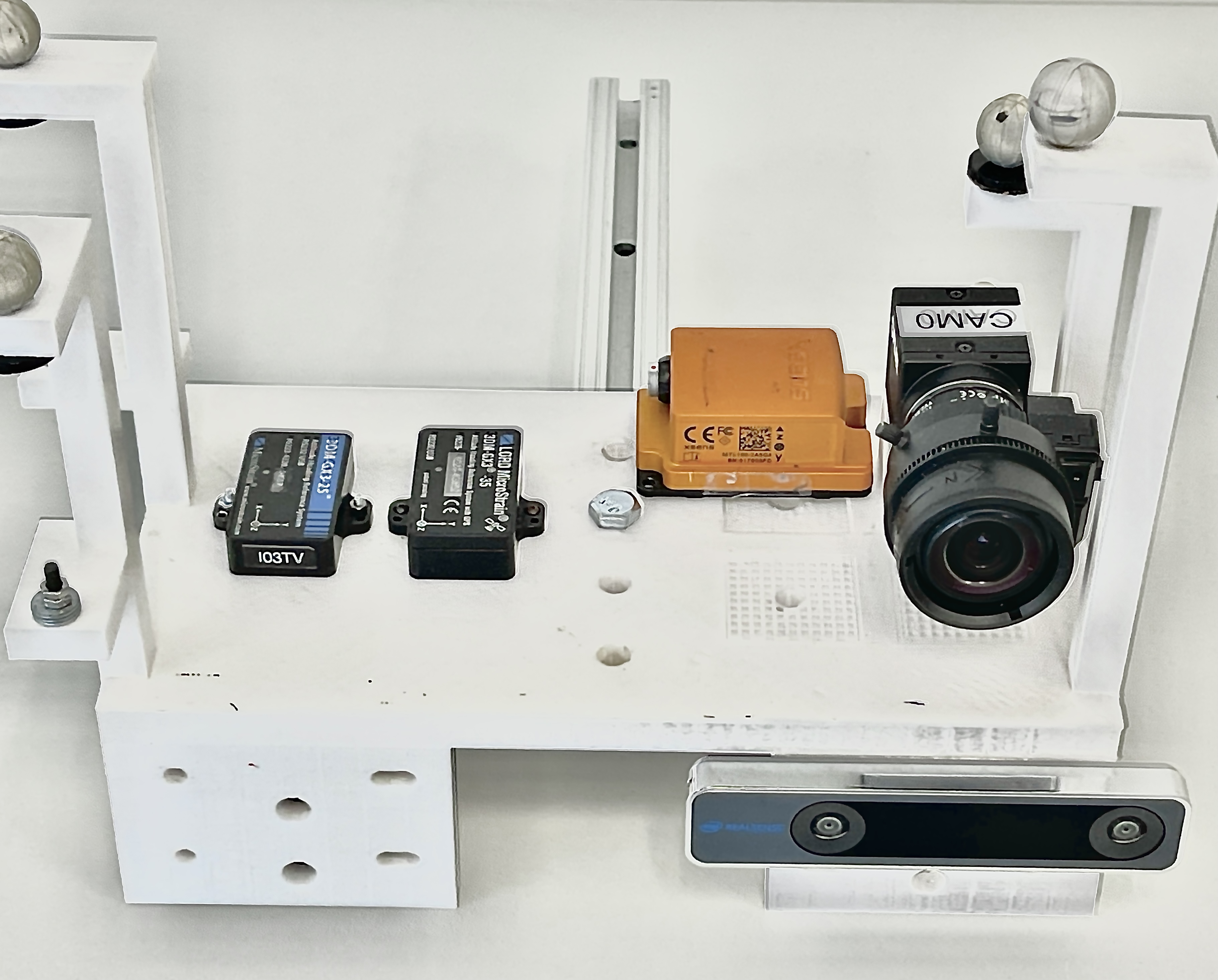}
\caption{Visual-Inertial Sensor Rig contains a MicroStrain GX5-25 IMU, MicroStrain GX5-35, Xsens MTi 100, FLIR Blackfly camera and RealSense T265 tracking camera.
The RealSense T265 tracking camera contains an integrated IMU and a fisheye stereo camera. 
}
\label{fig:vi_rig}
\end{figure}

\begin{table*}[]
\renewcommand{\arraystretch}{1.5}
\caption{Average processing time for each image (including propagation and update) for the proposed system with (w/) and without (w/o) online calibration (unit: second) on the 10 datasets collected with VI-Rig. 
The time increase (0.0036s in average) for online calibration is negligible compared to no calibration. 
}
\label{tab:timing}
\begin{adjustbox}{width=\textwidth,center}
\begin{tabular}{cccccccccccc}
\hline
\textbf{Algorithm}    & \textbf{Data-1} & \textbf{Data-2} & \textbf{Data-3} & \textbf{Data-4} & \textbf{Data-5} & \textbf{Data-6} & \textbf{Data-7} & \textbf{Data-8} & \textbf{Data-9} & \textbf{Data-10} & \textbf{Avg.} \\ \hline
w/ online calib & 0.0227    & 0.0231    & 0.0228    & 0.0223    & 0.0229    & 0.0220    & 0.0214     & 0.0228    & 0.0223    & 0.0215 & 0.0224    \\
w/o online calib     & 0.0187    & 0.0191    & 0.0189    & 0.0190    & 0.0190    & 0.0185    & 0.0186    & 0.0186    & 0.0189    & 0.0190  & 0.0188   \\ \hline
\end{tabular}
\end{adjustbox}
\end{table*}

\begin{figure*}
\centering
\includegraphics[trim=0.5cm 0 1cm 0,clip,height=1.3in]{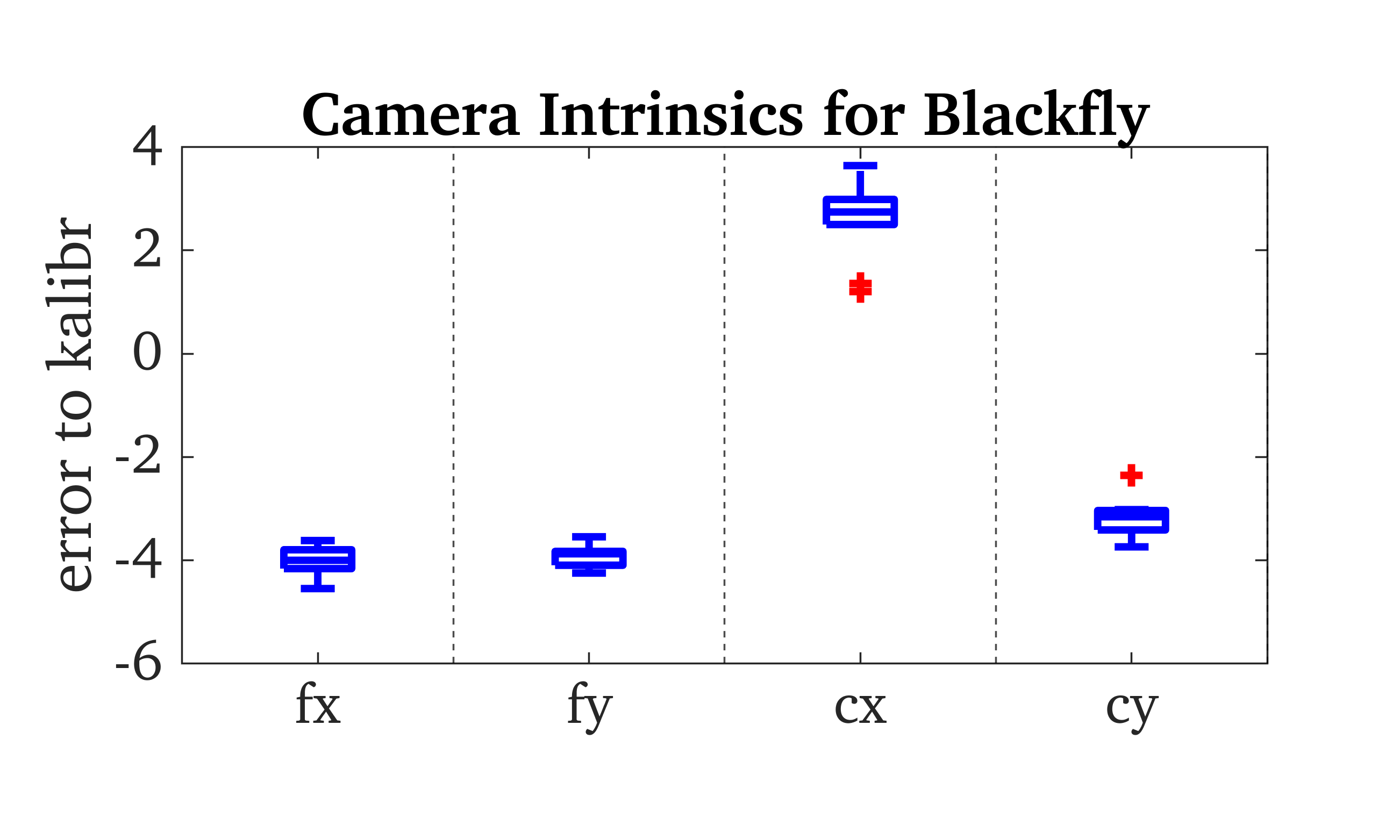}
\includegraphics[trim=0.4cm 0 1cm 0,clip,height=1.3in]{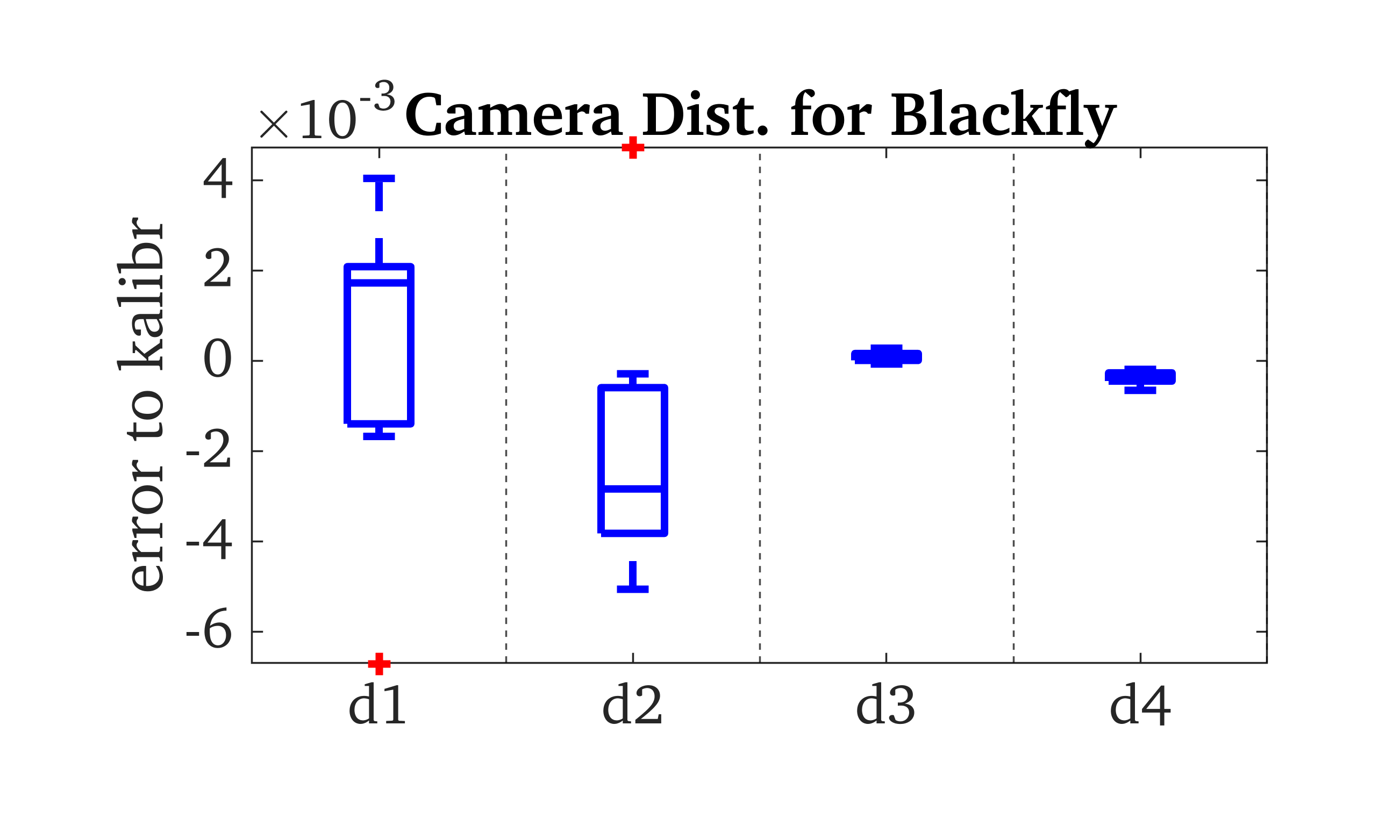}
\includegraphics[trim=0.75cm 0 0.75cm 0,clip,height=1.3in]{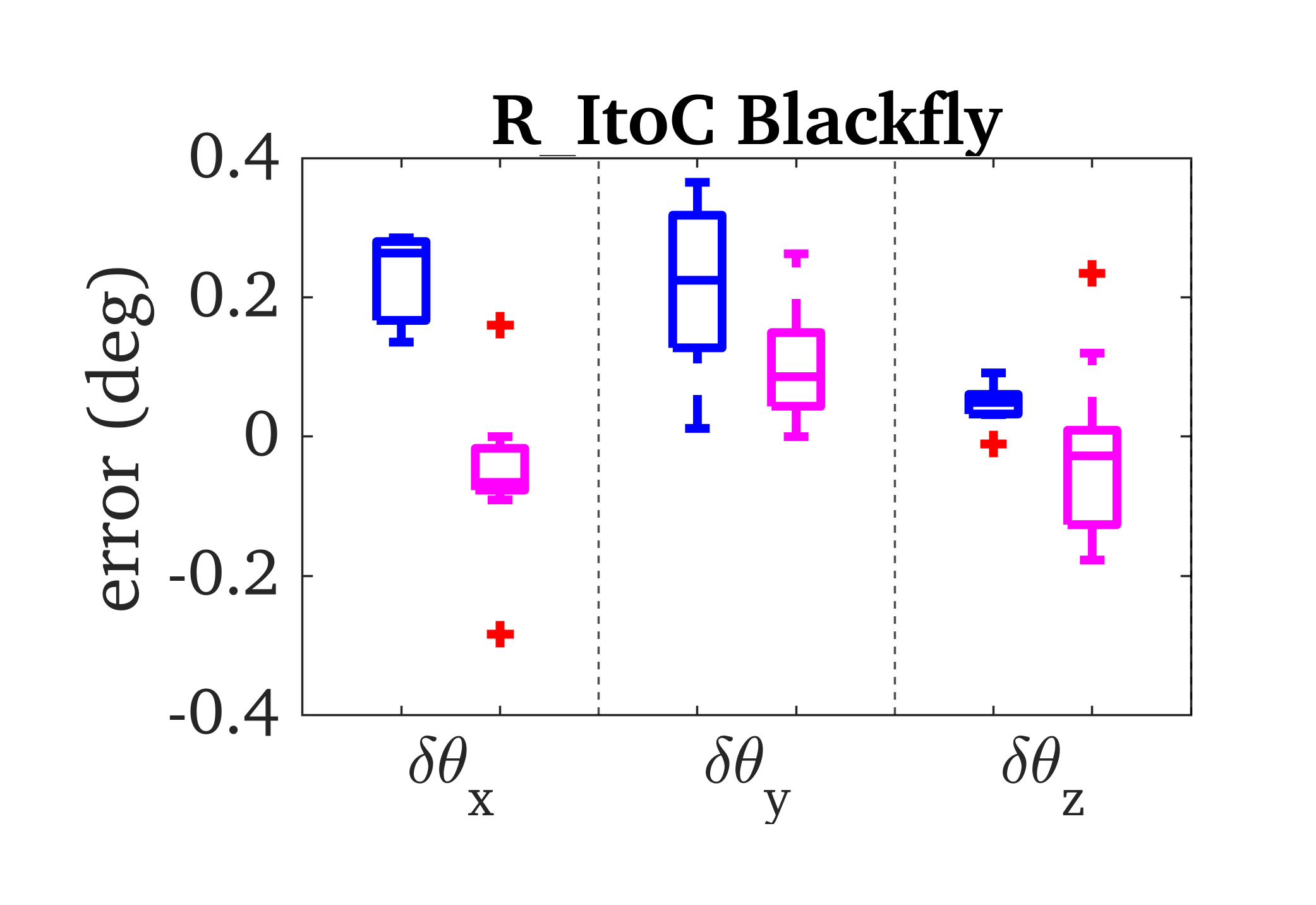}
\includegraphics[trim=0.75cm 0 0.75cm 0,clip,height=1.3in]{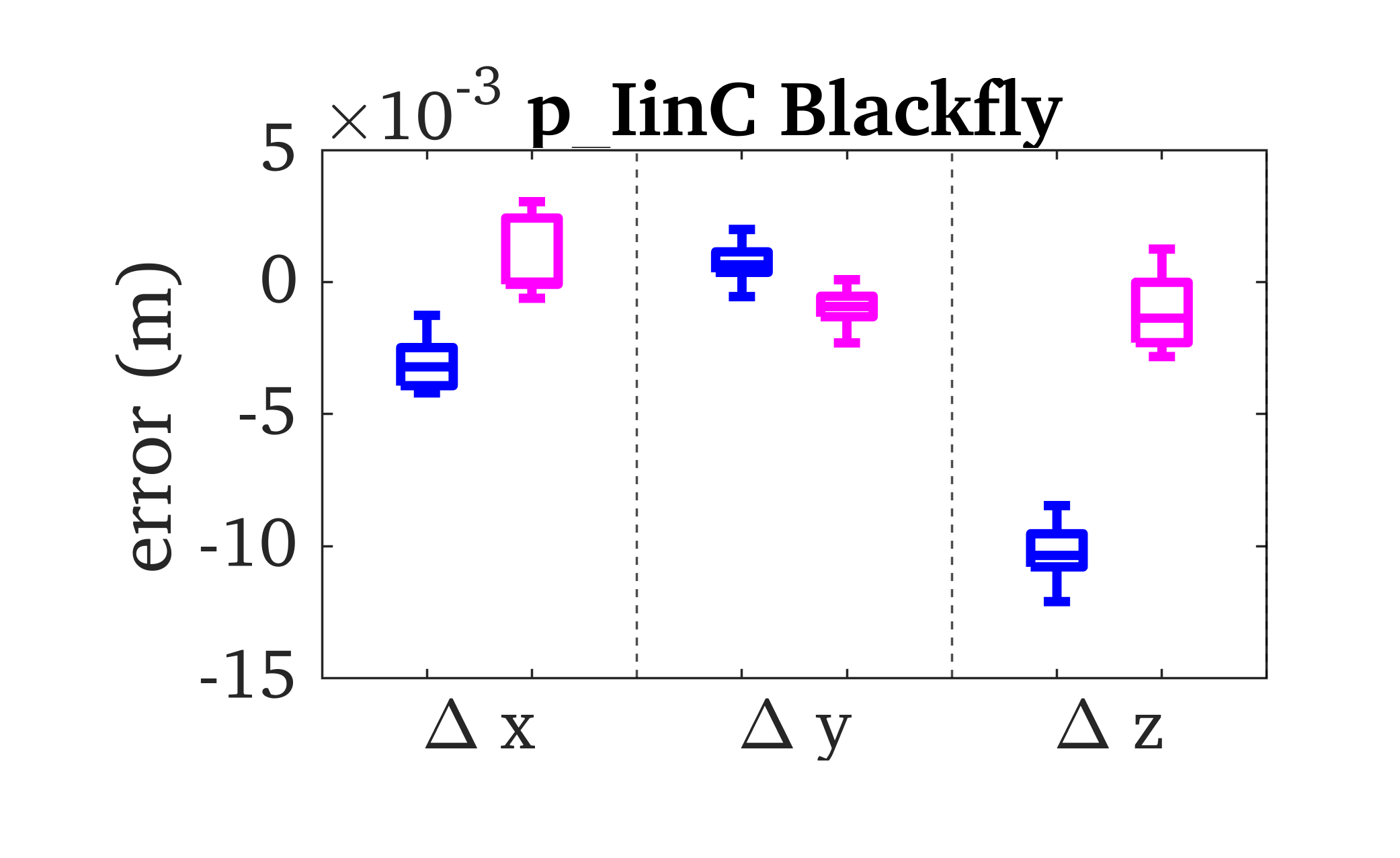}
\includegraphics[trim=0cm 0 0cm 0,clip,height=1.3in]{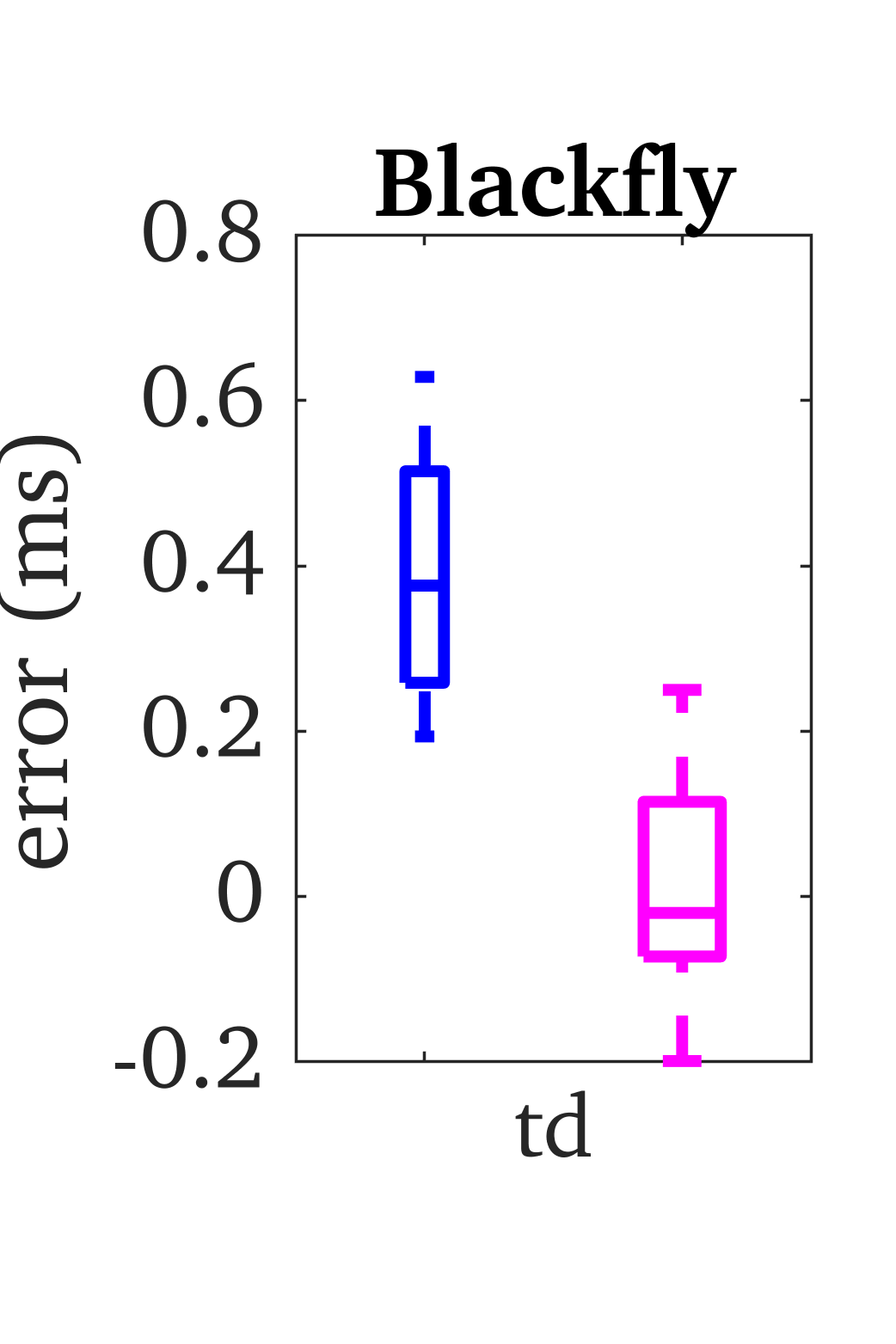}
\includegraphics[trim=0cm 0 0cm 0,clip,height=1.3in]{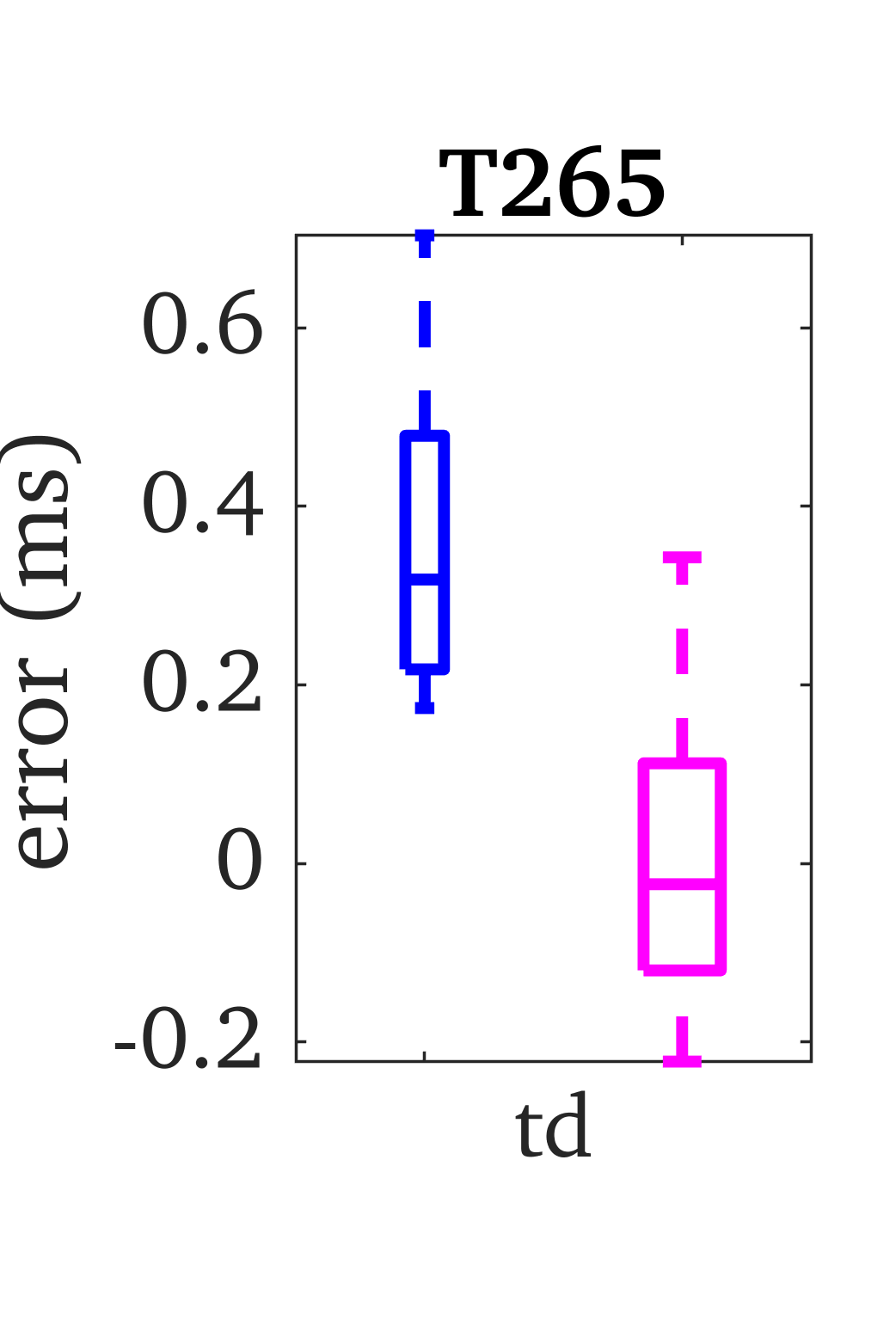}
\includegraphics[trim=0.5cm 0 1cm 0,clip,height=1.3in]{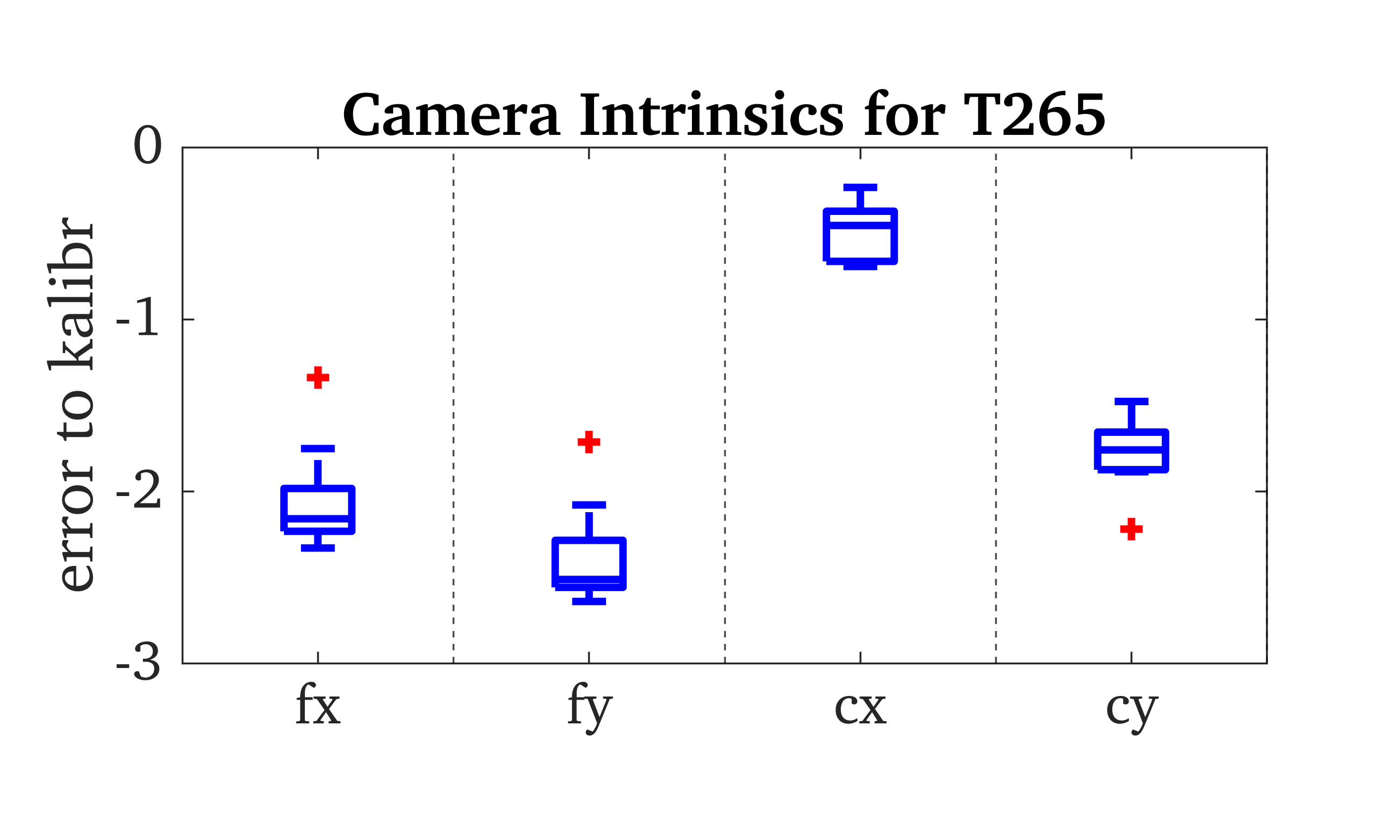}
\includegraphics[trim=0.4cm 0 1cm 0,clip,height=1.3in]{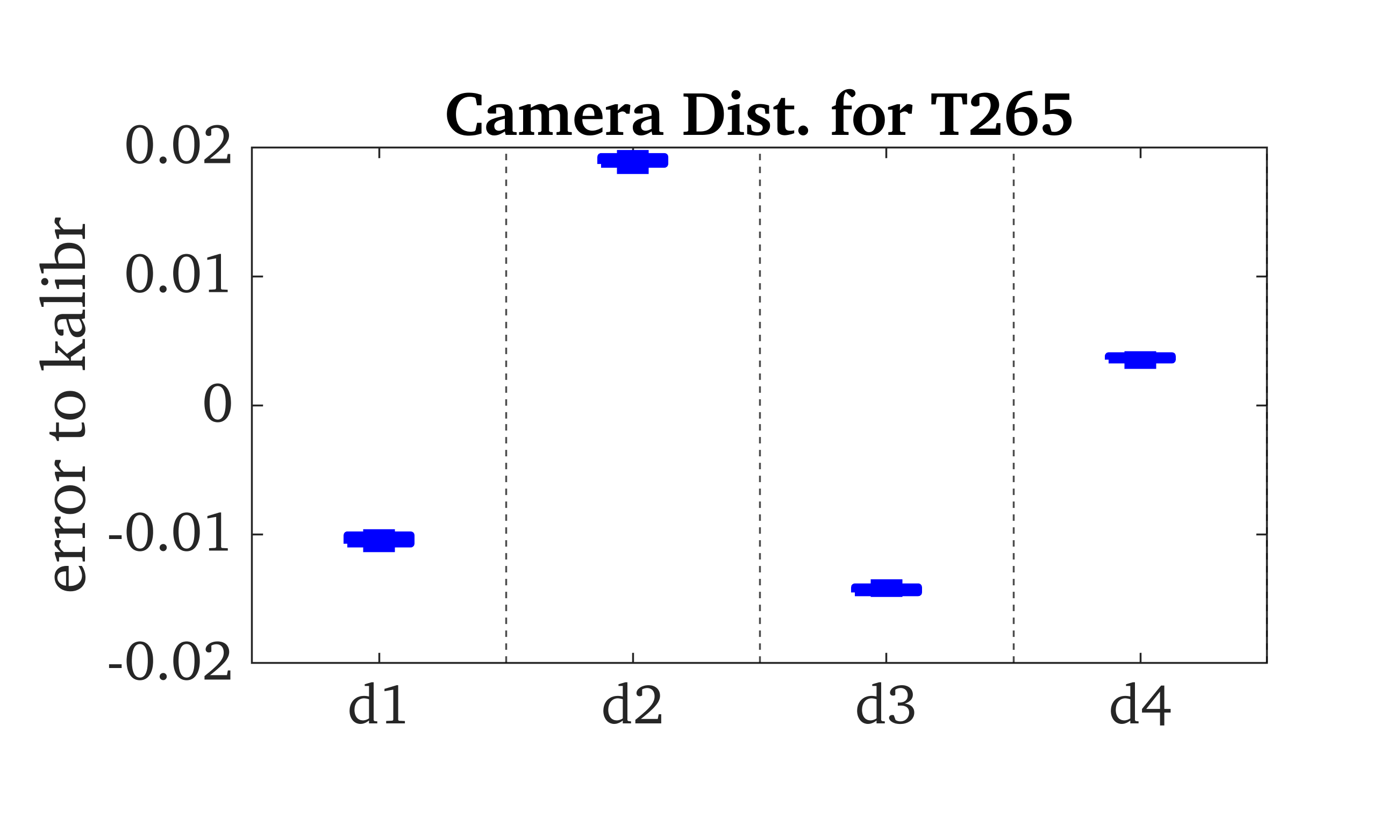}
\includegraphics[trim=0.75cm 0 0.75cm 0,clip,height=1.3in]{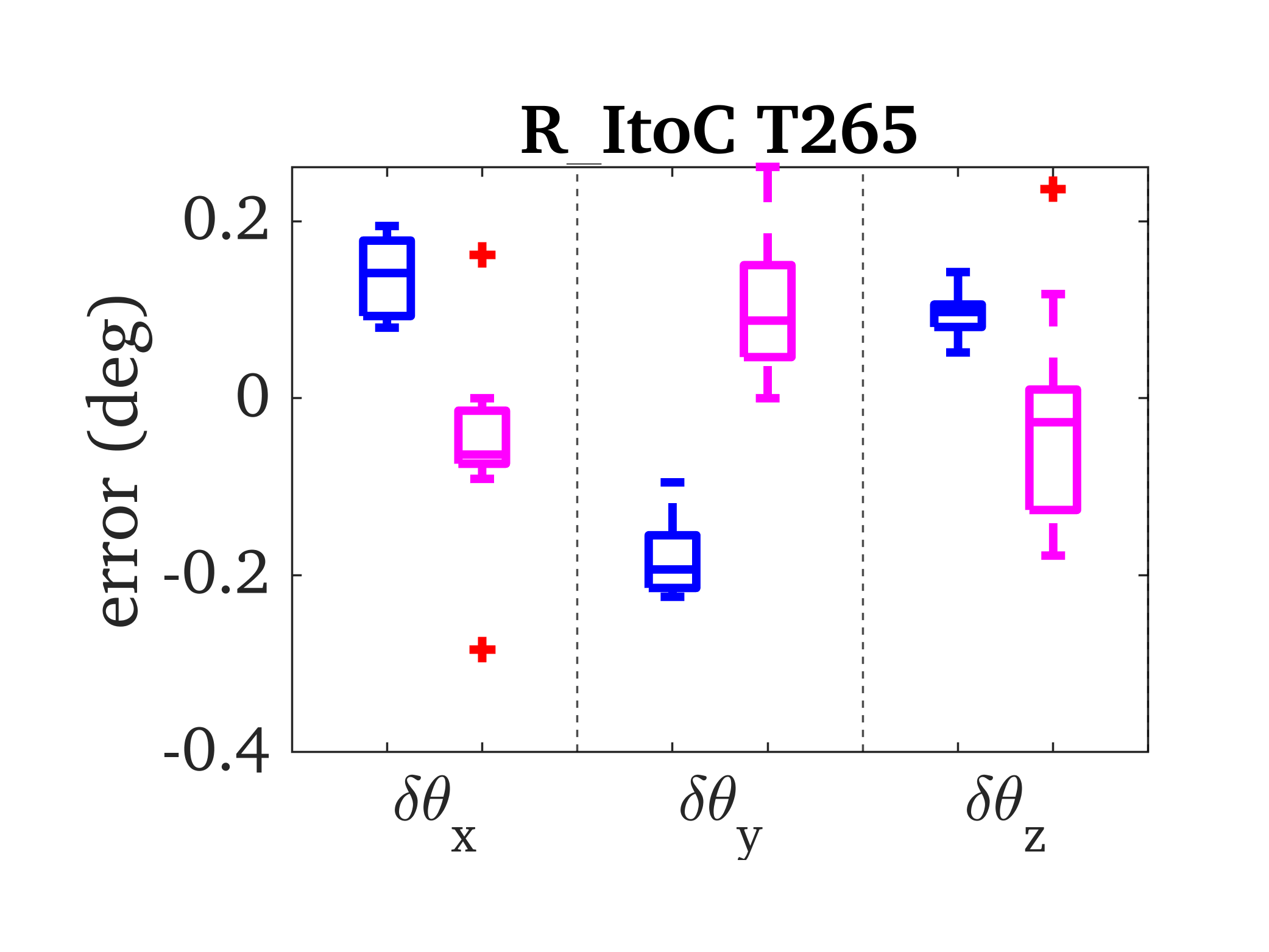}
\includegraphics[trim=0.75cm 0 0.75cm 0,clip,height=1.3in]{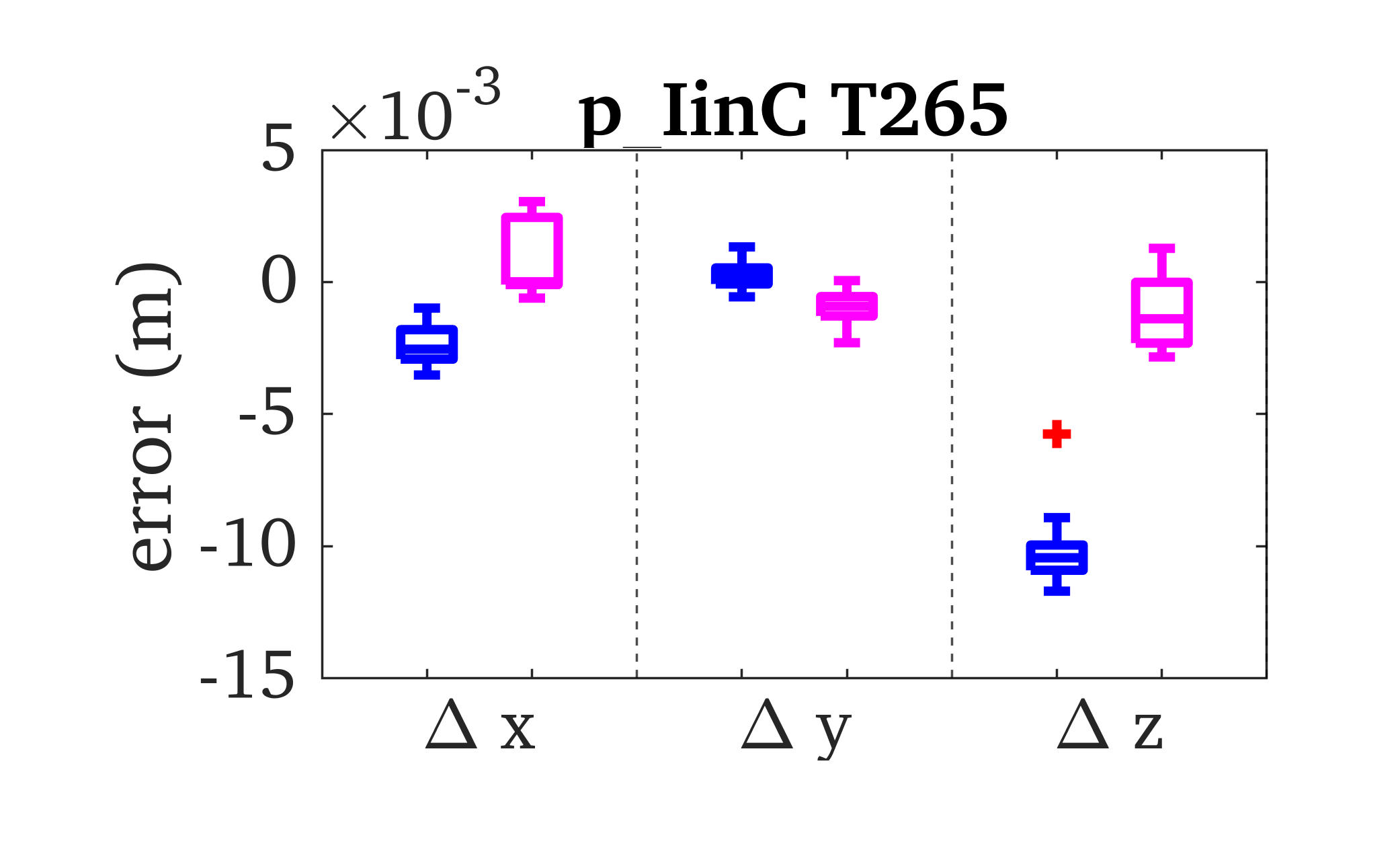}
\caption{
Comparison of the proposed method and Kalibr relative to a baseline Kalibr value.
The boxplots show the final converged value of both methods, while for camera intrinsic only the proposed is reported since Kalibr fixes this during optimization.
Kalibr (magenta, right in each group) was run with all cameras and IMUs available over 10 datasets, while the proposed system (blue, left) was run with either the Blackfly camera or left T265 fisheye and the MicroStrain GX-25 IMU resulting in 10 runs for each.
}
\label{fig:exp_kalibr_compare_cameras}
\end{figure*}

\begin{figure*}
\centering
\begin{subfigure}{.45\textwidth}\centering
\includegraphics[trim=0 9mm 0 0,clip,height=1.3in]{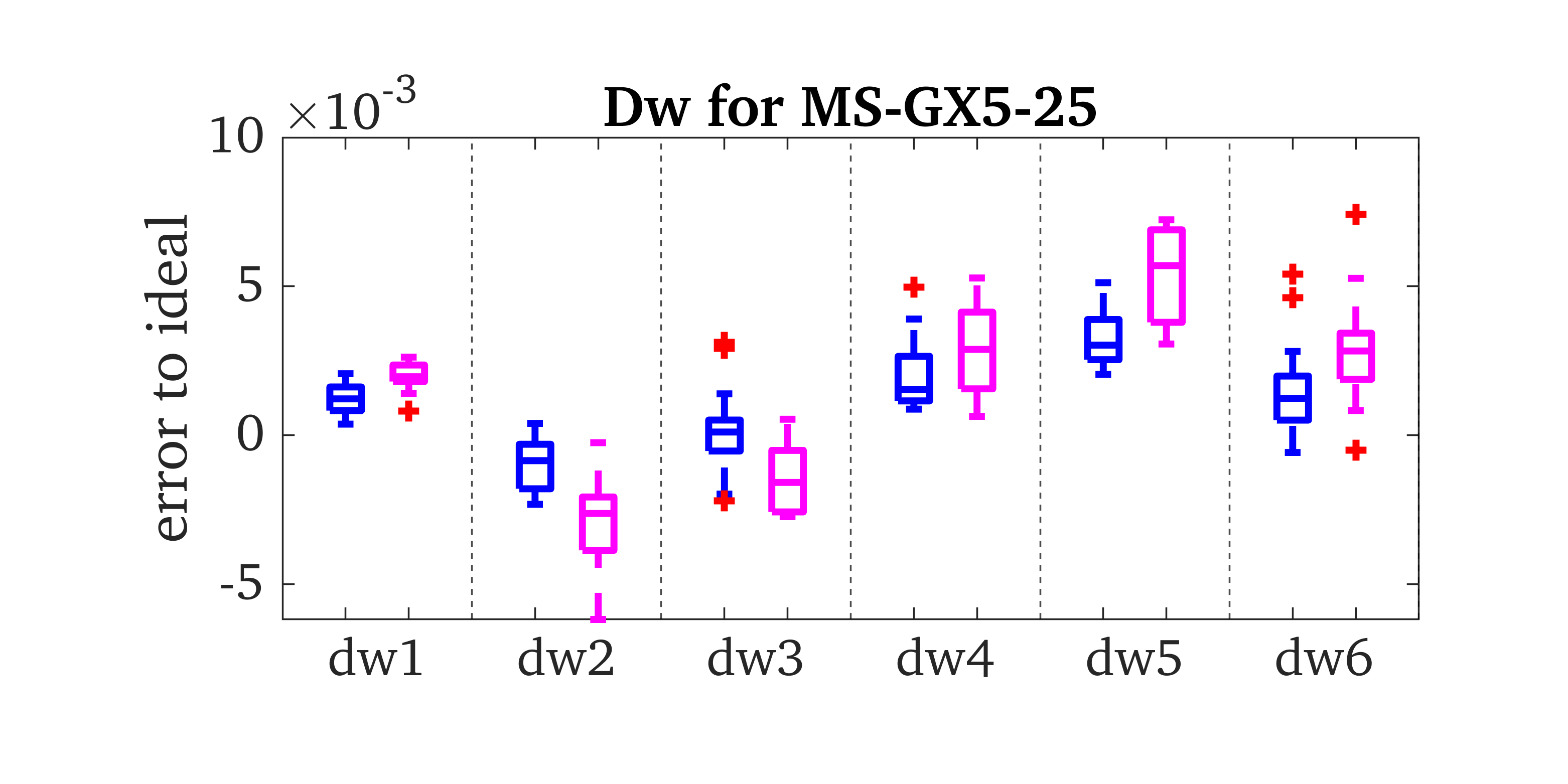}
\end{subfigure}
\begin{subfigure}{.45\textwidth}\centering
\includegraphics[trim=0 9mm 0 0,clip,height=1.3in]{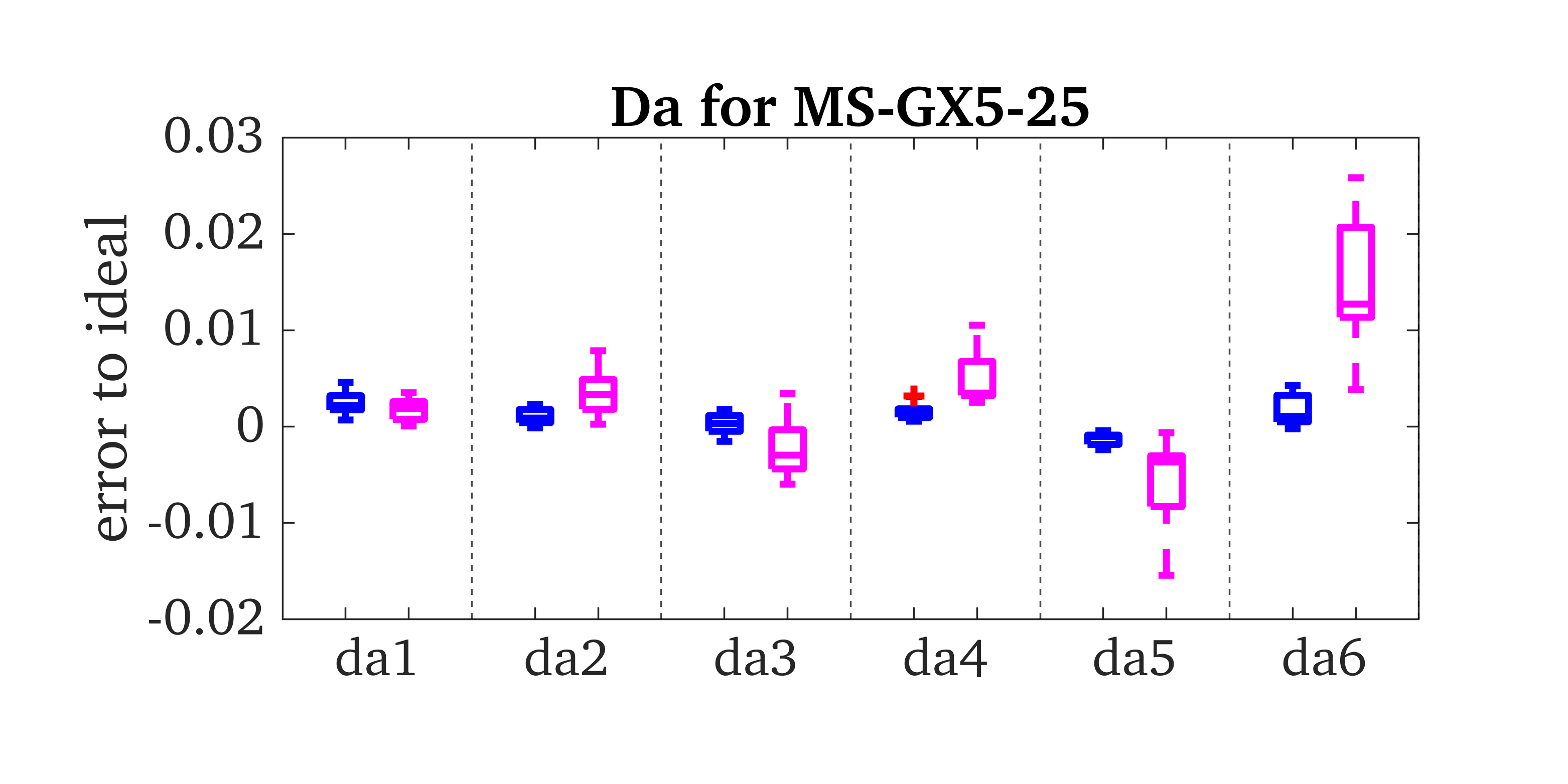}
\end{subfigure}
\begin{subfigure}{.45\textwidth}\centering
\includegraphics[trim=10mm 5mm 0 0,clip,height=1.3in]{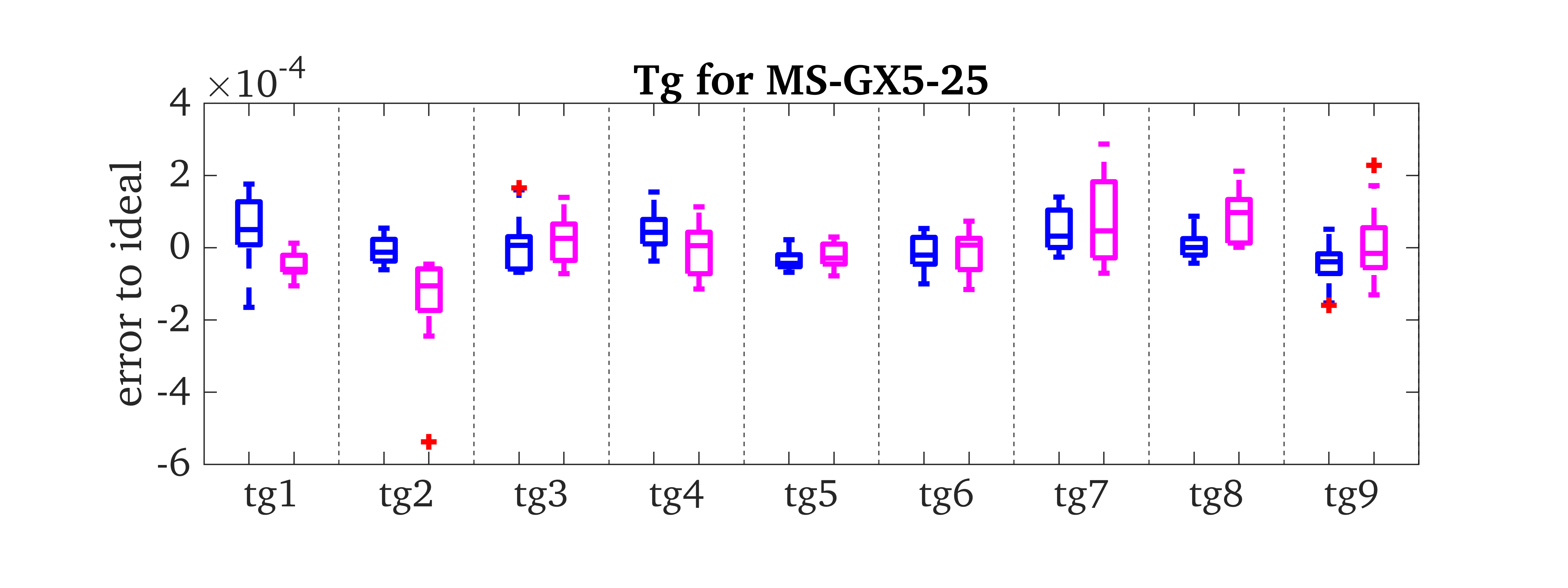}
\end{subfigure}
\begin{subfigure}{.45\textwidth}\centering
\includegraphics[trim=0 5mm 0 0,clip,height=1.3in]{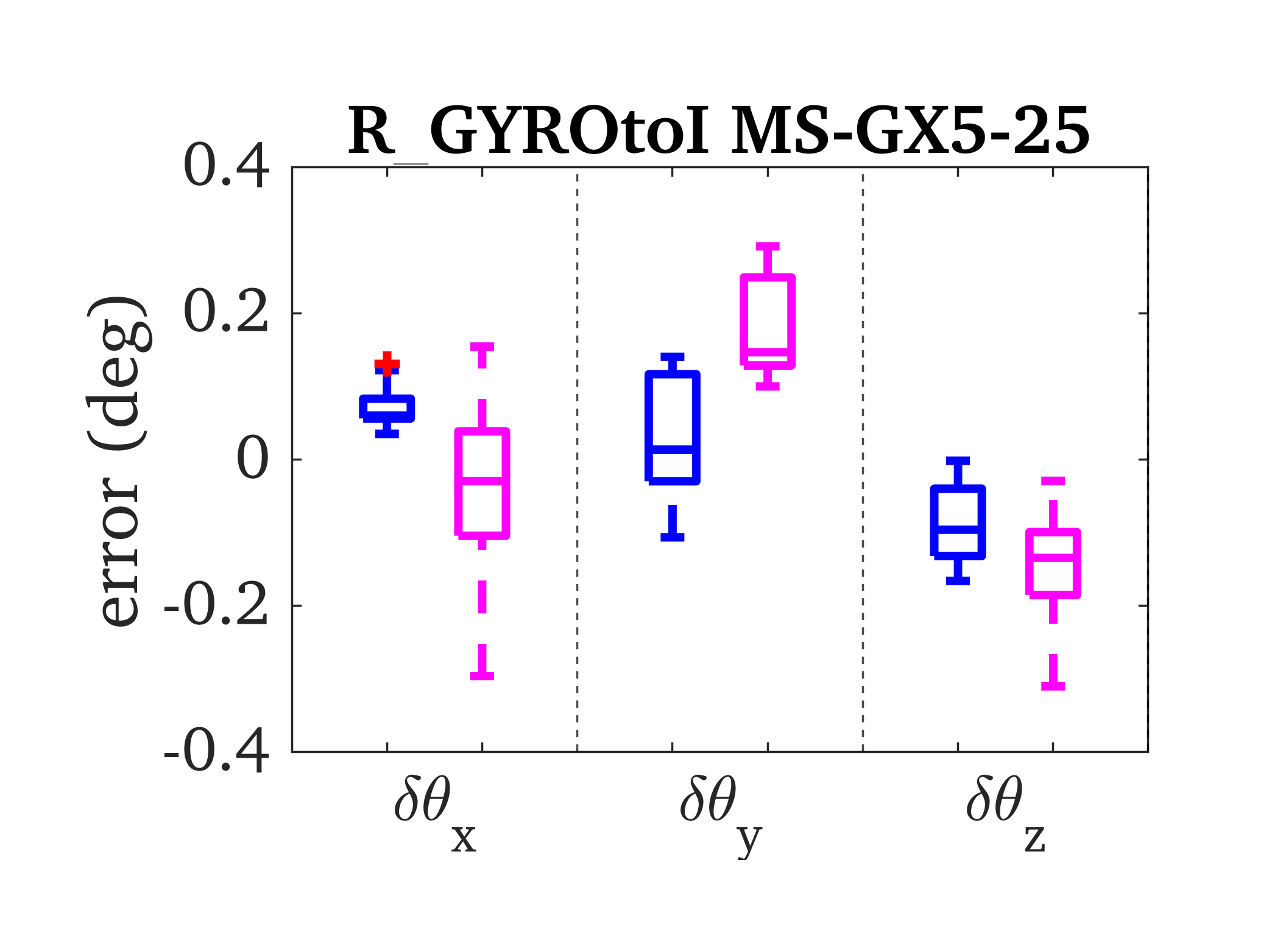}
\end{subfigure}
\caption{
Comparison of the proposed method with \textit{imu6} and Kalibr relative to the ``ideal'' sensor intrinsics.
The boxplots show the final converged value of both methods.
Kalibr (magenta, right in each group) was run with all cameras and IMUs available over 10 datasets, while the proposed system (blue, left) was run with either the Blackfly camera or left T265 fisheye resulting in 20 runs.
}
\label{fig:exp_kalibr_compare_ms25}
\end{figure*}
\begin{figure*}
\centering
\begin{subfigure}{.45\textwidth}\centering
\includegraphics[trim=0 9mm 0 0,clip,height=1.3in]{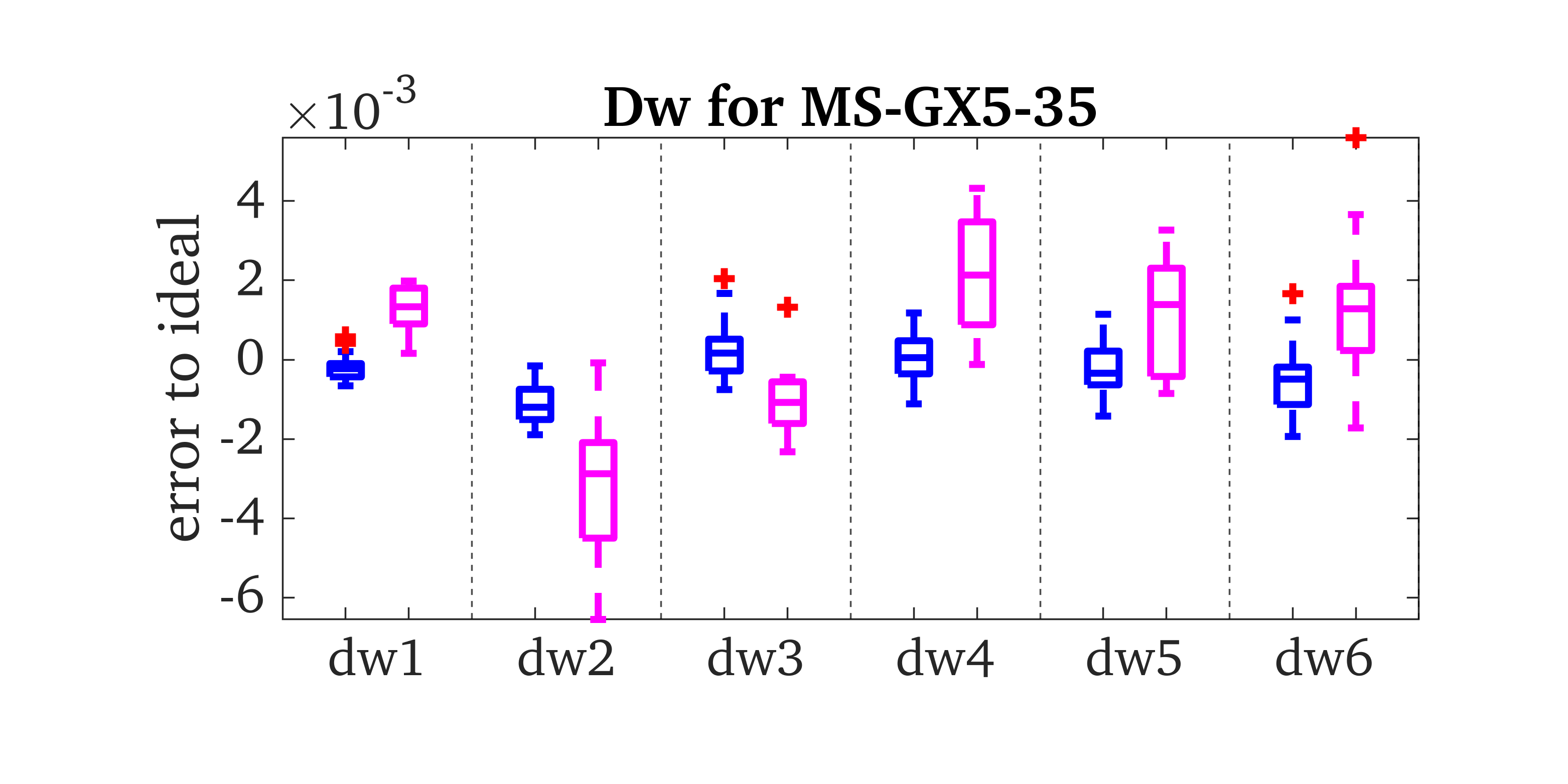}
\end{subfigure}
\begin{subfigure}{.45\textwidth}\centering
\includegraphics[trim=0 9mm 0 0,clip,height=1.3in]{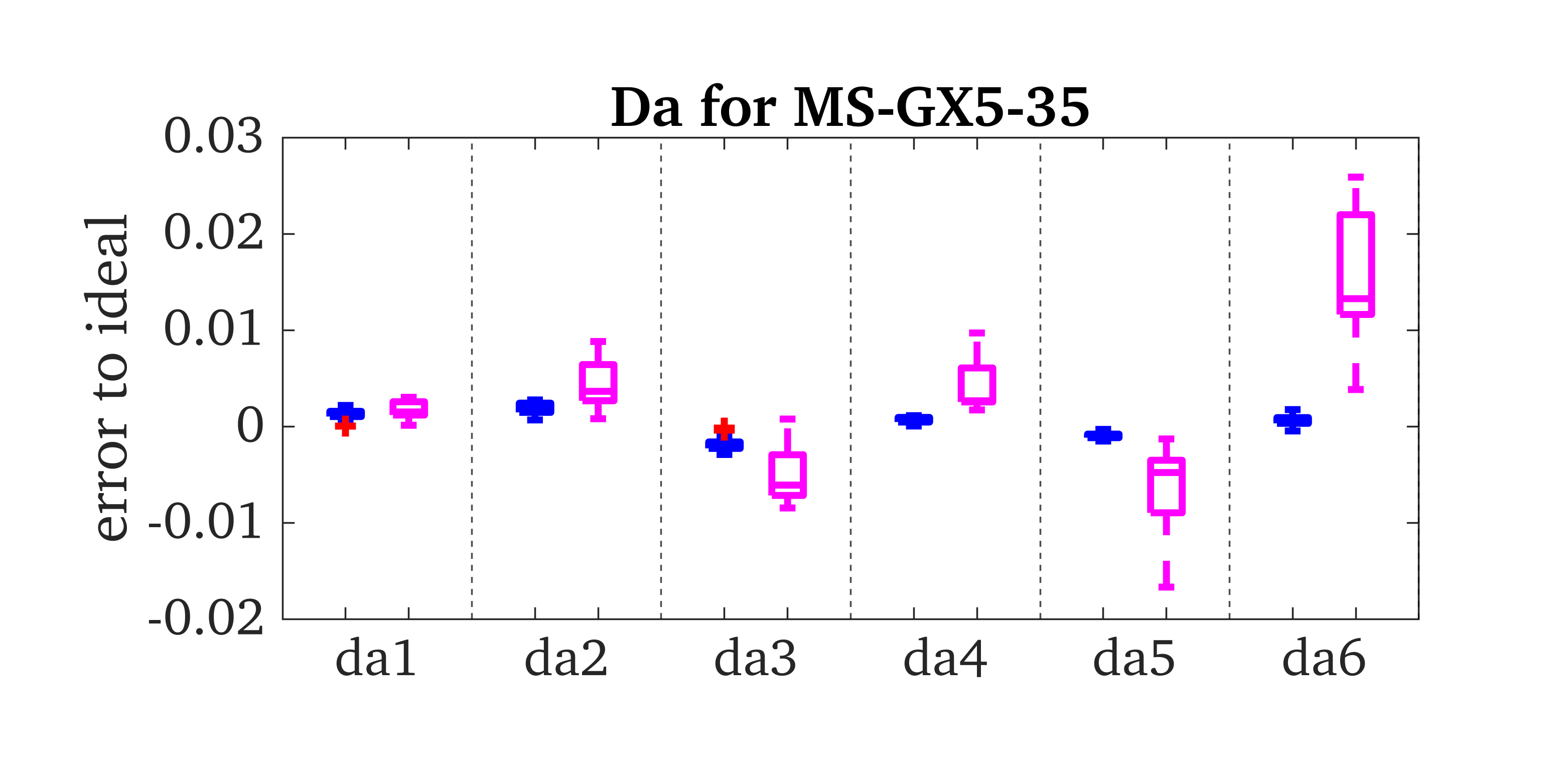}
\end{subfigure}
\begin{subfigure}{.45\textwidth}\centering
\includegraphics[trim=10mm 5mm 0 0,clip,height=1.3in]{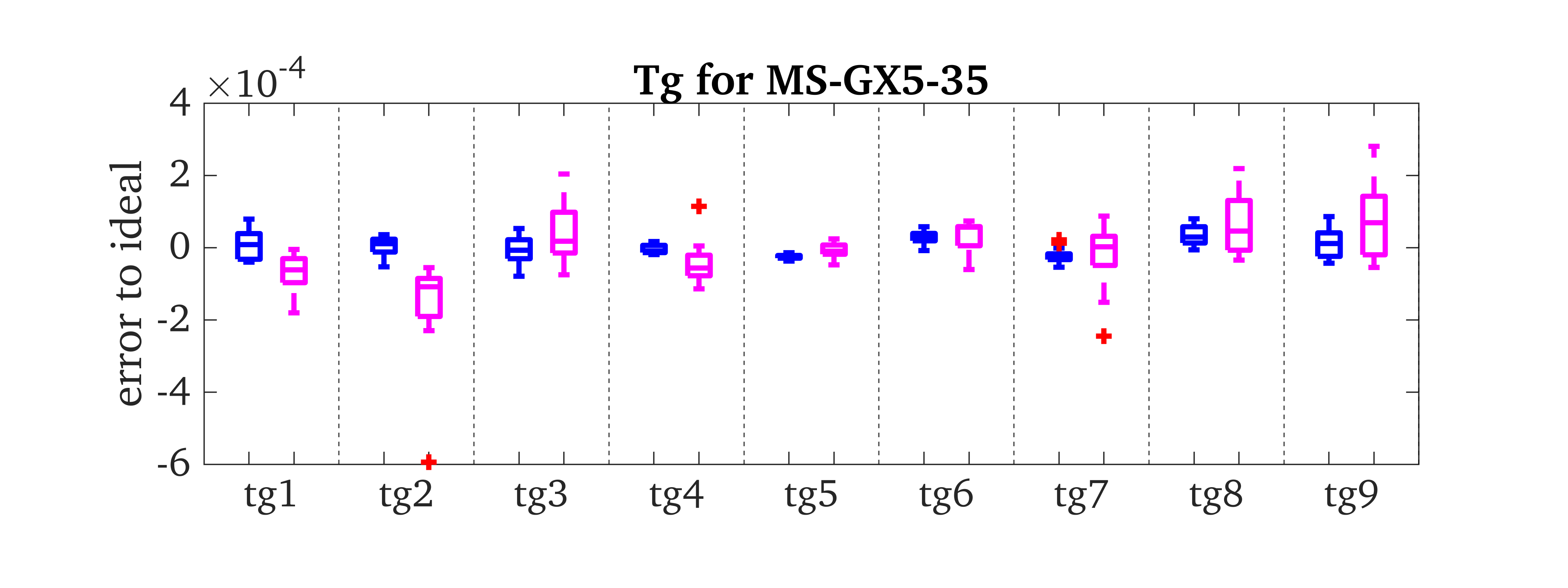}
\end{subfigure}
\begin{subfigure}{.45\textwidth}\centering
\includegraphics[trim=0 5mm 0 0,clip,height=1.3in]{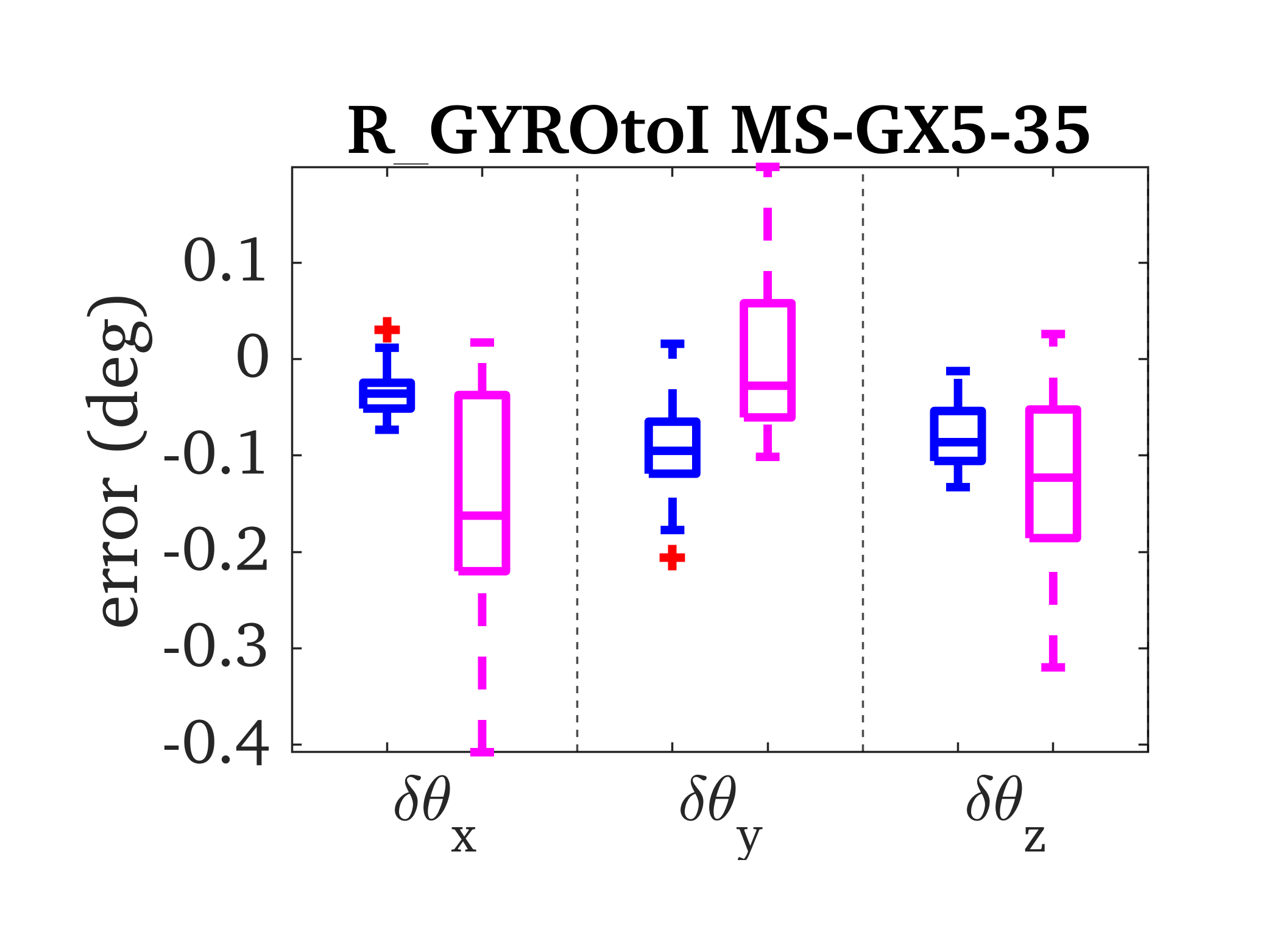}
\end{subfigure}
\caption{
Comparison of the proposed method with \textit{imu6} and Kalibr relative to the ``ideal'' sensor intrinsics.
The boxplots show the final converged value of both methods.
Kalibr (magenta, right in each group) was run with all cameras and IMUs available over 10 datasets, while the proposed system (blue, left) was run with either the Blackfly camera or left T265 fisheye resulting in 20 runs.
}
\label{fig:exp_kalibr_compare_ms35}
\end{figure*}
\begin{figure*}
\centering
\begin{subfigure}{.45\textwidth}\centering
\includegraphics[trim=0 9mm 0 0,clip,height=1.3in]{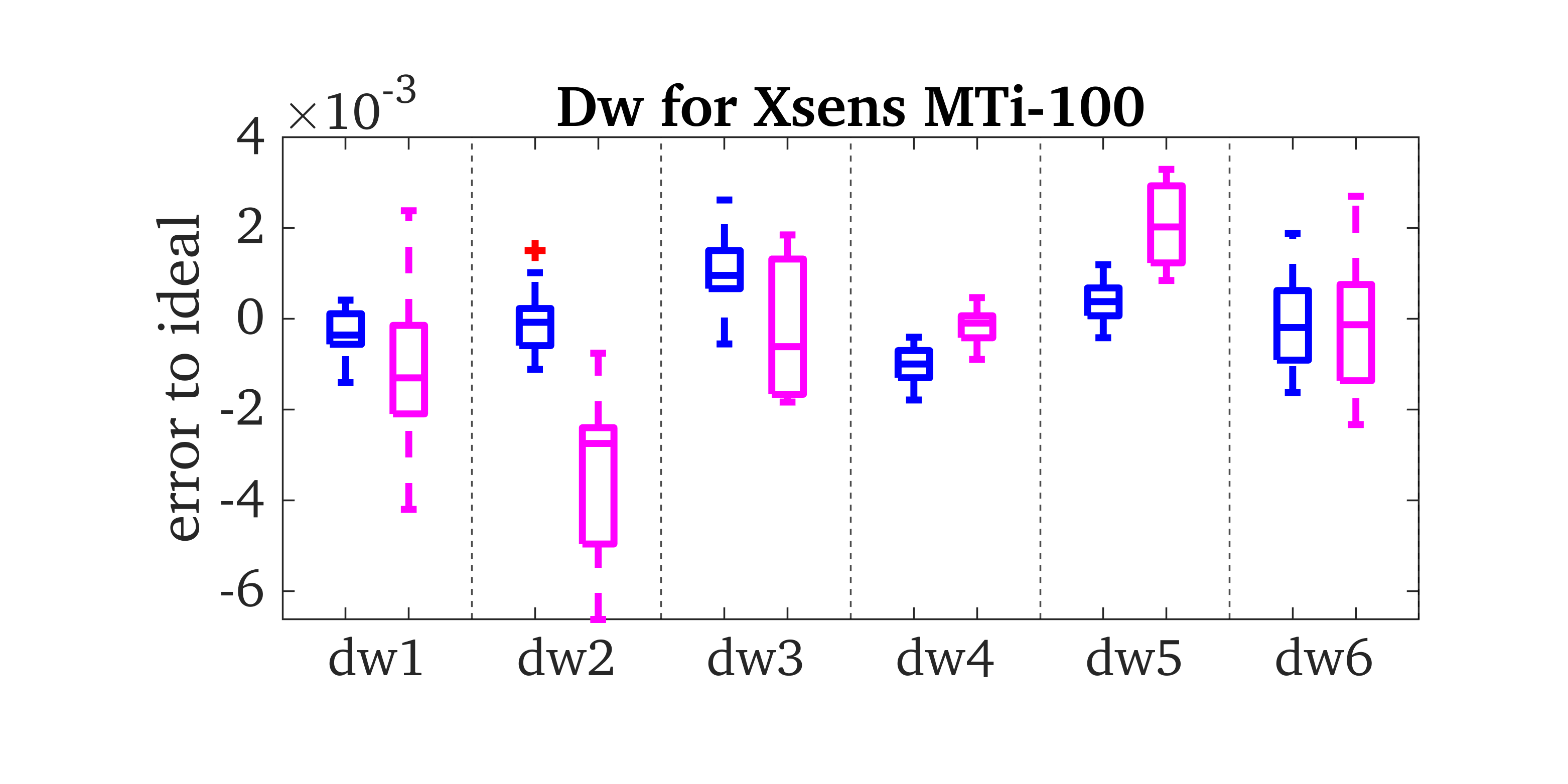}
\end{subfigure}
\begin{subfigure}{.45\textwidth}\centering
\includegraphics[trim=0 9mm 0 0,clip,height=1.3in]{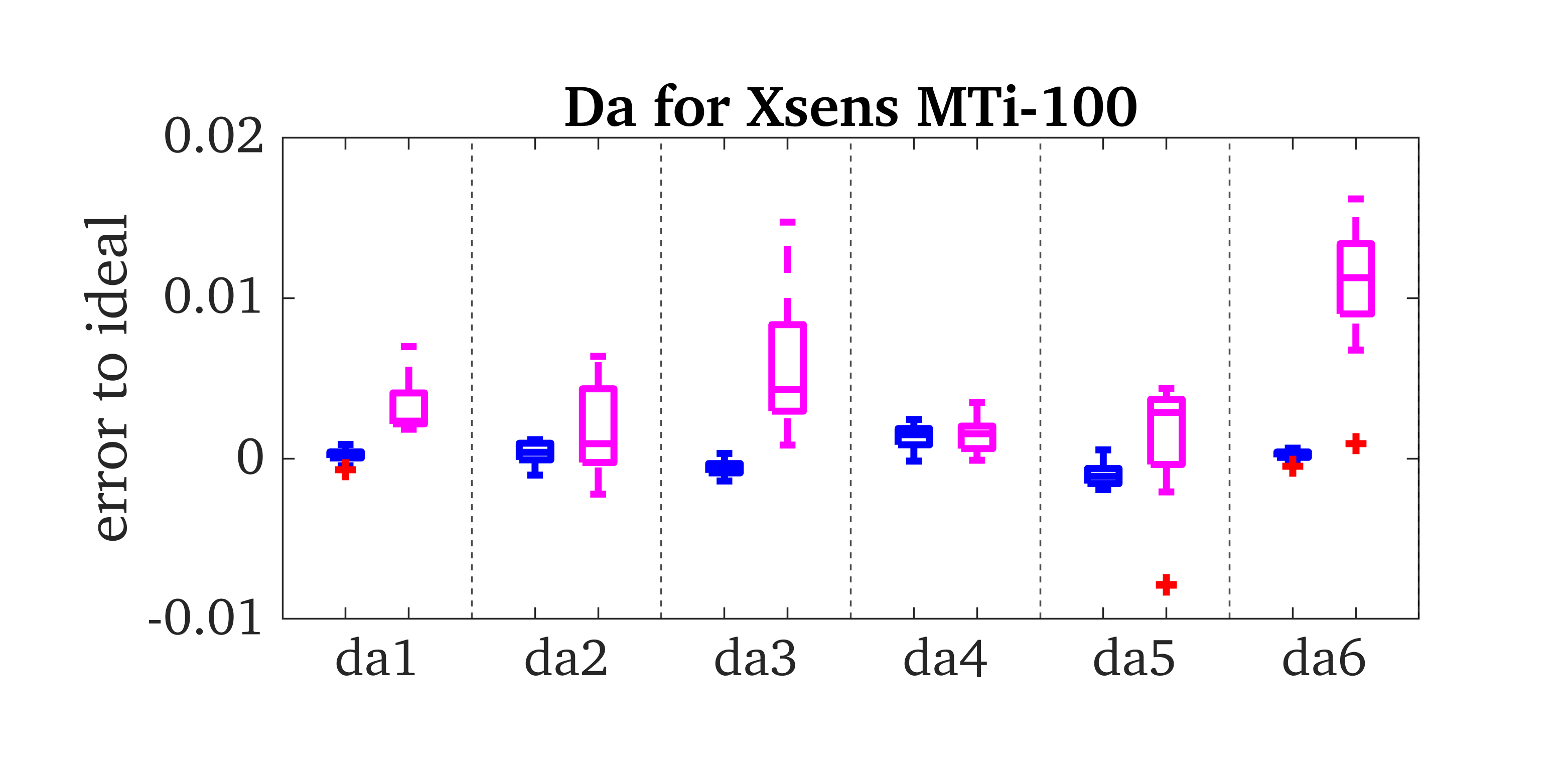}
\end{subfigure}
\begin{subfigure}{.45\textwidth}\centering
\includegraphics[trim=10mm 5mm 0 0,clip,height=1.3in]{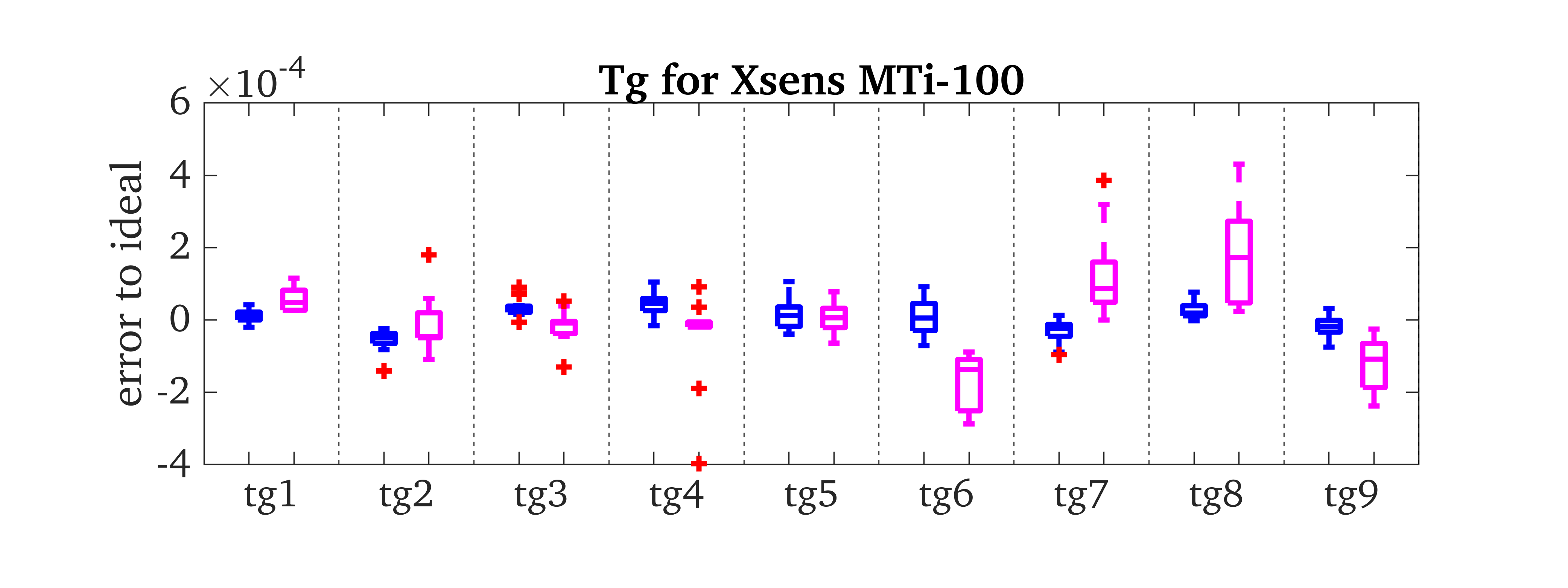}
\end{subfigure}
\begin{subfigure}{.45\textwidth}\centering
\includegraphics[trim=0 5mm 0 0,clip,height=1.3in]{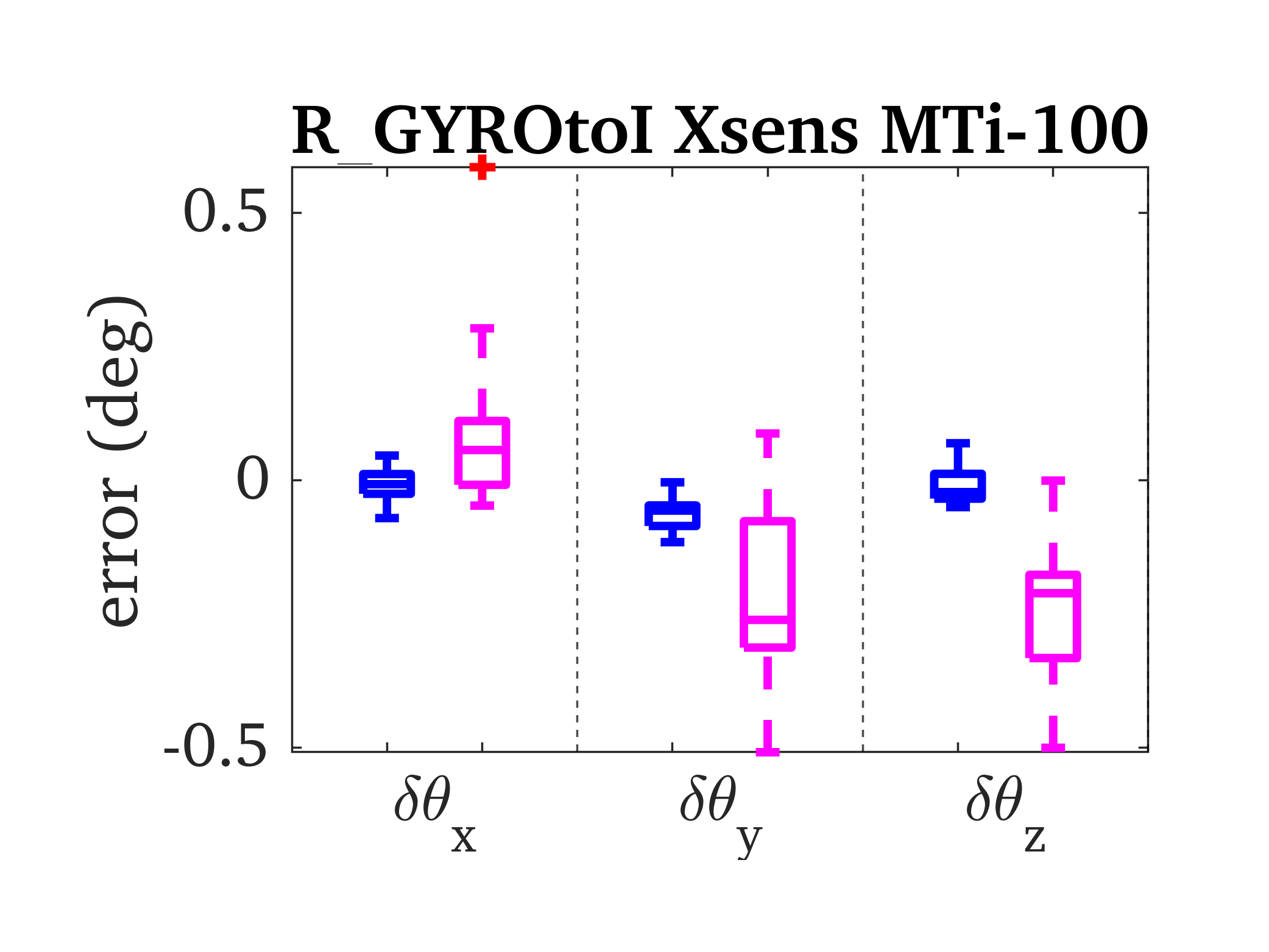}
\end{subfigure}
\caption{
Comparison of the proposed method with \textit{imu6} and Kalibr relative to the ``ideal'' sensor intrinsics.
The boxplots show the final converged value of both methods.
Kalibr (magenta, right in each group) was run with all cameras and IMUs available over 10 datasets, while the proposed system (blue, left) was run with either the Blackfly camera or left T265 fisheye resulting in 20 runs.
}
\label{fig:exp_kalibr_compare_xsens}
\end{figure*}
\begin{figure*}
\centering
\begin{subfigure}{.45\textwidth}\centering
\includegraphics[trim=0 4mm 0 0,clip,height=1.3in]{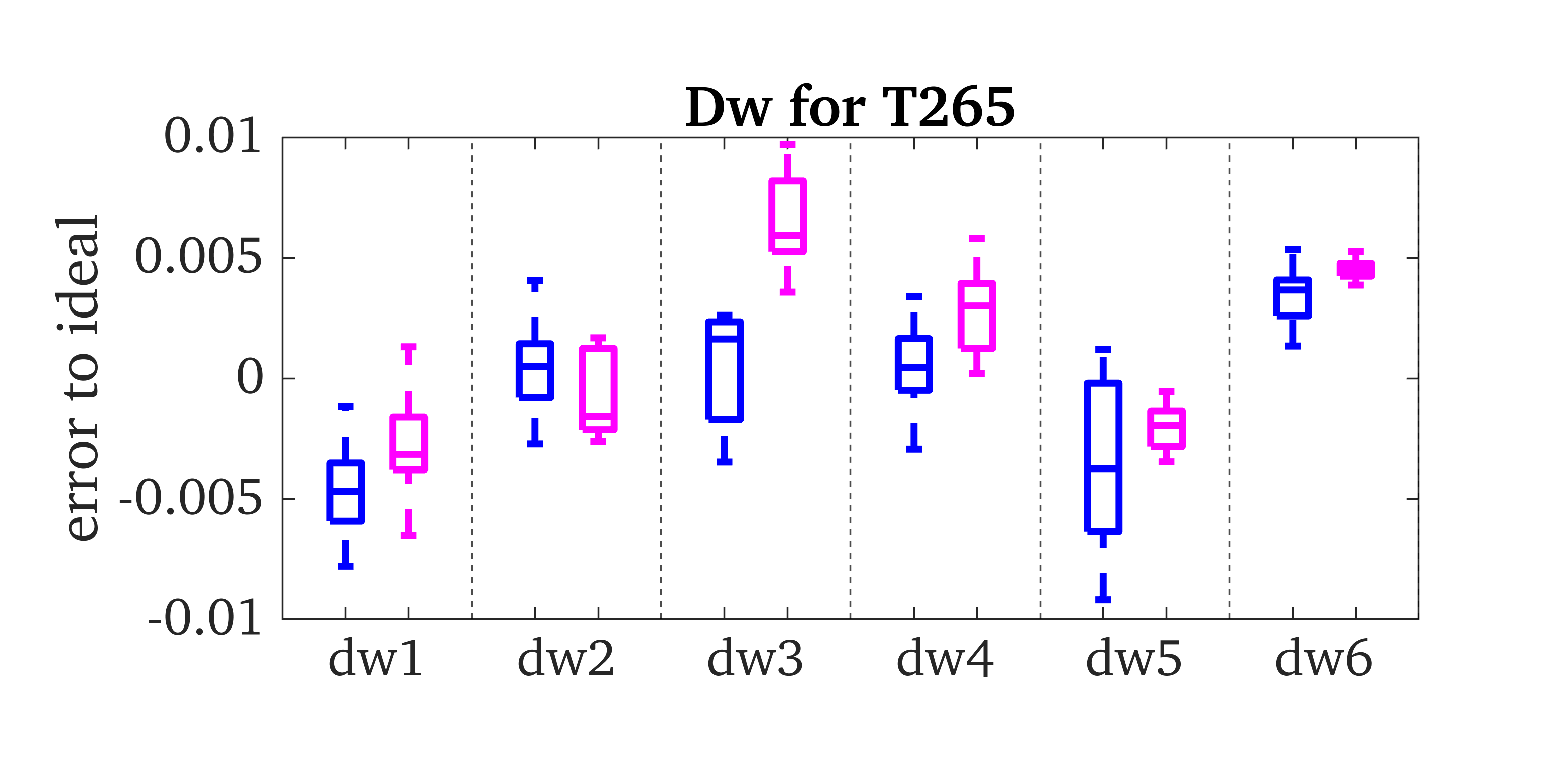}
\end{subfigure}
\begin{subfigure}{.45\textwidth}\centering
\includegraphics[trim=0 9mm 0 0,clip,height=1.3in]{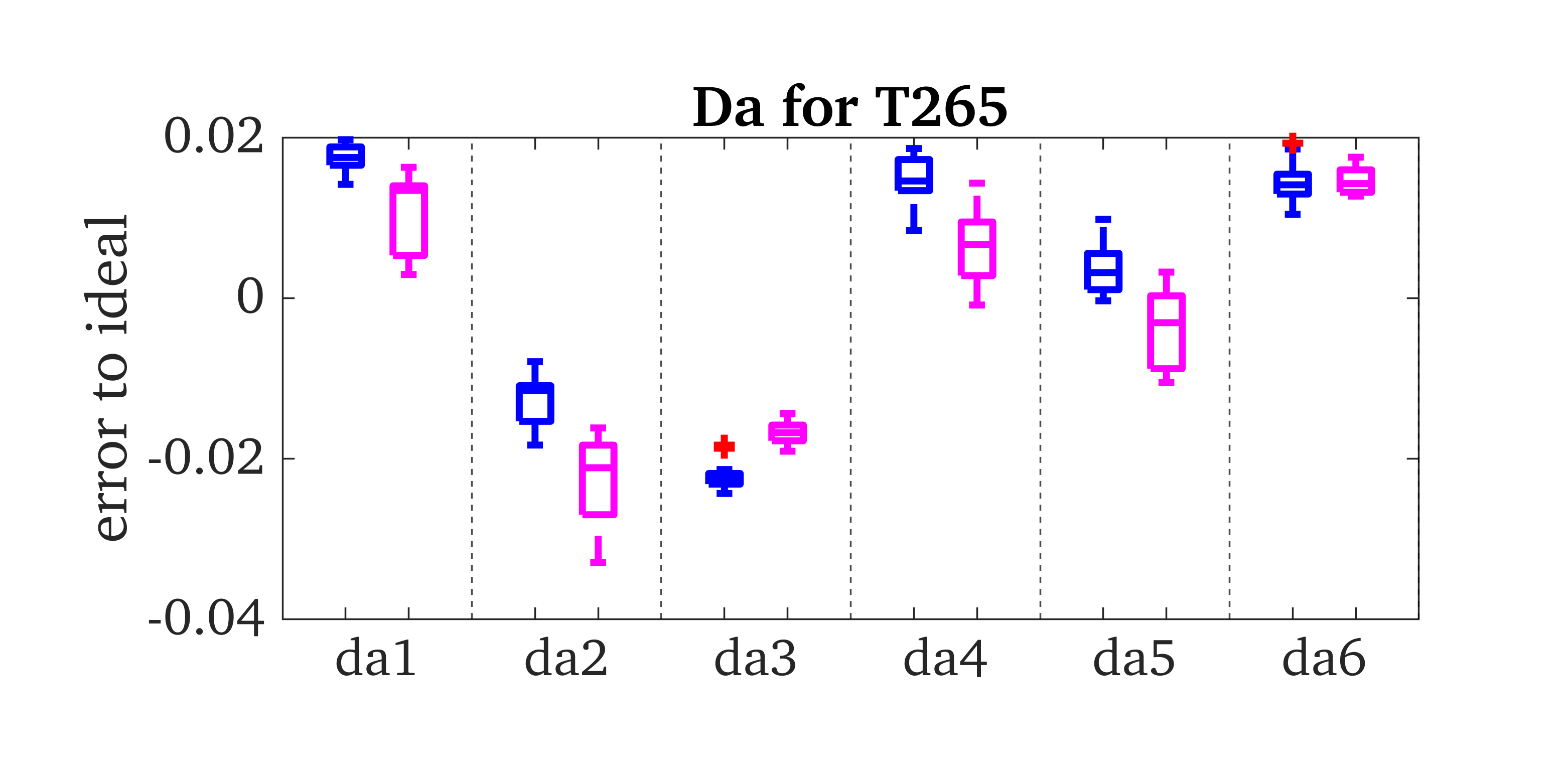}
\end{subfigure}
\begin{subfigure}{.45\textwidth}\centering
\includegraphics[trim=10mm 5mm 0 0,clip,height=1.3in]{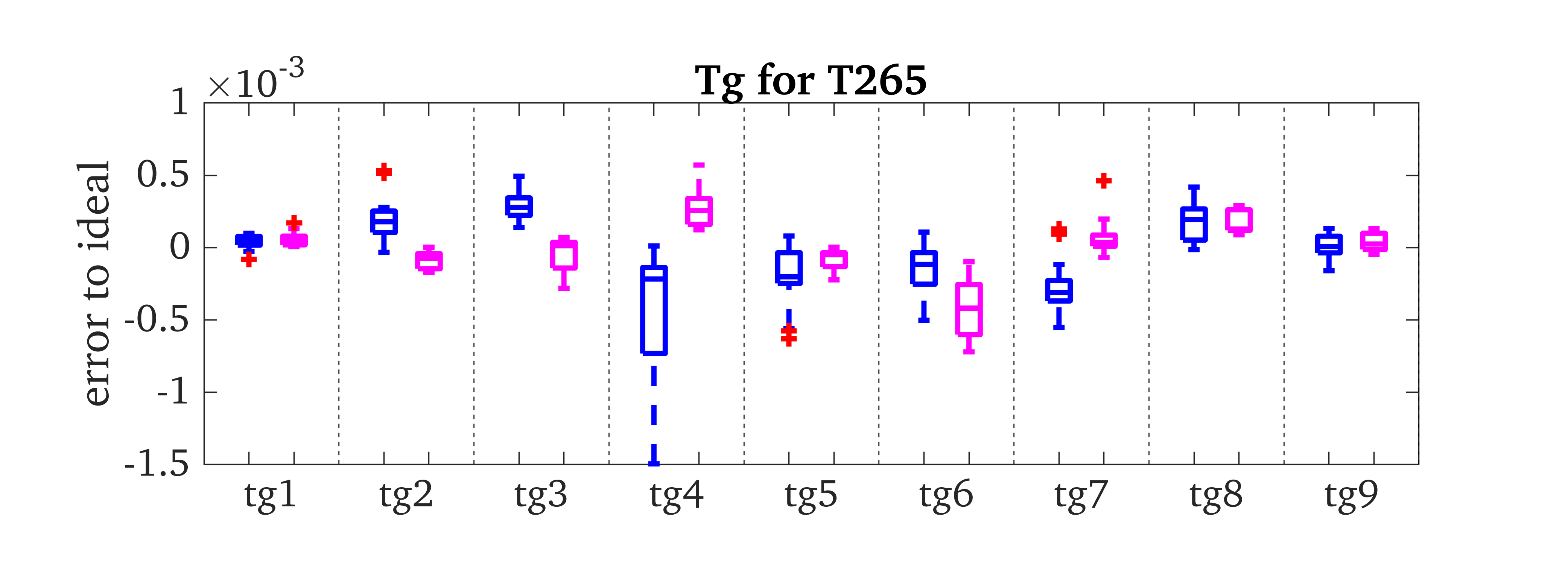}
\end{subfigure}
\begin{subfigure}{.45\textwidth}\centering
\includegraphics[trim=0 5mm 0 0,clip,height=1.3in]{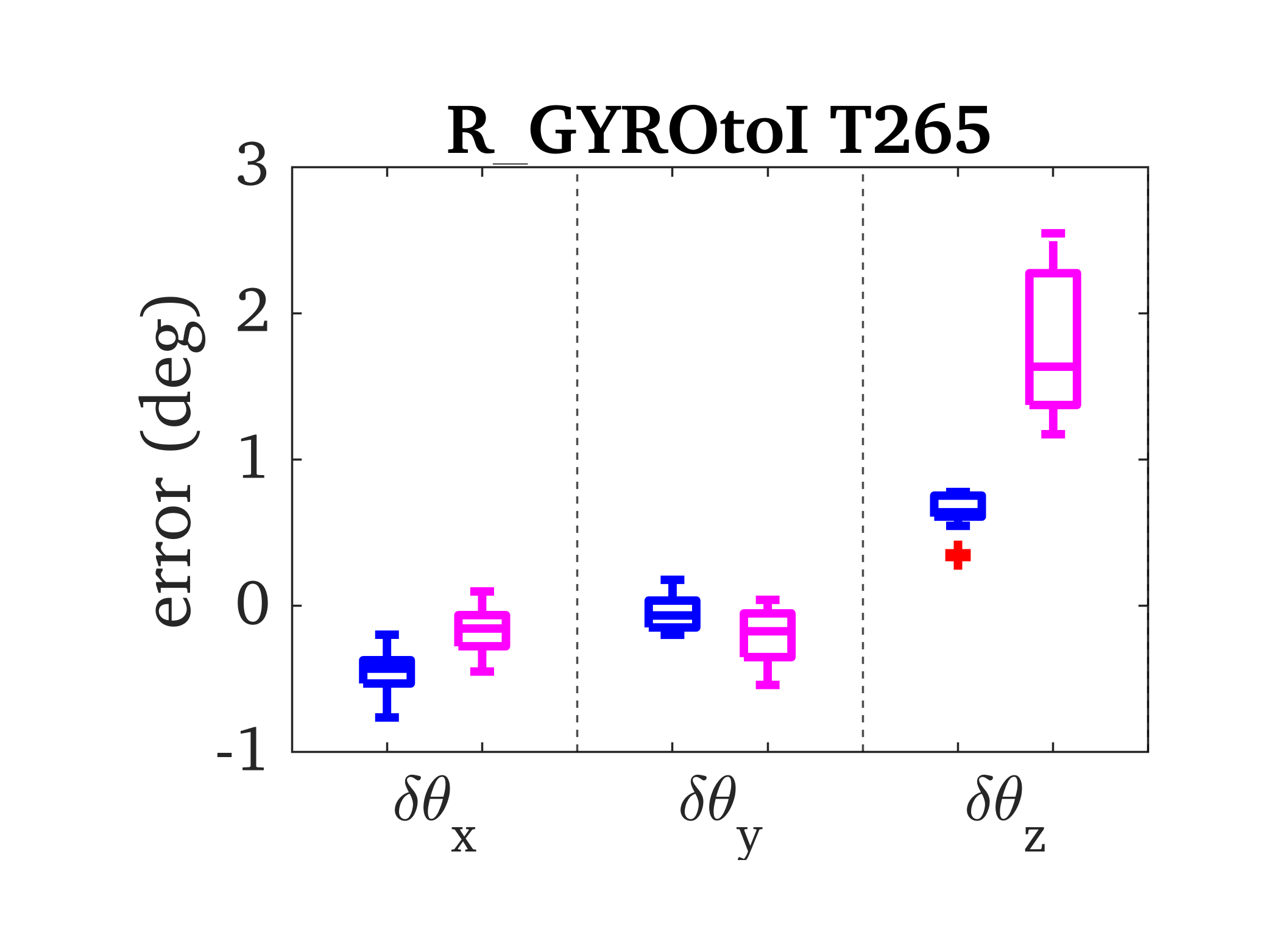}
\end{subfigure}
\caption{
Comparison of the proposed method with \textit{imu6} and Kalibr relative to the ``identity'' sensor intrinsics.
The boxplots show the final converged value of both methods.
Kalibr (magenta, right in each group) was run with all cameras and IMUs available over 10 datasets, while the proposed system (blue, left) was run with either the Blackfly camera or left T265 fisheye resulting in 20 runs.
}
\label{fig:exp_kalibr_compare_t265imu}
\end{figure*}

We further evaluate the proposed self-calibration system on a custom made visual-inertial sensor rig (VI-Rig, shown in Figure \ref{fig:vi_rig}) which contains multiple IMU and camera sensors to facilitate an investigation into how they individually impact overall performance.
Specifically, it contains a MicroStrain GX5-25, MicroStrain GX5-35, Xsens MTi 100, FLIR blackfly camera and RealSense T265 tracking camera which contains an integrated BMI055 IMU along with a fisheye stereo camera. 
Here we note that all cameras used are not rolling shutter to ensure fair comparison against the baseline Kalibr \citep{Furgale2013IROS} which only supports IMU-camera calibration with global shutter cameras.
In total 10 datasets were collected of an April tag board on which both the proposed system and the Kalibr calibration toolbox were run to report repeatbility statistics and expected real-world performance of both systems.
During data collection, all 6-axis motion of VI-Rig were excited to avoid degenerate motions for calibration parameters.

\subsection{Visual Front-End and Initial Conditions}
To provide a fair comparison, we modified the front-end of the proposed system to directly use the same April tag detection as Kalibr and to only use such tags during estimation.
Additionally, while the proposed system was only run with one of the four IMUs and either the Blackfly or left T265 Realsense camera, Kalibr used all the available sensors to ensure the highest and most consistent performance (4 IMUs and 3 cameras). 
The \textit{imu6} model is used during evaluation, which is equivalent to the \textit{scale-misalignment} IMU model of Kalibr.
We define the ``ideal'' IMU sensor intrinsics as $\mathbf{D}'_w = \mathbf{D}'_a = \mathbf{I}_3$, ${}^I_w\mathbf{R}={}^I_a\mathbf{R}=\mathbf{I}_3$ and $\mathbf{T}_g=\mathbf{0}_3$ if factory or offline calibration has been pre-applied.
Generally, these values are what the users expect for a ready-to-use IMU, and are the initial values that the proposed estimator starts from.
The quality of each IMU can be evaluated by how close the converged calibrated values are to these ``ideal'' values.

\subsection{IMU-Camera Spatiotemporal Extrinsics and Intrinsics}

We first investigate the convergence of the IMU-camera extrinsics and temporal parameters along with the camera intrinsics of the proposed system.
The results shown in Figure \ref{fig:exp_kalibr_compare_cameras} demonstrate that the proposed system is able to calibrate the spatial and temporal parameters with both high repeatability and accuracy relative to the offline Kalibr calibration baseline.
Additionally shown is the convergence of camera intrinsics estimated by the proposed algorithm relative to the Kalibr static calibration results which are fixed during their IMU-camera calibration.
Although the camera intrinsic estimates of blackfly and T265 camera have a few deviations compared to the reference values from Kalibr, the proposed system has very good convergence and high repeatability (the groundtruth is not known here).

\subsection{IMU Intrinsic Parameters}
The calibration results are summarized as boxplots shown in Figure \ref{fig:exp_kalibr_compare_ms25} - \ref{fig:exp_kalibr_compare_t265imu} for the MicroStrain MS-GX5-25, MS-GX5-35, Xsens MTi-100 and T265 IMU, respectively. 
As shown, the calibration errors of the proposed system are quite close to the results of Kalibr, and the estimate differences of $\mathbf{D}'_w$, $\mathbf{D}'_a$ and $\mathbf{T}_g$ are around 1e-3, 1e-1 and 1e-4, respectively, for MS-GX5-25, MX-GX5-35 and Xsens MTi-100.
Additionally, our proposed algorithm demonstrates better repeatability as the calibration errors have smaller variances and less outliers. 
In general, we can discuss the following results concerning the IMUs presented throughout the paper (see Figure \ref{fig:exp_kalibr_bmi160} and \ref{fig:exp_kalibr_compare_ms25} - \ref{fig:exp_kalibr_compare_t265imu}):
\begin{itemize}
    \item The MicroStrain MS-GX5-25, MS-GX5-35 and Xsens MTi-100 IMU are more close to ``ideal'' IMU than T265 IMU and BMI160 IMU. This is reasonable since both the MicroStrain and Xsens IMU are more expensive high-end IMUs with likely more sophisticate out-of-factory calibration than T265 IMU and BMI 160 IMU.
    \item For each IMU, the gravity sensitivity terms are, in general, one or two orders smaller than the other terms of the IMU intrinsic model.
    This suggests that the gravity sensitivity should not have significant effects on system performance.
    This is likely due to the handheld motion of the platform and levels of achievable acceleration magnitudes.
    \item The BMI160 IMU (Figure \ref{fig:exp_kalibr_bmi160}), has a much more significant gyroscope calibration, $\mathbf{D}'_w$, compared to its accelerometer calibration and other IMUs.
    Thus the BMI160 can see large accuracy gains from only calibrating $\mathbf{D}'_w$, while for other IMUs, the calibration of $\mathbf{D}'_a$ should be more impactful.
\end{itemize}
Hence, these results validate the accuracy and consistency of the IMU intrinsic calibration of the proposed \textit{online real-time} algorithm, which outputs comparable calibration results to Kalibr's offline calibration procedures.

\subsection{Timing Evaluation}

We also evaluate the running time for the proposed system with and without online sensor calibration shown in Table \ref{tab:timing}.
We use the 10 datasets recorded with the VI-Rig and only use the measurements from MicroStrain GX5-25 IMU and the left camera of T265 for evaluation. 
In order to get more realistic timing evacuation, no April tags are detected and only the natural features tracked from images are used. 
We track 200 features from each image and keep at most 30 SLAM point features in the state vector with a sliding window of 20 clones. The averaged execution time for processing each coming image (including propagation and update) is recorded (shown in Table \ref{tab:timing}). 
The average execution time of the proposed system with online calibration is 0.0224s, which shows negligible increases than 0.0188s, which is the average running time without online calibration.

\section{Real-World Degenerate Motion Demonstration and Analysis}
\label{sec:exp_degenerate}

\begin{table*}[t]
\renewcommand{\arraystretch}{1.5}
\caption{
Absolute Trajectory Error (ATE) on EuRoC MAV Vicon room sequences (with units degrees/meters).
}
\label{tab:ate_euroc}
\begin{adjustbox}{width=\textwidth,center}
\begin{tabular}{cccccccc} \toprule
\textbf{IMU Model} & \textbf{V1\_01\_easy} & \textbf{V1\_02\_medium} & \textbf{V1\_03\_difficult} & \textbf{V2\_01\_easy} & \textbf{V2\_02\_medium} & \textbf{V2\_03\_difficult} & \textbf{Average} \\\midrule
imu0 & 0.657 / 0.043 & 1.805 / 0.060 & 2.437 / 0.069 & 0.869 / 0.109 & 1.373 / 0.080 & 1.277 / 0.180 & 1.403 / 0.090 \\
imu1 & 0.601 / 0.055 & 1.924 / 0.065 & 2.334 / 0.073 & 1.201 / 0.115 & 1.342 / 0.086 & 1.710 / 0.168 & 1.519 / 0.094 \\
imu2 & 0.552 / 0.054 & 1.990 / 0.062 & 2.197 / 0.083 & 0.960 / 0.107 & 1.453 / 0.085 & 1.666 / 0.216 & 1.470 / 0.101 \\
imu3 & 0.606 / 0.055 & 1.905 / 0.065 & 2.359 / 0.073 & 1.180 / 0.114 & 1.335 / 0.088 & 1.640 / 0.167 & 1.504 / 0.094 \\
imu4 & 0.569 / 0.056 & 1.969 / 0.069 & 2.165 / 0.076 & 0.846 / 0.127 & 1.636 / 0.094 & 1.577 / 0.195 & 1.461 / 0.103 \\
\bottomrule
\end{tabular}
\end{adjustbox}
\end{table*}
\begin{figure*}
\centering
\begin{subfigure}{.24\textwidth}
\includegraphics[trim=0mm 0 0mm 0,clip,width=\linewidth]{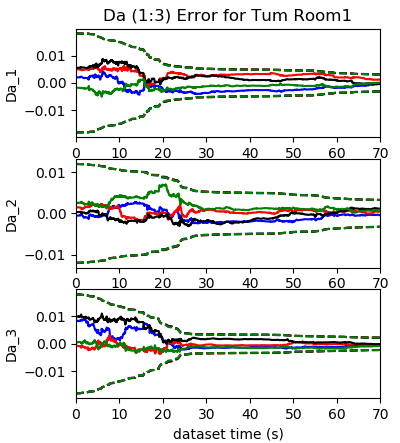}
\end{subfigure}
\begin{subfigure}{.24\textwidth}
\includegraphics[trim=0 0 0 0,clip,width=\linewidth]{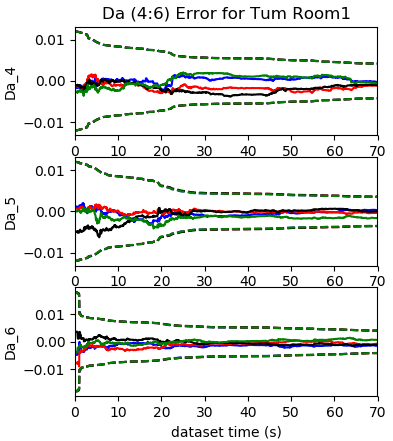}
\end{subfigure}
\begin{subfigure}{.24\textwidth}
\includegraphics[trim=0 0 0 0,clip,width=\linewidth]{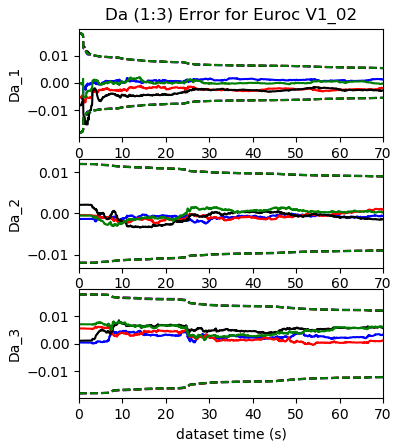}
\end{subfigure}
\begin{subfigure}{.24\textwidth}
\includegraphics[trim=0 0 0 0,clip,width=\linewidth]{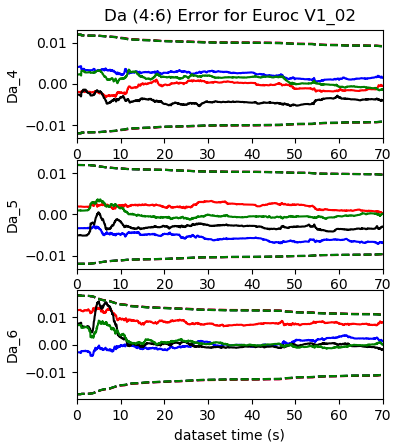}
\end{subfigure}
\caption{
Simulation results for $\mathbf{D}_a$ of the proposed system evaluated with groundtruth \textit{tum\_room1} (left) and \textit{EuRoc V1\_02} (right) trajectories using \textit{imu2} and \textit{radtan}. 
3 sigma bounds (dotted lines) and estimation errors (solid lines) for four different runs (different colors) with different realization of the measurement noise and initial perturbations are drawn. 
The \textit{tum\_room1} estimation errors and $3\sigma$ bounds converge nicely, while due to lack of motion excitation, the convergence of $\mathbf{D}_a$, especially $d_{a4}$, $d_{a5}$ and $d_{a6}$, for the \textit{EuRoc V1\_02} is poor. 
}
\label{fig:sim_tum_eth_da}
\end{figure*}
\begin{figure}
\centering
\begin{subfigure}{.24\textwidth}
    \includegraphics[trim=0mm 0 0mm 0,clip,width=\linewidth]{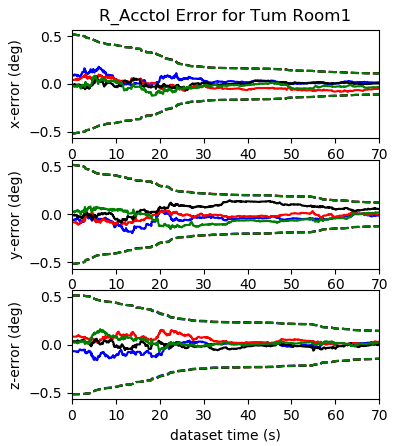}
\end{subfigure}
\begin{subfigure}{.24\textwidth}
\includegraphics[trim=0 0 0 0,clip,width=\linewidth]{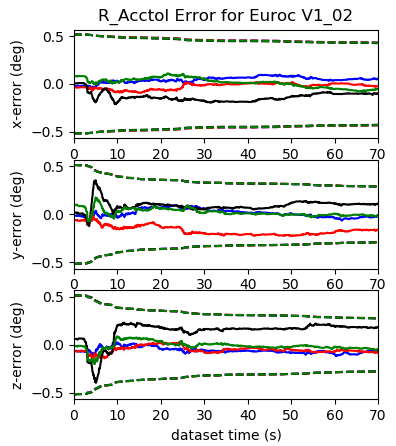}
\end{subfigure}
\caption{
Simulation results for ${}^I_a\mathbf{R}$ of the proposed system evaluated with groundtruth \textit{tum\_room1} and \textit{EuRoc V1\_02} trajectories using \textit{imu2} and \textit{radtan}. 
With \textit{tum\_room1} trajectory, the estimation errors and $3\sigma$ bounds of ${}^I_a\mathbf{R}$ converge nicely, while the convergence is poor on the \textit{EuRoc V1\_02}.
}
\label{fig:sim_tum_eth_atoI}
\end{figure}
\begin{figure}
\centering
\includegraphics[trim=25 0 50 0,clip,width=0.85\linewidth]{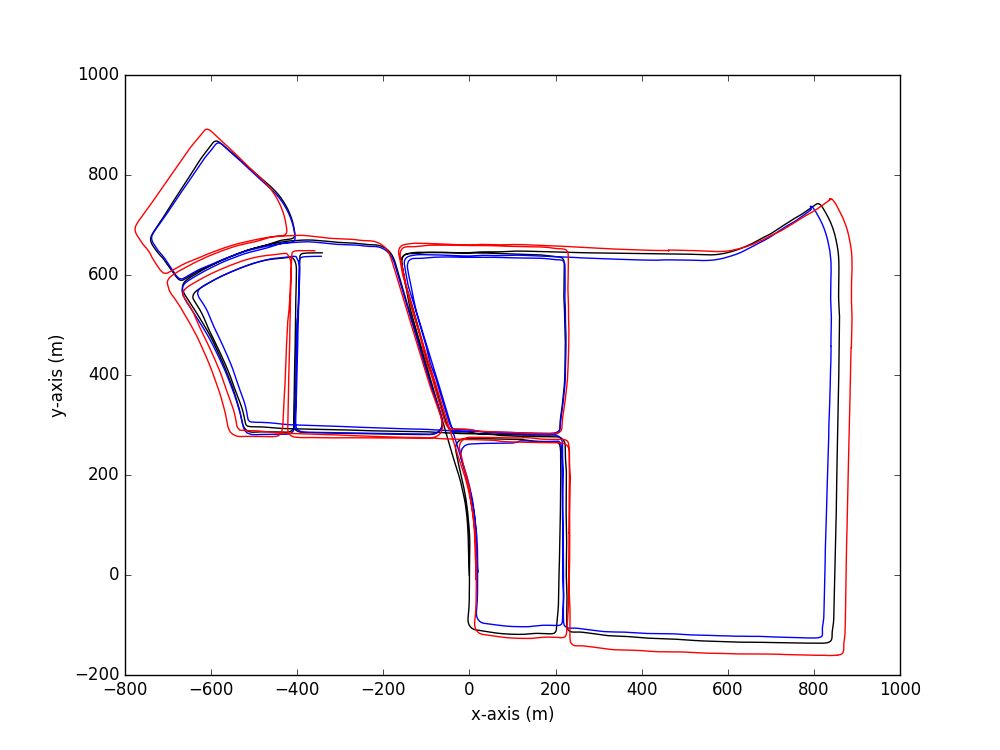}
\includegraphics[trim=25 0 50 0,clip,width=0.85\linewidth]{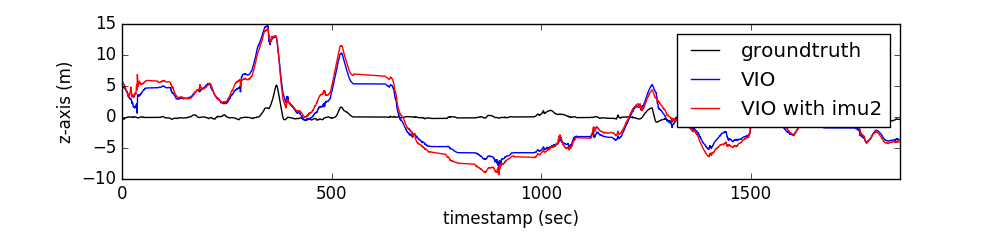}
\caption{
Trajectory plots for the KAIST Urban 39 dataset 10km in total length.
Figure is best seen in color.
}
\label{fig:exp_kaist}
\end{figure}

Next, we evaluate the proposed system on a collection of real-world datasets which exhibit varying degrees of degenerate motions.
Specifically, we evaluate on the EuRoc MAV dataset \citep{Burri2016IJRR} which has under-actuated MAV with weakly excited acceleration and approximately 1-axis rotation. We also evaluate on the KAIST complex urban dataset \citep{Jeong2019IJRR} which is a planar autonomous vehicle dataset with relatively constant velocity throughout.
We note that we do not estimate the gravity sensitivity since it has been demonstrated in the preceding sections that it is not significant for VINS performance, and test only the \textit{imu1} - \textit{imu4} models.

\subsection{EuRoC MAV: Under-Actuated Motion}

The EuRoC MAV dataset \citep{Burri2016IJRR} contains a series of trajectories from a MAV and provides 20 Hz grayscale stereo images, 200 Hz inertial readings, and an external groundtruth pose from a motion capture system.
The proposed estimator is run with just the left camera on each of the Vicon room datasets and report the results in Table \ref{tab:ate_euroc}.
Here we can see that not performing IMU intrinsic calibration, \textit{imu0} model, outperforms the methods which additionally estimate the IMU intrinsics.
This makes sense since the IMU intrinsics suffer from a large number of degenerate motions caused with constant local angular velocity and linear acceleration which can be expected for the MAV platform.
Additionally, we believe that this is specifically caused by the MAV being unable to fully excite its 6DoF motion for a given small time interval and thus undergoes (nearly) degenerate motions locally throughout the whole trajectory, hurting the sliding-window filter.
We can see that for more dynamic datasets, such as V2\_03\_difficult, there are still some partial improvements in accuracy possibly due to the more dynamic motion exhibited.

In order to verify our reasoning that the accuracy loss is caused by degenerate motions, we use the groundtruth trajectories of \textit{tum\_room1} with full 6DoF motion and \textit{EuRoc V1\_02} to simulate synthetic inertial and feature bearing measurements (see Section \ref{sec:exp_sim} on how we perform simulation) and evaluate our system with these simulated data.
Figure \ref{fig:sim_tum_eth_da} and \ref{fig:sim_tum_eth_atoI} shows four different run with estimation errors and $3\sigma$ bounds for $\mathbf{D}_a$ and ${}^I_a\mathbf{R}$.
It is clear that the motion of sensor on the \textit{EuRoc V1\_02} trajectory (right) is mildly excited and the acceleration readings are varying very slowly within the local window, causing poor convergence of the $\mathbf{D}_a$ and ${}^I_a\mathbf{R}$ with relatively flat $3\sigma$ bounds as compared to the \textit{tum\_room1} (left).
This verifies that the online IMU intrinsic calibration will benefit VINS with fully-excited motion (e.g., the \textit{tum\_room1} trajectory) and might not be a good option for under-actuated motions such as the \textit{EuRoc V1\_02} trajectory.

\begin{figure*}
\centering
\begin{subfigure}{.24\textwidth}
    \includegraphics[trim=0mm 0 0mm 0,clip,width=\linewidth]{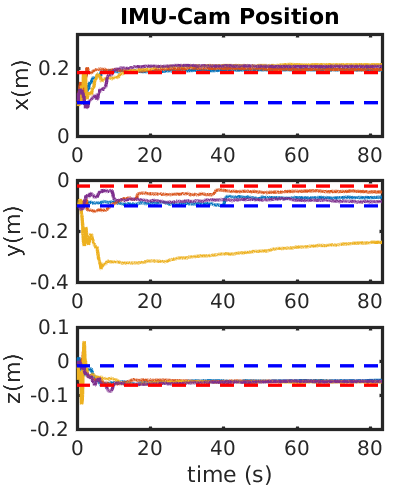}
\end{subfigure}
\begin{subfigure}{.24\textwidth}
\includegraphics[trim=0 0 0 0,clip,width=\linewidth]{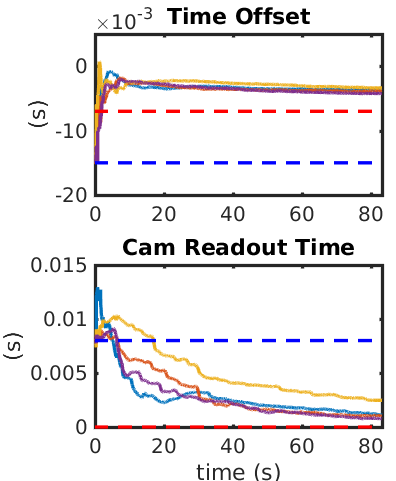}
\end{subfigure}
\centering
\begin{subfigure}{.255\textwidth}
    \includegraphics[trim=0mm 0 0mm 0,clip,width=\linewidth]{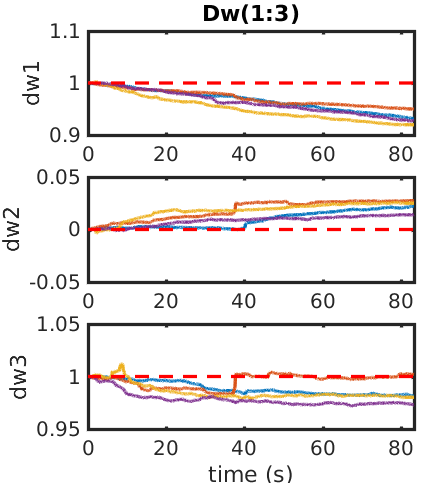}
\end{subfigure}
\begin{subfigure}{.230\textwidth}
\includegraphics[trim=0 0 0 0,clip,width=\linewidth]{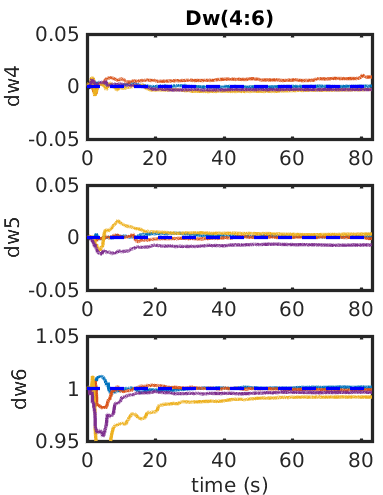}
\end{subfigure}
\caption{
Calibration results (colored solid lines) for ${}^C\mathbf{p}_I$, $t_d$, $t_r$ and $\mathbf{D}_w$ of the proposed system evaluated with four VI-Rig planar motion datasets using \textit{imu2} and \textit{equi-dist}. 
Red and blue dotted lines denote the reference value from Kalibr and initial (perturbed) values, respectively. 
Colored solid line represents the estimated calibration parameters during online calibration for each dataset.
All the temporal calibration, $dw_4$, $dw_5$ and $dw_6$ can converge well to the reference values, while the y component of ${}^C\mathbf{p}_I$, $dw_1$, $dw_2$ and $dw_3$ diverges.  
}
\label{fig:sim_virig_planar}
\end{figure*}
\begin{figure*}
\centering
\begin{subfigure}{.24\textwidth}
    \includegraphics[trim=0mm 0 0mm 0,clip,width=\linewidth]{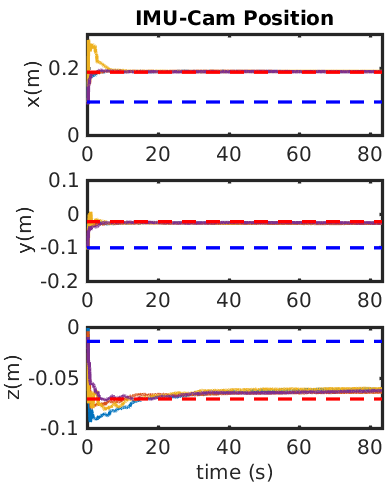}
\end{subfigure}
\begin{subfigure}{.265\textwidth}
\includegraphics[trim=0 0 0 0,clip,width=\linewidth]{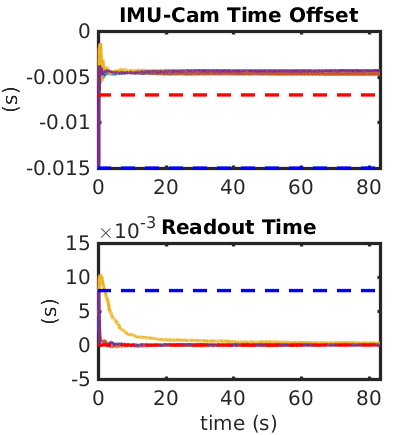}
\end{subfigure}
\centering
\begin{subfigure}{.24\textwidth}
    \includegraphics[trim=0mm 0 0mm 0,clip,width=\linewidth]{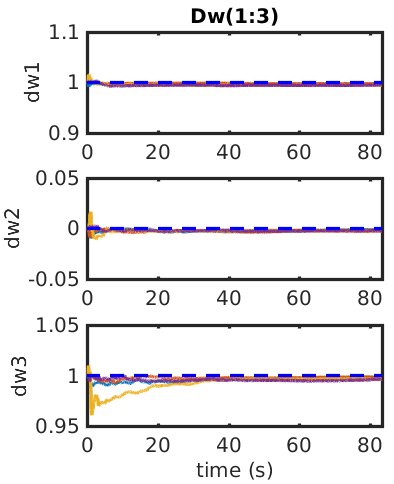}
\end{subfigure}
\begin{subfigure}{.24\textwidth}
\includegraphics[trim=0 0 0 0,clip,width=\linewidth]{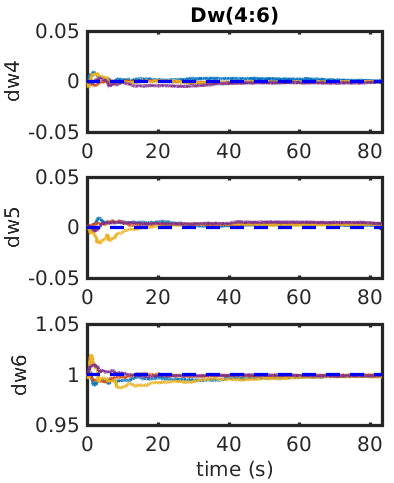}
\end{subfigure}
\caption{
Calibration results (colored solid lines) for ${}^C\mathbf{p}_I$, $t_d$, $t_r$ and $\mathbf{D}_w$ of the proposed system evaluated with four 3D-motion datasets from Section \ref{sec:exp_virig} using \textit{imu2} and \textit{equidist}. 
Red and blue dotted lines denote the reference value from Kalibr and initial (perturbed) values, respectively. 
Colored solid line represents the estimated calibration parameters during online calibration for each dataset.
All the temporal calibration, ${}^C\mathbf{p}_I$, and $\mathbf{D_w}$ can converge well to the reference values.
}
\label{fig:sim_virig_3d}
\end{figure*}

\subsection{KAIST Complex Urban: Planar Motion}
Next we evaluate on the KAIST Complex Urban dataset \citep{Jeong2019IJRR}, which provides stereo grayscale images at 10 Hz, 200 Hz inertial readings, and a pseudo-groundtruth from an offline optimization method with the RTK GPS.
As in the Table \ref{table:degenerate summary} and \ref{tab:cam degenerate}, this dataset contains planar motion and, in the case of \textit{imu2}, 6 parameters are unobservable.
We use the stereo camera pair along with the IMU to remove the scale ambiguity for monocular VINS caused by constant local acceleration \citep{Wu2017ICRA,Yang2019TRO}.

We plot Urban 39 trajectory estimates of the proposed system with and without IMU intrinsics in Figure \ref{fig:exp_kaist}. 
It can be seen that the system with IMU intrinsic calibration has larger drift compared to the one without. 
When looking at the absolute trajectory error (ATE) in respect to the dataset's groundtruth, the error of the standard VIO, is 1.58 degrees with 13.03 meters (0.12\%), while with \textit{imu2} online IMU intrinsic calibration it is 1.41 degrees with 23.13 meters (0.22\%).
We propose that this larger error is due to the introduced unobservable directions in online IMU intrinsic calibration when experiencing planar motion.

\subsection{VI-Rig Planar Motion Datasets}

Lastly, we evaluate on 4 datasets collected with VI-Rig (shown in Figure \ref{fig:vi_rig}) under planar motion.
In this evaluation, Microstrain GX5-25 and the left camera of T265 are used. 
When collecting data, we put the VI-Vig on a chair and only move the chairs in the ground plane to make sure VI-Vig is performing planar motion with global yaw as rotation axis, which is also the y-axis (pointing downward) of the camera. 
We calibrate all calibration parameters using \textit{imu2} and \textit{equidist} when running the system. 
Since T265 is a global shutter camera, the readout time is 0.00s. 

The calibration results for the translation parameters ${}^C\mathbf{p}_I$, time offset $t_d$, readout time $t_r$ and the $\mathbf{D}w$ are shown in Figure \ref{fig:sim_virig_planar}.
All the temporal calibration can converge well to the reference values based on offline calibration results of Kalibr.
Note that $t_r$ converges to almost 0s as expected and $t_d$ converges from 0.015s to 0.005s with reference values as 0.007s. The final estimation errors are around 0.002s, which is pretty small. 
While the x and z components of ${}^C\mathbf{p}_I$ can also converge well to the reference values with small standard deviations (smaller than 0.4cm), the y component diverges with estimation errors more than 5cm and the standard deviation reach 3cm since it is along the rotation axis of the camera and hence, unobservable. 
Since the system has only yaw rotation for the IMU sensor, the $dw_1$, $dw_2$ and $dw_3$ are also unobservable (see Table \ref{table:degenerate summary}), and their calibration results diverge a lot compared to those of $dw_4$, $dw_5$ and $dw_6$. 
This result verifies our degenerate motion analysis for IMU-camera and IMU intrinsic calibration.   
As comparison, we also plot the online calibration results of the proposed system running on another four datasets from Section \ref{sec:exp_virig} with fully excited motions in Figure \ref{fig:sim_virig_3d}.
We use the same scale to plot the results for both Figure \ref{fig:sim_virig_planar} and \ref{fig:sim_virig_3d}. It is clear that all these calibration parameters ($t_r$, $t_d$, ${}^C\mathbf{p}_I$ and $\mathbf{D}_w$) can converge better in fully excited motions than planar motion.

\section{Discussion: Online Self-Calibration?} \label{sec:discussion}

As learnt from the preceding extensive Monte-Carlo simulations and real world experiments, 
we highly recommend online self-calibration for VINS  especially in the following scenarios: 
\begin{itemize}
    \item Poor calibration priors are provided.
    \item Low-end IMUs or cameras are used. 
    \item RS cameras are used. 
    \item The system undergoes fully-excited motions.
\end{itemize}

Specifically, as shown in Table \ref{tab:sim_calibcompare} (simulation) and Figures \ref{fig:surf_no_rs} and \ref{fig:surf_rs} (TUM RS VIO Datasets), if the system starts with imperfect calibration, the system without online self-calibration is highly likely to fail, clearly demonstrated by Figure \ref{fig:sim_perturb_errors}.
But online calibration can greatly improve the system robustness and accuracy.
From Figure \ref{fig:surf_rs}, we see that online calibration for the low-end IMU (Bosch BMI160) and RS readout time is necessary, which can improve the system performance greatly. 
Based on the results from the EuRoC MAV dataset (Table \ref{tab:ate_euroc}) and KAIST datasets (Figure \ref{fig:exp_kaist}), online calibration, especially IMU intrinsic calibration, can hurt the system performance when the system undergoes underactuated motions.

Based on our analysis on the degenerate motions for these calibration parameters, we can recommend what calibration parameters should be calibrated with different motion types: 
\begin{itemize}
    \item Fully excited 3D motion: all calibration parameters.
    \item Mildly excited 3D motion: IMU-camera spatial-temporal calibration, readout time, and camera intrinsics.
    \item Under-actuated motions: readout time and camera intrinsics. 
\end{itemize}

As shown in our degenerate motion analysis, there are a large number of motion types that prohibit accurate calibration of the IMU intrinsics and IMU-camera spatial calibration, while the camera intrinsics and IMU-camera temporal calibration are more robust to different motions.
More importantly, in the  most commonly-seen motion cases of aerial and ground vehicles,
there is usually at least one unobservable direction due to these robots traveling with either underactuated 3D or planar motion.
The impact on performance was shown with the EuRoC MAV and KAIST Urban datasets (Table \ref{tab:ate_euroc} and Figure \ref{fig:exp_kaist}) where the use of online IMU calibration may hurt the estimator.

Due to the high likelihood of experiencing degenerate motions for some periods of time, 
solely based on our analysis and results,
we do {\em not} recommend performing online IMU intrinsic and IMU-camera spatial calibration during real-time operations for most underactuated motions (e.g., planar motion and one-axis rotation for most ground vehicles).
The exception to this is the handheld cases (e.g., mobile AR/VR), which often exhibit full 6DoF motions and thus is recommended to perform online calibration to improve estimation accuracy, especially when low-end IMUs or RS cameras are used.
For both of these applications, we do recommend using an offline batch optimization to obtain an accurate initial calibration guess for the filter and/or treat the calibration parameters (especially intrinsics) as ``true'' if one knows they are going to experience degenerate motions.
For online IMU intrinsic calibration, it is not necessary to calibrate the full IMU model and instead
one may  calibrate only the dominating parameters in the inertial models 
(e.g., $\mathbf{D}_w$ for  BMI160 IMU or  $\mathbf{D}_a$ for MicroStrain, Xsens and T265 IMUs).

\section{Conclusions and Future Work} \label{sec:conclusion}

In this paper, we address the problem of online full-parameter self-calibration for visual-inertial navigation to achieve accurate and robust performance. 
We first investigate different IMU intrinsic model variants which contain scale correction, axis misalignment and gravity sensitivity. These model variants can cover most used inertial models in practice. 
We also introduce the full visual measurement model which contains IMU-camera spatial-temporal parameters including rolling shutter readout time.  
After computing the state transition matrix and measurements Jacobian regarding the state containing full calibration parameters, we perform observability analysis based on the linearized VINS system and show that with full-parameter calibration it still has only 4 unobservable directions, which relate to global yaw and global translation. 
All the calibration parameters of VINS are observable given fully excited motions. 

Based on the observability analysis, we, for the first time, have identified basic degenerate motion patterns for IMU/camera intrinsics, and any combination of these degenerate motions will still be unobservable directions.
Extensive validation on simulated and real-world datasets are performed to verify both the observability and degenerate motion analysis.
We also show that online self-calibration can improve the robustness and accuracy of VINS. 
As shown through our experiments, online IMU intrinsic calibration is risky due to its dependence on the motion profile to ensure observability.
In the case of autonomous (ground) vehicles, most trajectories have degenerate motions, thus resulting in   {\em not} recommending online calibration of IMU intrinsics for robots with underactuated motions.
In the case of handheld motion, however we found that the estimation of calibration parameters improved performance as expected.

In the future, 
we will investigate a complete degenerate motion analysis for multi-visual-inertial system. 
In addition, robust algorithms to perform online calibration under degenerate motions will also be studied.

\bibliographystyle{SageH}
\bibliography{xins.bib}

\section{Appendix A: IMU Intrinsic Jacobians}
\label{adp:imu jacobians}

In the following derivations, we will compute the Jacobians for all the variables that might appear in the IMU models, including scale/axis correction for gyroscope $\mathbf{D}_w$ (6 parameters), scale/axis correction for accelerometer $\mathbf{D}_a$ (6 parameters), rotation from gyroscope to IMU frame ${}^I_w\mathbf{R}$, rotation from accelerometer to IMU frame ${}^I_a\mathbf{R}$ and gravitiy sensitivity $\mathbf{T}_g$ (9 parameters). 
We repeat the corrected IMU readings for easier derivation:
\begin{align}
    {}^I\boldsymbol{\omega} & = 
    {}^I_{w}\mathbf{R} \mathbf{D}_w 
    \left(
    {}^w\boldsymbol{\omega}_m - \mathbf{T}_g {}^I\mathbf{a} - \mathbf{b}_g - \mathbf{n}_g
    \right)
    \\
    {}^I\mathbf{a} & = {}^I_a\mathbf{R} \mathbf{D}_a 
    \left(
    {}^a\mathbf{a}_m - \mathbf{b}_a - \mathbf{n}_a    \right)
\end{align}
To simplify the derivations, we define ${}^I\hat{\mathbf{a}}$ and ${}^I\tilde{\mathbf{a}}$ as: 
\begin{align}
    {}^I{\mathbf{a}} &   \simeq 
    {}^I_a\hat{\mathbf{R}} \hat{\mathbf{D}}_a 
    \left(
    {}^a\mathbf{a}_m - \hat{\mathbf{b}}_a
    \right)
     + 
    {}^I_a\hat{\mathbf{R}} \mathbf{H}_{Da} \tilde{\mathbf{x}}_{Da} 
    \notag
    \\
    &
    ~~~
    + 
    \lfloor {}^I_a\hat{\mathbf{R}} \hat{\mathbf{D}}_a 
    \left(
    {}^a\mathbf{a}_m - \hat{\mathbf{b}}_a
    \right)  \rfloor 
    \delta \boldsymbol{\theta}_{Ia}
    - {}^I_a\hat{\mathbf{R}} \hat{\mathbf{D}}_a \tilde{\mathbf{b}}_a
    - {}^I_a\hat{\mathbf{R}} \hat{\mathbf{D}}_a {\mathbf{n}}_a
    \notag
    \\
    {}^I\hat{\mathbf{a}} & = {}^I_a\hat{\mathbf{R}} \hat{\mathbf{D}}_a 
    \left(
    {}^a\mathbf{a}_m - \hat{\mathbf{b}}_a
    \right) 
    \notag
    \\
    {}^I\tilde{\mathbf{a}} & = 
    {}^I_a\hat{\mathbf{R}} \mathbf{H}_{Da} \tilde{\mathbf{x}}_{Da} 
    + 
    \lfloor {}^I\hat{\mathbf{a}} \rfloor 
    \delta \boldsymbol{\theta}_{Ia}
    - {}^I_a\hat{\mathbf{R}} \hat{\mathbf{D}}_a \tilde{\mathbf{b}}_a
    - {}^I_a\hat{\mathbf{R}} \hat{\mathbf{D}}_a {\mathbf{n}}_a
    \notag
\end{align}
We define ${}^I\hat{\boldsymbol{\omega}}$ and ${}^I\tilde{\boldsymbol{\omega}}$ as:
\begin{align}
    {}^{I}\boldsymbol{\omega} & = 
    {}^I_w\hat{\mathbf{R}} \hat{\mathbf{D}}_w
    \left(
    {}^w\boldsymbol{\omega}_m - \hat{\mathbf{T}}_g {}^I\hat{\mathbf{a}}-\hat{\mathbf{b}}_g
    \right) 
    \notag
    \\
    & 
    + \lfloor 
    {}^I_w\hat{\mathbf{R}} \hat{\mathbf{D}}_w
    \left(
    {}^w\boldsymbol{\omega}_m - \hat{\mathbf{T}}_g {}^I\hat{\mathbf{a}}-\hat{\mathbf{b}}_g
    \right) 
    \rfloor \delta \boldsymbol{\theta}_{Iw} 
    \notag
    \\
    &
    + {}^I_w\hat{\mathbf{R}} \mathbf{H}_{Dw} \tilde{\mathbf{x}}_{Dw}
    - {}^I_w\hat{\mathbf{R}}\hat{\mathbf{D}}_w 
    \left(
    \hat{\mathbf{T}}_g {}^I\tilde{\mathbf{a}}
    + \mathbf{H}_{Tg}\tilde{\mathbf{x}}_{Tg} 
    \right)
    \notag
    \\
    & 
    - {}^I_w\hat{\mathbf{R}}\hat{\mathbf{D}}_w 
    \left(
    \tilde{\mathbf{b}}_g 
    +\mathbf{n}_g
    \right)
    \notag
    \\
    {}^{I}\hat{\boldsymbol{\omega}} & = 
    {}^I_w\hat{\mathbf{R}} \hat{\mathbf{D}}_w
    \left(
    {}^w\boldsymbol{\omega}_m - \hat{\mathbf{T}}_g {}^I\hat{\mathbf{a}}-\hat{\mathbf{b}}_g
    \right) 
    \notag
    \\
    {}^{I}\tilde{\boldsymbol{\omega}} & = 
    - {}^I_w\hat{\mathbf{R}}\hat{\mathbf{D}}_w \tilde{\mathbf{b}}_g
    + {}^I_w\hat{\mathbf{R}}\hat{\mathbf{D}}_w \hat{\mathbf{T}}_g {}^I_a\hat{\mathbf{R}} \hat{\mathbf{D}}_a \tilde{\mathbf{b}}_a
    \notag
    \\
    & 
    + {}^I_w\hat{\mathbf{R}}\mathbf{H}_{Dw}\tilde{\mathbf{x}}_{Dw}
    - {}^I_w\hat{\mathbf{R}}\hat{\mathbf{D}}_w \hat{\mathbf{T}}_g  {}^I_a\hat{\mathbf{R}}\mathbf{H}_{Da}\tilde{\mathbf{x}}_{Da}
    \notag
    \\
    &
    + \lfloor {}^I\hat{\boldsymbol{\omega}} \rfloor \delta \boldsymbol{\theta}_{Iw} 
    - {}^I_w\hat{\mathbf{R}}\hat{\mathbf{D}}_w \hat{\mathbf{T}}_g
    \lfloor {}^I\hat{\mathbf{a}} \rfloor \delta \boldsymbol{\theta}_{Ia} 
    \notag
    \\ 
    &
    - {}^I_w\hat{\mathbf{R}}\hat{\mathbf{D}}_w \mathbf{H}_{Tg}
    \tilde{\mathbf{x}}_{Tg} - {}^I_w\hat{\mathbf{R}} \hat{\mathbf{D}}_w \mathbf{n}_g 
    \notag
    \\
    &
    + {}^I_w\hat{\mathbf{R}}\hat{\mathbf{D}}_w \hat{\mathbf{T}}_g {}^I_a\hat{\mathbf{R}}\hat{\mathbf{D}}_a \mathbf{n}_a 
    \notag
\end{align}
where we have:
\begin{align}
    \mathbf{H}_{Dw} & = 
    \begin{bmatrix}
    {}^w\hat{w}_1\mathbf{e}_1 & {}^w\hat{w}_2\mathbf{e}_1 & {}^w\hat{w}_2\mathbf{e}_2 & {}^w\hat{w}_3 \mathbf{I}_3
    \end{bmatrix}
    \\
    \mathbf{H}_{Da} & = 
    \begin{bmatrix}
    {}^a\hat{a}_1\mathbf{e}_1 & {}^a\hat{a}_2\mathbf{e}_1 & {}^a\hat{a}_2\mathbf{e}_2 & {}^a\hat{a}_3 \mathbf{I}_3
    \end{bmatrix}
    \\
    \mathbf{H}_{Tg} & = 
    \begin{bmatrix}
    {}^I\hat{a}_1 \mathbf{I}_3 & {}^I\hat{a}_2 \mathbf{I}_3 & {}^I\hat{a}_3 \mathbf{I}_3 
    \end{bmatrix}
\end{align}
By summarizing the above equations, we have: 
\begin{align}
    \label{eq:wa}
    \begin{bmatrix}
    {}^{I_k}\tilde{\boldsymbol{\omega}} \\
    {}^{I_k}\tilde{\mathbf{a}}
    \end{bmatrix}
    & = 
    \begin{bmatrix}
    \mathbf{H}_b & \mathbf{H}_{in}
    \end{bmatrix}
    \begin{bmatrix}
    \tilde{\mathbf{x}}_{b} \\
    \tilde{\mathbf{x}}_{in}
    \end{bmatrix}
    +
    \mathbf{H}_n 
    \begin{bmatrix}
    \mathbf{n}_{g} \\
    \mathbf{n}_{a}
    \end{bmatrix}
\end{align}
where we have defined: 
\begin{align}
    \mathbf{H}_b & = \mathbf{H}_n = 
    \begin{bmatrix}
    -{}^I_w\hat{\mathbf{R}} \hat{\mathbf{D}}_w & 
    {}^I_w\hat{\mathbf{R}} \hat{\mathbf{D}}_w
    \hat{\mathbf{T}}_g {}^I_a\hat{\mathbf{R}} \hat{\mathbf{D}}_a \\
    \mathbf{0}_3 & -{}^I_a\hat{\mathbf{R}}\hat{\mathbf{D}}_a
    \end{bmatrix}
    \\
    \mathbf{H}_{in} & = 
    \begin{bmatrix}
    \mathbf{H}_w & \mathbf{H}_a & \mathbf{H}_{Iw} & \mathbf{H}_{Ia} & \mathbf{H}_{g}
    \end{bmatrix}
\end{align}
with: 
\begin{align}
    \mathbf{H}_w & = 
    \begin{bmatrix}
    {}^I_w\hat{\mathbf{R}} \mathbf{H}_{Dw} \\
    \mathbf{0}_3
    \end{bmatrix}
    \\
    \mathbf{H}_a & = 
    \begin{bmatrix}
    - {}^I_w\hat{\mathbf{R}}\hat{\mathbf{D}}_w \hat{\mathbf{T}}_g {}^I_a\hat{\mathbf{R}}\mathbf{H}_{Da} \\
    {}^I_a\hat{\mathbf{R}}\mathbf{H}_{Da}
    \end{bmatrix}
    \\
    \mathbf{H}_{Iw} & = 
    \begin{bmatrix}
    \lfloor {}^{I}\hat{\boldsymbol{\omega}} \rfloor \\
    \mathbf{0}_3
    \end{bmatrix}
    \\
    \mathbf{H}_{Ia} & = 
    \begin{bmatrix}
    -{}^I_a\hat{\mathbf{R}} \hat{\mathbf{D}}_w \hat{\mathbf{T}}
    \lfloor {}^{I}\hat{\mathbf{a}} \rfloor \\
    \lfloor {}^I\hat{\mathbf{a}} \rfloor
    \end{bmatrix}
    \\
    \mathbf{H}_{g} & = 
    \begin{bmatrix}
    -{}^I_w\hat{\mathbf{R}} \hat{\mathbf{D}}_w \mathbf{H}_{Tg} \\
    \mathbf{0}_3
    \end{bmatrix}
\end{align}
Hence, $\Delta \mathbf{R}_k$ from Eq. \eqref{eq:integration_components_1}, can be written as:
\begin{align}
    \Delta \mathbf{R}_k & 
    \simeq  \exp \left( {}^{I_{k}}\boldsymbol{\omega} \delta t_k \right) 
    = \exp \left( ({}^{I_{k}}\hat{\boldsymbol{\omega}} + {}^{I_k}\tilde{\boldsymbol{\omega}}) \delta t_k \right) 
    \\
    &
    = 
    \exp \left(\Delta \hat{\boldsymbol{\theta}}_k\right)
    \exp \left(
    \mathbf{J}_r(\Delta \hat{\boldsymbol{\theta}}_k)  {}^{I_k}\tilde{\boldsymbol{\omega}} \delta t_k
    \right)
\end{align}
where $\Delta \hat{\boldsymbol{\theta}}_k = 
{}^{I_k}\hat{\boldsymbol{\omega}}\delta t_k$. 

$\Delta \mathbf{p}_k$ from Eq. \eqref{eq:integration_components_2} can be written as:
\begin{align}
    \Delta \mathbf{p}_{k} & = 
    \scalemath{0.9}{\int^{t_{k+1}}_{t_{k}} \int^{s}_{t_{k}} {}^{I_k}_{I_\tau}\mathbf{R} {}^{I_{\tau}} \mathbf{a}  d \tau d s}
    \notag
    \\
    & \simeq
    \int^{t_{k+1}}_{t_{k}} \int^{s}_{t_{k}} 
    \exp \left( 
    {}^{I_k}\boldsymbol{\omega} \delta \tau
    \right)
    {}^{I_{k}} \mathbf{a}  
    d \tau d s
    \notag
    \\
    & =
    \int^{t_{k+1}}_{t_{k}} \int^{s}_{t_{k}} 
    \exp \left( 
    \left(
    {}^{I_k}\hat{\boldsymbol{\omega}} + 
    {}^{I_k}\tilde{\boldsymbol{\omega}} 
    \right)\delta \tau 
    \right)
    \left(
    {}^{I_{k}} \hat{\mathbf{a}} + {}^{I_{k}} \tilde{\mathbf{a}} 
    \right) 
    d \tau d s
    \notag
    \\
    & \simeq 
    \underbrace{\int^{t_{k+1}}_{t_{k}} \int^{s}_{t_{k}} 
    \exp \left( 
    {}^{I_k}\hat{\boldsymbol{\omega}}\delta \tau 
    \right)
    {}^{I_{k}} \hat{\mathbf{a}}
    d \tau d s}_{\Delta \hat{\mathbf{p}}_k}
    \notag
    \\
    & -
    \underbrace{\int^{t_{k+1}}_{t_{k}} \int^{s}_{t_{k}} 
    \exp \left( 
    {}^{I_k}\hat{\boldsymbol{\omega}}\delta \tau 
    \right)
    \lfloor {}^{I_{k}} \hat{\mathbf{a}} \rfloor
    \mathbf{J}_r({}^{I_k}\boldsymbol{\omega} \delta \tau) 
    \delta \tau
    d \tau d s}_{\boldsymbol{\Xi}_4}
    {}^{I_k}\tilde{\boldsymbol{\omega}}
    \notag
    \\
    & + 
    \underbrace{\int^{t_{k+1}}_{t_{k}} \int^{s}_{t_{k}} 
    \exp \left( 
    {}^{I_k}\hat{\boldsymbol{\omega}}\delta \tau 
    \right) d \tau d s}_{\boldsymbol{\Xi}_2}
    {}^{I_k}\tilde{\mathbf{a}}
    \notag
\end{align}
$\Delta \mathbf{v}_k$ from Eq. \eqref{eq:integration_components_3} can be written as:
\begin{align}
    \Delta \mathbf{v}_{k} & =  
    \scalemath{0.9}{
    \int^{t_{k+1}}_{t_{k}}
    {}^{I_k}_{I_\tau}\mathbf{R} {}^{I_{\tau}} \mathbf{a}  d \tau
    }
    \notag
    \\
    & \simeq
    \int^{t_{k+1}}_{t_{k}} 
    \exp \left( 
    {}^{I_k}\boldsymbol{\omega} \delta \tau
    \right)
    {}^{I_{k}} \mathbf{a}  
    d \tau 
    \notag
    \\
    & =
    \int^{t_{k+1}}_{t_{k}} 
    \exp \left( 
    \left(
    {}^{I_k}\hat{\boldsymbol{\omega}} + 
    {}^{I_k}\tilde{\boldsymbol{\omega}} 
    \right)\delta \tau 
    \right)
    \left(
    {}^{I_{k}} \hat{\mathbf{a}} + {}^{I_{k}} \tilde{\mathbf{a}} 
    \right) 
    d \tau 
    \notag
    \\
    & \simeq 
    \underbrace{\int^{t_{k+1}}_{t_{k}} 
    \exp \left( 
    {}^{I_k}\hat{\boldsymbol{\omega}}\delta \tau 
    \right)
    {}^{I_{k}} \hat{\mathbf{a}}
    d \tau }_{\Delta\hat{\mathbf{v}}_k}
    \notag
    \\
    & -
    \underbrace{\int^{t_{k+1}}_{t_{k}} 
    \exp \left( 
    {}^{I_k}\hat{\boldsymbol{\omega}}\delta \tau 
    \right)
    \lfloor {}^{I_{k}} \hat{\mathbf{a}} \rfloor
    \mathbf{J}_r({}^{I_k}\boldsymbol{\omega} \delta \tau) 
    \delta \tau
    d \tau }_{\boldsymbol{\Xi}_3}
    {}^{I_k}\tilde{\boldsymbol{\omega}}
    \notag
    \\
    & + 
    \underbrace{\int^{t_{k+1}}_{t_{k}} 
    \exp \left( 
    {}^{I_k}\hat{\boldsymbol{\omega}}\delta \tau 
    \right) d \tau}_{\boldsymbol{\Xi}_1}
    {}^{I_k}\tilde{\mathbf{a}}
    \notag
\end{align}
By summarizing the above derivations, we have: 
\begin{align}
    \Delta \mathbf{R}_k & = 
    \Delta \hat{\mathbf{R}}_k
    \exp \left(
    \mathbf{J}_r (\Delta \hat{\boldsymbol{\theta}}_k)
    {}^{I_k}\tilde{\boldsymbol{\omega}}\delta t_k
    \right)
    \\
    \Delta \mathbf{p}_k & = 
    \Delta \hat{\mathbf{p}}_k -\boldsymbol{\Xi}_4 {}^{I_k}\tilde{\boldsymbol{\omega}} + 
    \boldsymbol{\Xi}_2 {}^{I_k}\tilde{\mathbf{a}} 
    \\
    \Delta \mathbf{v}_k & = 
    \Delta \hat{\mathbf{v}}_k -\boldsymbol{\Xi}_3 {}^{I_k}\tilde{\boldsymbol{\omega}} + 
    \boldsymbol{\Xi}_1 {}^{I_k}\tilde{\mathbf{a}} 
\end{align}
The linearized model for IMU dynamics can be written as:
\begin{align}
    \label{eq:xn_dynamics}
    \tilde{\mathbf{x}}_{n_{k+1}} & = 
    \boldsymbol{\Phi}_{nn} \tilde{\mathbf{x}}_{n_{k}} + 
    \boldsymbol{\Phi}_{wa} 
    \begin{bmatrix}
    {}^{I_k}\tilde{\boldsymbol{\omega}} \\
    {}^{I_k}\tilde{\mathbf{a}}
    \end{bmatrix}
\end{align}
where: 
\begin{align}
    \boldsymbol{\Phi}_{nn} & =
    \begin{bmatrix}
    \Delta \hat{\mathbf{R}}^{\top}_k & \mathbf{0}_3 & \mathbf{0}_3 \\
    -{}^G_{I_k}\hat{\mathbf{R}}\lfloor \Delta \hat{\mathbf{p}}_k \rfloor & \mathbf{I}_3 & \mathbf{I}_3 \delta t_k \\
    -{}^G_{I_k}\hat{\mathbf{R}}\lfloor \Delta \hat{\mathbf{v}}_k \rfloor & \mathbf{0}_3 & \mathbf{I}_3 
    \end{bmatrix}
    \\
    \boldsymbol{\Phi}_{wa} & = 
    \begin{bmatrix}
    \mathbf{J}_r(\Delta \hat{\boldsymbol{\theta}}_k) \delta t_k & \mathbf{0}_3 \\
    -{}^G_{I_k}\hat{\mathbf{R}}\boldsymbol{\Xi}_4 & {}^G_{I_k}\hat{\mathbf{R}}\boldsymbol{\Xi}_2 \\
    -{}^G_{I_k}\hat{\mathbf{R}}\boldsymbol{\Xi}_3 & {}^G_{I_k}\hat{\mathbf{R}}\boldsymbol{\Xi}_1 
    \end{bmatrix}
\end{align}
By plugging Eq. \eqref{eq:wa} into Eq. \eqref{eq:xn_dynamics} and adding biases, the overall linearized system for IMU state can be written as: 
\begin{align}
    \tilde{\mathbf{x}}_{I_{k+1}} & = 
    \boldsymbol{\Phi}_{I(k+1, k)} 
    \tilde{\mathbf{x}}_{I_{k}} + 
    \mathbf{G}_k \mathbf{n}_{dI}
\end{align}
where $\mathbf{n}_{dI}= 
\begin{bmatrix}
\mathbf{n}^{\top}_{dg} \ \mathbf{n}^{\top}_{da}\ \mathbf{n}^{\top}_{dwg}\ \mathbf{n}^{\top}_{dwa}
\end{bmatrix}^{\top}$ denotes the discretized IMU noises; $\boldsymbol{\Phi}_{I(k+1,k)}$ and $\mathbf{G}_k$ are computed as:

\begin{align}
    \boldsymbol{\Phi}_{I(k+1,k)} & = 
    \begin{bmatrix}
    \boldsymbol{\Phi}_{nn}  & 
    \boldsymbol{\Phi}_{wa} \mathbf{H}_b & 
    \boldsymbol{\Phi}_{wa} \mathbf{H}_{in} \\
    \mathbf{0}_{6\times 9} & 
    \mathbf{I}_6 & 
    \mathbf{0}_{6\times m} \\
    \mathbf{0}_{m\times 9} & \mathbf{0}_{m\times 6} & \mathbf{I}_{m}
    \end{bmatrix}
    \\
    \mathbf{G}_{k} & = 
    \begin{bmatrix}
    \boldsymbol{\Phi}_{wa} \mathbf{H}_n & \mathbf{0}_{9\times 6} \\
    \mathbf{0}_{6} & \mathbf{I}_6 \delta t_k \\
    \mathbf{0}_{m\times 6} & \mathbf{0}_{m\times 6}
    \end{bmatrix}
\end{align}
Note that $\mathbf{n}_{d*}\sim \mathcal{N}(\mathbf{0}, \frac{\sigma^2_{*}\mathbf{I}_3}{\delta t_k})$ and hence the covariance for $\mathbf{n}_{dI}$ can be written as: 
\begin{align}
    \mathbf{Q}_{dI} & =
    \begin{bmatrix}
    \frac{\sigma^2_g}{\delta t_k} \mathbf{I}_3 & \mathbf{0}_3 & \mathbf{0}_3 & \mathbf{0}_3 \\
    \mathbf{0}_3 & \frac{\sigma^2_a}{\delta t_k} \mathbf{I}_3 & \mathbf{0}_3 & \mathbf{0}_3 \\
    \mathbf{0}_3 & \mathbf{0}_3 & \frac{\sigma^2_{wg}}{\delta t_k} \mathbf{I}_3 & \mathbf{0}_3 \\
    \mathbf{0}_3 & \mathbf{0}_3 & \mathbf{0}_3 & \frac{\sigma^2_{wa}}{\delta t_k} \mathbf{I}_3
    \end{bmatrix}
\end{align}

\section{Appendix B: Camera Jacobians} \label{sec:apx_cam_jacob}
We will show the detailed derivations for the Jacobians shown in Eq. \eqref{eq:Hc}. 
The camera intrinsic Jacobians $\mathbf{H}_{Cin}$ can be written as:
\begin{align}
    & \mathbf{H}_{Cin} = 
    \begin{bmatrix}
    \frac{\partial \tilde{\mathbf{z}}_C}
    {\partial {\begin{bmatrix}
    \tilde{f_u} ~\tilde{f_v} ~ \tilde{c_u} ~\tilde{c_v}
    \end{bmatrix}}^{\top}} & 
    \frac{\partial \tilde{\mathbf{z}}_C}
    {\partial {\begin{bmatrix}
    \tilde{k_1} ~ \tilde{k_2} ~ \tilde{p_1} ~ \tilde{p_2}
    \end{bmatrix}}^{\top}} 
    \end{bmatrix}
    \\
    & \frac{\partial \tilde{\mathbf{z}}_C}
    {\partial {\begin{bmatrix}
    \tilde{f_u} ~\tilde{f_v} ~ \tilde{c_u} ~\tilde{c_v}
    \end{bmatrix}}^{\top}}  = 
    \begin{bmatrix}
    u_d & 0 & 1 & 0  \\
    0 & v_d & 0 & 1 
    \end{bmatrix}
    \\
    & \frac{\partial \tilde{\mathbf{z}}_C}
    {\partial {\begin{bmatrix}
    \tilde{k_1} ~ \tilde{k_2} 
    \end{bmatrix}}^{\top}} = 
    \begin{bmatrix}
     f_uu_nr^2 & f_uu_nr^4  \\
    f_vv_nr^2 & f_vv_nr^4 
    \end{bmatrix}
    \\
    & \frac{\partial \tilde{\mathbf{z}}_C}
    {\partial {\begin{bmatrix}
    \tilde{p_1} ~ \tilde{p_2}
    \end{bmatrix}}^{\top}} = 
    \begin{bmatrix}
      2f_uu_nv_n & f_u(r^2+2u^2_n) \\
     f_v(r^2+2v^2_n) & 2f_vu_nv_n
    \end{bmatrix}
\end{align}
We continue to compute $\frac{\partial \tilde{\mathbf{z}}_C}{\partial \tilde{\mathbf{z}}_n} $ and $\frac{\partial \tilde{\mathbf{z}}_n}{\partial {}^{C}\tilde{\mathbf{p}}_f}$ for $\mathbf{H}_{\mathbf{p}_f}$ within Eq. \eqref{eq:Hc} as: 
\begin{align}
    \frac{\partial \tilde{\mathbf{z}}_C}{\partial \tilde{\mathbf{z}}_{n}} & = 
    \begin{bmatrix}
    h_{11} & h_{12} \\
    h_{21} & h_{22}
    \end{bmatrix}
    \\
    \frac{\partial \tilde{\mathbf{z}}_C}{\partial {}^C\tilde{\mathbf{p}}_{f}} & = 
    \frac{1}{{}^Cz^2_f}
    \begin{bmatrix}
    {}^Cz_f & 0 & -{}^Cx_f \\
    0 & {}^Cz_f & -{}^Cy_f
    \end{bmatrix}
\end{align}
with $\mathbf{h}_{11}$,  $\mathbf{h}_{12}$,  $\mathbf{h}_{21}$ and  $\mathbf{h}_{22}$ defined as:
\begin{align}
    \mathbf{h}_{11} & = f_u (d+2k_1u^2_n+4k_2u^2_nr^2+2p_1v_n+6p_2u_n) 
    \notag
    \\
    \mathbf{h}_{12} & = f_u (2k_1u_nv_n+4k_2u_nv_nr^2+2p_1u_n+2p_2v_n) 
    \notag
    \\
    \mathbf{h}_{21} & = f_v (2k_1u_nv_n+4k_2u_nv_nr^2+2p_1u_n+2p_2v_n)  
    \notag
    \\
    \mathbf{h}_{22} & = f_v (d+2k_1v^2_n+4k_2v^2_nr^2+6p_1v_n+2p_2u_n) 
    \notag
\end{align}
The Jacobians of ${}^C\mathbf{p}_f$ regarding to the IMU state $\mathbf{x}_I$ are written as:
\begin{align}
    \frac{\partial {}^{C}\tilde{\mathbf{p}}_f}{\partial \tilde{\mathbf{x}}_I} & = 
    \begin{bmatrix}
    \frac{\partial {}^{C}\tilde{\mathbf{p}}_f}{\partial \tilde{\mathbf{x}}_n} & 
    \frac{\partial {}^{C}\tilde{\mathbf{p}}_f}{\partial \tilde{\mathbf{x}}_b} & 
    \frac{\partial {}^{C}\tilde{\mathbf{p}}_f}{\partial \tilde{\mathbf{x}}_{in}}
    \end{bmatrix}
    \\
    \frac{\partial {}^{C}\tilde{\mathbf{p}}_f}{\partial \tilde{\mathbf{x}}_n}
    & = 
    {}^C_I\hat{\mathbf{R}}{}^I_G\hat{\mathbf{R}}
    \begin{bmatrix}
    \lfloor {}^G\hat{\mathbf{p}}_{f} - {}^G\hat{\mathbf{p}}_{I} \rfloor {}^G_I\hat{\mathbf{R}} &
    -\mathbf{I}_3 & 
    \mathbf{0}_{3} 
    \end{bmatrix}
    \\
    \frac{\partial {}^{C}\tilde{\mathbf{p}}_f}{\partial \tilde{\mathbf{x}}_b} & = \mathbf{0}_{3\times 6}, ~~
    \frac{\partial {}^{C}\tilde{\mathbf{p}}_f}{\partial \tilde{\mathbf{x}}_{in}} = \mathbf{0}_{3\times 24}
\end{align}
The Jacobians of ${}^C\mathbf{p}_f$ regarding to the IMU-camera spatial-temporal calibration state $\mathbf{x}_{IC}$ are written as:
\begin{align}
    \frac{\partial {}^{C}\tilde{\mathbf{p}}_f}{\partial \tilde{\mathbf{x}}_{IC}} & = 
    \begin{bmatrix}
    \frac{\partial {}^{C}\tilde{\mathbf{p}}_f}{\partial \delta \boldsymbol{\theta}_{IC}} & 
    \frac{\partial {}^{C}\tilde{\mathbf{p}}_f}{\partial {}^C\tilde{\mathbf{p}}_I} & 
    \frac{\partial {}^{C}\tilde{\mathbf{p}}_f}{\partial \tilde{t}_{d}} & 
    \frac{\partial {}^{C}\tilde{\mathbf{p}}_f}{\partial \tilde{t}_{r}} 
    \end{bmatrix}
    \\
    \frac{\partial {}^{C}\tilde{\mathbf{p}}_f}{\partial \delta \boldsymbol{\theta}_{IC}} & = 
    \lfloor
    {}^C_I\hat{\mathbf{R}}
    {}^I_G\hat{\mathbf{R}}
    \left(
    {}^G\hat{\mathbf{p}}_f - {}^G\hat{\mathbf{p}}_I
    \right)
    \rfloor
    \\
    \frac{\partial {}^{C}\tilde{\mathbf{p}}_f}{\partial {}^C\tilde{\mathbf{p}}_I} & = 
    \mathbf{I}_3
    \\
    \frac{\partial {}^{C}\tilde{\mathbf{p}}_f}{\partial \tilde{t}_d} & = 
    \scalemath{0.9}{
    {}^C_I\hat{\mathbf{R}} {}^I_G\hat{\mathbf{R}}
    \left(
    \lfloor
    \left(
    {}^G\hat{\mathbf{p}}_f - {}^G\hat{\mathbf{p}}_I
    \right)
    \rfloor
    {}^G_I\hat{\mathbf{R}}
    {}^{I}\hat{\boldsymbol{\omega}}
    -
    {}^G\hat{\mathbf{v}}_I
    \right)
    }
    \\
    \label{eq:job tr}
    \frac{\partial {}^{C}\tilde{\mathbf{p}}_f}{\partial \tilde{t}_r} & = 
    \frac{m}{M} \frac{\partial {}^{C}\tilde{\mathbf{p}}_f}{\partial \tilde{t}_d} 
\end{align}
Note that when computing the Jacobians for $t_d$ and $t_r$, we are using the following linearization: 
\begin{align}
    {}^G_{I{(t)}}\mathbf{R} & \simeq 
    {}^G_{I{(\hat{t})}}\hat{\mathbf{R}}\exp(\delta \boldsymbol{\theta}_{I})
    \exp({}^I\hat{\boldsymbol{\omega}}\tilde{t}_d + \frac{m}{M}{}^I\hat{\boldsymbol{\omega}}\tilde{t}_r)
    \\
    {}^G\mathbf{p}_{I(t)} & \simeq {}^G\hat{\mathbf{p}}_{I(\hat{t})} + {}^G\tilde{\mathbf{p}}_{I} + {}^G\hat{\mathbf{v}}_I \tilde{t}_d + \frac{m}{M}{}^G\hat{\mathbf{v}}_I \tilde{t}_r
\end{align}
The Jacobians of ${}^C\mathbf{p}_f$ regarding to the feature state $\mathbf{x}_{f}$ is written as:
\begin{align}
    \frac{\partial {}^{C}\tilde{\mathbf{p}}_f}{\partial \tilde{\mathbf{x}}_{f}} & = 
    \frac{\partial {}^{C}\tilde{\mathbf{p}}_f}{\partial \delta {}^G\tilde{\mathbf{p}}_f} 
    =
    {}^C_I\hat{\mathbf{R}}{}^I_G\hat{\mathbf{R}}
\end{align}

\section{Appendix C: Observability Matrix} \label{apx:obs_matrix_M}

We show the detailed derivations for $\mathbf{M}_n$, $\mathbf{M}_b$, $\mathbf{M}_{in}$, $\mathbf{M}_{IC}$, $\mathbf{M}_{Cin}$ and $\mathbf{M}_f$.
The $\mathbf{M}_n$ is computed as:
\begin{align}
    \mathbf{M}_n  
    = &
    \mathbf{H}_{\mathbf{p}_f}
    {}^C_I\hat{\mathbf{R}}{}^{I_k}_G\hat{\mathbf{R}}
    \begin{bmatrix}
    \boldsymbol{\Gamma}_{1} & 
    \boldsymbol{\Gamma}_{2} & 
    \boldsymbol{\Gamma}_{3} 
    \end{bmatrix}
\end{align}
with: 
\begin{align}
\boldsymbol{\Gamma}_{1} & = 
\lfloor 
{}^G\hat{\mathbf{p}}_{f}-{}^G\hat{\mathbf{p}}_{I_1} - {}^G\hat{\mathbf{v}}_{I_1}\delta t_k + \frac{1}{2}{}^G\mathbf{g}\delta t^2_k
\rfloor {}^G_{I_1}\hat{\mathbf{R}} 
\notag
\\
\boldsymbol{\Gamma}_{2} & = - \mathbf{I}_3
\notag
\\
\boldsymbol{\Gamma}_{3} & = - \mathbf{I}_3 \delta t_k 
\notag
\end{align}
The $\mathbf{M}_{b}$ is computed as:
\begin{align}
    \mathbf{M}_b  
    = &
    \mathbf{H}_{\mathbf{p}_f}
    {}^C_I\hat{\mathbf{R}}{}^{I_k}_G\hat{\mathbf{R}}
    \begin{bmatrix}
    \boldsymbol{\Gamma}_{4} & 
    \boldsymbol{\Gamma}_{5} 
    \end{bmatrix}
\end{align}
with:
\begin{align}
\boldsymbol{\Gamma}_4 & = 
- \bigg(
\lfloor {}^G\hat{\mathbf{p}}_{f}-{}^G\hat{\mathbf{p}}_{I_k} \rfloor {}^G_{I_k}\hat{\mathbf{R}} 
\mathbf{J}_r\left( \Delta \hat{\boldsymbol{\theta}}_k \right)\delta t_k 
\notag
\\
& ~~~~
+ {}^G_{I_k}\hat{\mathbf{R}}\boldsymbol{\Xi}_4
\bigg)
{}^I_w\hat{\mathbf{R}}\hat{\mathbf{D}}_w
\notag
\\
\boldsymbol{\Gamma}_5  &= 
\bigg(
\lfloor {}^G\hat{\mathbf{p}}_{f}-{}^G\hat{\mathbf{p}}_{I_k} \rfloor {}^G_{I_k}\hat{\mathbf{R}} 
\mathbf{J}_r\left( \Delta \hat{\boldsymbol{\theta}}_k\right)
{}^I_w\hat{\mathbf{R}}\hat{\mathbf{D}}_w\hat{\mathbf{T}}_g \delta t_k  
\notag
\\
& ~~~~
+ 
{}^G_{I_k}\hat{\mathbf{R}}
\left(
\boldsymbol{\Xi}_4 {}^I_w\hat{\mathbf{R}}\hat{\mathbf{D}}_w\hat{\mathbf{T}}_g +
\boldsymbol{\Xi}_2
\right)
\bigg)
 {}^I_a\hat{\mathbf{R}}\hat{\mathbf{D}}_a 
\notag
\end{align}
The $\mathbf{M}_{in}$ can be computed as:
\begin{align}
\label{eq:M_in}
    \mathbf{M}_{in}  
    = &
    \mathbf{H}_{\mathbf{p}_f}
    {}^C_I\hat{\mathbf{R}}{}^{I_k}_G\hat{\mathbf{R}}
    \begin{bmatrix}
    \boldsymbol{\Gamma}_{6} & 
    \boldsymbol{\Gamma}_{7} & 
    \boldsymbol{\Gamma}_{8} & 
    \boldsymbol{\Gamma}_{9}
    \end{bmatrix}
\end{align}
with: 
\begin{align}
\boldsymbol{\Gamma}_6 & = 
\bigg(\lfloor {}^G\hat{\mathbf{p}}_{f} - {}^G\hat{\mathbf{p}}_{I_k} \rfloor
{}^G_{I_k}\hat{\mathbf{R}} 
\mathbf{J}_r\left( \Delta \hat{\boldsymbol{\theta}}_k\right)\delta t_k 
+ {}^G_{I_k}\hat{\mathbf{R}}\boldsymbol{\Xi}_4
\bigg) \mathbf{H}_{Dw}
\notag
\\
\boldsymbol{\Gamma}_7  &= 
- 
\bigg(
\lfloor {}^G\hat{\mathbf{p}}_{f}-{}^G\hat{\mathbf{p}}_{I_k} \rfloor {}^G_{I_k}\hat{\mathbf{R}} 
\mathbf{J}_r\left( \Delta \hat{\boldsymbol{\theta}}_k\right)
{}^I_w\hat{\mathbf{R}}\hat{\mathbf{D}}_w\hat{\mathbf{T}}_g \delta t_k  
\notag
\\
& ~~~~~
+ 
{}^G_{I_k}\hat{\mathbf{R}}
\left(
\boldsymbol{\Xi}_4 {}^I_w\hat{\mathbf{R}}\hat{\mathbf{D}}_w\hat{\mathbf{T}}_g +
\boldsymbol{\Xi}_2
\right)
\bigg)
{}^{I}_a\hat{\mathbf{R}}\mathbf{H}_{Da}
\notag
\\
\boldsymbol{\Gamma}_8  & = 
- 
\bigg(
\lfloor {}^G\hat{\mathbf{p}}_{f}-{}^G\hat{\mathbf{p}}_{I_k} \rfloor {}^G_{I_k}\hat{\mathbf{R}} 
\mathbf{J}_r\left( \Delta \hat{\boldsymbol{\theta}}_k \right)
{}^I_w\hat{\mathbf{R}}\hat{\mathbf{D}}_w\hat{\mathbf{T}}_g \delta t_k  
\notag
\\
&~~~~~
+ 
{}^G_{I_k}\hat{\mathbf{R}}
\big(
\boldsymbol{\Xi}_4 {}^I_w\hat{\mathbf{R}}\hat{\mathbf{D}}_w\hat{\mathbf{T}}_g
+
\boldsymbol{\Xi}_2
\big)
\bigg) 
\lfloor {}^{I_k}\hat{\mathbf{a}} \rfloor 
\notag
\\
\boldsymbol{\Gamma}_9  & = 
- \bigg(\lfloor {}^G\hat{\mathbf{p}}_{f}-{}^G\hat{\mathbf{p}}_{I_k} \rfloor {}^G_{I_k}\hat{\mathbf{R}} 
\mathbf{J}_r\left( \Delta \hat{\boldsymbol{\theta}}_k\right)\delta t_k 
+ {}^G_{I_k}\hat{\mathbf{R}}\boldsymbol{\Xi}_4
\bigg) \times 
\notag
\\
&~~~~~
{}^I_w\hat{\mathbf{R}}\hat{\mathbf{D}}_w \mathbf{H}_{Tg}
\notag
\end{align}
The $\mathbf{M}_{IC}$ can be computed as:
\begin{align}
\label{eq:M_IC}
    \mathbf{M}_{IC}  & = 
    \mathbf{H}_{\mathbf{p}_f}
    {}^C_I\hat{\mathbf{R}}{}^{I_k}_G\hat{\mathbf{R}}
    \begin{bmatrix}
    \boldsymbol{\Gamma}_{10} & 
    \boldsymbol{\Gamma}_{11} & 
    \boldsymbol{\Gamma}_{12} & 
    \boldsymbol{\Gamma}_{13} 
    \end{bmatrix}
    \\
    \boldsymbol{\Gamma}_{10} & = 
    \lfloor
    \left(
    {}^G\hat{\mathbf{p}}_f - {}^G\hat{\mathbf{p}}_{I_k}
    \right)
    \rfloor
    {}^G_{I_k}\hat{\mathbf{R}} {}^I_C\hat{\mathbf{R}}
    \\
    \boldsymbol{\Gamma}_{11} & = 
    {}^G_{I_k}\hat{\mathbf{R}} {}^I_C\hat{\mathbf{R}}
    \\
    \boldsymbol{\Gamma}_{12} & = 
    \lfloor
    \left(
    {}^G\hat{\mathbf{p}}_f - {}^G\hat{\mathbf{p}}_{I_k}
    \right)
    \rfloor
    {}^G_{I_k}\hat{\mathbf{R}}
    {}^{I_k}\hat{\boldsymbol{\omega}}
    -
    {}^G\hat{\mathbf{v}}_{I_k}
    \\
    \boldsymbol{\Gamma}_{13} & = \frac{m}{M} \boldsymbol{\Gamma}_{12}
\end{align}
The $\mathbf{M}_{Cin}$ and $\mathbf{M}_{f}$ can be written as:
\begin{align}
    & \mathbf{M}_{Cin}  = \mathbf{H}_{Cin}
    \\
    & \mathbf{M}_{f}  = \mathbf{H}_{\mathbf{p}_f}{}^{C}_I\hat{\mathbf{R}}{}^{I_{k}}_G\hat{\mathbf{R}}
\end{align}

\section{Appendix D: Proof of Lemma \ref{lem:obs}}
\label{sec:proof_of_lemma1}

For Eq. \eqref{eq:N}, we first verify $\mathcal{O}\mathbf{N} = \mathbf{0}$ as:
\begin{align}
    \Leftrightarrow & \mathcal{O}_k \mathbf{N}  = \mathbf{0}
    \notag
    \\
    \Leftrightarrow &
    \left(\boldsymbol{\Gamma}_1 {}^{I_1}_G\hat{\mathbf{R}}
    - \boldsymbol{\Gamma}_2 
    \lfloor 
    {}^G\hat{\mathbf{p}}_{I_1}
    \rfloor
    - \boldsymbol{\Gamma}_3 
    \lfloor 
    {}^G\hat{\mathbf{v}}_{I_1}
    \rfloor
    - \lfloor 
    {}^G\hat{\mathbf{p}}_f
    \rfloor
    \right)
    {}^G\mathbf{g} = 0
    \notag
    \\
    \Leftrightarrow
    &
    \left(
    \lfloor
    {}^G\hat{\mathbf{p}}_f 
    - {}^G\hat{\mathbf{p}}_{I_1}
    - {}^G\hat{\mathbf{v}}_{I_1} \delta t_k
    +
    \frac{1}{2}{}^G\mathbf{g}\delta t^2_k
    \rfloor \right) {}^G\mathbf{g}
    \notag
    \\
    &
    + \left(\lfloor 
    {}^G\hat{\mathbf{p}}_{I_1}
    \rfloor
    +
    \lfloor 
    {}^G\hat{\mathbf{v}}_{I_1}
    \rfloor \delta t_k
    -\lfloor 
    {}^G\hat{\mathbf{p}}_f
    \rfloor
    \right) {}^G\mathbf{g}
     = 
    \mathbf{0}
    \notag
\end{align}

Hence, we can conclude that the observability matrix $\mathcal{O}$ has at least 4 unobservable directions. 

In the following, we will try to show that there are only 4 unobservable directions under general situations. 
With abusing of notion, we can rewrite the observability matrix as: 
\begin{align}
    \mathcal{O} & =
    \begin{bmatrix}
    \mathcal{O}^{\top}_{1} &
    \dots &
    \mathcal{O}^{\top}_{k}
    \end{bmatrix}^{\top}
    \notag
    \\
    &
    =
    \begin{bmatrix}
    \mathbf{M}_{n,1} & \mathbf{M}_{b,1} & \mathbf{M}_{in,1} & 
    \mathbf{M}_{IC,1} & \mathbf{M}_{Cin,1} & \mathbf{M}_{f,1} \\
    \vdots & \vdots & \vdots & \vdots & \vdots & \vdots \\
    \mathbf{M}_{n,k} & \mathbf{M}_{b,k} & \mathbf{M}_{in,k} & 
    \mathbf{M}_{IC,k} & \mathbf{M}_{Cin,k} & \mathbf{M}_{f,k}
    \end{bmatrix}
    \notag
\end{align}
By adjusting the column of $\mathbf{M}_{f}$ in $\mathcal{O}$, we can get $\mathcal{O}'$: 
\begin{align}
    \mathcal{O}' & \triangleq 
    \left[\begin{array}{c|c|c|c } 
    \mathcal{O}_{I}  & \mathcal{O}_{in}  & \mathcal{O}_{IC}  & \mathcal{O}_{f}
    \end{array}\right]
    \notag
    \\
    & \triangleq  
    \left[\begin{array}{ccc|c|c|c } 
	\mathbf{M}_{n,1} & \mathbf{M}_{b,1}  & \mathbf{M}_{f,1} & \mathbf{M}_{in,1} & 
    \mathbf{M}_{IC,1} & \mathbf{M}_{Cin,1} \\
    \vdots & \vdots & \vdots & \vdots & \vdots & \vdots \\
    \mathbf{M}_{n,k} & \mathbf{M}_{b,k}  & \mathbf{M}_{f,k} & \mathbf{M}_{in,k} & 
    \mathbf{M}_{IC,k} & \mathbf{M}_{Cin,k}  
\end{array}\right] 
    \notag
\end{align}
It is clear that the column rank of $\mathcal{O}'$ is the same as $\mathcal{O}$. 

$\mathcal{O}_I$ corresponds to the IMU navigation state, IMU bias state and feature state. $\mathcal{O}_I$ is equivalent to the standard VINS observability matrix in \citep{Hesch2013TRO} and it has null space of 4DoF. 

$\mathcal{O}_{in}$ corresponds to the IMU intrinsic parameters. By checking the Eq. \eqref{eq:M_in}, it is clearly that $\mathcal{O}_{in}$ will be affected by time-varying ${}^w\boldsymbol{\omega}(t)$ (in $\mathbf{H}_{Dw}$), ${}^a\mathbf{a}(t)$ (in $\mathbf{H}_{Da}$) and ${}^I\mathbf{a}(t)$ (in $\lfloor {}^I\mathbf{a} \rfloor$ and $\mathbf{H}_{Tg}$). Under generate motions, $\mathcal{O}_{in}$ can be of full column rank. 

$\mathcal{O}_{IC}$ corresponds to the IMU-camera spatial and temporal calibration parameters. By checking the Eq. \eqref{eq:M_IC}, we can see that the $\mathcal{O}_{IC}$ is affected by the time-varying IMU pose $\{{}^{I}_G\mathbf{R}(t), {}^G\mathbf{p}_{I}(t)\}$ and the IMU kinematics $\{{}^{I}\boldsymbol{\omega}(t), {}^I\mathbf{v}(t)\}$. In addition, $\Gamma_{13}$ in $\mathbf{M}_{IC}$ are also affected by the point feature measurement through $\frac{m}{M}$, of which $m$ will change under general measurement assumptions. Hence, $\mathbf{O}_{IC}$ can be of full column rank with random motions. 

$\mathcal{O}_{Cin}$ corresponds to the camera intrinsic parameters. It is clear that $\mathcal{O}_{Cin}$ is only affected the environmental structure and is of full column rank as long as $\{u_n,v_n\}$ varies in different image tracks. 

Since $\mathbf{O}_{in}$, $\mathbf{O}_{IC}$ and $\mathbf{O}_{Cin}$ are affected by different system parameters, and under general motion conditions, $\left[\mathbf{O}_{in} ~~ \mathbf{O}_{IC} ~~ \mathbf{O}_{Cin}\right]$ is also of full column rank. Therefore, the column rank of $\mathcal{O}'$ is determined by $\mathbf{O}_I$. Since $\mathbf{O}_{I}$ has 4 DoF null space, the $\mathcal{O}'$ also has 4 DoF. Hence, we can conclude that $\mathcal{O}$ only has 4 DoF null space. 
We also verify this conclusion through simulation results shown in Fig~\ref{fig:sim_full}.

\section{Appendix E: Null Space Proofs}
\label{sec:proof}

In this section, we provide the verification for the null spaces listed in this paper.

\subsection{Verification of Lemma \ref{lem:dw}}

For $\mathbf{N}_{w1}$, we have:
\begin{align}
    \Leftrightarrow & \mathcal{O}_k \mathbf{N}_{w1} = \mathbf{0}
    \notag
    \\
    \Leftrightarrow & 
    \boldsymbol{\Gamma}_4 \hat{\mathbf{D}}^{-1}_{w} 
    {}^I_w\hat{\mathbf{R}}^{\top} \mathbf{e}_1 {}^w\omega_1
    +
    \boldsymbol{\Gamma}_6 \times
    \begin{bmatrix}
    1~ 0 ~0 ~ 0 ~ 0 ~ 0
    \end{bmatrix}^{\top} = \mathbf{0}
    \notag
    \\
    \Leftrightarrow &
    (\lfloor {}^G\hat{\mathbf{p}}_{f} - {}^G\hat{\mathbf{p}}_{I_k} \rfloor
{}^G_{I_k}\hat{\mathbf{R}} 
\mathbf{J}_r\left( \Delta \boldsymbol{\theta}_k\right)\delta t_k 
+ {}^G_{I_k}\hat{\mathbf{R}}\boldsymbol{\Xi}_4)\times
\notag
\\ &
\left(
-\mathbf{e}_1{}^w\omega_1 + \mathbf{e}_1{}^w\omega_1
\right) = \mathbf{0}
\notag
\end{align}
For $\mathbf{N}_{w2}$, we have: 
\begin{align}
    \Leftrightarrow & \mathcal{O}_k \mathbf{N}_{w2} = \mathbf{0}
    \notag
    \\
    \Leftrightarrow & 
    \boldsymbol{\Gamma}_4 \hat{\mathbf{D}}^{-1}_{w} 
    {}^I_w\hat{\mathbf{R}}^{\top} 
    \left[ \mathbf{e}_1 ~ \mathbf{e}_2 \right]{}^w\omega_2
    +
    \boldsymbol{\Gamma}_6 \times
    \begin{bmatrix}
    0~ 1 ~0 ~ 0 ~ 0 ~ 0 \\
    0~ 0 ~1 ~ 0 ~ 0 ~ 0
    \end{bmatrix}^{\top} = \mathbf{0}
    \notag
    \\
    \Leftrightarrow &
    (\lfloor {}^G\hat{\mathbf{p}}_{f} - {}^G\hat{\mathbf{p}}_{I_k} \rfloor
{}^G_{I_k}\hat{\mathbf{R}} 
\mathbf{J}_r\left( \Delta \boldsymbol{\theta}_k\right)\delta t_k 
+ {}^G_{I_k}\hat{\mathbf{R}}\boldsymbol{\Xi}_4)\times
\notag
\\ &
\left(
-\left[ \mathbf{e}_1 ~ \mathbf{e}_2 \right]{}^w\omega_2 + \left[ \mathbf{e}_1 ~ \mathbf{e}_2 \right]{}^w\omega_2
\right) = \mathbf{0}
\notag
\end{align}
For $\mathbf{N}_{w3}$, we have: 
\begin{align}
    \Leftrightarrow & \mathcal{O}_k \mathbf{N}_{w3} = \mathbf{0}
    \notag
    \\
    \Leftrightarrow & 
    \boldsymbol{\Gamma}_4 \hat{\mathbf{D}}^{-1}_{w} 
    {}^I_w\hat{\mathbf{R}}^{\top} 
    \mathbf{I}_3{}^w\omega_3
    +
    \boldsymbol{\Gamma}_6 \times
    \begin{bmatrix}
    \mathbf{0}_3 ~~ \mathbf{I}_3
    \end{bmatrix}^{\top} = \mathbf{0}
    \notag
    \\
    \Leftrightarrow &
    (\lfloor {}^G\hat{\mathbf{p}}_{f} - {}^G\hat{\mathbf{p}}_{I_k} \rfloor
{}^G_{I_k}\hat{\mathbf{R}} 
\mathbf{J}_r\left( \Delta \boldsymbol{\theta}_k\right)\delta t_k 
+ {}^G_{I_k}\hat{\mathbf{R}}\boldsymbol{\Xi}_4)\times
\notag
\\ &
\left(
-\mathbf{I}_3{}^w\omega_3 + \mathbf{I}_3{}^w\omega_3
\right) = \mathbf{0}
\notag
\end{align}

\subsection{Verification of Lemma \ref{lem:da}}

We first verify the first column of $\mathbf{N}_{a1}$: 
\begin{align}
    \Leftrightarrow & 
    \mathcal{O}_k \mathbf{N}_{a1}\mathbf{e}_1 = \mathbf{0}
    \notag
    \\
    \Leftrightarrow & 
    \boldsymbol{\Gamma}_5 \hat{\mathbf{D}}^{-1}_a \mathbf{e}_1 {}^aa_1 + 
    \boldsymbol{\Gamma}_7 \times 
    [1~0~0~0~0~0]^{\top} = \mathbf{0}
    \notag
    \\
    \Leftrightarrow & 
    \bigg(
\lfloor {}^G\hat{\mathbf{p}}_{f}-{}^G\hat{\mathbf{p}}_{I_k} \rfloor {}^G_{I_k}\hat{\mathbf{R}} 
\mathbf{J}_r\left( \Delta \boldsymbol{\theta}_k\right)
{}^I_w\hat{\mathbf{R}}\hat{\mathbf{D}}_w\hat{\mathbf{T}}_g \delta t_k  
\notag
\\
& ~~~~
+ 
{}^G_{I_k}\hat{\mathbf{R}}
\left(
\boldsymbol{\Xi}_4 {}^I_w\hat{\mathbf{R}}\hat{\mathbf{D}}_w\hat{\mathbf{T}}_g +
\boldsymbol{\Xi}_2
\right)
\bigg)
 {}^I_a\hat{\mathbf{R}} \times
 \notag
 \\
 & 
 \left(
 \mathbf{e}_1 {}^aa_1 - \mathbf{e}_1 {}^aa_1
 \right) = \mathbf{0}
 \notag
\end{align}
We then verify the second column of $\mathbf{N}_{a1}$: 
\begin{align}
\Leftrightarrow &
    \mathcal{O}_{k}\mathbf{N}_{a1}\mathbf{e}_2 = \mathbf{0}
    \notag
    \\
\Leftrightarrow &    
    \boldsymbol{\Gamma}_5 \hat{\mathbf{D}}^{-1}_a \mathbf{e}_2 d_{a1}{}^aa_1 
    + \boldsymbol{\Gamma}_7 \times 
    [0~d_{a3}~-d_{a2}~d_{a5}~-d_{a4}~0]^{\top} 
    \notag
    \\
    & - \boldsymbol{\Gamma}_8 {}^I_a\hat{\mathbf{R}}\mathbf{e}_3 =\mathbf{0}
    \notag
    \\
    \Leftrightarrow & 
    \begin{bmatrix}
    0 \\
    d_{a1}{}^aa_1 \\
    0
    \end{bmatrix} -
    \begin{bmatrix}
    d_{a3}{}^aa_2+d_{a5}{}^aa_3 \\
    -d_{a2}{}^aa_2 - d_{a4}{}^aa_3 \\
    0
    \end{bmatrix}
    \notag
    \\
    &
    + 
    \begin{bmatrix}
    d_{a3}{}^aa_2+d_{a5}{}^aa_3 \\
    -(
    d_{a1}{}^aa_1 + d_{a2}{}^aa_2 + d_{a4}{}^aa_3 
    )\\
    0
    \end{bmatrix} = \mathbf{0}
    \notag
\end{align}
The third column of $\mathbf{N}_{a1}$ can be verified as: 
\begin{align}
    \Leftrightarrow & 
    \mathcal{O}_k \mathbf{N}_{a1}\mathbf{e}_3 = \mathbf{0} 
    \notag
    \\
    \Leftrightarrow & 
    \boldsymbol{\Gamma}_5 \hat{\mathbf{D}}^{-1}_a \mathbf{e}_3 d_{a1}d_{a3}{}^aa_1 + \boldsymbol{\Gamma}_8 {}^I_a\mathbf{R}(\mathbf{e}_1d_{a2}+\mathbf{e}_2d_{a3}) +
    \notag
    \\
    &\boldsymbol{\Gamma}_7 \times
    [0~0~0~d_{a6}d_{a3}~~-d_{a2}d_{a6}~~d_{a2}d_{a5}-d_{a4}d_{a3}] = \mathbf{0}
    \notag
    \\
    \Leftrightarrow & 
    \begin{bmatrix}
    0 \\
    0 \\
    d_{a1}d_{a3}{}^aa_1
    \end{bmatrix}
    - 
    \begin{bmatrix}
    d_{a6}d_{a3}{}^aa_3 \\
    - d_{a2}d_{a6}{}^aa_3\\
    d_{a2}d_{a5}{}^aa_3 - d_{a4}d_{a3}{}^aa_3
    \end{bmatrix}
    \notag
    \\
    & 
    -
    \begin{bmatrix}
    -d_{a6}d_{a3}{}^aa_3 \\
    d_{a2}d_{a6}{}^aa_3 \\
    d_{a1}d_{a3}{}^aa_1 - d_{a5}d_{a2}{}^aa_3 + d_{a4}d_{a3}{}^aa_3
    \end{bmatrix}
    =\mathbf{0} \notag
\end{align}
The first two columns of $\mathbf{N}_{a2}$ can be verified as:
\begin{align}
    \Leftrightarrow & 
    \mathcal{O}_k \mathbf{N}_{a2}[\mathbf{e}_1 ~ \mathbf{e}_2 ]= \mathbf{0} 
    \notag
    \\
    \Leftrightarrow & 
    \boldsymbol{\Gamma}_5 \hat{\mathbf{D}}^{-1}_a [\mathbf{e}_1 ~ \mathbf{e}_2 ]{}^aa_2 + \boldsymbol{\Gamma}_7 \times
    \begin{bmatrix}
    0~ 1 ~0 ~ 0 ~ 0 ~ 0 \\
    0~ 0 ~1 ~ 0 ~ 0 ~ 0
    \end{bmatrix}^{\top} 
    = \mathbf{0}
    \notag
    \\
    \Leftrightarrow & 
    \bigg(
\lfloor {}^G\hat{\mathbf{p}}_{f}-{}^G\hat{\mathbf{p}}_{I_k} \rfloor {}^G_{I_k}\hat{\mathbf{R}} 
\mathbf{J}_r\left( \Delta \boldsymbol{\theta}_k\right)
{}^I_w\hat{\mathbf{R}}\hat{\mathbf{D}}_w\hat{\mathbf{T}}_g \delta t_k  
\notag
\\
& ~~~~
+ 
{}^G_{I_k}\hat{\mathbf{R}}
\left(
\boldsymbol{\Xi}_4 {}^I_w\hat{\mathbf{R}}\hat{\mathbf{D}}_w\hat{\mathbf{T}}_g +
\boldsymbol{\Xi}_2
\right)
\bigg)
 {}^I_a\hat{\mathbf{R}} \times
 \notag
 \\
 & 
 \left(
 [\mathbf{e}_1 ~ \mathbf{e}_2 ] {}^aa_2 - [\mathbf{e}_1 ~ \mathbf{e}_2 ] {}^aa_2
 \right) = \mathbf{0}
 \notag
\end{align}
The third column of $\mathbf{N}_{a3}$ can be verified as:
\begin{align}
    \Leftrightarrow & 
    \mathcal{O}_k \mathbf{N}_{a2}\mathbf{e}_1 d_{a3}{}^aa_2= \mathbf{0} 
    \notag
    \\
    \Leftrightarrow & 
    \boldsymbol{\Gamma}_5 \hat{\mathbf{D}}^{-1}_a \mathbf{e}_1 d_{a3}{}^aa_2 + \boldsymbol{\Gamma}_7 \times
    [0~0~0~0~d_{a6}~-d_{a5}]^{\top} 
    \notag
    \\
    & -\boldsymbol{\Gamma}_8 {}^I_a\mathbf{R}\mathbf{e}_1
    = \mathbf{0}
    \notag
    \\
    \Leftrightarrow & 
    \scalemath{0.85}{
    \begin{bmatrix}
    0 \\
    0\\
    d_{a3}{}^aa_2
    \end{bmatrix}
    -
    \begin{bmatrix}
    0 \\
    d_{a6}{}^aa_3 \\
    -d_{a5}{}^aa_3
    \end{bmatrix}
    +
    \begin{bmatrix}
    0 \\
    d_{a6}{}^aa_3 \\
    -(
    d_{a3}{}^aa_2+ d_{a5}{}^aa_3
    )
    \end{bmatrix}} = \mathbf{0}
    \notag
\end{align}
The verification of $\mathbf{N}_{a3}$ can be described as:
\begin{align}
    \Leftrightarrow & 
    \mathcal{O}_k \mathbf{N}_{a3}= \mathbf{0} 
    \notag
    \\
    \Leftrightarrow & 
    \boldsymbol{\Gamma}_5 \hat{\mathbf{D}}^{-1}_a \mathbf{I}_3{}^aa_3 + \boldsymbol{\Gamma}_7 \times
    [\mathbf{0}_3 ~~ \mathbf{I}_3]^{\top} = \mathbf{0}
    \notag
    \\
    \Leftrightarrow & 
    \bigg(
\lfloor {}^G\hat{\mathbf{p}}_{f}-{}^G\hat{\mathbf{p}}_{I_k} \rfloor {}^G_{I_k}\hat{\mathbf{R}} 
\mathbf{J}_r\left( \Delta \boldsymbol{\theta}_k\right)
{}^I_w\hat{\mathbf{R}}\hat{\mathbf{D}}_w\hat{\mathbf{T}}_g \delta t_k  
\notag
\\
& ~~~~
+ 
{}^G_{I_k}\hat{\mathbf{R}}
\left(
\boldsymbol{\Xi}_4 {}^I_w\hat{\mathbf{R}}\hat{\mathbf{D}}_w\hat{\mathbf{T}}_g +
\boldsymbol{\Xi}_2
\right)
\bigg)
 {}^I_a\hat{\mathbf{R}} \times
 \notag
 \\
 & 
 \left(
 \mathbf{I}_3 {}^aa_3 - \mathbf{I}_3 {}^aa_3
 \right) = \mathbf{0}
 \notag
\end{align}
\subsection{Verification of Lemma \ref{lem:tg}}

The $\mathbf{N}_{g1}$ can be verified as: 
\begin{align}
    \Leftrightarrow & 
    \mathcal{O}_k \mathbf{N}_{g1}  = \mathbf{0} 
    \notag
    \\
    \Leftrightarrow & 
    \boldsymbol{\Gamma}_4 \mathbf{I}_3 {}^Ia_1 - 
    \boldsymbol{\Gamma}_9 \times
    [\mathbf{I}_3 ~\mathbf{0}_3 ~ \mathbf{0}_3]^{\top}
    = \mathbf{0}
    \notag
    \\
    \Leftrightarrow & 
    - (\lfloor {}^G\hat{\mathbf{p}}_{f}-{}^G\hat{\mathbf{p}}_{I_k} \rfloor {}^G_{I_k}\hat{\mathbf{R}} 
\mathbf{J}_r\left( \Delta \boldsymbol{\theta}_k\right)\delta t_k 
+ {}^G_{I_k}\hat{\mathbf{R}}\boldsymbol{\Xi}_4) \times 
\notag
\\
&~~~~~
{}^I_w\hat{\mathbf{R}}\hat{\mathbf{D}}_w 
\left(
\mathbf{I}_3 {}^Ia_1 - \mathbf{I}_3 {}^Ia_1
\right)
=\mathbf{0}
\notag
\end{align}
The $\mathbf{N}_{g2}$ can be verified as: 
\begin{align}
    \Leftrightarrow & 
    \mathcal{O}_k \mathbf{N}_{g2}  = \mathbf{0} 
    \notag
    \\
    \Leftrightarrow & 
    \boldsymbol{\Gamma}_4 \mathbf{I}_3 {}^Ia_2 - 
    \boldsymbol{\Gamma}_9 \times
    [\mathbf{0}_3 ~\mathbf{I}_3 ~ \mathbf{0}_3]^{\top}
    = \mathbf{0}
    \notag
    \\
    \Leftrightarrow & 
    - (\lfloor {}^G\hat{\mathbf{p}}_{f}-{}^G\hat{\mathbf{p}}_{I_k} \rfloor {}^G_{I_k}\hat{\mathbf{R}} 
\mathbf{J}_r\left( \Delta \boldsymbol{\theta}_k\right)\delta t_k 
+ {}^G_{I_k}\hat{\mathbf{R}}\boldsymbol{\Xi}_4) \times 
\notag
\\
&~~~~~
{}^I_w\hat{\mathbf{R}}\hat{\mathbf{D}}_w 
\left(
\mathbf{I}_3 {}^Ia_2 - \mathbf{I}_3 {}^Ia_2
\right)
=\mathbf{0}
\notag
\end{align}
The $\mathbf{N}_{g3}$ can be verified as: 
\begin{align}
    \Leftrightarrow & 
    \mathcal{O}_k \mathbf{N}_{g3}  = \mathbf{0} 
    \notag
    \\
    \Leftrightarrow & 
    \boldsymbol{\Gamma}_4 \mathbf{I}_3 {}^Ia_3 - 
    \boldsymbol{\Gamma}_9 \times
    [\mathbf{0}_3 ~\mathbf{0}_3 ~ \mathbf{I}_3]^{\top}
    = \mathbf{0}
    \notag
    \\
    \Leftrightarrow & 
    - (\lfloor {}^G\hat{\mathbf{p}}_{f}-{}^G\hat{\mathbf{p}}_{I_k} \rfloor {}^G_{I_k}\hat{\mathbf{R}} 
\mathbf{J}_r\left( \Delta \boldsymbol{\theta}_k\right)\delta t_k 
+ {}^G_{I_k}\hat{\mathbf{R}}\boldsymbol{\Xi}_4) \times 
\notag
\\
&~~~~~
{}^I_w\hat{\mathbf{R}}\hat{\mathbf{D}}_w 
\left(
\mathbf{I}_3 {}^Ia_3 - \mathbf{I}_3 {}^Ia_3
\right)
=\mathbf{0}
\notag
\end{align}

\subsection{Verification of Lemma \ref{lem:cam-intrinsics}}

Note that: 
\begin{align}
    & \mathbf{M}_{Cin} \times [f_u~f_v~0~0~2k_1~4k_2~p_1~p_2]^{\top}  
    \notag
    \\
    &
    =
    \scalemath{0.85}{
    \begin{bmatrix}
    f_u\left(u_d+2k_1u_nr^2 + 4k_2u_nr^4+2p_1u_nv_n + p_2(r^2+2u^2_n)\right) \\
    f_v\left(v_d+2k_1v_nr^2 + 4k_2v_nr^4+ p_1(r^2+2u^2_n)+2p_2u_nv_n \right)
    \end{bmatrix}
    }
    \notag
    \\
    & =
    \scalemath{0.85}{
    \begin{bmatrix}
    f_u\left(u_n+3k_1u_nr^2 + 5k_2u_nr^4+4p_1u_nv_n + 2p_2(r^2+2u^2_n)\right) \\
    f_v\left(v_n+3k_1v_nr^2 + 5k_2v_nr^4+ 2p_1(r^2+2u^2_n)+4p_2u_nv_n \right)
    \end{bmatrix}
    }
    \notag
\end{align}
At the same time, with one-axis rotation assumption, we have: 
\begin{align}
    \mathbf{M}_f {}^G\mathbf{k} & =
    \mathbf{H}_{\mathbf{p}_f}{}^{C}_I\hat{\mathbf{R}}{}^{I_{k}}_G\hat{\mathbf{R}} \cdot {}^G_{I_1}\hat{\mathbf{R}} {}^I_C\hat{\mathbf{R}} {}^C\mathbf{k} {}^C{z}_f
    \notag
    \\
    & = \mathbf{H}_{\mathbf{p}_f}\mathbf{e}_3 {}^Cz_f
    \notag
    \\
    & = 
    \frac{\partial \tilde{\mathbf{z}}_C}{\partial \tilde{\mathbf{z}}_n} 
    \begin{bmatrix}
    1 & 0 & -u_n \\
    0 & 1 & -v_n
    \end{bmatrix}
    \mathbf{e}_3
    \notag
    \\
    & =
    -\frac{\partial \tilde{\mathbf{z}}_C}{\partial \tilde{\mathbf{z}}_n}
    \begin{bmatrix}
    u_n \\
    v_n
    \end{bmatrix}
    \notag
\end{align}
Note that ${}^{I_{k}}_G\hat{\mathbf{R}}={}^{I_{k}}_{I_1}\hat{\mathbf{R}}{}^{I_{1}}_G\hat{\mathbf{R}}$ and ${}^{I_{k}}_{I_1}\hat{\mathbf{R}}{}^I\mathbf{k}={}^I\mathbf{k}$ (due to one-axis rotation). 
We can easily verify that: 
\begin{align}
       & \frac{\partial \tilde{\mathbf{z}}_C}{\partial \tilde{\mathbf{z}}_n}
    \begin{bmatrix}
    u_n \\
    v_n
    \end{bmatrix} =
    \notag
    \\
    & 
    \scalemath{0.85}{
    \begin{bmatrix}
    f_u\left(u_n+3k_1u_nr^2 + 5k_2u_nr^4+4p_1u_nv_n + 2p_2(r^2+2u^2_n)\right) \\
    f_v\left(v_n+3k_1v_nr^2 + 5k_2v_nr^4+ 2p_1(r^2+2u^2_n)+4p_2u_nv_n \right)
    \end{bmatrix}
    }
    \notag
\end{align}
Therefore, the verification of $\mathbf{N}_{Cin}$ can be written as:
\begin{align}
    \Leftrightarrow & 
    \mathcal{O}_k \mathbf{N}_{Cin} = \mathbf{0} 
    \notag
    \\
    \Leftrightarrow & 
    \mathbf{M}_{Cin} \times [f_u~f_v~0~0~2k_1~4k_2~p_1~p_2]^{\top} 
     + \mathbf{M}_{f}{}^G\mathbf{k} = \mathbf{0}
     \notag
\end{align}

\section{Appendix F: Interpolation Jacobians}
\label{apd:inter jacob}

We perturb ${}^G_{I_{ci-1}}\mathbf{R}$ and ${}^G_{I_{ci}}\mathbf{R}$ as:
\begin{align}
    {}^G_{I_{ci-1}}\mathbf{R} &= {}^G_{I_{ci-1}}\hat{\mathbf{R}} 
    \exp 
    \left(
    \delta \boldsymbol{\theta}_{I_{ci-1}}    
    \right)
    \\
    {}^G_{I_{ci}}\mathbf{R} &= {}^G_{I_{ci}}\hat{\mathbf{R}} 
    \exp 
    \left(
    \delta \boldsymbol{\theta}_{I_{ci}}    
    \right)
\end{align}
By representing $\boldsymbol{\theta}_{i-1,i} = \log 
\left(
{}^G_{I_{ci-1}}\mathbf{R}^{\top}{}^G_{I_{ci}}\mathbf{R}
\right)$, we have the linearization for the interpolation as:
\begin{align}
    \delta \boldsymbol{\theta}_{I(t)} & \simeq
    \scalemath{0.95}{
    \left(
    \exp (-\hat{\lambda}\hat{\boldsymbol{\theta}}_{i-1,i}) - 
    \mathbf{J}_r(\hat{\lambda}\hat{\boldsymbol{\theta}}_{i-1,i})
    \mathbf{J}^{-1}_l(\hat{\boldsymbol{\theta}}_{i-1,i})
    \right)\delta \boldsymbol{\theta}_{I_{ci-1}}  
    }
    \notag
    \\
    &
    + \mathbf{J}_r(\hat{\lambda}\hat{\boldsymbol{\theta}}_{i-1,i})
    \mathbf{J}^{-1}_r(\hat{\boldsymbol{\theta}}_{i-1,i})
    \delta \boldsymbol{\theta}_{I_{ci}}  
    +
    \frac{m}{M}
    \boldsymbol{\omega}_{i-1,i} \tilde{t}_r 
    \notag
    \\
    {}^G\tilde{\mathbf{p}}_{I(t)} & \simeq 
    (1-\hat{\lambda}){}^G\tilde{\mathbf{p}}_{I_{ci-1}} + 
    \hat{\lambda}{}^G\tilde{\mathbf{p}}_{I_{ci}}
    + 
    \scalemath{0.95}{
    \frac{m}{M}
    \mathbf{v}_{i-1,i}
    \tilde{t}_r 
    }
    \notag
\end{align}
where: 
\begin{align}
    \hat{\boldsymbol{\omega}}_{i-1,i} & = 
    \frac{\hat{\boldsymbol{\theta}}_{i-1,i}}{t_{ci}-t_{ci-1}}
    \\
    \hat{\mathbf{v}}_{i-1,i} & =
    \frac{{}^G\hat{\mathbf{p}}_{I_{ci}} - {}^G\hat{\mathbf{p}}_{I_{ci-1}}}{t_{ci}-t_{ci-1}}
\end{align}

\section{Appendix G: Degenerate Motion Simulation Results}
\label{sec:more_sim_results}

The complete calibration plots for IMU/camera intrinsic and IMU-camera spatial calibration under one-axis rotation, constant $a_x$ acceleration and planar motion are shown in Figure \ref{fig:sim_1axis_apd}, \ref{fig:sim_ax_apd} and \ref{fig:sim_planar_apd}. 
All the temporal calibration results for IMU-camera time offset and rolling shutter readout time on the four simulated trajectories are shown in Figure \ref{fig:sim_time}. 

\begin{figure*}
\centering
\begin{subfigure}{.245\textwidth}
\includegraphics[trim=0 9mm 0mm 0,clip,width=\linewidth]{figures/sim_new/dw_1_3_1axis}
\end{subfigure}
\begin{subfigure}{.245\textwidth}
\includegraphics[trim=0 9mm 0mm 0,clip,width=\linewidth]{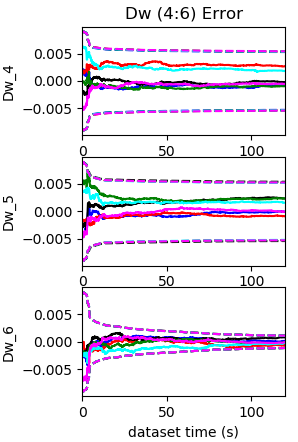}
\end{subfigure}
\begin{subfigure}{.245\textwidth}
\includegraphics[trim=0 9mm 0mm 0,clip,width=\linewidth]{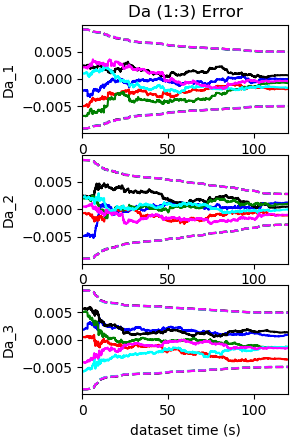}
\end{subfigure}
\begin{subfigure}{.245\textwidth}
\includegraphics[trim=0 9mm 0mm 0,clip,width=\linewidth]{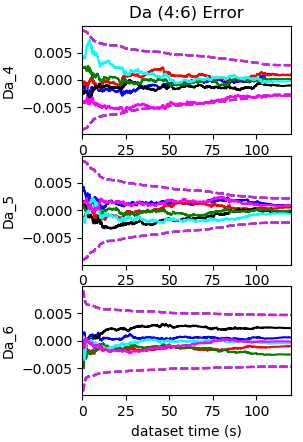}
\end{subfigure}
\begin{subfigure}{.245\textwidth}
\includegraphics[trim=0 9mm 0mm 0,clip,width=\linewidth]{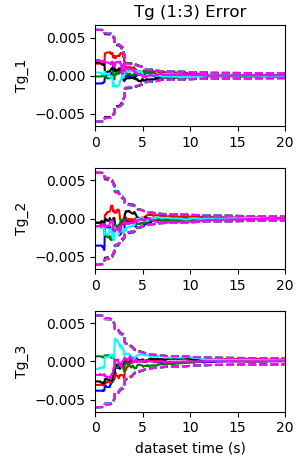}
\end{subfigure}
\begin{subfigure}{.245\textwidth}
\includegraphics[trim=0 9mm 0mm 0,clip,width=\linewidth]{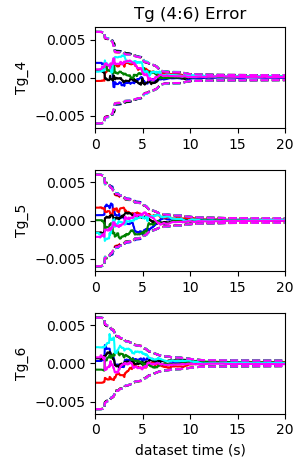}
\end{subfigure}
\begin{subfigure}{.245\textwidth}
\includegraphics[trim=0 9mm 0mm 0,clip,width=\linewidth]{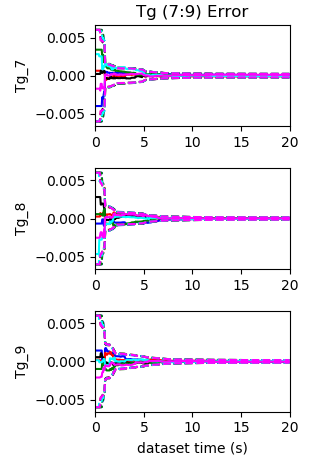}
\end{subfigure}
\begin{subfigure}{.245\textwidth}
\includegraphics[trim=0 9mm 0mm 0,clip,width=\linewidth]{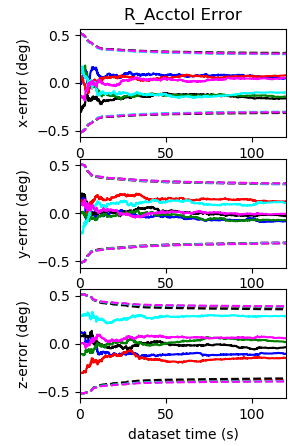}
\end{subfigure}
\begin{subfigure}{.245\textwidth}
\includegraphics[trim=0 0 0mm 0,clip,width=\linewidth]{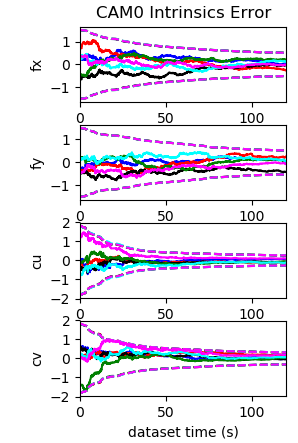}
\end{subfigure}
\begin{subfigure}{.245\textwidth}
\includegraphics[trim=0 0 0mm 0,clip,width=\linewidth]{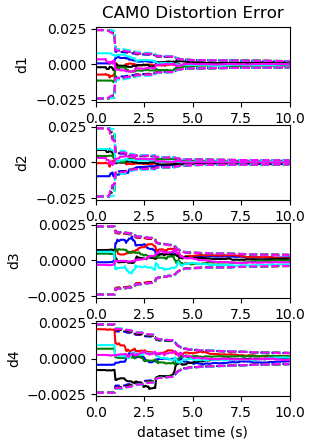}
\end{subfigure}
\begin{subfigure}{.245\textwidth}
\includegraphics[trim=0 0 0mm 0,clip,width=\linewidth]{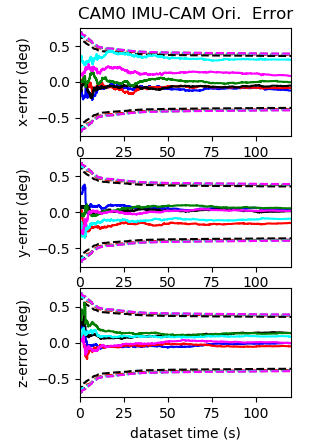}
\end{subfigure}
\begin{subfigure}{.245\textwidth}
\includegraphics[trim=0 0 0mm 0,clip,width=\linewidth]{figures/sim_new/pos_I_C_1axis}
\end{subfigure}
\caption{
Calibration results for the proposed system evaluated on \textit{tum\_room} with one-axis rotation using \textit{imu22} and \textit{radtan}. 
3 sigma bounds (dotted lines) and estimation errors (solid lines) for six different runs (different colors) with different realization of the measurement noise and initial perturbations.
Note that the estimation errors and 3 $\sigma$ bounds for $d_{w1}$, $d_{w2}$, $d_{w3}$ and the IMU-CAM position calibration along the rotation axis can not converge. 
}
\label{fig:sim_1axis_apd}
\vspace*{-6pt}
\end{figure*}

\begin{figure*}
\centering
\begin{subfigure}{.245\textwidth}
\includegraphics[trim=0 8mm 0mm 0,clip,width=\linewidth]{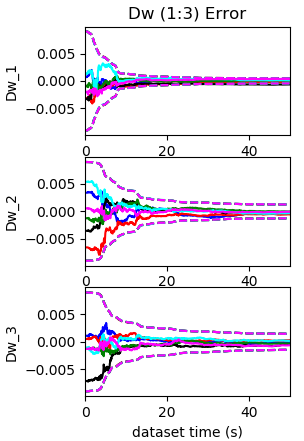}
\end{subfigure}
\begin{subfigure}{.245\textwidth}
\includegraphics[trim=0 8mm 0mm 0,clip,width=\linewidth]{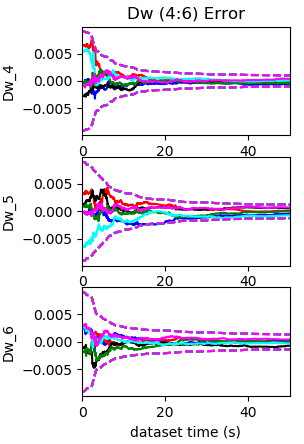}
\end{subfigure}
\begin{subfigure}{.245\textwidth}
\includegraphics[trim=0 8mm 0mm 0,clip,width=\linewidth]{figures/sim_new/da_1_3_ax}
\end{subfigure}
\begin{subfigure}{.245\textwidth}
\includegraphics[trim=0 8mm 0mm 0,clip,width=\linewidth]{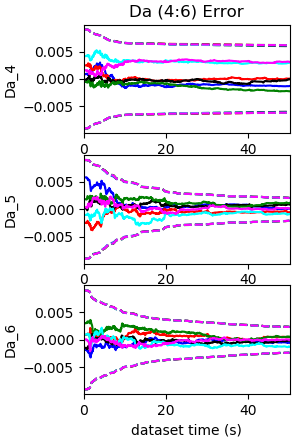}
\end{subfigure}
\begin{subfigure}{.245\textwidth}
\includegraphics[trim=0 8mm 0mm 0,clip,width=\linewidth]{figures/sim_new/tg_1_3_ax}
\end{subfigure}
\begin{subfigure}{.245\textwidth}
\includegraphics[trim=0 8mm 0mm 0,clip,width=\linewidth]{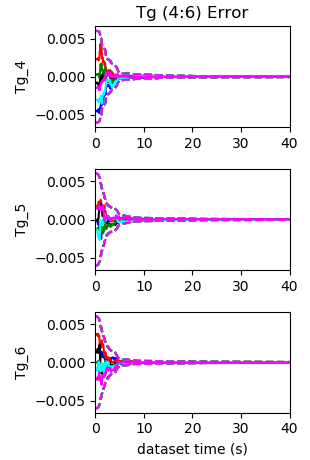}
\end{subfigure}
\begin{subfigure}{.245\textwidth}
\includegraphics[trim=0 8mm 0mm 0,clip,width=\linewidth]{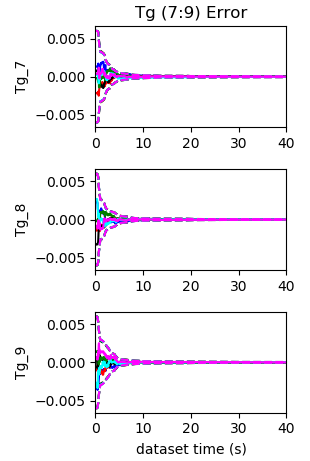}
\end{subfigure}
\begin{subfigure}{.245\textwidth}
\includegraphics[trim=0 8mm 0mm 0,clip,width=\linewidth]{figures/sim_new/rot_atoI_ax}
\end{subfigure}
\begin{subfigure}{.245\textwidth}
\includegraphics[trim=0 0 0mm 0,clip,width=\linewidth]{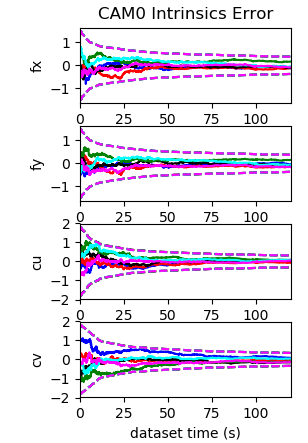}
\end{subfigure}
\begin{subfigure}{.245\textwidth}
\includegraphics[trim=0 0 0mm 0,clip,width=\linewidth]{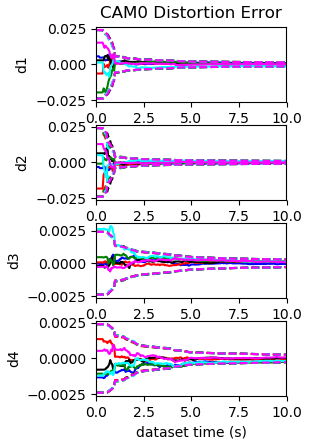}
\end{subfigure}
\begin{subfigure}{.245\textwidth}
\includegraphics[trim=0 0 0mm 0,clip,width=\linewidth]{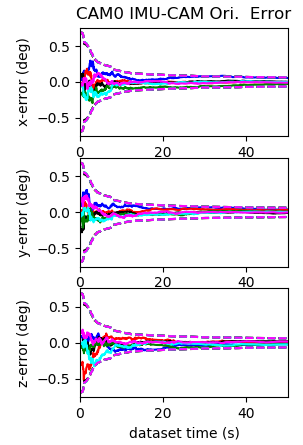}
\end{subfigure}
\begin{subfigure}{.245\textwidth}
\includegraphics[trim=0 0 0mm 0,clip,width=\linewidth]{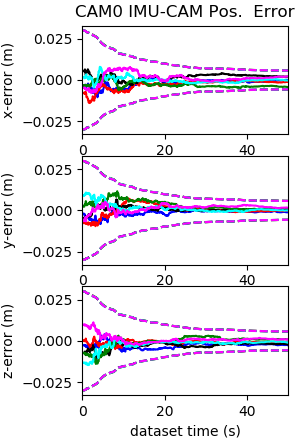}
\end{subfigure}
\caption{
Calibration results for the proposed system evaluated the \textit{sine\_3d} with constant acceleration along x direction using \textit{imu22} and \textit{radtan}. 
3 sigma bounds (dotted lines) and estimation errors (solid lines) for six different runs (different colors) with different realization of the measurement noise and initial perturbations. 
The estimation errors and 3$\sigma$ bounds for $d_{a1}$, pitch and yaw of ${}^I_a\mathbf{R}$ cannot converge. 
Note that $t_{g1}$, $t_{g2}$ and $t_{g3}$ are also unobservable. 
}
\label{fig:sim_ax_apd}
\vspace*{-6pt}
\end{figure*}

\begin{figure*}
\centering
\begin{subfigure}{.245\textwidth}
\includegraphics[trim=0 9mm 0mm 0,clip,width=\linewidth]{figures/sim_new/dw_1_3_planar}
\end{subfigure}
\begin{subfigure}{.245\textwidth}
\includegraphics[trim=0 9mm 0mm 0,clip,width=\linewidth]{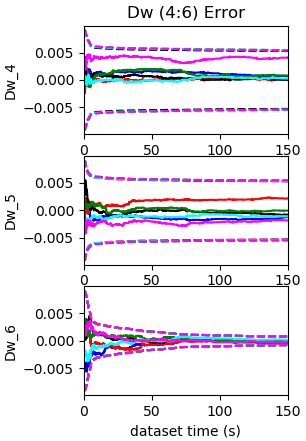}
\end{subfigure}
\begin{subfigure}{.245\textwidth}
\includegraphics[trim=0 9mm 0mm 0,clip,width=\linewidth]{figures/sim_new/da_1_3_planar}
\end{subfigure}
\begin{subfigure}{.245\textwidth}
\includegraphics[trim=0 9mm 0mm 0,clip,width=\linewidth]{figures/sim_new/da_4_6_planar}
\end{subfigure}
\begin{subfigure}{.245\textwidth}
\includegraphics[trim=0 9mm 0mm 0,clip,width=\linewidth]{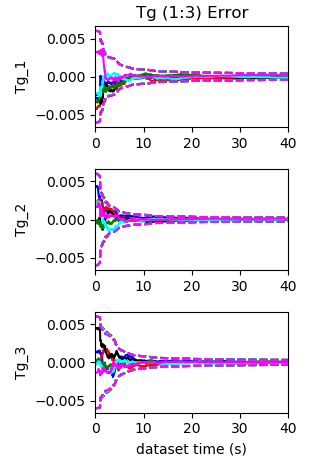}
\end{subfigure}
\begin{subfigure}{.245\textwidth}
\includegraphics[trim=0 9mm 0mm 0,clip,width=\linewidth]{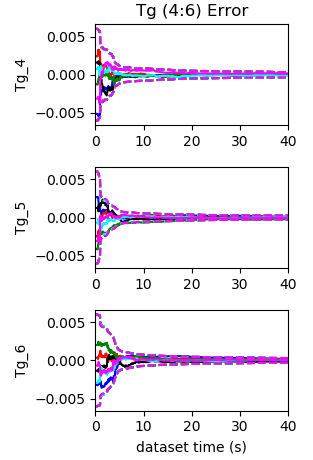}
\end{subfigure}
\begin{subfigure}{.245\textwidth}
\includegraphics[trim=0 9mm 0mm 0,clip,width=\linewidth]{figures/sim_new/tg_7_9_planar}
\end{subfigure}
\begin{subfigure}{.245\textwidth}
\includegraphics[trim=0 9mm 0mm 0,clip,width=\linewidth]{figures/sim_new/rot_atoI_planar}
\end{subfigure}
\begin{subfigure}{.245\textwidth}
\includegraphics[trim=0 0 0mm 0,clip,width=\linewidth]{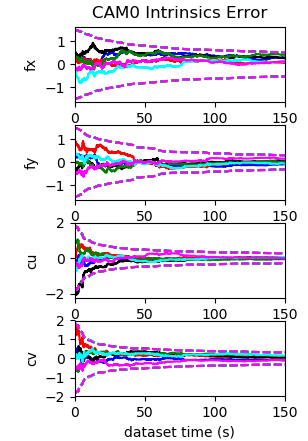}
\end{subfigure}
\begin{subfigure}{.245\textwidth}
\includegraphics[trim=0 0 0mm 0,clip,width=\linewidth]{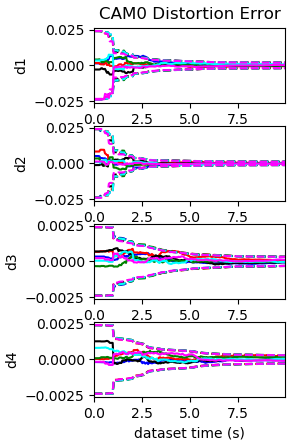}
\end{subfigure}
\begin{subfigure}{.245\textwidth}
\includegraphics[trim=0 0 0mm 0,clip,width=\linewidth]{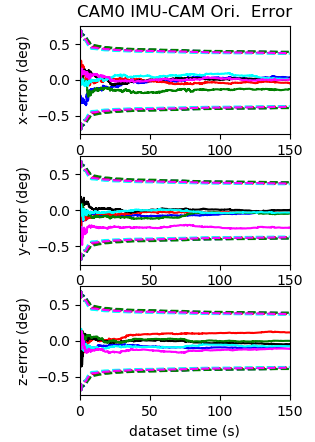}
\end{subfigure}
\begin{subfigure}{.245\textwidth}
\includegraphics[trim=0 0 0mm 0,clip,width=\linewidth]{figures/sim_new/pos_I_C_planar}
\end{subfigure}
\caption{
Calibration results of the proposed system evaluated on \textit{udel\_gore} with planar motion using \textit{imu22} and \textit{radtan}. 
3 sigma bounds (dotted lines) and estimation errors (solid lines) for six different runs (different colors) with different realization of the measurement noise and initial perturbations. 
With planar motion, the estimation errors and 3 $\sigma$ bounds of $d_{w1}$, $d_{w2}$, $d_{w3}$, $t_{g7}$, $t_{g8}$, $t_{g9}$ and the IMU-CAM position cannot converge. 
Due to lack of motion excitation, the parameters of $\mathbf{D}_a$ and ${}^I_a\mathbf{R}$ converge much slower than the other motion cases. 
}
\label{fig:sim_planar_apd}
\vspace*{-6pt}
\end{figure*}

\begin{acks}
This work was partially supported by the University of Delaware (UD) College of Engineering, 
the NSF (IIS-1924897), 
and Google ARCore. 
Geneva was partially supported by the University Doctoral Fellowship.
Thanks Dr. Kun Fu for 3D printing of the VI-Rig. 
Thanks Dr. Kevin Eckenhoff for the rolling shutter implementation from MIMC-VINS project \citep{Eckenhoff2021TRO}.  
Thanks David Schubert and Nikolaus Demmel from TUM for the helpful discussion on rolling-shutter visual-inertial datasets \citep{Schubert2019IROS}. 
\end{acks}

\end{document}